\documentclass{article}
\usepackage[dvipsnames]{xcolor}
\usepackage{subfigure}
\usepackage{float}
\usepackage[pdftex]{graphicx}
\usepackage{tikz}
\usepackage{pgfplots}
\pgfplotsset{compat=1.15}

\usepackage{answers}
\usepackage{xcolor}
\usepackage{setspace}
\usepackage{graphicx}
\usepackage[shortlabels]{enumitem}
\usepackage{multicol}
\usepackage{silence}
\WarningFilter{hyperref}{Token not allowed} 

\usepackage{mathrsfs}
\usepackage[margin=1in]{geometry} 
\usepackage{amsmath,amsthm,amssymb}
\usepackage{mathtools}
\usepackage[utf8]{inputenc}
\usepackage[english]{babel}
\usepackage{caption}
\usepackage{subcaption}
\usepackage[square,sort,comma,numbers]{natbib}
\usepackage{bbm}
\usepackage{breqn}
\usepackage{subfiles}
\usepackage{amsmath,amssymb}
\usepackage[english]{babel}
\usepackage[nottoc]{tocbibind}

\usepackage[toc,page]{appendix}
\newcommand{\stoptocwriting}{%
  \addtocontents{toc}{\protect\setcounter{tocdepth}{-5}}}
\newcommand{\resumetocwriting}{%
  \addtocontents{toc}{\protect\setcounter{tocdepth}{\arabic{tocdepth}}}}
\addto{\captionsenglish}{}

\usepackage{thmtools}
\usepackage{thm-restate}

\usepackage{hyperref}
\hypersetup{unicode}

\usepackage{cleveref}

\usetikzlibrary{patterns}

\let\tilde\widetilde

\newtheorem{theorem}{Theorem}[section]
\newtheorem{corollary}{Corollary}[theorem]
\newtheorem{lemma}[theorem]{Lemma}

\newtheorem{definition}[theorem]{Definition}
\newtheorem{assumption}{Assumption}
\newtheorem{proposition}[theorem]{Proposition}
\newtheorem{remark}[theorem]{Remark}

\newtheorem{condition}{Condition}

\newcommand{\R}{\mathbb{R}}

\let\Pr\relax
\DeclareMathOperator*{\Pr}{\mathbb{P}}
%\renewcommand{\Pr}{\mathbb{P}}

%catoni mean estimator
% \newcommand{\m}{\beta}
% %\newcommand{\I}{\mathbbm{1}}
% \renewcommand{\P}{\mathcal{P}}
% \newcommand{\e}{\epsilon}
% \newcommand{\lam}{\lambda}
\newcommand{\eps}{\varepsilon}
\newcommand{\norm}[1]{\|#1 \|}

\DeclareMathOperator*{\E}{\mathbb{E}}
\newcommand{\softmax}{\textnormal{Softmax}}

\newcommand{\diag}{\textnormal{diag}}
\newcommand{\inprod}[1]{\left\langle #1 \right\rangle}

\allowdisplaybreaks

% Code for theorems in multiple places
% Originally by Eric Price, placed in the public domain
%
% This lets you write the statement of a theorem/lemma in one place
% and insert it in multiple parts of a document.  This lets you
% include the statement of Lemma 3.2 in both the introduction and the
% place that it's proven, and the numbering can refer to the place
% where it's proven.
%
% Usage:
%
%  \define[optional title]{thm:main}{Theorem}{We show \[ x^2 + y^2 \geq 2xy \].}
%     Before you use it.
%
%  \state{thm:main}
%     in the place where you want the label to be generated
%
%  \restate{thm:main}
%     elsewhere.
%
%  To have consistent equation labels, use $\thmlabel{eq:1}$ rather
%  than $\label{eq:1}$.
%
%
%%%%% Begin code
\usepackage{etoolbox}

\makeatletter

% Define the various variables to be used in \state/\restate
\newcommand{\define}[4][ignore]{%
  \ifstrequal{#1}{ignore}{}{
  \@namedef{thmtitle@#2}{#1}}%
  \@namedef{thm@#2}{#4}%
  \@namedef{thmtypen@#2}{lemma}%
  \newtheorem{thmtype@#2}[theorem]{#3}%
  \newtheorem*{thmtypealt@#2}{#3~\ref{#2}}%
}

% State the theorem, generating the proper number.
\newcommand{\state}[1]{%
  \@namedef{curthm}{#1}
  \@ifundefined{thmtitle@#1}{
  \begin{thmtype@#1}
    }{
  \begin{thmtype@#1}[\@nameuse{thmtitle@#1}]
  }
    \label{#1}
    \@nameuse{thm@#1}
  \end{thmtype@#1}
  \@ifundefined{thmdone@#1}{
  \@namedef{thmdone@#1}{stated}%
  }{}
}

% Include the theorem, with the original numbering
\newcommand{\restate}[1]{%
  \@namedef{curthm}{#1}
  \@ifundefined{thmtitle@#1}{
    \begin{thmtypealt@#1}
    }{
  \begin{thmtypealt@#1}[\@nameuse{thmtitle@#1}]
  }
    \@nameuse{thm@#1}
  \end{thmtypealt@#1}
  \@ifundefined{thmdone@#1}{
  \@namedef{thmdone@#1}{stated}%
  }{}
}

% For equations, this generates a new number if this is the first time
% the theorem is included, and reuses the old number otherwise.
\newcommand{\thmlabel}[1]{
  \@ifundefined{thmdone@\@nameuse{curthm}}{\label{#1}
    }{\tag*{\eqref{#1}}}
}
\makeatother
%%%%% End code

%\declaretheorem[name=Theorem,numberwithin=section]{thm}
%\DeclareMathOperator*{\E}{\mathbb{E}}
%\DeclareMathOperator*{\argmax}{arg\,max}
%\DeclareMathOperator*{\argmin}{arg\,min}
%\DeclareMathOperator{\sep}{sep}

%\input{libtheorems}

%\newcommand{\xxx}[1]{\textcolor{red}{\textbf{[#1]}}}

%\newcommand{\eric}[1]{\textbf{\color{red}[Eric: #1]}}
%\newcommand{\shivam}[1]{\textbf{\color{blue}[Shivam: #1]}}

%\newcommand{\shivam}[1]{\textbf{\color{red}[Shivam: #1]}}

% Use \Name{Author Name} to specify the name.
% If the surname contains spaces, enclose the surname
% in braces, e.g. \Name{John {Smith Jones}} similarly
% if the name has a "von" part, e.g \Name{Jane {de Winter}}.
% If the first letter in the forenames is a diacritic
% enclose the diacritic in braces, e.g. \Name{{\'E}louise Smith}

% Two authors with the same address
% \coltauthor{\Name{Author Name1} \Email{abc@sample.com}\and
%  \Name{Author Name2} \Email{xyz@sample.com}\\
%  \addr Address}

% Three or more authors with the same address:
% \coltauthor{\Name{Author Name1} \Email{an1@sample.com}\\
%  \Name{Author Name2} \Email{an2@sample.com}\\
%  \Name{Author Name3} \Email{an3@sample.com}\\
%  \addr Address}

% Authors with different addresses:
\title{Training Dynamics of Transformers to Recognize Word Co-occurrence via Gradient Flow Analysis}

\author{Hongru Yang\footnote{Work done while doing a internship at Lawrence Livermore National Laboratory (LLNL) and visiting Princeton University.}\\UT Austin \& Princeton University \\\texttt{hy6385@utexas.edu} 
\and 
Bhavya Kailkhura\\LLNL \\ \texttt{kailkhura1@llnl.gov} 
\and
Zhangyang Wang\\UT Austin  \\\texttt{atlaswang@utexas.edu}
\and
Yingbin Liang\\OSU \\\texttt{liang.889@osu.edu}
}

\begin{document}

\maketitle

\begin{abstract}
Understanding the training dynamics of transformers is important to explain the impressive capabilities behind large language models. 
In this work, we study the dynamics of training a shallow transformer on a task of recognizing co-occurrence of two designated words. In the literature of studying training dynamics of transformers, several simplifications are commonly adopted such as weight reparameterization, attention linearization, special initialization, and lazy regime. In contrast, we analyze the gradient flow dynamics of simultaneously training three attention matrices and a linear MLP layer from random initialization, and provide a framework of analyzing such dynamics via a coupled dynamical system. We establish near minimum loss and characterize the attention model after training. We discover that gradient flow serves as an inherent mechanism that naturally divide the training process into two phases. In Phase 1, the linear MLP quickly aligns with the two target signals for correct classification, whereas the softmax attention remains almost unchanged. In Phase 2, 
the attention matrices and the MLP evolve jointly to enlarge the classification margin and reduce the loss to a near minimum value. Technically, we prove a novel property of the gradient flow, termed \textit{automatic balancing of gradients}, which enables the loss values of different samples to decrease almost at the same rate and further facilitates the proof of near minimum training loss. We also conduct experiments to verify our theoretical results. 
\end{abstract}

\section{Introduction}
Ever since the invention of self-attention \citep{vaswani2017attention}, transformers have become a dominating backbone architecture in many machine learning applications such as computer vision \citep{dosovitskiy2020image,liu2021swin} and natural language processing \citep{devlin2018bert}. 
Nowadays, ChatGPT and GPT-4 \citep{openai2023gpt4} have demonstrated astonishing abilities in many areas such as language understanding, mathematics and coding, which have sparked artificial general intelligence \citep{bubeck2023sparks}. 
In the meantime, there has been a burgeoning development of large language models (LLMs) \citep{touvron2023llama, manyika2023overview, anil2023palm} as well as multi-modal models \citep{team2023gemini}.

Despite the huge empirical success, theoretical understanding of why a pre-trained language model can possess such impressive performance has been significantly lagging behind. 
Some previous efforts have been made in %understanding transformers by 
understanding the capacity and representational power of transformers \citep{edelman2022inductive,liu2022transformers, zhao2023transformers, sanford2023representational, bai2023transformers}.
However, most results of this type of works are existential and rely on manual construction of the weights.
It is unclear whether the constructed weights are the actual solutions 
%that transformers learn 
after training transformers. 
In order to understand the mechanism behind those pre-trained language models, 
%it is necessary 
a line of studies have aimed to open the black box of optimization via studying training dynamics of transformers and explaining why transformers can be trained to perform well \citep{jelassi2022vision, li2022theoretical, tian2023scan, tian2023joma, li2023transformers, tarzanagh2023max, tarzanagh2023transformers, zhang2024trained, huang2023context}.
However, those previous works often
%on studying training dynamics of transformers 
relied on various simplifications in their analysis such as weight reparameterization, attention linearization, special initialization, lazy regime, etc.
%initialization, training schemes, etc. 
One goal of this paper is to take a further step to demystify the training dynamics of transformers and consider more practical training setup, thus better capturing the actual training process. 
% This will necessarily require the development of new techniques for analyzing the complex loss functions involving transformers.

Our study of transformers' training dynamics will focus on a basic problem of recognizing co-occurrence of words under a binary classification setup, 
%analogous to the logical AND problem, 
which is an important ability of LLMs to perform many tasks correctly in natural language processing (NLP). 
For example, the classical n-gram model \cite{manning1999foundations, damerau2018markov} predicts the next word based on co-occurrence of multiple words. Consider the following scenario: if the task for the language model is to read a paragraph describing a children and then answer some questions, say, ``Is Bob eating a banana?''. In order to answer the question correctly, the model must be able to detect the co-occurrence of the two words ``Bob'' and ``banana'' in the paragraph. Motivated by this, we study the problem of detecting co-occurrence of two target words via the model of a one-layer transformer with a self-attention module followed by a linear multi-layer perceptrons (MLP) layer. Our goal is to characterize the dynamics of the training process via the gradient flow analysis, thus providing a theory to explain how transformers can be trained to perform well.
%Via such a case study, we will also develop an techniques for handling various difficulties in previous analysis of transformer dynamics including training of key, query, and value matrices separately in transformers and joint training of transformers and MLP.   

%on a dataset consisting of data where (1) both words are present; (2) only one of the two words is present and (3) none of the two words are present.

Our contribution is summarized below:
\begin{list}{$\bullet$}{\topsep=0.ex \leftmargin=0.14in \rightmargin=0.in \itemsep =0.01in}
\item We study the gradient flow dynamics of detecting word co-occurrence. The training starts with random initialization and then {simultaneously updates {\bf \em four} weight matrices (including key, query, and value matrices and a linear MLP) in the transformer architecture via gradient flow}. We show that gradient flow can achieve small loss although the loss function is highly nonconvex. We further characterize the explicit form of attention matrices after training, which captures the strong positive correlation between the two target signals and strong negative correlation between one target signal and the common token, both leading to large classification margin.

\item We characterize the training process into two phases. In Phase 1 (alignment of MLP for correct classification), we show that the linear MLP of the transformer quickly aligns with the two target word tokens whereas all other variables in the dynamical system stay almost unchanged from their initialization values. All training samples are correctly classified at the end of Phase 1, but the loss value is still large due to small classification margin. In Phase 2 (evolution of attention and MLP for large classification margin), along with the continual evolution of MLP, correct classification by MLP also encourages the gradients of attention matrices to learn. Specifically, the softmax probability increases if the key and query tokens correspond to the two target words, and the value transform of the two words becomes more positively correlated, both leading to enlarge the classification margin. Thus, the training and test loss values both are driven down to nearly zero.

    \item 
    %We develop a framework of analyzing the gradient flow dynamics for \textit{simultaneously training {\bf \em four} weight matrices (including key, query, value, and MLP matrices) in the transformer architecture from random initialization}. 
    Technically, our proof techniques do not rely on several commonly used assumptions in the literature such as weight reparameterization, attention linearization, special initialization, lazy regime, etc. Our main idea is to treat the problem as a {\bf \em coupled} dynamical system with {\bf \em six} different types of dynamic variables, for which we provide an articulated analysis on the gradient flow dynamics. In particular, we prove a novel property of the gradient flow, termed \textit{automatic balancing of gradients}, which shows that the ratio of several important gradients will evolve closely within the same range during training. 
    This enables us to show that the losses of all training samples can decrease almost at  the same rate, and is also a key component in proving the near minimum training loss as well as analyzing the changes of softmax.

    %We show that when the query token is one of the two words, the softmax probability increases if the key token is the other word. Further, the $W_V$ transform of the two words are becoming more positively correlated to each other. 

%    \item Although gradient flow drives principled changes to the softmax attention during phase 2, we show that this process happens very slowly and the transformer is in fact close to a simple linear classifier: the self-attention behaves like a uniform distribution \textit{for a very long time}.\textcolor{blue}{We attribute this slow progress to a novel phenomenon called \textit{gradient competition}: although both the linear MLP and self-attention can be optimized to reduce the training loss, gradient flow prioritizes the change of the linear MLP due to its larger gradient which sharply drives the training loss down, leading to slow accumulation of gradients in softmax. This provides an explanation of the implicit bias of gradient flow towards simple solutions.}

 %   \item Along the way, we prove a key phenomenon of the gradient flow dynamics which we call \textit{automatic balancing of gradients}. It shows that the ratio of several important gradients will be kept within some range during training. This enables us to show that the losses of all training samples can decrease at roughly the same rate, and is also a key component in proving the convergence of the training loss as well as analyzing the changes of softmax. 
\end{list}
%Our theory and the two-phase characterization of the training process are verified via synthetic experiments in \Cref{app: exp}.

\subsection{Related Work}

\noindent
%\textbf{Implicit bias of gradient descent.}
%Perhaps the most well-known result of One prominent implicit bias of gradient descent (GD) is that it can converge to the min-norm solution in underdetermined linear regression. Such a type of implicit bias has been studied for a wide range of settings, for example, the linear classification problem \citep{soudry2018implicit, gunasekar2018characterizing, ji2019implicit}, matrix factorization \citep{arora2019implicit, li2020towards}, and neural networks \citep{gunasekar2018implicit,chizat2020implicit, jin2020implicit}. As a special type of implicit bias, simplicity bias \citep{kalimeris2019sgd, hu2020surprising, shah2020pitfalls, lyu2021gradient} characterizes the phenomena that neural networks trained by GD first learn linear functions in the early phase of training and then the complexity of learned solution may increase as training continues. 

\noindent
\textbf{Transformer representational power.}
Several previous works have studied the expressiveness of transformers. One line of work was from a universal approximation perspective and thus provided the existential results \citep{yun2019transformers, wei2022statistically, perez2021attention, zhao2023transformers, liu2022transformers, bai2023transformers}. 
As a separate view, \citep{edelman2022inductive} showed that a single attention head can represent a sparse function over the input sequence with sample complexity much smaller than the context length. \citep{zhang2023and} studied the approximation and generalization performance of transformers in in-context learning.
\citep{sanford2023representational} proved that transformers can represent certain functions more efficiently than MLPs.

\noindent
\textbf{Training transformers.} 
Various settings of training transformers have been studied recently. \citep{wei2021pretrained} studied the impact of head and prompt tuning of transformer on the downstream learning tasks.
\citep{jelassi2022vision} proved that transformers can learn spatial structures. 
\citep{li2022theoretical} studied how a shallow transformer learns a dataset with both label-relevant and label-irrelevant tokens. 
\citep{tian2023scan} studied a next-token prediction problem and showed that self-attention behaves like a discriminating scanning algorithm. \citep{li2023transformers} analyzed a layer-wise optimization scheme on how transformers learn topic structures. 
\citep{tarzanagh2023max, tarzanagh2023transformers, vasudeva2024implicit} studied a setting where transformers can learn a SVM solution. \citep{li2023improves} provided analysis of training graph transformers for node classification tasks. \citep{thrampoulidis2024implicit} studied the implicit bias in the next-token prediction problem. For in-context linear regression, \citep{von2023transformers} constructed transformer weights to solve this task and showed empirically that this is similar to what the transformer learned by gradient descent, \citep{ahn2024transformers} proved that the critical points of the training objective of linear transformers implement a pre-conditioned gradient descent, \citep{zhang2024trained} provided the training dynamics of linear attention models, \citep{huang2023context} characterized the training dynamics of softmax transformers, and \citep{chen2024training} studied a multi-task linear regression problem with a multi-headed softmax transformer. Further, \citep{li2024training} focused on nonlinear self-attention and nonlinear MLP over classification tasks in in-context learning. \citep{wu2023convergence} proved the convergence of transformers via neural tangent kernel. 
\citep{nichani2024transformers} showed that two-layer transformers can learn causal structure via gradient descent. 
\citep{chen2024provably} developed algorithms for provably learning a multi-head attention layer. \citep{huang2024transformers} studied how transformers learn feature-position correlation.

This paper studies a different problem of detecting co-occurrence of words via transformers. Such a setting has not been considered in the literature. More importantly, the previous studies of training dynamics of transformers have adopted various assumptions/simplifications such as weight reparameterization, special initialization, attention linearization, lazy regime, etc. In contrast, our analysis here based on gradient flow does not rely on those simplifications, which can be of independent interest for studying transformers in other settings. 

%More importantly, the above studies of training dynamics of transformers have adopted various common assumptions/simplifications such as re-parameterization of the product of key and query matrices \citep{jelassi2022vision,huang2023context,huang2024transformers,nichani2024transformers}, linear attention \citep{zhang2024trained}, special assumptions on weight initialization and/or training dynamics \citep{chen2024training,tarzanagh2023max, tarzanagh2023transformers,li2022theoretical}, or layer-wise training \citep{li2023transformers}. In contrast, our analysis here based on gradient flow does not rely on those simplifications, which can be of independent interest for studying transformers in other settings.

%In particular, \citep{chen2024training} analyzed a dynamics of simultaneously training the linear part and softmax module. However, their analysis crucially depends on the initialization assumption so that only the eigenvalues of the weight matrices need to be tracked. It is unclear whether this can be achieved by random initialization. 
% In contrast, our analysis here is free of those special treatments. 

%\textcolor{red}{[other theory work on transformers]}

\section{Problem Setting}
\textbf{Notations.}
For a vector $v \in \R^d$, we use $\diag(v)$ to denote a diagonal matrix with $v$ being the diagonal entries.
When we subtract the vector $v$ by a scalar $a$, we subtract each entry of $v$ by $a$, 
%broadcast the scalar to appropriate dimension, 
i.e., $v - a  \in \R^d$ and $(v-a)_i = v_i - a$.
We use $\tilde{\Omega}, \tilde{\Theta}, \tilde{O}$ to hide polylogarithmic factors. 

\subsection{Data Model}
\begin{definition}[Data distribution]\label{def: data_distribution}
Given a set of orthonormal vectors $\{\mu_i\}_{i=1}^{d}$ as word embedding,
let $\mu_1, \mu_2 \in \R^d$ be two target signals whose co-occurrence needs to be detected by the model, and let $\mu_3 \in \R^d$ be a common token vector.
A data entry $(X,y) \in \R^{d \times L} \times \{\pm 1\}$, where $X = [x_1, x_2, \ldots, x_L]$ consists of $L$ tokens, is generated by the distribution $\mathcal{D}$ as follows:
%\begin{enumerate}
\begin{enumerate}[topsep=0pt, left=0pt]
    \item Uniformly randomly select an index $i_3 \in [L]$ and set $x_{i_3} = \mu_3$.
    \item Then, one of the following cases occurs:
    \begin{list}{$\bullet$}{\topsep=0.ex \leftmargin=0.14in \rightmargin=0.in \itemsep =0.01in}
        \item With probability $1/2$, set $y = 1$ and uniformly randomly select two indices $i_1 \neq i_2 \in [L] \setminus \{i_3\}$ and set $x_{i_1} = \mu_1,\ x_{i_2} = \mu_2$.
    For $i \in [L] \setminus \{i_1, i_2, i_3\}$, set $x_i = \mathtt{Uniform}(\{\mu_i\}_{i=4}^{d})$.
    \item With probability $1/6$, set $y = -1$ and uniformly randomly select one index $i_1 \in [L] \setminus \{i_3\}$ and set $x_{i_1} = \mu_1$.
    For $i \in [L] \setminus \{i_3, i_1\}$, we set $x_i = \mathtt{Uniform}(\{\mu_i\}_{i=4}^{d})$.
    \item With probability $1/6$, set $y = -1$ and uniformly randomly select one index $i_2 \in [L] \setminus \{i_3\}$ and set $x_{i_2} = \mu_2$.
    For $i \in [L] \setminus \{i_3, i_2\}$, we set $x_i = \mathtt{Uniform}(\{\mu_i\}_{i=4}^{d})$.
    \item With probability $1/6$, set $y = -1$. For all $i \in [L] \setminus \{i_3\}$, we set $x_i = \mathtt{Uniform}(\{\mu_i\}_{i=4}^{d})$.
    \end{list}
\end{enumerate}
\end{definition}
In summary, there are 4 types of data: (1) both $\mu_1,\mu_2$ appear, (2) only $\mu_1$ appears, (3) only $\mu_2$ appears, and (4) neither $\mu_1$ nor $\mu_2$ appears.
We denote the set of indices of the above 4 different types of data by $I_1, I_2, I_3, I_4 \subseteq [n]$. 
% We further define $\mathcal{U}=\{\mu_1, \mu_2, \mu_3\}$ and $\mathcal{R} = \{\mu_i\}_{i=4}^d$. 
We further define $\mathcal{R} = \{\mu_i\}_{i=4}^d$.
For simplicity, our data distribution assumes $ \mu_1, \mu_2, \mu_3$ appear only once in a data entry. 
The occurrence probability of each type of data is chosen in the above way to make the distribution label-balanced. 
We assume there is a fixed set of orthonormal vectors as word embedding, which is analogous to the one-hot embedding of a set of vocabularies. 
Furthermore, in our daily language, there are some words appearing in almost every sentence such as ``a'' and ``the''.
Thus, to model those words, we include a common token in every data entry. 
Finally, notice that if we ignore the common token and random tokens, the data distribution simplifies to a logical AND problem. 

% \begin{remark}
% Our data model is motivated by natural language processing. 
% In practice, word-to-vector embedding is usually done by sophisticated methods such as using a MLP or embedding layer of a language model. 
% The simplest way is to use one-hot-shot embedding. 
% Our model slightly generalizes this simplest embedding by considering the words are represented by a set of orthnormal vectors. 
% \end{remark}

\begin{remark}
Recognizing co-occurrence of words is an important ability for language models to perform many NLP tasks correctly. 
Consider the example of a language model first reading a paragraph describing a children and then answering the question ``Is Bob eating a banana?'' 
If the description is ``Bob is watching a television while eating a banana'', then the model should answer ``Yes''.
If the description is ``Bob is playing computer games'', then the model should answer ``No''. 
Thus, the model needs to recognize the co-occurrence of ``Bob'' and ``banana''. 
\end{remark}

For simplicity of our analysis, we make the following assumption on our training data set.
\begin{assumption}\label{assumption: perfect_proportion_diversity}
The training set satisfies: (i) $\frac{|I_1|}{n} = \frac{1}{2}$ and $\frac{|I_2|}{n} = \frac{|I_3|}{n} = \frac{|I_4|}{n} = \frac{1}{6}$; and (ii) for all $i_1, i_2 \in [n],\ l_1, l_2 \in [L]$, if $X^{(i_1)}_{l_1}, X^{(i_2)}_{l_2} \notin \{ \mu_1, \mu_2, \mu_3\}$, then $X^{(i_1)}_{l_1} \neq X^{(i_2)}_{l_2}$, i.e., all irrelevant words are different. %\footnote{Notice that our assumption can be relaxed to model the irrelevant words as samples from high-dimensional isotropic Gaussian and then controlling the correlation between two words.
%\begin{itemize}
%    \item $\frac{|I_1|}{n} = \frac{1}{2}$ and $\frac{|I_2|}{n} = \frac{|I_3|}{n} = \frac{|I_4|}{n} = \frac{1}{6}$;
%    \item for all $i_1, i_2 \in [n],\ l_1, l_2 \in [L]$, if $X^{(i_1)}_{l_1}, X^{(i_2)}_{l_2} \notin \{ \mu_1, \mu_2, \mu_3\}$, then $X^{(i_1)}_{l_1} \neq X^{(i_2)}_{l_2}$, i.e., all the irrelevant words are different. %\footnote{Notice that our assumption can be relaxed to model the irrelevant words as samples from high-dimensional isotropic Gaussian and then controlling the correlation between two words. }
%\end{itemize}
\end{assumption}
The first assumption can be approximately satisfied with high probability given the total number $n$ of samples is large enough. Such an assumption can be removed by applying the standard concentration theorems. The second assumption implicitly assumes $nL \leq d$. If the irrelevant words are uniformly sampled from a large entire vocabulary, then each irrelevant word appears only very few times in the training set. Thus, letting irrelevant words appear only once in the entire training set is a reasonable way to simplify our analysis.

\subsection{Transformer Architecture and Training}
Consider a training set $\{(X^{(i)}, y_i)\}_{i=1}^n$ with $n$ training samples. 
Each data point $X^{(i)} \in \R^{d \times L}$ contains $L$ tokens, i.e., $X^{(i)} = [x_1^{(i)}, x_2^{(i)}, \ldots, x_L^{(i)}]$.
We consider the transformer model with a self-attention module followed by a linear MLP: 
\begin{align}
    F(X; W, W_V, W_K, W_Q) 
    &= \sum_{l = 1}^L \sum_{j=1}^{m_1} a_{j} \left(w_j^\top W_V X \cdot \softmax\Big(\frac{X^\top W_K^\top W_Q x_l}{\sqrt{m}}\Big) \right) 
    % &= \tr\left(A \left( W W_V X \softmax(\frac{X^\top W_K^\top W_Q X}{\sqrt{m_3}}) \right) \right)
\end{align}
where the query matrix $W_Q \in \R^{m \times d}$, the key matrix $W_K \in \R^{m \times d}$, the value matrix $W_V \in \R^{m \times d}$, the hidden-layer MLP weights $W \in \R^{m_1 \times m}$ (with $w_j^\top$ being the $j$-th row of $W$), and the output-layer weights of the MLP $a \in \R^{m_1}$.
% \begin{align*}
%     W = 
%     \begin{bmatrix}
%         w_1^\top \\
%         w_2^\top \\
%         \ldots \\
%         w_{m_1}^\top
%     \end{bmatrix}
%     \in \R^{m_1 \times m_2}, \quad A = 
%     \begin{bmatrix}
%         a_1^\top \\
%         a_2^\top \\
%         \ldots \\
%         a_L
%     \end{bmatrix}
%     \in \R^{L \times m_1}
% \end{align*}
%\footnote{We make a special note here that our assumption is consistent with the standard Pytorch library implementation of multi-head attention which also makes $m_2 = m_3$.}
% For simplicity, the transformer has neither normalization nor positional embedding \footnote{In fact, there is a recent work \citep{kazemnejad2023impact} discussing whether positional embedding is really necessary.}. 
We define the linear MLP function of the transformer to be $G(\mu) = \sum_{j=1}^{m_1} a_j w_j^\top W_V \mu$.
We now introduce some shorthand notations $K = W_K X,\ Q = W_Q X,\ V = W_V X$ and let $k_l = W_K x_l$. Notice that $K = [k_1, k_2, \ldots, k_L]$.
We further extend this shorthand to $q_l$ and $\ v_l$.
We also define functions $k(\mu) = W_K \mu,\ q(\mu) = W_Q \mu,\ v(\mu) = W_V \mu$. 
We introduce the shorthand for the \textbf{score vector} $s_l := \frac{X^\top W_K^\top W_Q x_l}{\sqrt{m}} $ and the attention vector $p_l := \softmax(s_l)$.
% Using the shorthand, we can write
% \begin{align*}
%     F(X; W,W_V, W_K, W_Q) &= \sum_{l = 1}^L \sum_{j=1}^{m_1} a_{l,j} \left(w_j^\top \sum_{k=1}^L v_k p_{l,k} \right) 
% \end{align*}
For the attention vector, 
%we further define the following notation: 
if $\mu,\nu \in X^{(i)}$, let $l(i,\mu),l(i,\nu)$ be the indices such that $X^{(i)}_{l(i,\mu)} = \mu,\ X^{(i)}_{l(i,\nu)} = \nu$, and we define $p^{(i)}_{q\leftarrow \mu, k \leftarrow \nu} := p^{(i)}_{q\leftarrow l(i,\mu), k \leftarrow l(i,\nu)} = \softmax\Big(\frac{X^{(i)\top} W_K^\top W_Q \mu}{\sqrt{m}}\Big)_{l(i,\nu)}$.

\noindent
\textbf{Initialization.}
We initialize $a_j \stackrel{i.i.d.}{\sim} \texttt{Uniform}(\pm 1)$ and the value of $a$ is fixed during training. 
The trainable parameters are $[W,W_V,W_K,W_Q]$.
We initialize $[W,W_V,W_K,W_Q]$ by $W_{i,j} \stackrel{i.i.d.}{\sim} \mathcal{N}(0, \sigma_1^2)$ and $(W_V)_{i,j}, (W_K)_{i,j}, (W_Q)_{i,j} \stackrel{i.i.d.}{\sim} \mathcal{N}(0, \sigma_0^2)$. 
% \footnote{We note that the standard pytorch library implementation of the attention uses Xavier uniform initialization. See \url{https://pytorch.org/docs/stable/_modules/torch/nn/modules/activation.html\#MultiheadAttention}.}

\noindent
\textbf{Training.}
We adopt the {\bf cross-entropy loss} $l(x) = \log(1 + \exp(-x))$.
The gradient of the cross-entropy loss is given by $l'(x) = - \frac{1}{1 + \exp(x)}$ and we define $g(x) = \frac{1}{1 + \exp(x)}$. 
The model is trained by gradient flow to minimize the following empirical loss:
\begin{align}
    &\widehat{L}(W, W_V, W_K, W_Q) = \frac{1}{n} \sum_{i=1}^n l(y_i F(X^{(i)}; W, W_V, W_K, W_Q)).
\end{align}
Similarly, we define the generalization loss $L := \E_{(X,y) \sim \mathcal{D}} \ell(yF(X))$.
We introduce the parameter condition that we take throughout the entire analysis and proofs. 
\begin{condition}\label{condition: param}
We make the following parameter choices in our analysis: 
%\begin{itemize}
\begin{list}{$\bullet$}{\topsep=0.ex \leftmargin=0.14in \rightmargin=0.in \itemsep =0.01in}
    \item The embedding dimension and network width satisfy $m \geq \tilde{\Omega}(\max\{m_1, L^2\})$ and $m_1 \geq \tilde{\Omega}(1)$. 
    % \textcolor{red}{Look for the condition on $m_1$.}
    \item The network weight initialization variance satisfies $\sigma_0 = \frac{1}{\tilde{\Theta}(\sqrt{Lm})}$; and $\sigma_1 = \frac{1}{\tilde{\Theta}(\sqrt{m_1})}$.
    % \item The number of tokens satisfies $L $ \textcolor{red}{[need condition here]}
    \item The number of training samples and tokens satisfy $n \geq \tilde{\Omega}(L^2)$ and $L \geq \tilde{\Omega}(1)$.
    \item The failure probability satisfies $1/\delta \leq \textnormal{poly}(m)$.
\end{list}
%\end{itemize}
\end{condition}

% \textcolor{red}{[items 1 and 5 can be combined; items 3 and 4 can be combined;]}

% \textcolor{red}{[Do you really need this condition, or we simply state them in the theorems?]}

% \section{Characterization of Gradient Flow Dynamical System}

\section{Main Results}

% \subsection{High-level Idea}
\textbf{Challenges.}
The essential goal is to derive the gradient flow update for each weight matrix (see the gradient expressions for all weight matrices in \Cref{app: gradient_flow_updates}). However, directly analyzing the dynamics of those weight matrices is extremely challenging, because: (i) keeping track of how the column and row spaces of each weight matrix change during training is difficult; and (ii) all attention and MLP weight matrices are affecting each other, 
%e.g., $W$ is affecting the column space of $W_V$,
leading to highly coupled dynamics. 

\noindent
\textbf{Our General Idea.}
To overcome the above challenges, we first note that rather than tracking $W, W_V, W_K, W_Q$ directly, it is sufficient to analyzing their impact on inputs, i.e.,  $X^\top W_Q^\top W_K X$ and $a^\top W W_VX$, which are sufficient to compute $F(X; W, W_V, W_K, W_Q)$. Based on this observation, we formulate two {\em differential equations} to keep track of $ w_j^{(t)\top} W_V^{(t)} \mu$ and $\nu^\top W_K^{(t) \top} W_Q^{(t)} \mu$ (with respect to $t$) for all $\mu, \nu \in \{\mu_i\}_{i=1}^d$ (See \Cref{eq: gf_update_mlp} and \Cref{eq: gf_update_score} in \Cref{app: dynamical_system}). 
%However, at this point, this dynamical system is \textit{not closed} as there are variables changing on the right-hand side not captured by the dynamical system on the left-hand side. Thus, in order to make this system complete, 
We further include additional equations to keep track of $ \inprod{w_{j_1}^{(t)}, w_{j_2}^{(t)}},\ \nu^\top W_V^{(t)\top} W_V^{(t)} \mu,\ \nu^\top W_K^{(t) \top} W_K^{(t)} \mu$, and $\nu^\top W_Q^{(t) \top} W_Q^{(t)} \mu$ to complete the system. Intuitively, the additional equations keep track of the shape of the neurons and the word embedding after $W_V, W_K, W_Q$ transform. Although the dynamical system does not directly track the softmax, the softmax probability can be calculated via the scores of $\nu^\top W_K^{(t)\top} W_Q^{(t)} \mu$. The full dynamical system is presented in \Cref{app: dynamical_system}. Then the training dynamics can be characterized by analyzing these differential equations (see a proof outline in \Cref{sec:proofoutline}).
In the next two theorems, we present our characterization of the training process into two phases.
\begin{theorem}[Phase 1]\label{thm: stage_1_main_result_main_text}
With probability at least $1 - \delta$ over the randomness of weight initialization, there exists a time $T_1 = \tilde{O}(1/m)$ such that 
%\begin{itemize}
\begin{list}{$\bullet$}{\topsep=0.ex \leftmargin=0.14in \rightmargin=0.in \itemsep =0.01in}
    \item The linear MLP functions satisfy: $G^{(T_1)}(\mu_1)\geq \Omega(1)$,  $G^{(T_1)}(\mu_2)\geq \Omega(1)$, $G^{(T_1)}(\mu_3) \leq -\Omega(1)$. 
    \item All training samples are correctly classified: $y_i F_i^{(T_1)} >0 $ for all $i \in [n]$. 
    \item For $t \in [0, T_1]$, all dynamical variables $\inprod{w_{j_1}^{(t)}, w_{j_2}^{(t)}}$, $\nu^\top W_K^{(t) \top} W_Q^{(t)} \mu$, $\nu^\top W_V^{(t)\top} W_V^{(t)} \mu$, $\nu^\top W_K^{(t) \top} W_K^{(t)} \mu$, and $\nu^\top W_Q^{(t) \top} W_Q^{(t)} \mu$ are close to their initialization values. 
    \item Training loss is still large: $\widehat{L}^{(T_1)} = \Theta(1)$.
\end{list}
%\end{itemize}
\end{theorem}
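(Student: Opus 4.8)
The plan is to analyze the coupled dynamical system in the regime where attention is essentially frozen, so the transformer behaves like a linear model over a fixed attention pattern, and then to track the MLP dynamics precisely. First I would establish, via the initialization scale in \Cref{condition: param} (with $\sigma_0 = 1/\tilde\Theta(\sqrt{Lm})$ and $m \geq \tilde\Omega(L^2)$), that at $t=0$ the score variables $\nu^\top W_K^\top W_Q \mu$ are all $\tilde O(\sigma_0^2 m) = \tilde O(1/L)$ in magnitude, so every softmax vector $p_l$ is within $\tilde O(1/L)$ (entrywise) of the uniform distribution $\mathbf{1}/L$. Consequently $F(X^{(i)})$ is, up to a small error, a linear function of the ``averaged value'' $\bar v^{(i)} := \frac1L W_V X^{(i)}\mathbf 1$, and $y_i F_i \approx y_i \sum_j a_j w_j^\top \bar v^{(i)}$, i.e.\ governed by $G^{(t)}(\mu_1), G^{(t)}(\mu_2), G^{(t)}(\mu_3)$ and the $G^{(t)}$ of random tokens. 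I would then use Assumption~\ref{assumption: perfect_proportion_diversity} to note that random tokens each appear once, so their contribution to the loss gradient is negligible after averaging, leaving a low-dimensional effective dynamics in $(G(\mu_1),G(\mu_2),G(\mu_3))$.

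Next I would write down the gradient-flow ODEs for these three quantities using \Cref{eq: gf_update_mlp}. Because $g(x) = 1/(1+e^x) \approx 1/2$ near initialization (since $|F_i|$ is small), each $\dot G(\mu_k)$ is, to leading order, a positive constant times the signed ``demand'' for that token: $\mu_1$ and $\mu_2$ appear in positively-labeled samples (type $I_1$) and in one negatively-labeled type each, with net positive coefficient, while $\mu_3$ appears in every sample with $y_i=-1$ more often than $y_i=+1$ after accounting for the $1/2,1/6,1/6,1/6$ proportions — yielding net negative drift. I would show this drift is $\Theta(m)$ in the relevant inner products (the MLP weights $W$ have variance $\sigma_1^2 = 1/\tilde\Theta(m_1)$, and there are $m$ coordinates contributing), so over time $T_1 = \tilde O(1/m)$ the quantities $G(\mu_1), G(\mu_2)$ grow to $\Omega(1)$ and $G(\mu_3)$ drops to $-\Omega(1)$, while $|F_i|$ stays $O(1)$ so the linearization $g \approx 1/2$ remains valid throughout $[0,T_1]$ (a bootstrap/continuity argument). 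Correct classification $y_iF_i^{(T_1)}>0$ then follows: for $y_i = +1$ (type $I_1$) we get $F_i \approx \frac1L(G(\mu_1)+G(\mu_2)+G(\mu_3)) + (\text{random-token terms})$ — here I must check the sign carefully and may need $G(\mu_1)+G(\mu_2)$ to dominate $|G(\mu_3)|$, which the relative drift rates deliver; for $y_i=-1$ the analogous combination is negative. That training loss is still $\Theta(1)$ follows because the margin $y_iF_i = O(1/L)$ (everything is divided by $L$ from the near-uniform attention), so $\widehat L^{(T_1)} = \frac1n\sum_i \log(1+e^{-y_iF_i}) = \log 2 - \Theta(1/L) = \Theta(1)$.

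For the third bullet — that $\inprod{w_{j_1},w_{j_2}}$, $\nu^\top W_K^\top W_Q\mu$, $\nu^\top W_V^\top W_V\mu$, $\nu^\top W_K^\top W_K\mu$, $\nu^\top W_Q^\top W_Q\mu$ stay near initialization on $[0,T_1]$ — I would bound the RHS of their ODEs (from \Cref{app: dynamical_system}) by a polynomial in $m, L$ and the current magnitudes of the dynamic variables, all of which are $\tilde O(1)$ during Phase 1, then integrate over a window of length $T_1 = \tilde O(1/m)$: the $1/m$ factor kills the polynomial growth (the attention gradients in particular carry an extra $g \approx 1/2$ and an extra smallness from $p_l \approx \mathbf1/L$, so their drift is even weaker). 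The main obstacle I anticipate is the bootstrap: I need simultaneous control that (a) $|F_i|$ stays $O(1)$ so the $g\approx 1/2$ linearization holds, (b) the attention variables stay small so the near-uniform-softmax approximation holds, and (c) the random-token $G$ values stay negligible, and these three feed into each other. Making the continuity argument airtight — choosing the stopping time as the first moment any of these invariants is violated and showing it exceeds $T_1$ — is the delicate part; everything else is controlled ODE estimates. I would structure the proof around that stopping-time argument.
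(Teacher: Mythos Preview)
Your overall architecture---freezing attention, reducing to an ODE in $(G(\mu_1),G(\mu_2),G(\mu_3))$, and running a stopping-time bootstrap---matches the paper's strategy and would handle items 2 and 3 once the details are filled in. But there is a genuine gap in your treatment of $G(\mu_3)$.

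You write that ``$\mu_3$ appears in every sample with $y_i=-1$ more often than $y_i=+1$ after accounting for the $1/2,1/6,1/6,1/6$ proportions---yielding net negative drift.'' This is false: $\mu_3$ is the common token and appears in \emph{every} sample, and the labels are exactly balanced by \Cref{assumption: perfect_proportion_diversity}. Under your own linearization $g_i\approx 1/2$, the drift of $G(\mu_3)$ is proportional to $\tfrac1n\sum_i g_i y_i \approx \tfrac{1}{2n}\sum_i y_i = 0$; the paper records exactly this in \Cref{lemma: neuron_W_V_common_token_gradient_init_main_text}, where the initial gradient of $a_jw_j^\top W_V\mu_3$ is shown to be $\tilde O(\sigma_1^2\sqrt{mm_1}+\sigma_0^2\sqrt{mL}+\sigma_1^2 m F)$, i.e.\ \emph{small} rather than negatively $\Theta(m)$. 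So the mechanism that drives $G(\mu_3)\le -\Omega(1)$ cannot be seen at the level of the $g\approx 1/2$ approximation. The paper's actual argument (\Cref{lemma: gradient_balancing_condition_phase1_main_text}, proved as \Cref{thm: common_token_gradient_stage_1}) is a two-stage phenomenon inside Phase~1: first $G(\mu_1),G(\mu_2)$ grow to $\Omega(1)$, which makes $g_i$ \emph{decrease} for $i\in I_1$ and \emph{increase} for $i\in I_2\cup I_3$; only this induced asymmetry in the $g_i$'s creates a net negative force on $G(\mu_3)$, and one then shows quantitatively that $-\partial_t G(\mu_3)$ becomes comparable to $\partial_t G(\mu_1),\partial_t G(\mu_2)$ before $T_1$. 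Your proposal is missing this idea entirely, and without it you cannot conclude $G^{(T_1)}(\mu_3)\le -\Omega(1)$ (nor, therefore, correct classification for $i\in I_2\cup I_3\cup I_4$).

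A smaller issue: with near-uniform attention, $F_i \approx \sum_{l=1}^L G(\bar x^{(i)}) = \sum_{k} G(x_k^{(i)})$ (the outer sum over $l$ contributes a factor $L$ that cancels the $1/L$ from the softmax), so the margin at $T_1$ is $\Theta(1)$, not $O(1/L)$. Your conclusion $\widehat L^{(T_1)}=\Theta(1)$ still holds, but for the reason that $y_iF_i=\Theta(1)$, not because it is $O(1/L)$.
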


In \Cref{thm: stage_1_main_result_main_text}, item 1 implies that in a short time, the linear MLP function $G^{(T_1)}(\cdot)$ {\em positively} aligns with the two target signals $\mu_1$ and $\mu_2$, but {\em negatively} aligns with the common token $\mu_3$. This further guarantees item 2 of \Cref{thm: stage_1_main_result_main_text} that all training samples are classified correctly. 
Further, item 3 of \Cref{thm: stage_1_main_result_main_text} indicates that the attention matrices are still close to their initialization values, and hence have not started to learn any knowledge yet. 
%enlarge the classification margin yet. 
This results in item 4 of \Cref{thm: stage_1_main_result_main_text}, which shows that the training loss is still large. 

\begin{theorem}[Phase 2]\label{thm: stage_2_main_result_main_text}
With probability at least $1 - \delta$ over the randomness of weight initialization, there exists a time range $ (T_1, T_2)$ with $T_2 = \textnormal{poly}(m)$ such that for all $t \in (T_1, T_2)$
%\begin{itemize}
\begin{list}{$\bullet$}{\topsep=0.ex \leftmargin=0.14in \rightmargin=0.in \itemsep =0.01in}
    \item $ \mu_2^\top W_K^{(t)\top} W_Q^{(t)} \mu_1$ and $\mu_1^\top W_K^{(t)\top} W_Q^{(t)} \mu_2$ increase, whereas $ \mu_3^\top W_K^{(t)\top} W_Q^{(t)} \mu_1$ and  $\mu_3^\top W_K^{(t)\top} W_Q^{(t)} \mu_2$ decrease. 
    \item $\mu_1^\top W_V^{(t)\top} W_V^{(t)} \mu_2$ increases, whereas $\mu_1^\top W_V^{(t)\top} W_V^{(t)} \mu_3$ and $ \mu_2^\top W_V^{(t)\top} W_V^{(t)} \mu_3$ decrease. 
    \item Linear MLP functions satisfy: $G^{(t)}(\mu_1)\geq \Omega(1)$,\; $G^{(t)}(\mu_2)\geq \Omega(1)$,\; $- G^{(t)}(\mu_3) \leq \Omega(1)$.
    \item $G^{(t)}(\mu_1) + G^{(t)}(\mu_2) + G^{(t)}(\mu_3) \geq \Omega(1) $, $G^{(t)}(\mu_1) + G^{(t)}(\mu_3) \leq - \Omega(1)$ and $G^{(t)}(\mu_2) + G^{(t)}(\mu_3) \leq - \Omega(1)$. 
\end{list}
%\end{itemize}
\end{theorem}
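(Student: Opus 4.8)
\textbf{Proof proposal for Theorem~\ref{thm: stage_2_main_result_main_text} (Phase 2).}
The plan is to pick up the dynamical system exactly where Phase 1 left off and run a continuity/bootstrap argument on the six families of dynamic variables over the whole interval $(T_1,T_2)$. From Theorem~\ref{thm: stage_1_main_result_main_text} I have the initial data at time $T_1$: the MLP already satisfies $G^{(T_1)}(\mu_1),G^{(T_1)}(\mu_2)\geq\Omega(1)$ and $G^{(T_1)}(\mu_3)\leq-\Omega(1)$, every training sample has $y_iF_i^{(T_1)}>0$, and the cross-products $\nu^\top W_K^\top W_Q\mu$, the value Gram entries $\nu^\top W_V^\top W_V\mu$, etc., are still within $o(1)$ of their Gaussian-initialization values (so in particular off-target entries are $\tilde O(\sigma_0^2 m)=\tilde O(1/L)$ small and the softmax is still nearly uniform, $p_l^{(T_1)}\approx 1/L$). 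I would set up a bootstrap hypothesis asserting, for $t$ in a maximal subinterval of $(T_1,T_2)$, quantitative bounds: (i) all $g(y_iF_i^{(t)})\in[\underline c,\overline c]$ stay in a band, i.e.\ the per-sample loss derivatives remain comparable across the four data types $I_1,\dots,I_4$ — this is where I invoke the \emph{automatic balancing of gradients} property promised in the introduction; (ii) the four quantities in item~3 remain $\Theta(1)$; (iii) the target score gaps and value-correlation gaps in items 1--2 are monotone in the claimed directions; (iv) the $W_K^\top W_K$, $W_Q^\top W_Q$ Gram entries and neuron norms $\langle w_{j_1},w_{j_2}\rangle$ stay controlled. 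Then I would show each hypothesis is maintained with room to spare, which closes the bootstrap and extends it to all of $(T_1,T_2)$.

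The core of the argument is reading off the signs and magnitudes of the right-hand sides of the gradient-flow ODEs \eqref{eq: gf_update_mlp}, \eqref{eq: gf_update_score} (and their companions) under the bootstrap hypothesis. For the score variables: differentiating $\mu_2^\top W_K^{(t)\top}W_Q^{(t)}\mu_1$ along gradient flow produces terms proportional to $\sum_i g(y_iF_i^{(t)})\,y_i\,(\cdots)$ where, for type-$I_1$ samples (which carry positive label and contain both $\mu_1$ and $\mu_2$), the value-side factor $v(\mu_1)^\top v(\mu_2)$ together with the positively-aligned MLP makes the contribution positive, while type-$I_2,I_3,I_4$ samples either do not contain the relevant token pair or contribute lower-order terms; since $|I_1|/n=1/2$ dominates and the loss derivatives are balanced by hypothesis~(i), the net drift is $\geq\Omega(\text{something}>0)$, giving monotone increase of $\mu_2^\top W_K^\top W_Q\mu_1$ and symmetrically of $\mu_1^\top W_K^\top W_Q\mu_2$. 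The same bookkeeping with a flipped sign — because $\mu_3$ appears in \emph{every} sample and the label is $-1$ on the three-sixths of samples in $I_2\cup I_3\cup I_4$ while $G(\mu_3)<0$ — yields the decrease of $\mu_3^\top W_K^\top W_Q\mu_1$ and $\mu_3^\top W_K^\top W_Q\mu_2$, establishing item~1. Item~2 is the analogous computation for the value Gram entries using \eqref{eq: gf_update_score}'s companion equation: the $v(\mu_1)^\top v(\mu_2)$ derivative gets a positive push from $I_1$ and the cross terms $v(\mu_1)^\top v(\mu_3)$, $v(\mu_2)^\top v(\mu_3)$ get negative pushes from the negatively-labeled types, again using the mass proportions in Assumption~\ref{assumption: perfect_proportion_diversity} and the sign of $G(\mu_3)$.

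Items 3 and 4 then follow by combining the monotonicity just established with a continued analysis of the MLP equation \eqref{eq: gf_update_mlp}. The point is that as the attention sharpens (softmax probability increasing toward $\mu_1\!\leftrightarrow\!\mu_2$ and away from $\mu_3$) and $v(\mu_1)^\top v(\mu_2)$ grows, the margin $y_iF_i^{(t)}$ of type-$I_1$ samples increases, which only strengthens $G(\mu_1),G(\mu_2)$ and drives $G(\mu_3)$ more negative; I would show these stay in the $\Theta(1)$ band (upper bounds from the $\text{poly}(m)$ horizon $T_2$ and the controlled Gram norms, lower bounds from the drift). The sum-form inequalities in item~4 — $G(\mu_1)+G(\mu_2)+G(\mu_3)\geq\Omega(1)$ and $G(\mu_1)+G(\mu_3),\,G(\mu_2)+G(\mu_3)\leq-\Omega(1)$ — are exactly the statements that a type-$I_1$ input is classified $+1$ with margin while types $I_2,I_3$ are classified $-1$ with margin, once one accounts for the near-uniform-but-shifting softmax weights; they drop out of items 1--3 plus the softmax estimate. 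The main obstacle I anticipate is item~(i) of the bootstrap, the automatic balancing of gradients: a priori the loss derivatives $g(y_iF_i)$ of the four data types could drift apart (e.g.\ $I_1$-margins racing ahead of $I_2$-margins), which would break the sign computations above because the dominant-mass argument relies on the $g$-values being comparable. Handling this requires showing the coupling in the shared variables (especially the $\mu_3$-related scores and the neuron norms, which enter every sample's dynamics) forces the per-type margins to grow at commensurate rates — this is presumably the technical heart flagged in the introduction, and I would isolate it as a standalone lemma bounding the ratio $\max_i g(y_iF_i^{(t)})/\min_i g(y_iF_i^{(t)})$ throughout $(T_1,T_2)$, proved by differentiating that ratio and showing the cross-sample gradient interactions are self-correcting.
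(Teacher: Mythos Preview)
Your bootstrap framework and identification of the gradient-balancing lemma as the technical heart match the paper, but two concrete pieces of the sign analysis are misidentified and as written the argument would not close. For item~1 you attribute the positive drift of $\mu_2^\top W_K^\top W_Q\mu_1$ to ``the value-side factor $v(\mu_1)^\top v(\mu_2)$''; that quantity does not enter the score ODE. Summing \eqref{eq: gf_update_score} over neurons, the dominant term is proportional to $\|k^{(t)}(\mu_2)\|_2^2\bigl(G^{(t)}(\mu_2)-\sum_{l'}p_{l(i,\mu_1),l'}^{(t,i)}G^{(t)}(X_{l'}^{(i)})\bigr)$: it is the \emph{MLP output} $G^{(t)}(\mu_2)=\sum_j a_j w_j^{(t)\top}v^{(t)}(\mu_2)$, already $\Omega(1)$ after Phase~1, that fixes the sign, whereas the value-Gram entry is still $\tilde O(\sigma_0^2\sqrt m)$ at $T_1$. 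Moreover, the sum in that dominant term runs only over samples containing \emph{both} $\mu_1$ and $\mu_2$, i.e.\ exactly $I_1$, so there is no ``$|I_1|/n=1/2$ dominates'' step---every contributing sample already has $y_i=+1$. For $\mu_3^\top W_K^\top W_Q\mu_1$ the driver is $G^{(t)}(\mu_3)<0$ multiplied by $\sum_{I_1}g_i-\sum_{I_2}g_i>0$.

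More seriously, your hypothesis~(i), ``all $g(y_iF_i)\in[\underline c,\overline c]$,'' is too coarse to deliver items~3--4. Because $|I_1|=|I_2\cup I_3\cup I_4|$, mere comparability of the $g_i$'s gives $\sum_{I_1}g_i\approx\sum_{I_2\cup I_3\cup I_4}g_i$ without determining the sign of their difference---and that sign is precisely what controls $\partial_t G^{(t)}(\mu_3)$ and hence the item-4 sum inequalities. The paper's balancing lemmas instead track the ratio $R(t)=\bigl(\sum_{I_2\cup I_3\cup I_4}g_i-\sum_{I_1}g_i\bigr)\big/\bigl(\sum_{I_1}g_i-\sum_{I_2}g_i\bigr)$, show it is attracted to a fixed point in $(1,2)$, and thereby obtain the \emph{signed} estimates $\sum_{I_1}g_i-\sum_{I_2}g_i=\Theta(\sum g_i)$ and $\sum_{I_2\cup I_3\cup I_4}g_i-\sum_{I_1}g_i=\Theta(\sum g_i)$. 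Item~4 then follows by integrating the resulting two-sided bound $(1+C)\max\bigl(\partial_tG(\mu_1),\partial_tG(\mu_2)\bigr)\leq-\partial_tG(\mu_3)\leq(1-C)\bigl(\partial_tG(\mu_1)+\partial_tG(\mu_2)\bigr)$ from $T_1$; it does not drop out of items~1--3.
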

In \Cref{thm: stage_2_main_result_main_text}, item 1 indicates that, during Phase 2, gradient flow drives  the self-attention module to weigh more between the two target signals $\mu_1$ and $\mu_2$, and to weigh less between one of these signals and the common token $\mu_3$.
%if the query token is $\mu_1$ (or $\mu_2$), then gradient flow drives the self-attention module to weigh more on $\mu_2$ (or $\mu_1$), whereas paying less attention on $\mu_3$. 
Item 2 indicates that gradient flow drives the value matrix $W_V$ to positively align the two target signals $\mu_1$ and $\mu_2$, but negatively align one target signal ($\mu_1$ or $\mu_2$) with the common token $\mu_3$. Further, the last two items indicate that the MLP continue to classify correctly and further enlarge the classification margin. Hence, all items in \Cref{thm: stage_2_main_result_main_text} collectively indicate that attention and MLP evolve jointly to enlarge the classification margin and hence drive the loss value to decrease in Phase 2.

\begin{theorem}[Near Minimum Training Loss and Attention]\label{thm: main_result_main_text}
With probability at least $1 - \delta$, there exists a time $T^\star =  \Theta(\textnormal{poly}(m))$ such that %$\widehat{L}^{(T^\star)} \leq 1/\textnormal{poly}(m)$ and the transformer architecture satisfies for all $t \in (T_1, T^\star]$,
%\begin{itemize}
\begin{list}{$\bullet$}{\topsep=0.ex \leftmargin=0.14in \rightmargin=0.in \itemsep =0.01in}
\item The training and generalization losses satisfy $\widehat{L}^{(T^\star)} \leq 1/\textnormal{poly}(m)$ and $L^{(T^\star)} \leq 1/\textnormal{poly}(m)$.
    \item The attention matrices satisfies: 
    \begin{align}\label{eq:optatt}
\textstyle    W_K^{(T^\star)\top} W_Q^{(T^\star)} = W_K^{(0)\top} W_Q^{(0)} + \sum_{i_1, i_2 \in [d]} C_{i_1, i_2}^{(T^\star)} \mu_{i_1} \mu_{i_2}^\top,
    \end{align}
    where $C_{1,2}^{(T^\star)}, C_{2,1}^{(T^\star)}, -C_{3,1}^{(T^\star)}, - C_{3,2}^{(T^\star)} = \Theta\left( \frac{\sigma_0^2 m}{\sqrt{m}L \sigma_1^2 m m_1} + \frac{\sigma_0^2 \sqrt{m}}{\sqrt{m} \sigma_1^2 m m_1} \right)$ and $C_{i_1, i_2}^{(T^\star)} \leq \tilde{O}\left( \frac{\sigma_0^2 m}{n\sqrt{m}L \sigma_1^2 m m_1} + \frac{\sigma_0^2 \sqrt{m}}{\sqrt{m} \sigma_1^2 m m_1} \right)$ if one of $i_1, i_2 \in [d] \setminus [3]$.
\end{list}
%\end{itemize}
\end{theorem}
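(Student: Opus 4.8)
The plan is to continue the Phase 2 analysis from \Cref{thm: stage_2_main_result_main_text} and run it until the per-sample losses become polynomially small, then translate the resulting control on the dynamical variables into the stated closed form for $W_K^\top W_Q$. First I would quantify the rate at which $\widehat L^{(t)}$ decreases during Phase 2. By \Cref{thm: stage_2_main_result_main_text}, the MLP keeps the margins $y_i F_i^{(t)}$ positive and the attention scores $\mu_2^\top W_K^\top W_Q \mu_1$, $\mu_1^\top W_K^\top W_Q \mu_2$ keep growing while $\mu_3^\top W_K^\top W_Q \mu_1$, $\mu_3^\top W_K^\top W_Q \mu_2$ keep shrinking; combined with item 2 (value alignment) and item 4 (enlarged MLP margins), each individual margin $y_i F_i^{(t)}$ is driven to $\Omega(\log \textnormal{poly}(m))$. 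The key quantitative tool is the \emph{automatic balancing of gradients} property advertised in the introduction: because the ratios of the governing gradients stay within a common range, the quantities $g(y_i F_i^{(t)})$ for different $i$ stay comparable throughout, so no single sample lags behind. This lets me lower bound $\frac{d}{dt}\big(\min_i y_i F_i^{(t)}\big)$ by something like $\Omega(m_1 \sigma_1^2 m \cdot \bar g^{(t)})$ where $\bar g^{(t)} = \frac1n\sum_i g(y_i F_i^{(t)})$, and a standard Grönwall/ODE-comparison argument then yields that at time $T^\star = \Theta(\textnormal{poly}(m))$ every margin exceeds $\Omega(\log\textnormal{poly}(m))$, hence $\widehat L^{(T^\star)} \le 1/\textnormal{poly}(m)$.

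Next I would handle the generalization loss. The point is that the data distribution $\mathcal D$ has only four types, and the value of $F(X;\cdot)$ on a fresh sample depends only on (a) which of $\mu_1,\mu_2,\mu_3$ occur, (b) the trained dynamical variables $w_j^\top W_V \mu$, $\nu^\top W_K^\top W_Q \mu$ for $\mu,\nu\in\{\mu_1,\mu_2,\mu_3\}$, and (c) contributions of the irrelevant tokens in $\mathcal R$, which are uniformly small because — by the same tracking as in Phases 1--2 — the dynamical variables involving $\mathcal R$ stay near their $\tilde O(\sigma_0,\sigma_1)$-scale initialization. Under Assumption \ref{assumption: perfect_proportion_diversity}, the empirical four-type structure exactly matches the population one, so a fresh sample of type $I_k$ behaves, up to the negligible $\mathcal R$-fluctuations and a union bound over the at most $L$ random token positions, like the training samples of type $I_k$; therefore $L^{(T^\star)} \le 1/\textnormal{poly}(m)$ follows from the training-loss bound with essentially the same margin estimate.

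For the structural claim \eqref{eq:optatt}, I would integrate the gradient-flow update \Cref{eq: gf_update_score} for $W_K^\top W_Q$. The gradient of $\widehat L$ with respect to $W_K^\top W_Q$ is, by the chain rule through the softmax, a sum over samples and token pairs of rank-one terms $x_a x_b^\top$ weighted by $g(y_iF_i)$, softmax probabilities, and the value-MLP factor; since every token equals some $\mu_{i}$, integrating over $[0,T^\star]$ produces exactly $\sum_{i_1,i_2\in[d]} C_{i_1,i_2}^{(T^\star)}\mu_{i_1}\mu_{i_2}^\top$ added to the initialization $W_K^{(0)\top}W_Q^{(0)}$, with $C_{i_1,i_2}^{(T^\star)} = \int_0^{T^\star}(\text{coefficient})\,dt$. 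The diagonal-of-interest entries $C_{1,2},C_{2,1},-C_{3,1},-C_{3,2}$ pick up an $\Omega(1)$-fraction of samples (types $I_1$ for the first two; $I_1,I_2$ or $I_1,I_3$ for the latter two) with aligned sign at every $t$, giving the matching $\Theta(\cdot)$ upper and lower bounds after plugging in the scales $T^\star = \Theta(\textnormal{poly}(m))$, $\bar g^{(t)}$, the softmax values, and the MLP factor $m_1\sigma_1^2$; the entries with an index in $[d]\setminus[3]$ get the extra $1/n$ because, by Assumption \ref{assumption: perfect_proportion_diversity}(ii), each irrelevant word appears in only one sample, so the corresponding sum over samples loses a factor $n$.

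The main obstacle is the first step: establishing that the per-sample margins all grow at a comparable $\textnormal{poly}(m)$-normalized rate for the \emph{entire} duration $[T_1,T^\star]$, which requires simultaneously (i) keeping the softmax probabilities $p^{(i)}_{q\leftarrow\mu_1,k\leftarrow\mu_2}$ etc. in a controlled window so that the attention contribution to $\frac{d}{dt}F_i$ does not stall or blow up, (ii) maintaining the automatic-balancing invariant on the gradient ratios under the coupled six-variable dynamics, and (iii) ensuring the auxiliary variables ($\langle w_{j_1},w_{j_2}\rangle$, the $W_V^\top W_V$, $W_K^\top W_K$, $W_Q^\top W_Q$ Gram entries) remain in the regime where the scale estimates used in (i)--(ii) are valid. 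This is a bootstrap/continuity argument over a long time interval, and closing it is where essentially all the work lies; the rest is bookkeeping with the orthonormality of $\{\mu_i\}$ and the parameter choices in \Cref{condition: param}.
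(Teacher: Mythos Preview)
Your proposal is essentially correct and follows the same overall strategy as the paper: invoke the Phase~2 analysis together with the automatic-balancing lemma to get the training-loss decay, transfer to the population via the four-type structure and the smallness of random-token contributions, and read off the $C_{i_1,i_2}$ from the integrated score dynamics \eqref{eq: gf_update_score}, with the $1/n$ suppression for random indices coming from Assumption~\ref{assumption: perfect_proportion_diversity}(ii). The paper packages this as a direct consequence of \Cref{lemma: signal_common_token_gradient_ratio} and a Phase~2 ``small loss'' theorem whose proof is exactly the bootstrap you outline.

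The one place where the paper's route is a bit cleaner than yours is the convergence rate. Rather than lower-bounding $\tfrac{d}{dt}\min_i y_iF_i^{(t)}$ and running a Gr\"onwall-type comparison on the margin, the paper observes that automatic balancing implies $\tfrac{\partial y_iF_i^{(t)}}{\partial t}=\Theta\bigl(\sigma_1^2 m m_1\cdot \tfrac1n\sum_j g_j^{(t)}\bigr)$ for every $i$, hence $\tfrac{d\widehat L}{dt}=-\Theta(\sigma_1^2 m m_1)\,\widehat L^{\,2}$, which integrates explicitly to $\widehat L^{(t)}=\bigl(\Theta(\sigma_1^2 m m_1)(t-T_1)+1/\widehat L^{(T_1)}\bigr)^{-1}$. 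This closed form simultaneously yields $T^\star=\Theta(\textnormal{poly}(m))$ and, crucially, the bound $\int_{T_1}^{T^\star}\widehat L^{(t)}\,dt=\tilde O\bigl(1/(\sigma_1^2 m m_1)\bigr)$, which is exactly the factor you need when integrating the score derivative $\tfrac{\partial}{\partial t}\nu^\top W_K^\top W_Q\mu=\tfrac{1}{\sqrt m}\tilde\Theta(\widehat L^{(t)}\sigma_0^2 m)\tfrac1L+\tilde O(\cdot)$ to produce the stated scales for $C_{i_1,i_2}^{(T^\star)}$. Your margin-based argument would deliver the same conclusion after one additional step (converting the margin lower bound back into an $\widehat L$ upper bound of $1/t$ type), so this is a framing difference, not a gap.
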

\Cref{thm: main_result_main_text} indicates that both training and test losses converge nearly to zero as long as the embedding dimension $m$ is sufficiently large, because both 
the attention and MLP matrices are trained towards enlarging the classification margin in Phase 2. \Cref{thm: main_result_main_text} also provides the explicit form of the attention matrix in \Cref{eq:optatt}, in which the second term captures the learned information of the self-attention module. It can be seen that the large coefficients $C_{1,2}^{(T^\star)}$ and $ C_{2,1}^{(T^\star)}$ capture strong coupling of the two target signals $\mu_1$ and $\mu_2$, and the large negative coefficients $C_{3,1}^{(T^\star)}$ and $ C_{3,2}^{(T^\star)}$ encourages strong negative coupling of one target signal $\mu_1$ or $\mu_2$ and the common token $\mu_3$. All these attention terms contribute to enlarge correct classification margin. Further, the coefficients between all other random tokens are order-level smaller and hence do not corrupt the correct classification.

\textbf{Synthetic Experiment:} We next verify our theory and the two-phase characterization of the training process via synthetic experiments (see the experiment setup in \Cref{app: exp}).

\begin{figure}[ht]
\vspace{-3mm}
\begin{tabular}{cc}
\includegraphics[width=0.45\textwidth]
        {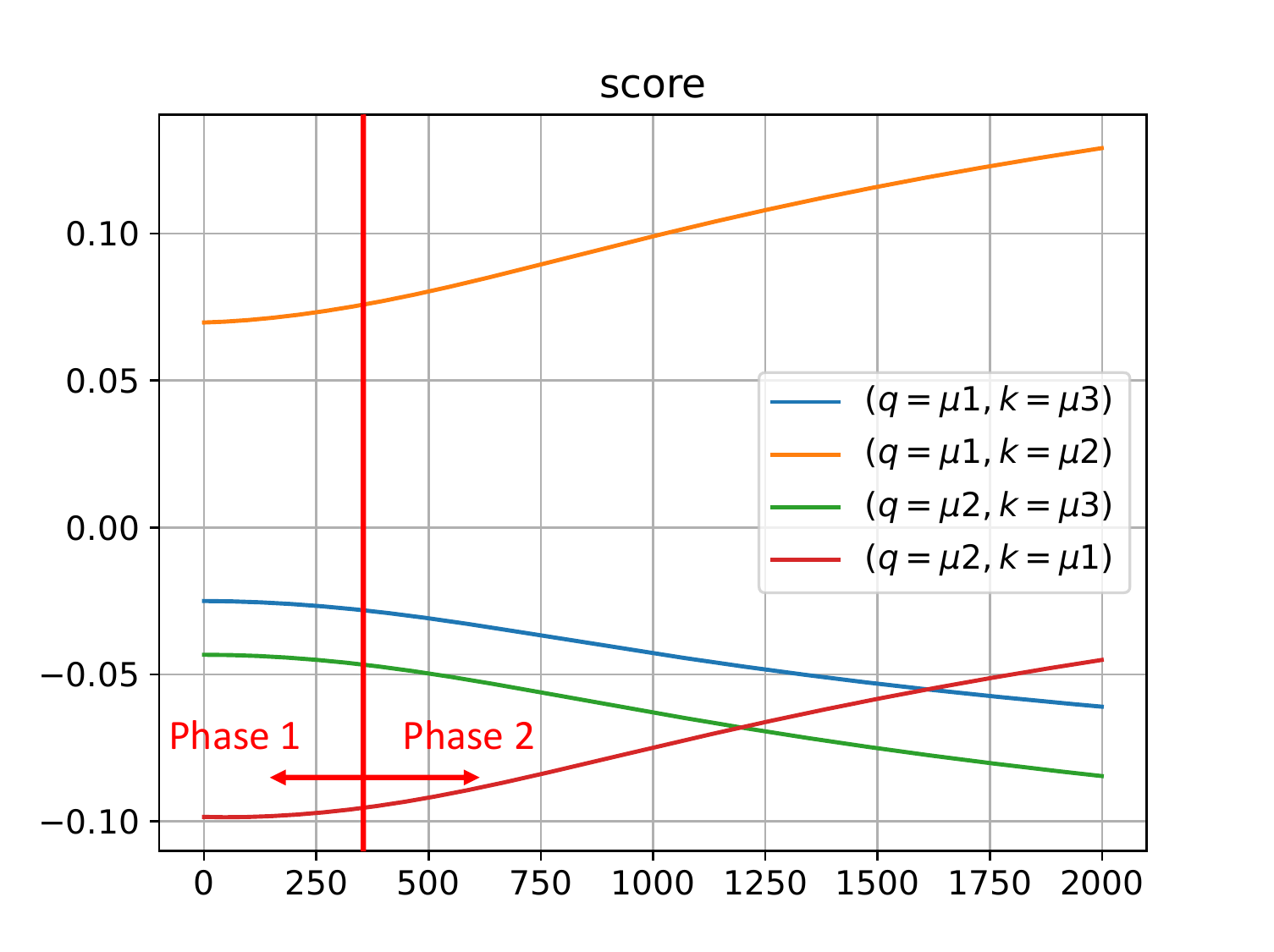} %\label{fig:score}}
&        \includegraphics[width=0.45\textwidth]
        {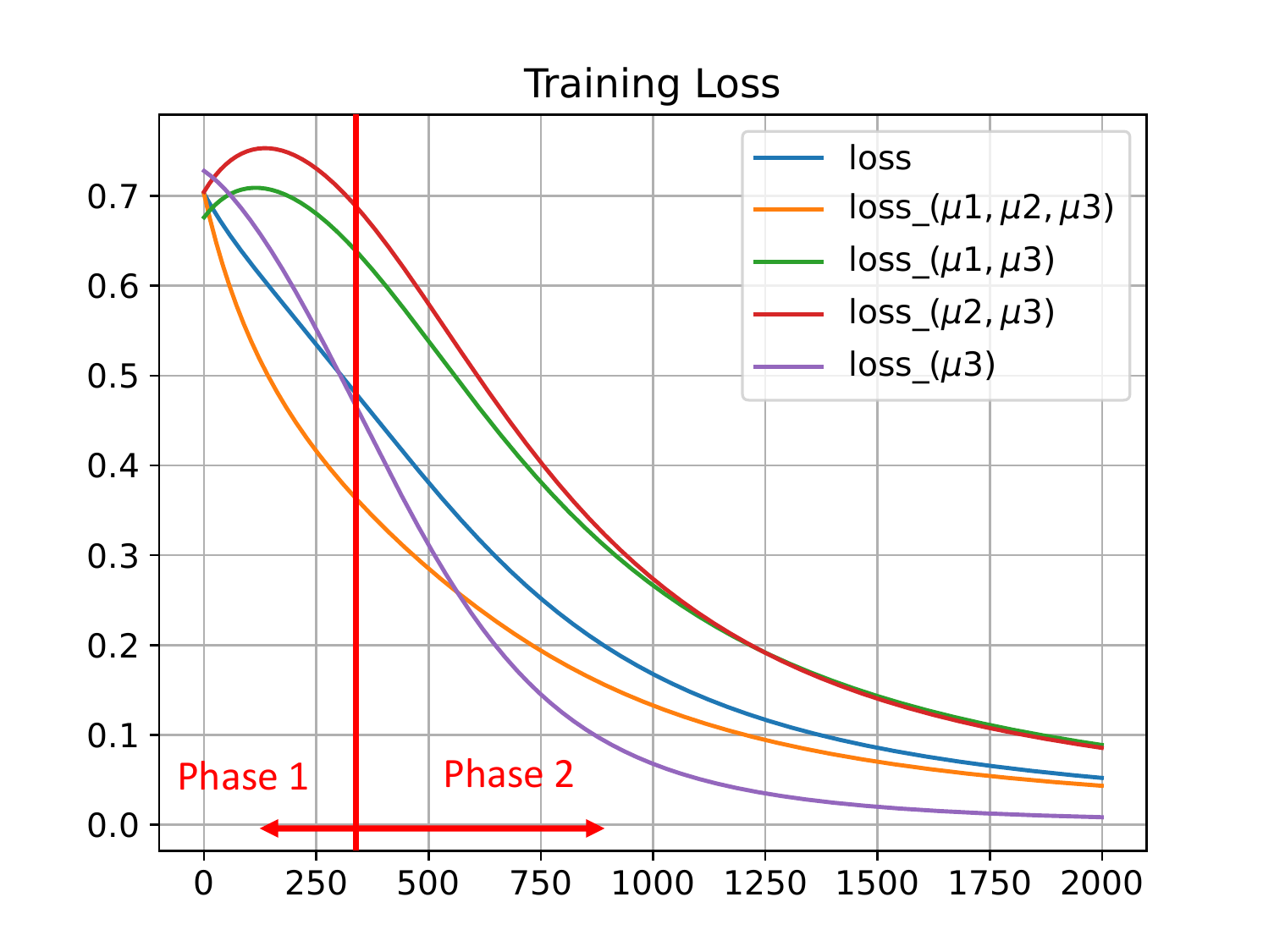} %\label{fig:training_loss_demo}}
\\
(a) Attention score correlation & (b) Training loss
\end{tabular}
 \caption{Synthetic experiments with illustration of two training phases.}\label{fig:score}
\vspace{-2mm}
\end{figure}

\Cref{fig:score}~(a) shows how the attention score correlation $ \mu_i^\top W_K^{(t)\top} W_Q^{(t)} \mu_j$ evolves during the training. %where different $(\mu_i,\mu_j)$ correspond to different curves. 
It is clear that these scores do not change significantly in Phase 1, verifying \Cref{thm: stage_1_main_result_main_text}. In Phase 2, the score correlation between two target signals $\mu_1$ and $\mu_2$ increases, and the score between one target signal and the common token decreases, verifying \Cref{thm: stage_2_main_result_main_text}. 

\Cref{fig:score}~(b) plots how the training loss changes during the two phases of training. The blue curve (indexed by `loss') represents the overall training loss of all samples. The other curves correspond to the training loss of four types of samples as indicated in the legend.
%The indices 1,2,3,4 in the legend represents the training loss corresponding to the training loss of four types of samples, i.e., $I_1$ (both $\mu_1,\mu_2$ appear), $I_2$ (only $\mu_1$ appears), $I_3$ (only $\mu_2$ appears), $I_4$ (neither $\mu_1$ nor $\mu_2$ appears), respectively. 
In Phase 1, the training loss for samples with both target signals (i.e., orange curve) decreases because the linear MLP layer aligns with the target signals (verifying \Cref{lemma: neuron_W_V_signal_gradient_init_main_text} in \Cref{sec:proofphase1}). The training loss for samples with one target signal and the common token (i.e., green or red curves) first increases because the linear MLP layer initially has not aligned negatively enough with the common token yet (as captured by \Cref{lemma: neuron_W_V_common_token_gradient_init_main_text} in \Cref{sec:proofphase1}), and then decreases in the later stage of Phase 1 when the MLP layer aligns negatively with the common token (as captured by \Cref{lemma: gradient_balancing_condition_phase1_main_text} in \Cref{sec:proofphase1}). All loss functions decrease in Phase 2 because all attention matrices and linear MLP jointly enlarge the classification margin, verifying \Cref{thm: stage_2_main_result_main_text}.

\section{Proof Outline: Two-phase Gradient Flow Analysis}\label{sec:proofoutline}
% On a high level, our analysis utilizes the relationship that $\left| \frac{\partial w_{j_1}^{(t)\top} W_V^{(t)} \mu}{\partial t} \right| \gg \left| \frac{\partial \nu^\top W_K^{(t)\top} W_Q^{(t)} \mu}{\partial t} \right| \gg \left| \frac{\partial \nu^\top W_K^{(t)\top} W_K^{(t)} \mu}{\partial t} \right|$.
\subsection{Phase 1: Alignment of Linear MLP for Correct Classification}\label{sec:proofphase1}

In Phase 1, the linear MLP quickly aligns with the two target word tokens while all attention matrices stay roughly unchanged from their initialization values. 
%We identify Phase 1 as a time window during which the gradients of the attention matrices do not change much (see \Cref{def: first_stage} for a formal definition). We show that the linear MLP quickly aligns with the two target word tokens, so that all training samples are correctly classified at the end of Phase 1. 
We show that the linear MLP functions $|G^{(t)}(\mu_1)|, |G^{(t)}(\mu_2)|, |G^{(t)}(\mu_3)|$ become sufficiently large (larger than some constant threshold) so that all training samples are correctly classified at the end of Phase 1. 
%Further, all of this happens in a very short amount of time. 

We first analyze the dynamical system at the initialization. 
In particular, the following lemma shows that at the initialization, the linear MLP layer receives a sufficiently large gradient from the two target signals, and hence samples with the two target signals will be classified correctly as co-occurrence soon afte the training starts.
\begin{lemma}[Same as \Cref{lemma: first_step_signal}]\label{lemma: neuron_W_V_signal_gradient_init_main_text}
With probability at least $1 - \delta$ over the weight initialization, 
\begin{align*}
\textstyle    \forall \mu \in \{\mu_1, \mu_2\}: \quad \frac{\partial a_j w_j^{(0)} W_V^{(0)} \mu}{\partial t} = \Theta( (\sigma_0^2 + \sigma_1^2) m).
\end{align*}
\end{lemma}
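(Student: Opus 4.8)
The plan is to directly compute the time derivative of $a_j w_j^{(0)\top} W_V^{(0)} \mu$ at initialization from the gradient flow equations, and show that the dominant contribution comes from the samples in $I_1$ (those containing both $\mu_1$ and $\mu_2$), all of which contribute with the correct (positive) sign, while contributions from the other sample types and from random/common tokens are lower order.

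First I would write out the gradient flow update for $a_j w_j^\top W_V \mu$. Since $F(X;\cdot)$ is linear in the MLP parameters after the attention output, the derivative decomposes into two pieces: one from updating $w_j$ (the hidden MLP weights) and one from updating $W_V$ (the value matrix). Concretely, $\frac{\partial}{\partial t}\big(a_j w_j^\top W_V \mu\big) = a_j \big(\dot w_j^\top W_V \mu + w_j^\top \dot W_V \mu\big)$, and I would substitute the gradient expressions from \Cref{app: gradient_flow_updates}. Each term is a sum over training samples $i$ of $g(y_i F_i^{(0)}) y_i$ times an inner-product factor involving the attention output $V^{(i)} p_l^{(i)}$ and the embeddings, weighted by $a_j$, $w_j^{(0)}$, $W_V^{(0)}$. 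At initialization, $F_i^{(0)}$ is small (the weights are tiny by \Cref{condition: param}), so $g(y_i F_i^{(0)}) = \Theta(1)$ for every $i$; this is what makes the per-sample loss derivatives comparable and lets me pull that factor out up to constants.

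Next I would evaluate the inner-product factors at initialization using Gaussian concentration. Because $\{\mu_i\}$ are orthonormal and $W_V^{(0)}, W_K^{(0)}, W_Q^{(0)}$ have i.i.d.\ $\mathcal N(0,\sigma_0^2)$ entries (and $W^{(0)}$ has $\mathcal N(0,\sigma_1^2)$ entries), the softmax scores $s_l^{(i)}$ are all $O(\sqrt{\log m}\cdot \sigma_0^2 \sqrt{m}/\sqrt m)$ in magnitude with high probability, hence $p_l^{(i)}$ is uniform over the $L$ tokens up to $1\pm o(1)$ — this is the formal statement that ``attention is essentially uniform at init.'' Then the attention output $V^{(i)}p_l^{(i)} \approx \frac1L \sum_{l'} v(x_{l'}^{(i)})$. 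The quantity of interest becomes, up to $\Theta(1)$ factors and lower-order error, a sum of terms like $\langle w_j^{(0)}, v(\mu)\rangle \langle w_j^{(0)}, v(\nu)\rangle$ and $\|v(\mu)\|^2$-type terms for $\mu,\nu$ ranging over the tokens present. Using that $\langle v(\mu), v(\nu)\rangle$ concentrates around $\sigma_0^2 m$ if $\mu=\nu$ and around $0$ (with fluctuation $\sigma_0^2\sqrt m$) if $\mu\neq\nu$, and $\langle w_j^{(0)}, v(\mu)\rangle$ has variance $\sigma_1^2\sigma_0^2 m$, I would identify the surviving terms: for $\mu=\mu_1$, the samples in $I_1$ all contain $\mu_2$, and the term coupling $v(\mu_1)$ with the self-contribution $\|v(\mu_1)\|^2 = \Theta(\sigma_0^2 m)$ through $a_j^2 = 1$ and $\|w_j^{(0)}\|^2 = \Theta(\sigma_1^2 m)$ yields a $\Theta((\sigma_0^2+\sigma_1^2)m)$ contribution after accounting for the $1/L$ softmax weight, the $1/n$ loss normalization, $|I_1| = n/2$, and the $\sqrt m$ normalization inside the MLP; meanwhile cross terms with distinct embeddings cancel in expectation and their fluctuations are $\tilde O$ of a $\sqrt{\cdot}$-smaller quantity. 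I should be careful to track the exact powers of $m, m_1, L, \sigma_0, \sigma_1$ so the stated $\Theta((\sigma_0^2+\sigma_1^2)m)$ scaling comes out, and to handle the sum over $j'\in[m_1]$ hidden units correctly (the $W_V$-update piece couples different neurons).

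The main obstacle I anticipate is the concentration bookkeeping: I need a uniform (over all $i\in[n]$, all $l\in[L]$, all $\mu,\nu$) control of the softmax being near-uniform and of all the relevant quadratic forms in the Gaussian initialization, and then I must show the ``signal'' term from $I_1$ genuinely dominates the aggregate of $\tilde O(nL)$ noise terms — this requires the sample-size and dimension conditions $n\geq\tilde\Omega(L^2)$, $m\geq\tilde\Omega(\max\{m_1,L^2\})$ from \Cref{condition: param} so that the accumulated fluctuations stay below the $\Theta((\sigma_0^2+\sigma_1^2)m)$ signal. A secondary subtlety is verifying the \emph{sign}: I must check that the $I_1$ contribution to $\partial_t(a_j w_j^{(0)\top}W_V^{(0)}\mu)$ has a definite sign independent of the random $a_j$ and $w_j^{(0)}$ — this works because the dominant term is proportional to $a_j^2\|w_j^{(0)}\|^2\|v(\mu)\|^2\geq 0$ (for the $w_j$-update part) plus an analogous nonnegative term, so the result is a positive quantity times a $\Theta(1)$ factor, giving the clean $\Theta((\sigma_0^2+\sigma_1^2)m)$ conclusion for both $\mu_1$ and $\mu_2$.
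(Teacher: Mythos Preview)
Your overall strategy matches the paper's: decompose $\partial_t(a_j w_j^\top W_V\mu)$ via the product rule, use near-uniform softmax and Gaussian concentration at initialization to isolate a main term proportional to $\|w_j^{(0)}\|_2^2+\|v^{(0)}(\mu)\|_2^2=\Theta((\sigma_0^2+\sigma_1^2)m)$, and bound the cross-neuron and cross-token pieces as lower order. That is exactly the paper's argument.

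There is, however, one genuine error in your accounting. You write that ``contributions from the other sample types \ldots\ are lower order.'' For $\mu=\mu_1$, the samples in $I_2$ also contain $\mu_1$, and their contribution to the main term is of the \emph{same} order $\Theta((\sigma_0^2+\sigma_1^2)m)$ but with the opposite sign (since $y_i=-1$). The signal survives not because $I_2$ is negligible but because $|I_1|/n=\tfrac12>|I_2|/n=\tfrac16$, so the net coefficient is $\tfrac12-\tfrac16=\tfrac13$ up to the $g_i^{(0)}\approx\tfrac12$ factor. If you actually tried to prove $I_2$'s contribution is $o((\sigma_0^2+\sigma_1^2)m)$ you would fail; you must instead compute the signed difference over $\{i:\mu\in X^{(i)}\}=I_1\cup I_2$. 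The paper's proof does exactly this.

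Two smaller remarks: there is no ``$\sqrt m$ normalization inside the MLP'' in this architecture (the $\sqrt m$ appears only in the softmax argument), so that factor should not enter your bookkeeping; and the fact that $I_1$ samples contain $\mu_2$ is irrelevant to the main term for $\mu=\mu_1$ --- what matters is that they contain $\mu_1$ and have $y_i=+1$.
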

%The proof is straightforward once we observe that the major contributing factor to the gradient is from $\frac{1}{n} \sum_{i=1}^n g_i^{(0)} y_i \sum_{l_2=1}^L \sum_{j_2=1}^{m_1} \norm{w_{j_2}^{(0)}}_2^2 (X^{(i)} p_{l_2}^{(0,i)})^\top \mu$. 
%On the other hand, it can be shown that 

Further, by the definition of Phase 1 (see \Cref{def: first_stage} for a formal definition), the gradients of the attention matrices in the dynamical system are much smaller than that of linear MLP given in \Cref{lemma: neuron_W_V_signal_gradient_init_main_text}. 
%By the definition of phase 1, we can show that this holds for the entire phase. 
% \begin{lemma}[Informal]
% All the gradients of the dynamical variables stay close to their initialization values during phase 1. 
% \end{lemma}
This implies that during Phase 1, mainly the linear MLP is performing learning, whereas all the attention matrices are changing slowly from their initialization. 
Based on this, we have $\frac{\partial}{\partial t} G^{(t)}(\mu_1) = \Theta((\sigma_0^2 + \sigma_1^2) m m_1)$ which implies that it takes only $O(1 / (\sigma_0^2 + \sigma_1^2) m m_1)$ iterations for $G^{(t)}(\mu_1)$ to reach a certain constant magnitude. 

\Cref{lemma: neuron_W_V_signal_gradient_init_main_text} indicates that the samples with co-occurrence of the two target signals are classified correctly. The following lemma shows that the initial gradient from the common token, i.e., the gradient of $G^{(t)}(\mu_3)$, is much smaller than the gradient from the two target signals, which implies that the samples with only one target signal may be classified {\em incorrectly} as co-occurrence.
%, because the linear MLP $G$ positively aligns one target signal with the two signals. 
This is verified empirically by our experiments in \Cref{fig:score}~(b), where the loss function corresponding to only one target signal and the common token first increases in Phase 1. 
\begin{lemma}[Same as \Cref{lemma: initial_common_token_per_neuron_update}]\label{lemma: neuron_W_V_common_token_gradient_init_main_text}
Let $F = \max_i |F_i^{(0)}|$. With probability at least $1 - \delta$ over the weight initialization, 
\begin{align*}
\textstyle    \left| \frac{\partial a_j w_j^{(0)} W_V^{(0)} \mu_3}{\partial t} \right| = \tilde{O}\left( \sigma_1^2 \sqrt{mm_1} + \sigma_0^2 \sqrt{mL} + \sigma_1^2m F \right).
\end{align*}
\end{lemma}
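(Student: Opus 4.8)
The plan is to expand $\frac{\partial}{\partial t}\big(a_j w_j^{\top} W_V \mu_3\big)$ at $t=0$ by the product rule as $a_j \dot w_j^{\top} W_V^{(0)}\mu_3 + a_j w_j^{(0)\top}\dot W_V \mu_3$ and then substitute the gradient-flow updates $\dot w_j = -\nabla_{w_j}\widehat L$ and $\dot W_V = -\nabla_{W_V}\widehat L$ (whose explicit forms are collected in \Cref{app: gradient_flow_updates}). Writing $g_i := g(y_i F_i^{(0)})$ and $\bar p^{(i)} := \sum_{l=1}^L p_l^{(i)} \in \R^L$ for the query-summed attention, using $a_j^2 = 1$, and using orthonormality of the $\mu$'s so that $\langle x_l^{(i)},\mu_3\rangle$ equals $1$ at the single index $i_3$ with $x^{(i)}_{i_3}=\mu_3$ and $0$ elsewhere, this collapses to
\begin{align*}
\frac{\partial\, a_j w_j^{(0)\top} W_V^{(0)}\mu_3}{\partial t}
= \frac1n\sum_{i} g_i y_i \sum_{l=1}^L \bar p^{(i)}_l\, \big\langle v(x_l^{(i)}), v(\mu_3)\big\rangle
\;+\; \Big(a_j\sum_{j'} a_{j'}\langle w_j^{(0)}, w_{j'}^{(0)}\rangle\Big)\cdot\frac1n\sum_{i} g_i y_i\, \bar p^{(i)}_{i_3},
\end{align*}
with $v(\cdot):=W_V^{(0)}(\cdot)$; call these (A) and (B).

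Next I would invoke the standard initialization concentration estimates valid with probability $\ge 1-\delta$ under \Cref{condition: param}: $\|v(\mu_3)\|^2 = \tilde{\Theta}(\sigma_0^2 m)$ while $|\langle v(\mu_a), v(\mu_3)\rangle| = \tilde{O}(\sigma_0^2\sqrt m)$ for $a\ne 3$; $\|w_j^{(0)}\|^2 = \tilde{\Theta}(\sigma_1^2 m)$ while $\big|\sum_{j'\ne j} a_{j'}\langle w_j^{(0)}, w_{j'}^{(0)}\rangle\big| = \tilde{O}(\sigma_1^2\sqrt{mm_1})$; and, since $\sigma_0$ is so small that every initial score $s^{(i)}_l[l']$ is $\tilde{O}(\sigma_0^2)$, the near-uniformity bound $\bar p^{(i)}_l = 1 + o(1)$ for all $i,l$. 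In (A), separate the $x_l^{(i)}=\mu_3$ term, which is $\big(\tfrac1n\sum_i g_i y_i \bar p^{(i)}_{i_3}\big)\|v(\mu_3)\|^2$, from the rest; within each sample the irrelevant tokens are distinct (\Cref{assumption: perfect_proportion_diversity}) and their $W_V^{(0)}$-images are conditionally independent given $W_V^{(0)}\mu_3$, so a per-sample Gaussian-tail bound plus a union bound over $i$ controls the remainder by $\tilde{O}(\sigma_0^2\sqrt{mL})$ (the $\mu_1,\mu_2$ contributions are only $\tilde{O}(\sigma_0^2\sqrt m)$, absorbed). Likewise split the $\sum_{j'}$ in (B) into $j'=j$ and $j'\ne j$: the $j'\ne j$ piece is at most $\tilde{O}(\sigma_1^2\sqrt{mm_1})\cdot\big|\tfrac1n\sum_i g_i y_i\bar p^{(i)}_{i_3}\big|\le \tilde{O}(\sigma_1^2\sqrt{mm_1})$ using $|g_i|\le1$ and $\bar p^{(i)}_{i_3}=1+o(1)$, while the $j'=j$ piece is $\big(\tfrac1n\sum_i g_i y_i \bar p^{(i)}_{i_3}\big)\|w_j^{(0)}\|^2$.

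The crux is therefore the scalar $\tfrac1n\sum_i g_i y_i\bar p^{(i)}_{i_3}$, which multiplies the two \emph{large} factors $\|v(\mu_3)\|^2=\tilde{\Theta}(\sigma_0^2 m)$ and $\|w_j^{(0)}\|^2=\tilde{\Theta}(\sigma_1^2 m)$: if it were $\Theta(1)$ the claim would fail badly. Here one uses that $\mu_3$ occurs in \emph{every} sample together with the label-balance part of \Cref{assumption: perfect_proportion_diversity}, giving $\sum_i y_i = |I_1|-|I_2|-|I_3|-|I_4| = 0$. Writing $g_i = \tfrac12 + r_i$ with $|r_i| = \big|g(y_iF_i^{(0)})-\tfrac12\big|\le\tfrac14|F_i^{(0)}|\le\tfrac14 F$ (Lipschitzness of $g$ at $0$), one obtains $\tfrac1n\sum_i g_i y_i\bar p^{(i)}_{i_3} = \tfrac1{2n}\sum_i y_i(\bar p^{(i)}_{i_3}-1) + \tfrac1{2n}\sum_i y_i + \tfrac1n\sum_i r_i y_i\bar p^{(i)}_{i_3} = o(1) + 0 + \tilde{O}(F)$. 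Hence the $j'=j$ piece of (B) is $\tilde{O}(\sigma_1^2 mF)$ (plus negligible lower-order terms), and the $x_l^{(i)}=\mu_3$ part of (A) is $\tilde{O}(\sigma_0^2 mF)$, which is dominated by $\tilde{O}(\sigma_1^2 mF)$ since $m\ge\tilde{\Omega}(m_1)$ forces $\sigma_0^2 = \tilde{O}(\sigma_1^2)$. Collecting the four pieces yields $\tilde{O}\big(\sigma_1^2\sqrt{mm_1} + \sigma_0^2\sqrt{mL} + \sigma_1^2 mF\big)$, as claimed. I expect this cancellation step to be the main obstacle: it requires a clean prior bound on $F=\max_i|F_i^{(0)}|$ and a tight control of the initial attention deviation $\bar p^{(i)}_{i_3}-1$, so that what survives the label-balanced cancellation is genuinely $O(F)$ up to lower-order Gaussian noise rather than $\Theta(1)$; everything else is routine concentration of Gaussian norms and inner products.
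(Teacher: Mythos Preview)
Your proposal is correct and follows essentially the same route as the paper. The paper's proof (\Cref{lemma: initial_common_token_per_neuron_update}) uses the decomposition of \Cref{lemma: evo_mlp_1st} into a ``main'' term $(\|w_{j}^{(0)}\|_2^2+\|v^{(0)}(\mu_3)\|_2^2)\cdot\frac1n\sum_i g_i^{(0)}y_i\sum_l p^{(0,i)}_{q\leftarrow l,k\leftarrow\mu_3}$ plus two error terms $\eps_1,\eps_2$; this is exactly your (A)+(B) regrouped by pulling out the diagonal pieces $j'=j$ and $x_l=\mu_3$ first. The off-diagonal bounds $|\eps_1|\le\tilde O(\sigma_1^2\sqrt{mm_1})$ and $|\eps_2|\le\tilde O(\sigma_0^2\sqrt{mL})$ come from \Cref{prop: init_perturb_w_W_V_signal}, matching your Gaussian concentration steps, and the paper bounds the key scalar by $2F+O(1/m)$ via the same label-balance argument you give (using \Cref{corollary: p_init} for the attention deviation and \Cref{lemma: initial_network_output} for $F$), so your ``$o(1)$'' term is in fact $O(1/m)$ and is genuinely negligible after multiplication by $(\sigma_0^2+\sigma_1^2)m$.
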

Notice that the model output $F$ depends on the weight initialization scale and can be made small. 
%Thus, the initial gradient from the common token is much smaller than the gradient from the two signals. 

We next show in the following lemma that the gradient $\frac{\partial G^{(t)}(\mu_3)}{\partial t}$ of the common token will quickly become negative soon after the training begins, which drives the transformer model to output a negative value when it sees those types of samples. This implies that negative samples (without co-occurrence of two target tokens) will be classified correctly towards the end of Phase 1. This is also verified empirically by our experiments in \Cref{fig:score}~(b), where the loss function corresponding to only one target signal and the common token descreases towards the end of Phase 1. 
%In fact, we are able to show that the gradient from $\mu_3$ will eventually grow to a state such that
\begin{lemma}[Abbreviated from \Cref{thm: common_token_gradient_stage_1}]\label{lemma: gradient_balancing_condition_phase1_main_text}
There exists a time $T_{0.5} \leq T_1$ and a constant $C$ such that for all $t \in [T_{0.5}, T_1]$
\begin{align*}
    &\textstyle (1+C) \max\left(\frac{\partial G^{(t)}(\mu_1)}{\partial t} , \frac{\partial G^{(t)}(\mu_2)}{\partial t} \right) \leq -\frac{\partial G^{(t)}(\mu_3)}{\partial t} \leq (1-C) \left(\frac{\partial G^{(t)}(\mu_1)}{\partial t} + \frac{\partial G^{(t)}(\mu_2)}{\partial t} \right).
\end{align*}
\end{lemma}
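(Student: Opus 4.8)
The plan is to reduce the claim to an analysis of a small, approximately closed dynamical system governing only the three scalars $G^{(t)}(\mu_1),G^{(t)}(\mu_2),G^{(t)}(\mu_3)$, and then to exhibit the sandwich inequality as a forward-invariant region for that system. First I would differentiate $G^{(t)}(\mu_k)=a^\top W^{(t)}W_V^{(t)}\mu_k$, getting $\tfrac{d}{dt}G^{(t)}(\mu_k)=a^\top\dot W^{(t)}W_V^{(t)}\mu_k+a^\top W^{(t)}\dot W_V^{(t)}\mu_k$, and plug in the gradient-flow expressions from \Cref{app: gradient_flow_updates}. Using the defining conditions of Phase~1 (cf.\ item~3 of \Cref{thm: stage_1_main_result_main_text}) — the attention scores stay tiny, so every $\softmax(s_l^{(i)})$ is within a vanishing distance of uniform, and the quadratic forms $\mu^\top W_V^{(t)\top}W_V^{(t)}\nu$ and $\|W^{(t)\top}a\|^2$ stay within a $(1\pm o(1))$ factor of their initialization values $\sigma_0^2 m\,\mathbbm 1[\mu=\nu]$ (up to $\tilde O(\sigma_0^2\sqrt m)$ cross terms) and $\sigma_1^2 m m_1$ respectively — together with \Cref{assumption: perfect_proportion_diversity}(ii), which keeps every random-token value $G^{(t)}(\mu_r)$ at negligible scale, I can rewrite $F_i^{(t)}=\sum_{\mu\in X^{(i)}}G^{(t)}(\mu)+(\text{lower order})$ and collapse the three derivatives to
\[
\tfrac{d}{dt}G^{(t)}(\mu_1)=\kappa(S_1-S_2)+E_1,\quad \tfrac{d}{dt}G^{(t)}(\mu_2)=\kappa(S_1-S_3)+E_2,\quad \tfrac{d}{dt}G^{(t)}(\mu_3)=\kappa(S_1-S_2-S_3-S_4)+E_3,
\]
where $\kappa=\Theta\!\big((\sigma_0^2+\sigma_1^2)m m_1/n\big)$, $S_k=\sum_{i\in I_k}g(y_iF_i^{(t)})$, the $E_j$ are lower order under \Cref{condition: param}, and by \Cref{assumption: perfect_proportion_diversity}(i) $|I_1|=n/2$, $|I_2|=|I_3|=|I_4|=n/6$. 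The coincidence that the two terms $a^\top\dot W W_V\mu_k$ and $a^\top W\dot W_V\mu_k$ carry the \emph{same} signed mass $\sum_{i:\,\mu_k\in X^{(i)}}y_ig(y_iF_i)$ is what makes the system clean, and each $g(y_iF_i^{(t)})$ depends, up to $o(1)$, only on the type of sample $i$ and on $(G^{(t)}(\mu_1),G^{(t)}(\mu_2),G^{(t)}(\mu_3))$, so this is a genuinely closed $3$-dimensional ODE modulo noise.

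Next I would translate the target. Dividing the desired inequality by $\kappa>0$, it becomes $(1+C)\max(S_1-S_2,\,S_1-S_3)\le S_2+S_3+S_4-S_1\le(1-C)(2S_1-S_2-S_3)$, i.e.\ a finite list of strict linear inequalities in $S_1,\dots,S_4$; since $g$ is strictly decreasing these amount to explicit constant bounds on the four per-type margins $\gamma_1:=G(\mu_1)+G(\mu_2)+G(\mu_3)$, $\gamma_2:=-(G(\mu_1)+G(\mu_3))$, $\gamma_3:=-(G(\mu_2)+G(\mu_3))$, $\gamma_4:=-G(\mu_3)$. I would package these, with a small constant slack absorbing the $E_j$, as a region $\mathcal G\subset\R^3$, reducing the lemma to: (a) the Phase-1 trajectory enters $\mathcal G$ at some $T_{0.5}\le T_1$, and (b) $\mathcal G$ is forward-invariant under the true flow on $[T_{0.5},T_1]$.

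For (a): \Cref{lemma: neuron_W_V_signal_gradient_init_main_text} forces $G^{(t)}(\mu_1),G^{(t)}(\mu_2)$ to grow at rate $\Theta((\sigma_0^2+\sigma_1^2)mm_1)$ from near $0$; \Cref{lemma: neuron_W_V_common_token_gradient_init_main_text} bounds the initial $\mu_3$-gradient, and once $G^{(t)}(\mu_1)+G^{(t)}(\mu_2)$ has grown, the sign of $S_1-S_2-S_3-S_4$ flips, so $G^{(t)}(\mu_3)$ starts to decrease at a comparable rate. Tracking these monotonicities shows all four $\gamma_j$ cross into the windows defining $\mathcal G$ within a time $O(1/\kappa)=\tilde O(1/m)$, which one checks is $\le T_1$. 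For (b) I would run a Nagumo-type tangency argument: on each face of $\partial\mathcal G$ I compute the time-derivative of the corresponding linear functional of $(G(\mu_1),G(\mu_2),G(\mu_3))$, re-express it through the $S_k$ (hence through $g(\gamma_j)$), and verify it points strictly inward — here the logistic shape $g'=-g(1-g)$ and the ordering of the $\gamma_j$'s on that face do the work. This is exactly the automatic balancing phenomenon: as the margins move, the masses $S_k$ self-adjust so that $-\tfrac{d}{dt}G(\mu_3)$ stays pinned between $(1+C)\max$ and $(1-C)\,\mathrm{sum}$ of the other two rates.

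The main obstacle is executing (b) jointly with the error control. One must verify the inward-pointing inequalities on every face of $\mathcal G$ with the nonlinear $g$'s, while simultaneously showing that the accumulated Phase-1 perturbations — drift of the softmax away from uniform, movement of $W^{(t)}_V$ and $W^{(t)}$ inside the quadratic forms, and contamination from the $\Theta(L)$ distinct random tokens per sample — stay below the constant slack of $\mathcal G$ \emph{uniformly} over $[T_{0.5},T_1]$, so that the strict defining inequalities survive. The bookkeeping here leans on the scalings in \Cref{condition: param} ($m\ge\tilde\Omega(\max\{m_1,L^2\})$, $\sigma_0=1/\tilde\Theta(\sqrt{Lm})$, $\sigma_1=1/\tilde\Theta(\sqrt{m_1})$, $n\ge\tilde\Omega(L^2)$) and on \Cref{assumption: perfect_proportion_diversity}, and is where the bulk of the technical work lies.
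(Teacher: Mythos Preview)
Your reduction to the three-scalar system driven by $\kappa(S_1-S_2)$, $\kappa(S_1-S_3)$, $\kappa(S_1-S_2-S_3-S_4)$ with $S_k=\sum_{i\in I_k}g(y_iF_i^{(t)})$ is exactly what the paper extracts in the proof of \Cref{thm: common_token_gradient_stage_1}, and your translation of the sandwich into linear inequalities among the $S_k$'s is correct. The execution differs mainly in packaging: rather than a Nagumo forward-invariance argument on a region $\mathcal G$, the paper works directly with crossing times --- it locates the first $t'$ at which $-\tfrac{\partial}{\partial t}G^{(t')}(\mu_3)=\tfrac{\partial}{\partial t}G^{(t')}(\mu_1)$, evaluates $\tfrac{\partial}{\partial t}y_iF_i^{(t')}$ for each sample type there, and checks that a specific linear combination of the $S_k$ (namely $2S_1-2S_2-S_3-S_4$, see \eqref{eq: grad_signal_grad_minus_common_token_grad}) has a strictly negative time-derivative, forcing the lower bound to become strict and persist; the upper bound is handled by the symmetric observation at the other extreme. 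The paper also invokes \Cref{prop: gradient_I_2_I_3_closeness_stage_1} (a Gr\"onwall argument keeping $|S_2-S_3|\le O(F)$ throughout Phase~1), which is the concrete input behind the $\mu_1$/$\mu_2$ symmetry you use implicitly. Your framing is a cleaner abstraction of the same self-correcting mechanism; the paper's version trades that for not having to delineate $\mathcal G$ globally.

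One point to correct in your plan: the sandwich constraints are linear in the $S_k$'s, but after substituting $S_k\approx|I_k|\,g(\gamma_k)$ they are \emph{not} ``explicit constant bounds on the margins $\gamma_j$'' --- the resulting constraints in $(G(\mu_1),G(\mu_2),G(\mu_3))$-space are genuinely nonlinear, so $\partial\mathcal G$ is curved and there is no ``corresponding linear functional'' attached to each face. The Nagumo tangency check still goes through, but you must differentiate the nonlinear defining residual via $\dot S_k=-|I_k|\,g(\gamma_k)(1-g(\gamma_k))\,\dot\gamma_k$; this is precisely where the logistic identity $g'=-g(1-g)$ you cite enters, and the computation is essentially the same one the paper does at the crossing times (it just stays in $S$-space, where the constraints \emph{are} linear). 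If you intend instead to enclose the sandwich region in a polyhedral box in $\gamma$-space and prove the box forward-invariant, you would need a separate argument that such a box exists with both properties simultaneously, which the paper does not provide and is not obvious.
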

\begin{proof}[Proof Intuition of \Cref{lemma: gradient_balancing_condition_phase1_main_text}]
We first note that the term $$\textstyle \frac{1}{n} \sum_{i=1}^n g_i^{(0)} y_i \sum_{l_2=1}^L \sum_{j_2=1}^{m_1} \norm{w_{j_2}^{(0)}}_2^2 (X^{(i)} p_{l_2}^{(0,i)})^\top \mu$$ makes the major contribution to the gradient $\frac{\partial G^{(t)}(\mu_3)}{\partial t}$.
Such a term is small at the initialization due to the cancellation effect from positive and negative $y_i$'s. 
However, since the linear MLP $G$ will positively align the two target signals at the beginning, for the samples with positive labels, $g_i^{(t)}$ will decrease, whereas for samples with only one target signal, $g_i^{(t)}$ will increase. Hence, $\frac{\partial a_j w_j^{(t)} W_V^{(t)} \mu_3}{\partial t}$ will become negative. 
This trend will continue until the gradient from $\mu_3$ starts to match the gradients from $\mu_1, \mu_2$, which is what we establish in \Cref{lemma: gradient_balancing_condition_phase1_main_text}.
%We now provide intuition for a weaker inequality by ignoring the constant $C$. For simplicity, let us only consider the major contributing factors in $ \frac{\partial G^{(t)}(\mu_1)}{\partial t}, \frac{\partial G^{(t)}(\mu_3)}{\partial t}$. Notice that now comparing $\frac{\partial G^{(t)}(\mu_1)}{\partial t}$ and $ \frac{\partial G^{(t)}(\mu_3)}{\partial t}$ boils down to comparing $\sum_{i\in I_1} g_i^{(t)} - \sum_{i\in I_2} g_i^{(t)}$ and $\sum_{i\in I_1} g_i^{(t)} - \sum_{i\in I_2\cup I_3 \cup I_4} g_i^{(t)}$. We make one more simplification by assuming that $\frac{\partial G^{(t)}(\mu_1)}{\partial t} = \frac{\partial G^{(t)}(\mu_2)}{\partial t}$. Now, if $\frac{\partial G^{(t)}(\mu_1)}{\partial t} = - \frac{\partial G^{(t)}(\mu_3)}{\partial t}$, then $\sum_{i \in I_1} g_i^{(t)}$ will decrease whereas $\sum_{i \in I_2 \cup I_3} g_i^{(t)}$ remains roughly unchanged. With some additional effort, we can show that in this case, $\frac{\partial G^{(t)}(\mu_1)}{\partial t}$ decreases faster than $-\frac{\partial G^{(t)}(\mu_3)}{\partial t}$. Thus, the first inequality is proved. The second inequality can be proved in a similar way: if $-\frac{\partial G^{(t)}(\mu_3)}{\partial t} = \left(\frac{\partial G^{(t)}(\mu_1)}{\partial t} + \frac{\partial G^{(t)}(\mu_2)}{\partial t} \right)$, then $\sum_{i\in I_1} g_i^{(t)}$ stays roughly unchanged and $\sum_{i \in I_2 \cup I_3} g_i^{(t)}$ will decrease. The above analysis gives the result on Phase 1 in \Cref{thm: stage_1_main_result_main_text}, but the training loss is still large.
\end{proof}

Using \Cref{lemma: gradient_balancing_condition_phase1_main_text}, we can show that all training samples are correctly classified at end of Phase 1.

\subsection{Phase 2: Evolution of Attention and MLP for Large Classification Margin}\label{subsection: second_phase}

In Phase 2, both attention and MLP matrices evolve towards enlarging the classification margin, thus driving the loss value small.

%Due to the alignment in MLP in stage 1, the gradients of other dynamical variables has become more informative. Thus, in the second phase, more variables in the dynamical system especially the softmax starts to change in a principled way. On the other hand, we also show that the softmax is not very different from a uniform distribution for a very long time ($T_2$ is polynomial in $m$) during training even when the training loss becomes very small. 

We now analyze what happens in Phase 2. Let $T_2$ denote the end of Phase 2. 
Recall that at the end of Phase 1, we have $G^{(t)}(\mu_1), G^{(t)}(\mu_2), -G^{(t)}(\mu_3) \geq \Omega(1)$. We will mainly need to show that such a condition continues to hold in Phase 2, so that attention matrices will evolve with MLP to learn better classifiers.
%This condition can be used to show that the gradients of attention matrices become informative at the beginning of Phase 2. However, first we need show this condition continues to hold in stage 2.
To this end, we exam the following gradient flow in the dynamical system:
\begin{align}\label{eq: gradient_G_mu}
\textstyle    \frac{\partial G^{(t)}(\mu)}{\partial t} =& \textstyle \frac{1}{n} \sum_{i_2:\ \mu \in X^{(i_2)}} g_{i_2}^{(t)} y_{i_2} \sum_{l_2=1}^L p_{q\leftarrow l_2, k\leftarrow \mu}^{(t, i_2)} \cdot \sum_{j_1 = 1}^{m_1} \sum_{j_2=1}^{m_1} a_{j_1} a_{j_2} \inprod{w_{j_1}^{(t)}, w_{j_2}^{(t)}} \\
    & \textstyle + \frac{m_1}{n} \sum_{i_1=1}^n g_{i_1}^{(t)} y_{i_1} \sum_{l_1=1}^L a_{j_1} p^{(t,i_1) \top}_{l_1} V^{(t,i_1) \top} W_V^{(t)} \mu. \nonumber
\end{align}
It has been proved that at the end of Phase 1, for $\mu \in \{\mu_1, \mu_2, \mu_3\}$, we have $\left| \sum_{j_1} \frac{\partial w_{j_1}^{(t)\top}}{\partial t} W_V^{(t)} \mu \right| \ll \left| \sum_{j_1} w_{j_1}^{(t)\top} \frac{\partial W_V^{(t)}}{\partial t} \mu \right|$ since the magnitude of $\sum_{j_1=1}^{m_1} \sum_{j_2=1}^{m_1} \inprod{a_{j_1} w_{j_1}^{(t)}, a_{j_2} w_{j_2}^{(t)}}$ is large.
Assume this can hold for long enough (which we can indeed prove later). Then, we only need to focus on the first term in the sum on the right-hand side in \Cref{eq: gradient_G_mu}. 
On the other hand, from the dynamical system, we can calculate
\begin{align}\label{eq: gradient_sum_neuron_correlation}
    & \textstyle \frac{\partial }{\partial t} \sum_{j_1=1}^{m_1} \sum_{j_2=1}^{m_1} \inprod{a_{j_1} w_{j_1}^{(t)}, a_{j_2} w_{j_2}^{(t)}} = \frac{2 m_1}{n} \sum_{i=1}^n g_i^{(t)} y_i F_i^{(t)}.
\end{align}
Thus, if $y_i F_i^{(t)} > 0$ for all $i \in [n]$, then $\sum_{j_1=1}^{m_1} \sum_{j_2=1}^{m_1} \inprod{a_{j_1} w_{j_1}^{(t)}, a_{j_2} w_{j_2}^{(t)}}$ is always increasing. 
Thus, $\frac{\partial G^{(t)}(\mu)}{\partial t}$ mainly depends on the behavior of $  \frac{1}{n} \sum_{i_2:\ \mu \in X^{(i_2)}} g_{i_2}^{(t)} y_{i_2} \sum_{l_2=1}^L p_{q\leftarrow l_2, k\leftarrow \mu}^{(t, i_2)}$.
Further, this is also a key quantity we need to analyze $\frac{\partial \nu^\top W_V^{(t) \top} W_V^{(t)} \mu}{\partial t}$ and $ \frac{\partial \nu^\top W_K^{(t) \top} W_Q^{(t)} \mu}{\partial t}$.
Note that $  \frac{1}{n} \sum_{i_2:\ \mu \in X^{(i_2)}} g_{i_2}^{(t)} y_{i_2} \sum_{l_2=1}^L p_{q\leftarrow l_2, k\leftarrow \mu}^{(t, i_2)} \approx \frac{1}{n} \sum_{i_2:\ \mu \in X^{(i_2)}} g_{i_2}^{(t)} y_{i_2}$ if $ p_{q\leftarrow l_2, k\leftarrow \mu}^{(t, i_2)} \approx 1/L$ which holds at the beginning of Phase 2.
% Therefore, as long as $ p_{q\leftarrow l_2, k\leftarrow \mu}^{(t, i_2)} \approx 1/L$, we only need to analyze how $\frac{1}{n} \sum_{i_2:\ \mu \in X^{(i_2)}} g_{i_2}^{(t)} y_{i_2}$ behaves. 
Later, we are going to prove convergence of the training loss via the following: (1) the training loss can decrease if the softmax probability is uniform; (2) even though the softmax probability will deviate from uniform distribution during training, we can bound such deviation and the loss value can still decrease.

\noindent
\textbf{Automatic balancing of gradients.}
As argued above, our main focus is on analyzing the behavior of $\frac{1}{n} \sum_{i_2:\ \mu \in X^{(i_2)}} g_{i_2}^{(t)} y_{i_2}$. 
This consists of two parts: 
%(1) the relationship between $\sum_{i\in I_2} g_i^{(t)}$ and $\sum_{i\in I_3} g_i^{(t)}$ (\Cref{lemma: main_text_grad_G_mu1_approx_grad_G_mu2_stage_2}); (2) the relationship among $\sum_{i \in I_1} g_i^{(t)} - \sum_{i \in I_2} g_i^{(t)}, \sum_{i \in I_2 \cup I_3 \cup I_4} g_i^{(t)} - \sum_{i \in I_1} g_i^{(t)}, \sum_{i \in [n]} g_i^{(t)}$ (\Cref{lemma: main_text_gradient_automatic_balancing_stage2}).
(i) \Cref{lemma: main_text_grad_G_mu1_approx_grad_G_mu2_stage_2}, which shows that the two groups of samples with only the presence of one target signal have gradients $\sum_{i \in I_2} g_i^{(t)}$ and $\sum_{i \in I_3} g_i^{(t)}$ close to each other during training; and (ii) \Cref{lemma: main_text_gradient_automatic_balancing_stage2}, which shows that the gradient gaps $\sum_{i \in I_1} g_i^{(t)} - \sum_{i \in I_2} g_i^{(t)}$ and $\sum_{i \in I_2 \cup I_3 \cup I_4} g_i^{(t)} - \sum_{i \in I_1} g_i^{(t)}$ are not too small compared with $ \sum_{i \in [n]} g_i^{(t)}$. 
Both \Cref{lemma: main_text_grad_G_mu1_approx_grad_G_mu2_stage_2,lemma: main_text_gradient_automatic_balancing_stage2} establish that the ratio of those important gradients are kept within certain ranges during training. We call such a key property as \textit{automatic balancing of gradients}, which is further used for proving that the gradient flow can drive the training loss small.  

%More specifically, \Cref{lemma: main_text_grad_G_mu1_approx_grad_G_mu2_stage_2} implies that the two groups of samples with only the presence of one target signal have gradients close to each other. And \Cref{lemma: main_text_gradient_automatic_balancing_stage2} implies that the gradient gaps $\sum_{i \in I_1} g_i^{(t)} - \sum_{i \in I_2} g_i^{(t)}, \sum_{i \in I_2 \cup I_3 \cup I_4} g_i^{(t)} - \sum_{i \in I_1} g_i^{(t)}$ are not too small compared with $ \sum_{i \in [n]} g_i^{(t)}$. These gradient gap terms affect the update of the softmax score and hence such an automatic balancing result is further used for proving that the gradient flow can drive the training loss small. 
%which enables us to show that the losses of all training samples can decrease at roughly the same rate, and is also a key component in proving the convergence of the training loss as well as analyzing the changes of softmax. 

%The following lemma mainly shows that the two groups of samples with only the presence of one target signal have gradients $\sum_{i \in I_2} g_i^{(t)}$ and $\sum_{i \in I_3} g_i^{(t)}$ close to each other during training.
\begin{lemma}[Same as \Cref{lemma: I_2_I_3_gradient_ratio_bound}]\label{lemma: main_text_grad_G_mu1_approx_grad_G_mu2_stage_2}
For $t \in [T_1, T_2]$, there exists a small constant $C \ll 1$ such that 
\begin{align*}
\textstyle    \frac{\left| \sum_{i \in I_2} g_i^{(t)} - \sum_{i \in I_3} g_i^{(t)} \right|}{\min(\sum_{i \in I_2} g_i^{(t)}, \sum_{i \in I_3} g_i^{(t)})} \leq C.
\end{align*}
\end{lemma}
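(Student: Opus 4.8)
\textbf{Proof proposal for \Cref{lemma: main_text_grad_G_mu1_approx_grad_G_mu2_stage_2}.}
The plan is to exploit the near-symmetry between the two target signals $\mu_1$ and $\mu_2$ under the data model: after swapping $\mu_1 \leftrightarrow \mu_2$, the distribution $\mathcal{D}$ is invariant, and the sample groups $I_2$ and $I_3$ are exchanged. The quantity $\sum_{i\in I_2} g_i^{(t)} - \sum_{i\in I_3} g_i^{(t)}$ is therefore driven entirely by the asymmetry that the \emph{random} weight initialization introduces between $\mu_1$- and $\mu_2$-related dynamical variables, which by \Cref{thm: stage_1_main_result_main_text} (item 3) and the initialization concentration bounds is small — at the scale of the initialization fluctuations — at time $T_1$. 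First I would set $D^{(t)} := \sum_{i\in I_2} g_i^{(t)} - \sum_{i\in I_3} g_i^{(t)}$ and write a differential inequality for $|D^{(t)}|$ by differentiating $g_i^{(t)} = g(y_i F_i^{(t)})$, so $\frac{d}{dt} g_i^{(t)} = -g_i^{(t)}(1-g_i^{(t)}) y_i \frac{\partial F_i^{(t)}}{\partial t}$. The goal is to show $D^{(t)}$ cannot grow large relative to $\min(\sum_{i\in I_2} g_i^{(t)}, \sum_{i\in I_3} g_i^{(t)})$.

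The key steps, in order, are: (1) Express $y_i \frac{\partial F_i^{(t)}}{\partial t}$ in terms of the dynamical variables (the MLP functions $G^{(t)}(\mu)$, the neuron correlations $\inprod{w_{j_1}^{(t)}, w_{j_2}^{(t)}}$, the value correlations $\nu^\top W_V^{(t)\top} W_V^{(t)}\mu$, and the softmax probabilities $p^{(t,i)}_{q\leftarrow l, k\leftarrow \mu}$), using the gradient-flow update formulas from \Cref{app: gradient_flow_updates}; for $i\in I_2$ the relevant signal is $\mu_1$ and for $i\in I_3$ it is $\mu_2$, so the two time-derivatives differ only through $\mu_1$-vs-$\mu_2$ dynamical variables. (2) Show that every such pair of dynamical variables ($G^{(t)}(\mu_1)$ vs $G^{(t)}(\mu_2)$; $\nu^\top W_V^{(t)\top}W_V^{(t)}\mu_1$ vs $\nu^\top W_V^{(t)\top}W_V^{(t)}\mu_2$; $\mu_3^\top W_K^{(t)\top}W_Q^{(t)}\mu_1$ vs $\mu_3^\top W_K^{(t)\top}W_Q^{(t)}\mu_2$, etc.) stays within a multiplicative $(1\pm C)$-factor of each other throughout $[T_1,T_2]$ — this is itself a coupled bootstrap, proved simultaneously with the bound on $D^{(t)}$ via a continuity/maximality argument. (3) Combine (1) and (2): the "symmetric part" of $y_i\frac{\partial F_i^{(t)}}{\partial t}$ is common to $I_2$ and $I_3$ and produces, after aggregation, a term proportional to $D^{(t)}$ itself (a contraction-type term because larger $g_i$ means faster decrease), while the "asymmetric part" is bounded by $C$ times $\sum_{i}g_i^{(t)}$ times the analogous sums; feeding this into Grönwall for $|D^{(t)}|$ and using the base case at $T_1$ closes the bound. (4) Finally convert the additive bound on $|D^{(t)}|$ into the stated multiplicative bound by noting $\min(\sum_{I_2}g_i, \sum_{I_3}g_i) = \Theta(|I_2|/n \cdot \bar g)$ is not too small during Phase 2, using item 3 of \Cref{thm: stage_2_main_result_main_text} which keeps $G^{(t)}(\mu_1),G^{(t)}(\mu_2)$ at $\Theta(1)$ so the $g_i$'s for $i\in I_2\cup I_3$ are comparable to each other and bounded away from both $0$ and their initial values.

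The main obstacle I expect is step (2)–(3) together: the self-referential nature of the argument. One wants to bound $D^{(t)}$ using the fact that $\mu_1$- and $\mu_2$-variables are balanced, but that balance is in turn maintained \emph{because} $D^{(t)}$ (which enters the gradients of those variables) is small — so the whole thing must be run as a single simultaneous induction over a carefully chosen vector of "balance" quantities, and one must verify that the feedback is genuinely contracting (or at worst mildly expanding, controllable by Grönwall up to $T_2 = \mathrm{poly}(m)$) rather than blowing up. A secondary technical difficulty is handling the softmax probabilities $p^{(t,i)}_{q\leftarrow l,k\leftarrow\mu}$ cleanly: they are not symmetric across samples, so one needs the Phase 2 control on how far the attention deviates from uniform (the $1/L + o(1/L)$ estimates referenced after \Cref{eq: gradient_sum_neuron_correlation}) to ensure the softmax-induced asymmetry between $I_2$ and $I_3$ contributions is also $O(C)$-small and does not dominate $D^{(t)}$.
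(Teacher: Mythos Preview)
Your approach differs from the paper's and contains a genuine gap in step~(4). You propose to bound the \emph{difference} $D^{(t)} = \sum_{I_2} g_i^{(t)} - \sum_{I_3} g_i^{(t)}$ additively via Gr\"onwall, then divide by $\min(\sum_{I_2} g_i^{(t)}, \sum_{I_3} g_i^{(t)})$ at the end, claiming the denominator is ``not too small'' because the $g_i$'s are ``bounded away from $0$''. But this is false in Phase~2: by \Cref{thm: main_result_main_text} the training loss, and hence every $g_i^{(t)}$, goes to $1/\mathrm{poly}(m)$. An additive bound on $D^{(t)}$ that does not itself decay at the same rate as $\sum_{I_2} g_i^{(t)}$ gives nothing. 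Your contraction term in step~(3) might in principle make $D^{(t)}$ decay at the right rate, but then you would need to track $D^{(t)}/\sum_{I_2} g_i^{(t)}$ rather than $D^{(t)}$ alone --- and at that point you are essentially doing the paper's argument.

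The paper (\Cref{lemma: I_2_I_3_gradient_ratio_bound}) works with the \emph{ratio} $\sum_{I_2} g_i^{(t)} / \sum_{I_3} g_i^{(t)}$ from the outset and shows it is attracted to a fixed point near $1$. The mechanism is a direct negative feedback that bypasses your step~(2) entirely: since $\tfrac{\partial}{\partial t} G^{(t)}(\mu_1) \approx (\sum_{I_1} g_i - \sum_{I_2} g_i)\cdot\sum_{j_1,j_2}\langle a_{j_1}w_{j_1}, a_{j_2}w_{j_2}\rangle$ and likewise for $\mu_2$ with $I_3$ in place of $I_2$, an excess of $\sum_{I_2} g_i$ over $\sum_{I_3} g_i$ \emph{directly} makes $\tfrac{\partial}{\partial t} G^{(t)}(\mu_1) < \tfrac{\partial}{\partial t} G^{(t)}(\mu_2)$, which in turn makes $y_i \tfrac{\partial}{\partial t} F_i^{(t)}$ larger for $i\in I_2$ than $i\in I_3$, so $\sum_{I_2} g_i$ drops faster --- no separate bootstrap on pairs of $\mu_1$/$\mu_2$ dynamical variables is needed. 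The paper then solves for where $\tfrac{\partial}{\partial t}(\sum_{I_2} g_i / \sum_{I_3} g_i) = 0$ and shows this fixed point lies in $1 \pm O(\max_i e^{-y_i F_i^{(t)}})$, which is small throughout Phase~2 by \Cref{def: stage_2}. This ratio-based argument is both shorter and robust to the $g_i \to 0$ regime that breaks your step~(4).
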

%\begin{comment}
\begin{proof}[Proof Intuition of \Cref{lemma: main_text_grad_G_mu1_approx_grad_G_mu2_stage_2}]
The intuition behind the result is as follows.
If $\sum_{i \in I_2} g_i^{(t)}$ becomes much bigger than $\sum_{i \in I_3} g_i^{(t)}$ during the training, then $\sum_{i \in I_1} g_i^{(t)} - \sum_{i \in I_2} g_i^{(t)}$ is much smaller than $\sum_{i \in I_1} g_i^{(t)} - \sum_{i \in I_3} g_i^{(t)}$ which makes $\frac{\partial G^{(t)}(\mu_1)}{\partial t} < \frac{\partial G^{(t)}(\mu_2)}{\partial t}$. 
It is not hard to show that random tokens make negligible contributions to the gradient. 
Thus, for $i \in I_2$, we have $\frac{\partial F_i^{(t)}}{\partial t} \approx \frac{\partial G^{(t)}(\mu_1)}{\partial t} + \frac{\partial G^{(t)}(\mu_3)}{\partial t}$.
By the chain rule, we have $\frac{\partial g_i^{(t)}}{\partial t} = g'(y_i F_i^{(t)}) y_i \frac{\partial F_i^{(t)}}{\partial t}$. 
Since $\frac{\partial G^{(t)}(\mu_3)}{\partial t} < 0$, if $\frac{\partial G^{(t)}(\mu_1)}{\partial t} < \frac{\partial G^{(t)}(\mu_2)}{\partial t}$, then $\sum_{i \in I_2} g_i^{(t)}$ will drop faster than $\sum_{i \in I_3} g_i^{(t)}$, i.e., $-\frac{\partial}{\partial t} \sum_{i \in I_2} g_i^{(t)} > -\frac{\partial}{\partial t} \sum_{i \in I_3} g_i^{(t)}$.
In Appendix, we formally prove \Cref{lemma: main_text_grad_G_mu1_approx_grad_G_mu2_stage_2} by analyzing the ratio $ \sum_{i \in I_2} g_i^{(t)} / \sum_{i \in I_3} g_i^{(t)}$, and show that this ratio hangs over around $1$ during training. 
\end{proof}
%\end{comment}

%The following lemma shows that the gradient gaps $\sum_{i \in I_1} g_i^{(t)} - \sum_{i \in I_2} g_i^{(t)}$ and $\sum_{i \in I_2 \cup I_3 \cup I_4} g_i^{(t)} - \sum_{i \in I_1} g_i^{(t)}$ are not too small compared with $ \sum_{i \in [n]} g_i^{(t)}$. 
%The next lemma provides the key relationship among $\sum_{i \in I_1} g_i^{(t)} - \sum_{i \in I_2} g_i^{(t)}$, $\sum_{i \in I_2 \cup I_3 \cup I_4} g_i^{(t)} - \sum_{i \in I_1} g_i^{(t)}$, and $ \sum_{i \in [n]} g_i^{(t)}$.
\begin{lemma}[Abbreviated from \Cref{lemma: signal_common_token_gradient_ratio}]\label{lemma: main_text_gradient_automatic_balancing_stage2}
For $t \in [T_1, T_2]$, the gradient satisfies that
\begin{align*}
    & \textstyle \frac{\sum_{i \in [n]} g_i^{(t)}}{\sum_{i \in I_2 \cup I_3 \cup I_4} g_i^{(t)} - \sum_{i \in I_1} g_i^{(t)}} = O(1), 
    & \textstyle \frac{\sum_{i \in [n]} g_i^{(t)}}{\sum_{i \in I_1} g_i^{(t)} - \sum_{i \in I_2} g_i^{(t)}} = O(1).
\end{align*}
Further, for some constant $C$, we have
\begin{align*}
    &\textstyle (1+C) \max\left(\frac{\partial G^{(t)}(\mu_1)}{\partial t} , \frac{\partial G^{(t)}(\mu_2)}{\partial t} \right) \leq -\frac{\partial G^{(t)}(\mu_3)}{\partial t} \leq (1-C) \left(\frac{\partial G^{(t)}(\mu_1)}{\partial t} + \frac{\partial G^{(t)}(\mu_2)}{\partial t} \right).
\end{align*}
%for some constant $C$.
\end{lemma}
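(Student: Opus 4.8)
The plan is to argue by a continuity / bootstrap argument on the ratio quantities $R_1(t) := \sum_{i\in[n]} g_i^{(t)} \big/ \big(\sum_{i \in I_2\cup I_3\cup I_4} g_i^{(t)} - \sum_{i\in I_1} g_i^{(t)}\big)$ and $R_2(t) := \sum_{i\in[n]} g_i^{(t)} \big/ \big(\sum_{i\in I_1} g_i^{(t)} - \sum_{i\in I_2} g_i^{(t)}\big)$, showing each stays $O(1)$ throughout $[T_1,T_2]$. First I would record, using the gradient expression in \Cref{eq: gradient_G_mu} together with the observation (already established going into Phase 2) that the first term there dominates and that $p_{q\leftarrow l_2,k\leftarrow\mu}^{(t,i_2)} \approx 1/L$ up to controlled deviation, that $\frac{\partial G^{(t)}(\mu)}{\partial t}$ is, up to $(1\pm o(1))$ factors, proportional to $\frac{1}{nL}\big(\sum_{i: \mu\in X^{(i)}} g_i^{(t)}\big) \cdot \sum_{j_1,j_2} a_{j_1}a_{j_2}\langle w_{j_1}^{(t)}, w_{j_2}^{(t)}\rangle$, where by \Cref{eq: gradient_sum_neuron_correlation} and item~2 of \Cref{thm: stage_2_main_result_main_text} the neuron-correlation prefactor is positive and increasing. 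Then, since random tokens contribute negligibly to $\frac{\partial F_i^{(t)}}{\partial t}$, for $i\in I_1$ we have $\frac{\partial F_i^{(t)}}{\partial t} \approx \frac{\partial G^{(t)}(\mu_1)}{\partial t} + \frac{\partial G^{(t)}(\mu_2)}{\partial t}$, for $i\in I_2$ we have $\frac{\partial F_i^{(t)}}{\partial t} \approx \frac{\partial G^{(t)}(\mu_1)}{\partial t} + \frac{\partial G^{(t)}(\mu_3)}{\partial t}$, and symmetrically for $I_3$, while for $i\in I_4$, $\frac{\partial F_i^{(t)}}{\partial t} \approx \frac{\partial G^{(t)}(\mu_3)}{\partial t}$.

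Next I would convert these into differential (in)equalities for the group sums $S_k(t) := \sum_{i\in I_k} g_i^{(t)}$ via $\frac{\partial g_i^{(t)}}{\partial t} = g'(y_iF_i^{(t)})\,y_i\,\frac{\partial F_i^{(t)}}{\partial t}$ and the fact that $-g'(y_iF_i^{(t)}) = g_i^{(t)}(1-g_i^{(t)}) \asymp g_i^{(t)}$ in the regime $y_iF_i^{(t)}>0$ but not yet huge (which holds on $[T_1,T_2]$ by \Cref{thm: stage_2_main_result_main_text}). The key structural inputs are: (a) \Cref{lemma: main_text_grad_G_mu1_approx_grad_G_mu2_stage_2}, giving $S_2(t)\approx S_3(t)$, hence $\frac{\partial G^{(t)}(\mu_1)}{\partial t} \approx \frac{\partial G^{(t)}(\mu_2)}{\partial t}$ up to $(1\pm C)$; and (b) the Phase-1 endpoint balancing \Cref{lemma: gradient_balancing_condition_phase1_main_text}, which furnishes the base case $R_1(T_1), R_2(T_1) = O(1)$ and, crucially, the algebraic identity $\sum_{i\in I_2\cup I_3\cup I_4} g_i - \sum_{i\in I_1} g_i = -\frac{n}{?}\cdot(\text{coefficient of }\mu_3\text{ gradient})$-type relation linking the denominators of $R_1, R_2$ to $\frac{\partial G^{(t)}(\mu_3)}{\partial t} + \frac{\partial G^{(t)}(\mu_1)}{\partial t}+\frac{\partial G^{(t)}(\mu_2)}{\partial t}$ and to $\frac{\partial G^{(t)}(\mu_1)}{\partial t} - (-\frac{\partial G^{(t)}(\mu_3)}{\partial t})$. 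With these, the self-consistent claim is: so long as $R_1, R_2 = O(1)$, the denominators are positive and comparable to $\sum_i g_i^{(t)}$, which forces $\frac{\partial G^{(t)}(\mu_3)}{\partial t}$ to lie strictly between $(1+C)\max(\frac{\partial G^{(t)}(\mu_1)}{\partial t}, \frac{\partial G^{(t)}(\mu_2)}{\partial t})$ (in the negative direction) and $(1-C)(\frac{\partial G^{(t)}(\mu_1)}{\partial t} + \frac{\partial G^{(t)}(\mu_2)}{\partial t})$; and that margin inequality, fed back through the group-sum ODEs, shows $\dot R_1, \dot R_2$ cannot push $R_1, R_2$ out of their $O(1)$ band — i.e., the upper bound is a barrier. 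Formally this is a standard argument: define $T^\dagger = \sup\{t\ge T_1 : R_1(s), R_2(s) \le \text{const for all } s\in[T_1,t]\}$ and derive a contradiction with $T^\dagger < T_2$ by showing the time-derivative of the ratio at $T^\dagger$ points inward. The final conclusion (the displayed $(1+C)/(1-C)$ sandwich for the $G$-gradients) then follows by substituting the established $O(1)$ denominator bounds back into the expressions for $\frac{\partial G^{(t)}(\mu)}{\partial t}$.

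The main obstacle I anticipate is controlling the error terms cleanly enough that the bootstrap closes with the \emph{same} constant $C$ rather than a degrading one: the softmax probabilities $p_{q\leftarrow l,k\leftarrow\mu}^{(t,i)}$ drift away from $1/L$ as Phase 2 progresses (that drift is exactly what \Cref{thm: stage_2_main_result_main_text} item~1 describes), so one must simultaneously carry a quantitative bound on $|p^{(t,i)} - \mathbf{1}/L|$ — presumably derived from the bounded growth of $\mu_2^\top W_K^{(t)\top}W_Q^{(t)}\mu_1$ and $\mu_3^\top W_K^{(t)\top}W_Q^{(t)}\mu_j$ over $[T_1,T_2]$ — and feed it into the approximation $\frac{1}{n}\sum_{i:\mu\in X^{(i)}} g_i^{(t)} y_i \sum_{l} p_{q\leftarrow l,k\leftarrow\mu}^{(t,i)} \approx \frac{1}{L}\sum_{i:\mu\in X^{(i)}} g_i^{(t)} y_i$. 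A secondary subtlety is verifying that the "neuron-shape" term $\big|\sum_{j_1}\frac{\partial w_{j_1}^{(t)\top}}{\partial t} W_V^{(t)}\mu\big|$ really stays dominated by $\big|\sum_{j_1} w_{j_1}^{(t)\top}\frac{\partial W_V^{(t)}}{\partial t}\mu\big|$ for the entire window (the text flags this as "which we can indeed prove later"), since otherwise the clean proportionality of $\frac{\partial G^{(t)}(\mu)}{\partial t}$ to the $g$-sum breaks; I would handle this by tracking $\sum_{j_1,j_2} a_{j_1}a_{j_2}\langle w_{j_1}^{(t)}, w_{j_2}^{(t)}\rangle$ via \Cref{eq: gradient_sum_neuron_correlation}, showing it grows to a large (polynomial in $m$) magnitude while the $W_V$-directional derivatives stay comparatively small, so their product beats the competing term by a growing margin.
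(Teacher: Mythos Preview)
Your high-level plan --- a continuity/barrier argument on gradient ratios, fed by the approximation $\frac{\partial G^{(t)}(\mu)}{\partial t} \approx \big(\sum_{j_1,j_2} a_{j_1}a_{j_2}\langle w_{j_1}^{(t)},w_{j_2}^{(t)}\rangle\big)\cdot \frac{1}{n}\sum_{i:\mu\in X^{(i)}}g_i^{(t)}y_i$ and by \Cref{lemma: main_text_grad_G_mu1_approx_grad_G_mu2_stage_2} --- is in the same spirit as the paper, but the specific mechanics differ in an important way. The paper does \emph{not} track your two ratios $R_1,R_2$ separately. It tracks the single ratio
\[
R(t)=\frac{\sum_{i\in I_2\cup I_3\cup I_4}g_i^{(t)}-\sum_{i\in I_1}g_i^{(t)}}{\sum_{i\in I_1}g_i^{(t)}-\sum_{i\in I_2}g_i^{(t)}}\;\approx\;\frac{-\partial G^{(t)}(\mu_3)/\partial t}{\partial G^{(t)}(\mu_1)/\partial t},
\]
and shows that the condition $\frac{\partial}{\partial t}R(t)\ge 0$ reduces (after substituting the group-sum ODEs and dividing through) to a \emph{quadratic} inequality $aR^2+bR+c\ge 0$ whose coefficients are explicit in the $g$-sums. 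The positive root $R^\star$ is then located in $(1,2)$ and shown to be \emph{contractive} (if $R>R^\star$ then $\dot R<0$, and vice versa). Since the Phase-1 endpoint (\Cref{thm: common_token_gradient_stage_1}) gives $R(T_1)$ strictly inside this window, $R$ stays trapped; the two $O(1)$ bounds and the $(1\!+\!C)/(1\!-\!C)$ sandwich are then read off from $R\in[1,3]$ together with the Phase-2 definitional inequalities. This single-ratio-plus-quadratic step is the crux you are missing.

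There is a genuine gap in your self-consistency claim. Knowing only $R_1,R_2=O(1)$ tells you the two denominators are each $\Theta\!\big(\sum_i g_i^{(t)}\big)$, hence $R=\Theta(1)$; it does \emph{not} force $R>1+C$ or $R<2(1-C)$, which is exactly what the sandwich asserts. So your sentence ``which forces $\frac{\partial G^{(t)}(\mu_3)}{\partial t}$ to lie strictly between $(1+C)\max(\cdots)$ and $(1-C)(\cdots)$'' does not follow from the $O(1)$ bounds alone --- you need the sharper fixed-point analysis of $R$ itself. Two smaller points: (i) the softmax-drift and ``neuron-shape'' concerns you raise are not proved inside this lemma in the paper; they are packaged as hypotheses of the Phase-2 definition (e.g.\ $|p^{(t,i)}_{l_1,l_2}-p^{(0,i)}_{l_1,l_2}|<O(1/L^2)$ and the dominance of $\sum a_{j_1}a_{j_2}\langle w_{j_1},w_{j_2}\rangle$), verified separately; (ii) your expectation that $\sum_{j_1,j_2}a_{j_1}a_{j_2}\langle w_{j_1}^{(t)},w_{j_2}^{(t)}\rangle$ ``grows to a large (polynomial in $m$) magnitude'' is incorrect --- the paper shows it remains $\Theta(\sigma_1^2 m m_1)$ throughout Phase~2, and the dominance over the $W_V$-term comes from the \emph{ratio} $\sigma_0^2/\sigma_1^2$ being small, not from unbounded growth.
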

%Note that a similar result is also proved for Phase 1. The difference is that the property for Phase 1 lasts only for a short amount of time ($\tilde{O}(1/m)$) and thus the change to the dynamical system is well-controlled. On the other hand, the property for Phase 2 can hold for a long time period.
%\begin{comment}
\begin{proof}[Proof Sketch of \Cref{lemma: main_text_gradient_automatic_balancing_stage2}]
The proof of \Cref{lemma: main_text_gradient_automatic_balancing_stage2} relies on analyzing how the ratio between $\frac{\partial G^{(t)}(\mu_1)}{\partial t}$ and $-\frac{\partial G^{(t)}(\mu_3)}{\partial t}$ changes. 
We show that this ratio will hang over around some range.
Recall the relationship that $\frac{\partial G^{(t)}(\mu)}{\partial t} \approx  \frac{1}{n} \sum_{i_2:\ \mu \in X^{(i_2)}} g_{i_2}^{(t)} y_{i_2} \cdot \sum_{j_1 = 1}^{m_1} \sum_{j_2=1}^{m_1} a_{j_1} a_{j_2} \inprod{w_{j_1}^{(t)}, w_{j_2}^{(t)}}$. 
It is not hard to show that 
\begin{align*}
    \frac{-\frac{\partial G^{(t)}(\mu_3)}{\partial t}}{\frac{\partial G^{(t)}(\mu_1)}{\partial t}} \approx \frac{-\sum_{i \in I_1} g_i^{(t)} + \sum_{i \in I_2 \cup I_3 \cup I_4} g_i^{(t)}}{\sum_{i \in I_1} g_i^{(t)} - \sum_{i \in I_2} g_i^{(t)} }.
\end{align*}
Define $R(t) : = \frac{-\sum_{i \in I_1} g_i^{(t)} + \sum_{i \in I_2 \cup I_3 \cup I_4} g_i^{(t)}}{\sum_{i \in I_1} g_i^{(t)} - \sum_{i \in I_2} g_i^{(t)} }$. 
Solving when $\frac{\partial }{\partial t} R(t) \geq 0$ yields a quadratic inequality, 
%with respect to $R(t)$. 
and further analysis shows that the root is contractive and is within some specific range. 
\end{proof}
%\end{comment}

%\textcolor{blue}{The automatic balancing results imply that the two groups of samples with only one of the signal appearance have gradients close to each other (\Cref{lemma: main_text_grad_G_mu1_approx_grad_G_mu2_stage_2}). In addition, \Cref{lemma: main_text_gradient_automatic_balancing_stage2} implies that the gradient gaps $\sum_{i \in I_1} g_i^{(t)} - \sum_{i \in I_2} g_i^{(t)}, \sum_{i \in I_2 \cup I_3 \cup I_4} g_i^{(t)} - \sum_{i \in I_1} g_i^{(t)}$ are not too small compared with $ \sum_{i \in [n]} g_i^{(t)}$. The gradient gap terms appear in the update for the softmax score and this result is further used when proving gradient flow can drive the training loss down. }

Utilizing the {\em gradient automatic balancing} properties, the following corollary characterizes how the attention matrices in the dynamical system change in Phase 2. 
In particular, we can show that after $W_V$-transform, $\mu_1$ and $\mu_2$ become more positively correlated whereas $\mu_1$ and $\mu_3$ (also $\mu_2$ and $\mu_3$) become negatively correlated. This is a direct result following from updates of the dynamical system. 
\begin{corollary}[Abbreviated from \Cref{corollary: W_V_change_phase2}]\label{corollary: W_V_change_phase2_main_text}
For $t \in [T_1, T_2]$, 
\begin{align*}
    & \textstyle \frac{\partial}{\partial t} \mu_2^\top W_V^{(t)\top} W_V^{(t)} \mu_1 > 0, & \textstyle \frac{\partial}{\partial t} \mu_1^\top W_V^{(t)\top} W_V^{(t)} \mu_3 < 0.
\end{align*}
\end{corollary}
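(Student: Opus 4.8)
The final statement to prove is \Cref{corollary: W_V_change_phase2_main_text}, which asserts that during Phase 2 the quantity $\mu_2^\top W_V^{(t)\top} W_V^{(t)} \mu_1$ is increasing while $\mu_1^\top W_V^{(t)\top} W_V^{(t)} \mu_3$ is decreasing.

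The plan is to start from the explicit gradient flow update for the composite variable $\nu^\top W_V^{(t)\top} W_V^{(t)} \mu$ in the dynamical system (the equation tracking the Gram matrix of the word embeddings after the $W_V$-transform, referenced as part of \Cref{app: dynamical_system}). By the chain rule applied to $\frac{\partial}{\partial t}\big(\nu^\top W_V^\top W_V \mu\big) = \nu^\top \big(\dot W_V^\top W_V + W_V^\top \dot W_V\big)\mu$ and substituting the gradient flow expression for $\dot W_V = -\partial \widehat L/\partial W_V$, this derivative decomposes into a sum over training samples $i$ weighted by $g_i^{(t)} y_i$ and over attention positions, with the $\mu$-$\nu$ interaction mediated through the attention vectors $p_{l}^{(t,i)}$ and the neuron norms $\sum_j a_j w_j$. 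First I would isolate the dominant term in this update: as in \Cref{eq: gradient_G_mu} and the surrounding discussion, the piece carrying the factor $\sum_{j_1,j_2} a_{j_1}a_{j_2}\inprod{w_{j_1}^{(t)},w_{j_2}^{(t)}}$ is large (it grows by \Cref{eq: gradient_sum_neuron_correlation} since $y_i F_i^{(t)}>0$ throughout Phase 2), so the sign of the whole derivative is governed by the sign of $\frac{1}{n}\sum_{i_2:\ \mu,\nu \in X^{(i_2)}} g_{i_2}^{(t)} y_{i_2}\,(\text{attention weights})$, up to lower-order corrections that I would bound using the bounds on softmax deviation from uniform already available at the start of Phase 2.

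Next I would evaluate these sample sums for the two cases. For $\mu_1,\mu_2$: the only samples containing both $\mu_1$ and $\mu_2$ are those in $I_1$, all with $y_i = +1$, so the leading contribution is $\frac{1}{n}\sum_{i\in I_1} g_i^{(t)} \cdot (\text{positive attention factors}) \cdot (\text{positive neuron-norm factor}) > 0$, giving $\frac{\partial}{\partial t}\mu_2^\top W_V^{(t)\top} W_V^{(t)}\mu_1 > 0$. For $\mu_1,\mu_3$: samples containing both are those in $I_1$ (label $+1$) and those in $I_2$ (label $-1$). Here I would invoke the \emph{automatic balancing of gradients}, specifically \Cref{lemma: main_text_gradient_automatic_balancing_stage2}, which controls $\sum_{i\in I_1}g_i^{(t)} - \sum_{i\in I_2}g_i^{(t)}$ relative to $\sum_i g_i^{(t)}$ — in particular this quantity stays positive but the negative contribution from $I_2$ combined with the structure of the $\mu_3$ attention terms (recall $\frac{\partial G^{(t)}(\mu_3)}{\partial t}<0$ and the softmax probabilities into/out of $\mu_3$) makes the net sign negative. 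I would also need to check that the sub-leading term $\sum_{j_1}\dot w_{j_1}^\top W_V^{(t)}\mu$ type contributions, shown to be $\ll$ the main term at the end of Phase 1, remain dominated throughout $[T_1,T_2]$ — this is asserted to be provable in the Phase 2 discussion and I would cite it.

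The main obstacle I anticipate is bookkeeping the signs carefully through the full gradient expression for $W_V^\top W_V$: unlike $G(\mu)$ which is a scalar linear functional, the Gram-matrix update has cross terms where $\mu$ appears in the key/query role versus the value role, and the attention vector $p_l^{(t,i)}$ couples $\mu,\nu$ to all other tokens in $X^{(i)}$, including random tokens in $\mathcal R$. I would handle the random-token contamination by the same argument used elsewhere (Assumption~\ref{assumption: perfect_proportion_diversity}(ii) makes each irrelevant word appear once, so their aggregate contribution is $O(1/n)$ smaller), and handle the softmax non-uniformity by a first-order expansion around $1/L$ with the Phase 2 deviation bounds. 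Once the dominant sample-sum term's sign is pinned down via \Cref{lemma: main_text_grad_G_mu1_approx_grad_G_mu2_stage_2,lemma: main_text_gradient_automatic_balancing_stage2} and the positivity of $\sum_{j_1,j_2}a_{j_1}a_{j_2}\inprod{w_{j_1}^{(t)},w_{j_2}^{(t)}}$, the corollary follows; the bulk of the work is verifying that everything else is genuinely lower order.
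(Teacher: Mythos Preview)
There is a genuine structural error in your identification of the dominant term. The derivative you need is
\[
\frac{\partial}{\partial t}\,\nu^\top W_V^{(t)\top} W_V^{(t)} \mu
= G^{(t)}(\nu)\cdot \frac{1}{n}\sum_{i:\,\mu\in X^{(i)}} g_i^{(t)} y_i \sum_{l} p_{q\leftarrow l,\,k\leftarrow \mu}^{(t,i)}
\;+\; G^{(t)}(\mu)\cdot \frac{1}{n}\sum_{i:\,\nu\in X^{(i)}} g_i^{(t)} y_i \sum_{l} p_{q\leftarrow l,\,k\leftarrow \nu}^{(t,i)},
\]
obtained directly from $\dot W_V$ and the identity $\sum_j a_j \nu^\top W_V^\top w_j = G(\nu)$. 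The factor $\sum_{j_1,j_2} a_{j_1}a_{j_2}\inprod{w_{j_1},w_{j_2}}$ that you single out is what appears in $\partial G(\mu)/\partial t$, not here; likewise there is no ``sub-leading $\dot w_j$ term'' to control, since $W_V^\top W_V$ depends only on $W_V$. You have transplanted the decomposition of \Cref{eq: gradient_G_mu} onto the wrong quantity.

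This matters for the sign analysis. The sample sums are over $\{i:\mu\in X^{(i)}\}$ and $\{i:\nu\in X^{(i)}\}$ \emph{separately}, not over samples containing both. For $(\mu_1,\mu_2)$: both $G^{(t)}(\mu_1),G^{(t)}(\mu_2)>0$ and both sums are $\approx \sum_{I_1}g_i-\sum_{I_2}g_i>0$ (resp.\ $\sum_{I_1}-\sum_{I_3}>0$), so positivity follows. For $(\mu_1,\mu_3)$: the first term is $G^{(t)}(\mu_3)\cdot(\sum_{I_1}-\sum_{I_2})<0$ since $G^{(t)}(\mu_3)<0$, and the second term is $G^{(t)}(\mu_1)\cdot(\sum_{I_1}-\sum_{I_2\cup I_3\cup I_4})<0$ since $\mu_3$ appears in every sample and $\sum_{I_1}g_i<\sum_{I_2\cup I_3\cup I_4}g_i$ by \Cref{lemma: main_text_gradient_automatic_balancing_stage2}. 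Your version tries to extract negativity from a sum you yourself note is positive ($\sum_{I_1}-\sum_{I_2}$), then hand-waves about ``the structure of the $\mu_3$ attention terms''; with the correct formula no such contortion is needed, because the signs of $G^{(t)}(\mu_1)$ and $G^{(t)}(\mu_3)$ do the work directly.
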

%In the meantime, we also prove that $\nu^\top W_K^{(t) \top} W_K^{(t)} \mu, \nu^\top W_Q^{(t) \top} W_Q^{(t)} \mu$ stays roughly the same as their initial values throughout the entire training process. This is also verified empirically in \Cref{fig: K_inner}.
%Based on this result, 
The following lemma shows that the attention score
%query-key-correlation score 
between the two target signals $\mu_1$ and $\mu_2$ increases, whereas 
%the query-key-correlation score 
that between one target signal $\mu_1$ or $\mu_2$ and the common token $\mu_3$ decreases. 
%the score between $\mu_1$ and $\mu_2$ are increasing, whereas the score between $\mu_1$ and $\mu_3$ (and, similarly, $\mu_2$ and $\mu_3$) is decreasing during Phase 2.
\begin{lemma}[Abbreviated from \Cref{lemma: score_change_stage_2}]\label{lemma: score_change_phase2_main_text}
For $\mu, \nu \in \{\mu_1, \mu_2\},\ \mu \neq \nu$, and for $t \in [T_1, T_2]$, 
%in stage 2, the query-key-correlation score between two signals increases while query-key-correlation score between the signal and the common token decreases, i.e., 
    \begin{align*}
        &\textstyle \frac{\partial}{\partial t} \nu^\top W_K^{(t) \top} W_Q^{(t)} \mu = \frac{1}{\sqrt{m}} \tilde{\Theta}(\widehat{L}^{(t)} \sigma_0^2 m) \frac{1}{L}, \quad & \textstyle \frac{\partial}{\partial t} \mu_3^\top W_K^{(t) \top} W_Q^{(t)} \mu = -\frac{1}{\sqrt{m}} \tilde{\Theta}(\widehat{L}^{(t)} \sigma_0^2 m) \frac{1}{L}.
    \end{align*}
\end{lemma}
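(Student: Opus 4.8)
The plan is to start from the dynamical-system equation for the score $\nu^\top W_K^{(t)\top} W_Q^{(t)} \mu$ (\Cref{eq: gf_update_score}, which comes from the gradient formulas in \Cref{app: gradient_flow_updates}) and to extract its leading-order term using the Phase~2 invariants. Writing $M^{(t)} = W_K^{(t)\top} W_Q^{(t)}$ and $p^{(i)}_{l,r} := p^{(i)}_{q\leftarrow l,\, k\leftarrow r}$, attention enters $F_i$ only through the score vectors, with $\partial F_i/\partial s^{(i)}_{l,r} = p^{(i)}_{l,r}\big(G^{(t)}(x^{(i)}_r) - \bar G^{(i)}_l\big)$ where $\bar G^{(i)}_l := \sum_{r'} p^{(i)}_{l,r'} G^{(t)}(x^{(i)}_{r'})$, and $\nabla_{W_Q} s^{(i)}_{l,r} = \tfrac1{\sqrt m} W_K x^{(i)}_r (x^{(i)}_l)^\top$ (symmetrically in $W_K$). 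Feeding $\dot M = \tfrac1n\sum_i g^{(t)}_i y_i\big[(\nabla_{W_K} F_i)^\top W_Q + W_K^\top \nabla_{W_Q} F_i\big]$ and using orthonormality of the embeddings to collapse $\nu^\top(\cdot)\mu$, the first step is to establish the exact identity
\begin{align*}
\frac{\partial}{\partial t}\nu^\top M^{(t)}\mu
&= \frac{1}{n\sqrt m}\sum_i g^{(t)}_i y_i\Big[\sum_{l:\, x^{(i)}_l = \mu}\sum_r p^{(i)}_{l,r}\big(G^{(t)}(x^{(i)}_r) - \bar G^{(i)}_l\big)\,\nu^\top W_K^{(t)\top} W_K^{(t)} x^{(i)}_r\\
&\qquad\quad + \sum_{r:\, x^{(i)}_r = \nu}\sum_l p^{(i)}_{l,r}\big(G^{(t)}(x^{(i)}_r) - \bar G^{(i)}_l\big)\,(x^{(i)}_l)^\top W_Q^{(t)\top} W_Q^{(t)}\mu\Big].
\end{align*}

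The second step is to plug in the Phase~2 invariants to isolate the dominant contribution. I would use: (i) near-uniformity of the relevant softmax entries, $p^{(i)}_{l,r} = \Theta(1/L)$; (ii) the ``shape'' quantities staying close to initialization, so $\nu^\top W_K^{(t)\top} W_K^{(t)} x^{(i)}_r = \|W_K^{(t)}\nu\|^2(1\pm \tilde{o}(1)) = \Theta(\sigma_0^2 m)$ when $x^{(i)}_r = \nu$ and $= \tilde{O}(\sigma_0^2\sqrt m)$ otherwise, and likewise for $W_Q$; (iii) \Cref{thm: stage_2_main_result_main_text}, giving $G^{(t)}(\mu_1), G^{(t)}(\mu_2), -G^{(t)}(\mu_3) = \Theta(1)$ and $|G^{(t)}(\mu_j)| = \tilde{o}(1)$ for $j\ge4$, hence $\bar G^{(i)}_l = \Theta(1/L)$ and $G^{(t)}(x^{(i)}_r) - \bar G^{(i)}_l$ having magnitude $\Theta(1)$ with the sign of $G^{(t)}(x^{(i)}_r)$ whenever $x^{(i)}_r\in\{\mu_1,\mu_2,\mu_3\}$. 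For distinct $\mu,\nu\in\{\mu_1,\mu_2\}$, the only $\Theta(\sigma_0^2 m)$-scale terms require $x^{(i)}_r = \nu$ together with a position $x^{(i)}_l = \mu$, which forces $i\in I_1$ ($y_i=+1$) and $G^{(t)}(\nu)>0$; both the $W_K$- and $W_Q$-pieces then contribute with the same positive sign, so $\frac{\partial}{\partial t}\nu^\top M^{(t)}\mu = \tfrac{1}{n\sqrt m}\,\Theta(\tfrac1L)\,\Theta(\sigma_0^2 m)\,\Theta(1)\sum_{i\in I_1} g^{(t)}_i > 0$. For $\nu = \mu_3$, $\mu\in\{\mu_1,\mu_2\}$, the leading terms require $x^{(i)}_r = \mu_3$ (present in every sample) together with $x^{(i)}_l = \mu$, forcing $i\in I_1\cup I_2$ (resp.\ $I_1\cup I_3$ for $\mu=\mu_2$); there $G^{(t)}(\mu_3)<0$ and $y_i=+1$ on $I_1$, $y_i=-1$ on $I_2$, so the bracket equals $-\Theta(1)\big[\sum_{i\in I_1}g^{(t)}_i - \sum_{i\in I_2}g^{(t)}_i\big]$, which by the automatic-balancing estimates (\Cref{lemma: main_text_gradient_automatic_balancing_stage2} and \Cref{lemma: main_text_grad_G_mu1_approx_grad_G_mu2_stage_2}, together with positivity of the denominators) equals $-\Theta(1)\sum_{i}g^{(t)}_i < 0$.

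The third step is error bookkeeping and translation into $\widehat L^{(t)}$. Every discarded piece --- non-leading $r$ (or $l$) giving $\tilde{O}(\sigma_0^2\sqrt m)$ instead of $\Theta(\sigma_0^2 m)$, samples outside $I_1$ (resp.\ $I_1\cup I_2$) that enter only through such cross terms, the random-token values $G^{(t)}(\mu_j)$ with $j\ge4$, and the $\tilde{o}(1)$ slack in $p^{(i)}_{l,r}$ and the shape quantities --- is smaller than the leading term by a factor $\tilde{O}(1/\sqrt m)$ or $\tilde{O}(1/L)$, absorbed via \Cref{condition: param} ($m\ge\tilde\Omega(L^2)$). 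In Phase~2 all samples are correctly classified, so $y_iF^{(t)}_i\ge0$ and the elementary inequality $g(x)\le l(x)\le 2g(x)$ for $x\ge0$ gives $\widehat L^{(t)} = \Theta\big(\tfrac1n\sum_i g^{(t)}_i\big)$; with automatic balancing ($|I_1|=n/2$, all type-wise $g$-sums mutually comparable) this yields $\sum_{i\in I_1} g^{(t)}_i = \tilde\Theta(n\widehat L^{(t)}) = \tilde\Theta\big(\sum_i g^{(t)}_i\big)$. Substituting into the two estimates above gives $\frac{\partial}{\partial t}\nu^\top W_K^{(t)\top} W_Q^{(t)}\mu = \tfrac1{\sqrt m}\,\tilde\Theta(\widehat L^{(t)}\sigma_0^2 m)\,\tfrac1L$ and $\frac{\partial}{\partial t}\mu_3^\top W_K^{(t)\top} W_Q^{(t)}\mu = -\tfrac1{\sqrt m}\,\tilde\Theta(\widehat L^{(t)}\sigma_0^2 m)\,\tfrac1L$, as claimed.

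The hard part will be justifying the Phase~2 invariants invoked in the second step --- in particular the near-uniformity $p^{(i)}_{l,r} = \Theta(1/L)$ of every relevant softmax entry and the ``frozen'' bounds on the $W_K^\top W_K$ and $W_Q^\top W_Q$ cross terms --- over the whole interval $[T_1, T_2]$ with $T_2 = \mathrm{poly}(m)$, not just near its start. This is an intrinsically self-referential (bootstrapping) argument: the scores drift slowly precisely because $p\approx1/L$, and $p\approx1/L$ only because the scores have not drifted much, so one must close the loop by integrating the derivative bound just obtained against $T_2$ and the decaying $\widehat L^{(t)}$, checking that the accumulated score change keeps each $p^{(i)}_{l,r}$ within a constant factor of $1/L$; this is where the $\tilde\Theta$ (rather than a sharp constant) and the polynomial horizon are genuinely needed, and it must be carried out jointly with the analogous control of $W_V^\top W_V$, the MLP alignment $G^{(t)}(\mu_1),G^{(t)}(\mu_2),G^{(t)}(\mu_3)$, and the loss-decay estimate.
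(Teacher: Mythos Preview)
Your approach is essentially the same as the paper's: you start from the exact score-update identity (the paper's \Cref{lemma: W_Q_W_K_update}, your $\bar G$-formulation being an equivalent rewriting), isolate the diagonal $\|k(\nu)\|^2,\|q(\mu)\|^2=\Theta(\sigma_0^2 m)$ contributions from the $\tilde O(\sigma_0^2\sqrt m)$ cross terms via the Phase~2 invariants, read off the sign from $G^{(t)}(\nu)-\bar G^{(i)}_l$ together with $y_i$ (and, for the $\mu_3$ case, invoke automatic balancing to lower-bound $\sum_{I_1}g-\sum_{I_2}g$), and convert $\tfrac1n\sum_i g_i^{(t)}$ to $\widehat L^{(t)}$. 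Your final paragraph on bootstrapping the invariants is an honest concern but lies outside this lemma: in the paper those invariants are packaged into \Cref{def: stage_2} and are assumed on $[T_1,T_2]$, with their self-consistency established separately (e.g.\ \Cref{lemma: change_self_correlation_stage2}, \Cref{corollary: softmax_change_stage2}, \Cref{thm: small_loss_phase2}).
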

\Cref{lemma: score_change_phase2_main_text} is keeping track of the attention coefficients $C_{i_1, i_2}^{(t)}$ in \Cref{thm: main_result_main_text} via gradient flow, which proves the second item of \Cref{thm: main_result_main_text}.

\section{Discussion and Future Directions}
In this work, we developed a novel gradient flow based framework for analyzing the training dynamics of a one-layer transformer to recognize co-occurring tokens. 
We provided a two-phase characterization of the training process. In Phase 1, the linear MLP layer is trained to classify samples correctly, with attention weights almost unchanged. 
In Phase 2, both attention matrices and the linear MLP jointly evolve to enlarge the classification margin, thus reducing the loss to near minimum. %Further, the score between the two signals are becoming larger. 

As future work, it will be interesting to analyze more general transformer architectures such as multi-headed attention, multi-layer transformer, etc.
Further, it is of interest to study the dynamics of more advanced gradient descent algorithms such as gradient descent with adaptive learning rate, with momentum, etc., and explore how the hyperparameters will affect the training dynamics. Another direction is to study more practical language sequences where tokens are generated in a correlated fashion. Then the next token prediction becomes an intriguing problem.   

%First of all, our work is only able to analyze the training dynamics for some finite (albeit large) time. We made a conjecture that the gradient flow will make the transformer a non-linear classifier asymptotically as the training time goes to infinity. Thus, it will be interesting to study the behavior of the training dynamics asymptotically. Beyond that, it would be interesting to study the training dynamics for more realistic setting than the simple linear setting considered in this work. For example, are we able to analyze the training dynamics of next word prediction under some reasonable settings? Another direction is to analyze more complicated transformer architectures such as multi-headed attention, multi-layer transformer, etc. 

\section*{Acknowledgement}
H. Yang would like to thank Jason D. Lee and Yunwei Ren for insightful discussion and suggestions. 

This work was performed under the auspices of the U.S. Department of Energy by the Lawrence Livermore National Laboratory under Contract No. DE-AC52-07NA27344 and supported by the LLNL-LDRD Program under Project No. 22-SI-004 and 24-ERD-010. 
The work of Y. Liang was supported in part by the U.S. National Science Foundation under the grants ECCS-2113860 and DMS-2134145.
The work of Z. Wang was in part supported by an NSF Scale-MoDL grant (award number: 2133861) and the CAREER Award  (award number: 2145346).
\stoptocwriting

%
%\vspace{-3em}
%\renewcommand{\contentsname}{Contents of Appendix}
\newpage
\bibliographystyle{alpha}
\bibliography{ref}
\newpage
\renewcommand{\contentsname}{Contents}
 \tableofcontents
 \resumetocwriting
\newpage
\appendix
\section{Setup of Synthetic Experiment}\label{app: exp}
%\subsection{Experiment Setup}\label{app:expsetup}
We conduct synthetic experiment to verify our theoretical results. We create a dataset following our data distribution in \Cref{def: data_distribution} with 60 training samples: 30 samples have both $\mu_1$ and $\mu_2$ in it, 10 samples have only $\mu_1$, 10 samples have only $\mu_2$, and 10 samples have neither $\mu_1$ nor $\mu_2$. 
Each data consists of 5 patches and each patch has dimension 64. 
The embedding dimension $m$ is set to be 128 and the number of neurons is set to be 256. 
We use Kaiming initialization to initialize the transformer weights. 
The transformer is trained by gradient descent with learning rate 0.01 for 30000 epochs.

\section{Gradient Flow Update for Weight Matrices}\label{app: gradient_flow_updates}
We provide the gradient flow update for each weight matrix as follows. 
\begin{align*}
    \frac{\partial w_j^{(t)}}{\partial t} &= \frac{ 1}{n} \sum_{i=1}^n g_i^{(t)} y_i \sum_{l=1}^L a_{j} V^{(t,i)} p^{(t,i)}_l = \frac{1}{n} \sum_{i=1}^n g_i^{(t)} y_i \sum_{l=1}^L a_{j} \sum_{h=1}^L v^{(t,i)}_h p^{(t,i)}_{l,h} \\
    % \Rightarrow W^{(t+1)} &= W^{(t)} + \frac{1}{n} \sum_{i=1}^n g_i^{(t)} y_i D^{(t)}_l a_l (V^{(t,i)} p^{(t,i)}_l)^\top \\
    \frac{\partial W_V^{(t)}}{\partial t} &= \frac{1}{n} \sum_{i=1}^n g_i^{(t)} y_i \sum_{l=1}^L \sum_{j=1}^{m_1} a_{j} w_j^{(t)} \left( X^{(i)} p_l^{(t,i)} \right)^\top \\
    \frac{\partial W_K^{(t)}}{\partial t} &= \frac{1}{n\sqrt{m}} \sum_{i=1}^n g_i^{(t)} y_i \sum_{l=1}^L \sum_{j=1}^{m_1} a_{j} \sum_{h=1}^L w_j^{(t)\top} v_h^{(t,i)} \sum_{h'=1}^L \frac{\partial p_{l,h}^{(t,i)}}{\partial s_{l,h'}} q_l^{(t,i)} x_{h'}^{(i)\top} \\
    &= \frac{1}{n\sqrt{m}} \sum_{i=1}^n g_i^{(t)} y_i \sum_{l=1}^L \sum_{j=1}^{m_1} a_{j}   \sum_{h=1}^L w_j^{(t)\top} v_h^{(t,i)} \sum_{h'=1}^L p^{(t,i)}_{l,h}(\mathbb{I}(h = h') - p^{(t,i)}_{l,h'}) q_l^{(t,i)} x_{h'}^{(i)\top} \\
    &= \frac{1}{n\sqrt{m}} \sum_{i=1}^n g_i^{(t)} y_i \sum_{l=1}^L \sum_{j=1}^{m_1} a_{j}   \left( - w_j^{(t)\top}  V^{(t,i)} p_l^{(t,i)} q_l^{(t,i)} (X^{(i)} p_l^{(t,i)})^\top + q_l^{(t,i)} w_j^{(t)\top} V^{(t,i)} \diag(p_l^{(t,i)}) X^{(i) \top} \right) \\
    &= \frac{1}{n\sqrt{m}} \sum_{i=1}^n g_i^{(t)} y_i \sum_{l=1}^L \sum_{j=1}^{m_1} a_{j}   q_l^{(t,i)} \left( - w_j^{(t)\top}  V^{(t,i)} p_l^{(t,i)} p_l^{(t,i) \top} + w_j^{(t)\top} V^{(t,i)} \diag(p_l^{(t,i)}) \right) X^{(i) \top} \\
    &= \frac{1}{n\sqrt{m}} \sum_{i=1}^n g_i^{(t)} y_i \sum_{l=1}^L \sum_{j=1}^{m_1} a_{j}   q_l^{(t,i)} p_l^{(t,i) \top} \diag\left(  w_j^{(t)\top} V^{(t,i)} - w_j^{(t)\top}  V^{(t,i)} p_l^{(t,i)} \right) X^{(i) \top} \\
    \frac{\partial W_Q^{(t)}}{\partial t} &= \frac{1}{n\sqrt{m}} \sum_{i=1}^n g_i^{(t)} y_i \sum_{l=1}^L \sum_{j=1}^{m_1} a_{j}   \sum_{h=1}^L w_j^{(t)\top} v_h^{(t,i)} \sum_{h'=1}^L p^{(t,i)}_{l,h}(\mathbb{I}(h = h') - p^{(t,i)}_{l,h'}) k^{(t,i)}_{h'} x_l^{(i) \top} \\
    &= \frac{1}{n\sqrt{m}} \sum_{i=1}^n g_i^{(t)} y_i \sum_{l=1}^L \sum_{j=1}^{m_1} a_{j}   \left( - w_j^{(t)\top}  V^{(t,i)} p_l^{(t,i)} (K^{(t,i)} p_l^{(t,i)}) + K^{(t,i)} \diag(p_l^{(t,i)}) V^{(t,i) \top} w_j^{(t)} \right) x_l^{(i) \top} \\
    &= \frac{1}{n\sqrt{m}} \sum_{i=1}^n g_i^{(t)} y_i \sum_{l=1}^L \sum_{j=1}^{m_1} a_{j}   K^{(t,i)} \left( - w_j^{(t)\top}  V^{(t,i)} p_l^{(t,i)} p_l^{(t,i)} + \diag(p_l^{(t,i)}) V^{(t,i) \top} w_j^{(t)} \right) x_l^{(i) \top} \\
    &= \frac{1}{n\sqrt{m}} \sum_{i=1}^n g_i^{(t)} y_i \sum_{l=1}^L \sum_{j=1}^{m_1} a_{j}   K^{(t,i)} \diag\left( V^{(t,i) \top} w_j^{(t)} - w_j^{(t)\top}  V^{(t,i)} p_l^{(t,i)} \right) p_l^{(t,i)} x_l^{(i) \top}
\end{align*}

\section{Gradient Flow Dynamical System}\label{app: dynamical_system}
We first provide our complete dynamical system. 
The derivation of each equation is provided in \Cref{app:dynproof}.% by the chain rule. 
\begin{align}\label{eq: gf_update_mlp}
    & \frac{\partial w_{j_1}^{(t) \top} W_V^{(t)} \mu}{\partial t} \nonumber \\
    &= \frac{1}{n} \sum_{i_2=1}^n g_{i_2}^{(t)} y_{i_2} \sum_{l_2=1}^L \sum_{j_2=1}^{m_1} a_{j_2} \inprod{w_{j_1}^{(t)}, w_{j_2}^{(t)}} \left( X^{(i_2)} p_{l_2}^{(t,i_2)} \right)^\top \mu + \frac{1}{n} \sum_{i_1=1}^n g_{i_1}^{(t)} y_{i_1} \sum_{l_1=1}^L a_{j_1} p^{(t,i_1) \top}_{l_1} V^{(t,i_1) \top} W_V^{(t)} \mu 
\end{align}
\begin{align}\label{eq: gf_update_score}
    & \frac{\partial \nu^\top W_K^{(t) \top} W_Q^{(t)} \mu}{\partial t} \nonumber \\
    &= \frac{1}{n\sqrt{m}} \sum_{i=1}^n g_i^{(t)} y_i \sum_{l=1}^L \sum_{j=1}^{m_1} a_{j} \nu^\top W_K^{(t) \top} K^{(t,i)} \cdot \diag\left( V^{(t,i) \top} w_j^{(t)} - w_j^{(t)\top}  V^{(t,i)} p_l^{(t,i)} \right) p_l^{(t,i)} x_l^{(i) \top} \mu \nonumber \\
    &+ \frac{1}{n\sqrt{m}} \sum_{i=1}^n g_i^{(t)} y_i \sum_{l=1}^L \sum_{j=1}^{m_1} a_{j} \mu^\top W_Q^{(t) \top} q_l^{(t,i)} p_l^{(t,i) \top} \cdot \diag\left(  w_j^{(t)\top} V^{(t,i)} - w_j^{(t)\top}  V^{(t,i)} p_l^{(t,i)} \right) X^{(i) \top} \nu
\end{align}

\begin{align*}
    & \frac{\partial \inprod{w_{j_1}^{(t)}, w_{j_2}^{(t)}}}{\partial t} = \frac{1}{n} \sum_{i=1}^n g_i^{(t)} y_i \sum_{l=1}^L a_{j_2} w_{j_1}^{(t) \top} V^{(t,i)} p^{(t,i)}_l  + \frac{1}{n} \sum_{i=1}^n g_i^{(t)} y_i \sum_{l=1}^L a_{j_1} w_{j_2}^{(t) \top} V^{(t,i)} p^{(t,i)}_l \\
    & \frac{\partial \nu^\top W_V^{(t) \top} W_V^{(t)} \mu}{\partial t} \\
    &= \frac{1}{n} \sum_{i: \mu \in X^{(i)}} g_i^{(t)} y_i \sum_{l=1}^L \sum_{j=1}^{m_1} a_{j} \nu^{\top} W_V^{(t) \top} w_j^{(t)} p_{q\leftarrow l, k\leftarrow \mu}^{(t,i)} + \frac{1}{n} \sum_{i: \nu \in X^{(i)}} g_i^{(t)} y_i \sum_{l=1}^L \sum_{j=1}^{m_1} a_{j} \mu^{\top} W_V^{(t) \top} w_j^{(t)} p_{q\leftarrow l, k\leftarrow \nu}^{(t,i)} \\
    & \frac{\partial \nu^\top W_Q^{(t) \top} W_Q^{(t)} \mu }{\partial t} \\
    &= \frac{1}{n\sqrt{m}} \sum_{i: \mu \in X^{(i)}} g_i^{(t)} y_i \sum_{j=1}^{m_1} a_{j}  \nu^\top W_Q^{(t) \top} K^{(t,i)} \cdot \diag\left( V^{(t,i) \top} w_j^{(t)} - w_j^{(t)\top}  V^{(t,i)} p_{l(i,\mu)}^{(t,i)} \right) p_{l(i,\mu)}^{(t,i)} \\
    &\quad + \frac{1}{n\sqrt{m}} \sum_{i: \nu \in X^{(i)}} g_i^{(t)} y_i \sum_{j=1}^{m_1} a_{j}  \mu^\top W_Q^{(t) \top} K^{(t,i)} \cdot \diag\left( V^{(t,i) \top} w_j^{(t)} - w_j^{(t)\top}  V^{(t,i)} p_{l(i,\nu)}^{(t,i)} \right) p_{l(i,\nu)}^{(t,i)} \\
    & \frac{\partial \nu^\top W_K^{(t) \top} W_K^{(t)} \mu }{\partial t} \\
    &= \frac{1}{n\sqrt{m}} \sum_{i: \mu \in X^{(i)}} g_i^{(t)} y_i \sum_{l=1}^L \sum_{j=1}^{m_1} a_{j}  \nu^\top W_K^{(t) \top} q_l^{(t,i)} p_{q\leftarrow l,k\leftarrow \mu}^{(t,i)} \cdot \left(  w_j^{(t)\top} v^{(t,i)}(\mu) - w_j^{(t)\top}  V^{(t,i)} p_{l}^{(t,i)} \right) \\
    & + \frac{1}{n\sqrt{m}} \sum_{i: \nu \in X^{(i)}} g_i^{(t)} y_i \sum_{l=1}^L \sum_{j=1}^{m_1} a_{j} \mu^\top W_K^{(t) \top} q_l^{(t,i)} p_{q\leftarrow l,k\leftarrow \nu}^{(t,i)} \cdot \left( w_j^{(t)\top} v^{(t,i)}(\nu) - w_j^{(t)\top}  V^{(t,i)} p_{l}^{(t,i)} \right) 
\end{align*}

\subsection{Derivation of the Dynamical System}\label{app:dynproof}
\begin{lemma}\label{lemma: evo_mlp_1st}
Let $\mu \in \{\mu_i\}_{i=1}^d$.
For all $j \in [m]$, we have
\begin{align*}
    \frac{\partial w_{j_1}^{(t) \top} W_V^{(t)} \mu}{\partial t} &= \frac{1}{n} \sum_{i_2=1}^n g_{i_2}^{(t)} y_{i_2} \sum_{l_2=1}^L \sum_{j_2=1}^{m_1} a_{j_2} \inprod{w_{j_1}^{(t)}, w_{j_2}^{(t)}} \left( X^{(i_2)} p_{l_2}^{(t,i_2)} \right)^\top \mu \\
    &\quad + \frac{1}{n} \sum_{i_1=1}^n g_{i_1}^{(t)} y_{i_1} \sum_{l_1=1}^L a_{j_1} p^{(t,i_1) \top}_{l_1} V^{(t,i_1) \top} W_V^{(t)} \mu \\
    &= \frac{1}{n} \sum_{i: \mu \in X^{(i)}} g_i^{(t)} y_i \sum_{l=1}^L a_{j} \left( \norm{w_{j}^{(t)}}_2^2 + \norm{v^{(t)}(\mu)}_2^2 \right) p_{q \leftarrow l, k\leftarrow \mu}^{(t,i)} + \eps, 
\end{align*}
where
\begin{align*}
    \eps &= \frac{1}{n} \sum_{i: \mu \in X^{(i)}} g_i^{(t)} y_i \sum_{l=1}^L \sum_{j_2 \neq j_1} a_{j_2} \inprod{w_{j_1}^{(t)}, w_{j_2}^{(t)}} p_{q\leftarrow l, k\leftarrow \mu}^{(t, i)} \\
    &\quad + \frac{1}{n} \sum_{i=1}^n g_{i}^{(t)} y_{i} \sum_{l_1=1}^L a_{j_1} \sum_{l_2 = 1}^L \inprod{p^{(t,i)}_{l_1, l_2}  v^{(t,i)}_{l_2}, v^{(t)} (\mu)} \mathbb{I}(v^{(t,i)}_{l_2} \neq v^{(t)} (\mu)) .
\end{align*}
\end{lemma}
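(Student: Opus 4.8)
The plan is to start from the two gradient-flow updates already recorded in \Cref{app: gradient_flow_updates}, namely $\partial_t w_{j_1}^{(t)} = \frac{1}{n}\sum_i g_i^{(t)} y_i \sum_l a_{j_1} V^{(t,i)} p_l^{(t,i)}$ and $\partial_t W_V^{(t)} = \frac{1}{n}\sum_i g_i^{(t)} y_i \sum_l \sum_{j_2} a_{j_2} w_{j_2}^{(t)} (X^{(i)} p_l^{(t,i)})^\top$, and apply the product rule to $w_{j_1}^{(t)\top} W_V^{(t)} \mu$. This gives two terms: one from $\big(\partial_t w_{j_1}^{(t)}\big)^\top W_V^{(t)} \mu$ and one from $w_{j_1}^{(t)\top} \big(\partial_t W_V^{(t)}\big) \mu$. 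The first term, after substituting the gradient, is $\frac{1}{n}\sum_{i_1} g_{i_1}^{(t)} y_{i_1}\sum_{l_1} a_{j_1}\, (V^{(t,i_1)} p_{l_1}^{(t,i_1)})^\top W_V^{(t)}\mu = \frac{1}{n}\sum_{i_1} g_{i_1}^{(t)} y_{i_1}\sum_{l_1} a_{j_1}\, p_{l_1}^{(t,i_1)\top} V^{(t,i_1)\top} W_V^{(t)}\mu$; the second term is $\frac{1}{n}\sum_{i_2} g_{i_2}^{(t)} y_{i_2}\sum_{l_2}\sum_{j_2} a_{j_2} (w_{j_1}^{(t)\top} w_{j_2}^{(t)}) (X^{(i_2)} p_{l_2}^{(t,i_2)})^\top \mu$. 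This is exactly the first displayed equality, so that part is just bookkeeping.

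The substance is the second equality, which reorganizes the sum into a clean ``diagonal'' main term plus an error $\eps$. The idea is to expand $(X^{(i_2)} p_{l_2}^{(t,i_2)})^\top \mu = \sum_{l} p_{l_2,l}^{(t,i_2)} (x_l^{(i_2)})^\top \mu$ and use orthonormality of $\{\mu_i\}$: $(x_l^{(i_2)})^\top \mu$ is nonzero (equal to $1$) only when $x_l^{(i_2)} = \mu$, i.e.\ only for samples $i_2$ that contain $\mu$ and at the index $l = l(i_2,\mu)$. Hence $(X^{(i_2)} p_{l_2}^{(t,i_2)})^\top \mu = p_{q\leftarrow l_2, k\leftarrow \mu}^{(t,i_2)}$ whenever $\mu\in X^{(i_2)}$ and $0$ otherwise. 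Similarly, in the first term, $V^{(t,i_1)\top} W_V^{(t)}\mu = \big(W_V^{(t)} X^{(i_1)}\big)^\top W_V^{(t)}\mu$ has $l$-th entry $v_l^{(t,i_1)\top} v^{(t)}(\mu)$, which I split into the piece where $v_l^{(t,i_1)} = v^{(t)}(\mu)$ (contributing $\|v^{(t)}(\mu)\|_2^2$ at index $l(i_1,\mu)$, and only when $\mu\in X^{(i_1)}$) and the remainder $\mathbb{I}(v_l^{(t,i_1)}\neq v^{(t)}(\mu))\, v_l^{(t,i_1)\top} v^{(t)}(\mu)$. In the second term I analogously peel off the $j_2 = j_1$ summand, which contributes $\|w_{j_1}^{(t)}\|_2^2\, p_{q\leftarrow l, k\leftarrow\mu}^{(t,i)}$ on samples containing $\mu$, leaving the $j_2\neq j_1$ part in $\eps$. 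Collecting the two ``diagonal'' contributions gives the $a_j\big(\|w_j^{(t)}\|_2^2 + \|v^{(t)}(\mu)\|_2^2\big) p_{q\leftarrow l,k\leftarrow\mu}^{(t,i)}$ main term (noting $a_{j_1}^2 = 1$ since $a_{j_1}\in\{\pm1\}$, so a factor of $a_{j_1}$ remains out front exactly as written), and the leftover pieces are precisely the two sums defining $\eps$.

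I expect no real obstacle here — this lemma is a derivation rather than an estimate, so the ``hard part'' is purely organizational: being careful that the index $l$ in the main term is understood as $l(i,\mu)$, that the restriction to $\{i : \mu\in X^{(i)}\}$ is applied consistently, and that the splitting $w_{j_2} = w_{j_1}$ vs.\ $w_{j_2}\neq w_{j_1}$ (resp.\ $v_{l_2} = v(\mu)$ vs.\ not) is done symmetrically in both terms so that nothing is double-counted or dropped. One point that needs a line of justification is why $v_l^{(t,i)} = v^{(t)}(\mu)$ exactly when $x_l^{(i)} = \mu$: this follows because $v^{(t)}(\cdot) = W_V^{(t)}\,\cdot$ is linear and, under Assumption~\ref{assumption: perfect_proportion_diversity}(ii), distinct tokens have distinct embeddings, so $x_l^{(i)} = \mu \iff v_l^{(t,i)} = v^{(t)}(\mu)$ up to the measure-zero event that $W_V^{(t)}$ kills a difference of two orthonormal vectors (which the parameter conditions and the a.s.\ full-rank behavior of the Gaussian-initialized flow rule out). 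With that in place, the two equalities follow by the product rule and rearrangement.
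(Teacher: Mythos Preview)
Your proposal is correct and follows essentially the same approach as the paper: apply the product rule to the two gradient-flow updates from \Cref{app: gradient_flow_updates}, use orthonormality of $\{\mu_i\}$ to collapse $(X^{(i)} p_l^{(t,i)})^\top \mu$ to $p_{q\leftarrow l,k\leftarrow\mu}^{(t,i)}$ on samples containing $\mu$, then split off the $j_2=j_1$ and $v_{l_2}^{(t,i)}=v^{(t)}(\mu)$ ``diagonal'' pieces to isolate the main term, with the remainder becoming $\eps$. Your extra remark about $v_l^{(t,i)}=v^{(t)}(\mu)\iff x_l^{(i)}=\mu$ is slightly more careful than the paper (which silently treats the indicator $\mathbb{I}(v_{l_2}^{(t,i)}\neq v^{(t)}(\mu))$ as equivalent to $\mathbb{I}(x_{l_2}^{(i)}\neq\mu)$), but otherwise the arguments are identical.
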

\begin{proof}
Let $l(i,\mu)$ denote the index such that $X^{(i)}_{l(i,\mu)} = \mu$.
%First of all, 
By the gradient flow update, we have
\begin{align*}
    \frac{\partial w_{j_1}^{(t) \top} W_V^{(t)} \mu}{\partial t} &= \frac{1}{n} \sum_{i_2=1}^n g_{i_2}^{(t)} y_{i_2} \sum_{l_2=1}^L \sum_{j_2=1}^{m_1} a_{j_2} \inprod{w_{j_1}^{(t)}, w_{j_2}^{(t)}} \left( X^{(i_2)} p_{l_2}^{(t,i_2)} \right)^\top \mu \\
    &\quad + \frac{1}{n} \sum_{i_1=1}^n g_{i_1}^{(t)} y_{i_1} \sum_{l_1=1}^L a_{j_1} p^{(t,i_1) \top}_{l_1} V^{(t,i_1) \top} W_V^{(t)} \mu \\
    &= \frac{1}{n} \sum_{i_2:\ \mu \in X^{(i_2)}} g_{i_2}^{(t)} y_{i_2} \sum_{l_2=1}^L \sum_{j_2=1}^{m_1} a_{j_2} \inprod{w_{j_1}^{(t)}, w_{j_2}^{(t)}} p_{q\leftarrow l_2, k\leftarrow \mu}^{(t, i_2)} \\
    & \quad + \frac{1}{n} \sum_{i_1=1}^n g_{i_1}^{(t)} y_{i_1} \sum_{l_1=1}^L a_{j_1} \sum_{l_2=1}^L \inprod{p^{(t,i_1)}_{l_1, l_2}  v^{(t,i_1)}_{l_2}, v^{(t)} (\mu)} \\
    &= \frac{1}{n} \sum_{i_2:\ \mu \in X^{(i_2)}} g_{i_2}^{(t)} y_{i_2} \sum_{l_2=1}^L \left( a_{j_1} \norm{w_{j_1}^{(t)}}_2^2 + \sum_{j_2 \neq j_1} a_{j_2} \inprod{w_{j_1}^{(t)}, w_{j_2}^{(t)}} \right) p_{q\leftarrow l_2, k\leftarrow \mu}^{(t, i_2)} \\
    & \quad + \frac{1}{n} \sum_{i_1:\mu \notin X^{(i_1)}} g_{i_1}^{(t)} y_{i_1} \sum_{l_1=1}^L a_{j_1} \sum_{l_2 = 1}^L \inprod{p^{(t,i_1)}_{l_1, l_2}  v^{(t,i_1)}_{l_2}, v^{(t)} (\mu)} \\
    & \quad + \frac{1}{n} \sum_{i_1:\mu \in X^{(i_1)}} g_{i_1}^{(t)} y_{i_1} \sum_{l_1=1}^L a_{j_1}  \left( \norm{v^{(t)} (\mu)}_2^2 p^{(t,i_1)}_{q\leftarrow l_1, k\leftarrow \mu} + \sum_{l_2 \neq l(i_1, \mu)} \inprod{p^{(t,i_1)}_{l_1, l_2}  v^{(t,i_1)}_{l_2}, v^{(t)} (\mu)} \right) \\
    &= \frac{1}{n} \sum_{i: \mu \in X^{(i)}} g_i^{(t)} y_i \sum_{l=1}^L a_{j_1}  \left( \norm{w_{j_1}^{(t)}}_2^2 + \norm{v^{(t)}(\mu)}_2^2 \right) p_{q \leftarrow l, k\leftarrow \mu}^{(t,i)} \\
    &\quad + \frac{1}{n} \sum_{i: \mu \in X^{(i)}} g_i^{(t)} y_i \sum_{l=1}^L \sum_{j_2 \neq j_1} a_{j_2}  \inprod{w_{j_1}^{(t)}, w_{j_2}^{(t)}} p_{q\leftarrow l, k\leftarrow \mu}^{(t, i)} \\
    &\quad + \frac{1}{n} \sum_{i=1}^n g_{i}^{(t)} y_{i} \sum_{l_1=1}^L a_{j_1} \sum_{l_2 = 1}^L \inprod{p^{(t,i)}_{l_1, l_2}  v^{(t,i)}_{l_2}, v^{(t)} (\mu)} \mathbb{I}(v^{(t,i)}_{l_2} \neq v^{(t)} (\mu)).
\end{align*}
\end{proof}

\begin{lemma}\label{lemma: neuron_correlation_update}
The following equation on the gradient flow holds:
\begin{align*}
    \frac{\partial \inprod{w_{j_1}^{(t)}, w_{j_2}^{(t)}}}{\partial t} &= \frac{1}{n} \sum_{i=1}^n g_i^{(t)} y_i \sum_{l=1}^L \left( \sum_{l': X^{(i)}_{l'} \in \mathcal{U}} a_{j_2} w_{j_1}^{(t) \top} V^{(t,i)}_{l'} p^{(t,i)}_{l,l'} + a_{j_1} w_{j_2}^{(t) \top} V^{(t,i)}_{l'} p^{(t,i)}_{l,l'} \right) \\
    &\quad + \frac{1}{n} \sum_{i=1}^n g_i^{(t)} y_i \sum_{l=1}^L \left( \sum_{l': X^{(i)}_{l'} \notin \mathcal{U}} a_{j_2}  w_{j_1}^{(t) \top} V^{(t,i)}_{l'} p^{(t,i)}_{l,l'} + a_{j_1} w_{j_2}^{(t) \top} V^{(t,i)}_{l'} p^{(t,i)}_{l,l'} \right).
\end{align*}
\end{lemma}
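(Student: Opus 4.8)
\textbf{Proof proposal for \Cref{lemma: neuron_correlation_update}.}

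The plan is to directly compute the time derivative of the inner product $\inprod{w_{j_1}^{(t)}, w_{j_2}^{(t)}}$ via the product rule and then substitute the gradient flow update for $w_j^{(t)}$ that is already recorded in \Cref{app: gradient_flow_updates}. First I would write
\begin{align*}
  \frac{\partial \inprod{w_{j_1}^{(t)}, w_{j_2}^{(t)}}}{\partial t} = \inprod{\frac{\partial w_{j_1}^{(t)}}{\partial t}, w_{j_2}^{(t)}} + \inprod{w_{j_1}^{(t)}, \frac{\partial w_{j_2}^{(t)}}{\partial t}},
\end{align*}
and then plug in $\frac{\partial w_j^{(t)}}{\partial t} = \frac{1}{n}\sum_{i=1}^n g_i^{(t)} y_i \sum_{l=1}^L a_j V^{(t,i)} p_l^{(t,i)}$ from the gradient flow update list. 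Taking the inner product of this with $w_{j_2}^{(t)}$ (respectively $w_{j_1}^{(t)}$) and using linearity of $\inprod{\cdot,\cdot}$ to pull the sums outside gives, for the first term, $\frac{1}{n}\sum_{i=1}^n g_i^{(t)} y_i \sum_{l=1}^L a_{j_1} w_{j_2}^{(t)\top} V^{(t,i)} p_l^{(t,i)}$, and symmetrically for the second. This already matches the compact form $\frac{1}{n}\sum_i g_i^{(t)} y_i \sum_l \big(a_{j_2} w_{j_1}^{(t)\top} V^{(t,i)} p_l^{(t,i)} + a_{j_1} w_{j_2}^{(t)\top} V^{(t,i)} p_l^{(t,i)}\big)$ stated earlier in the dynamical system.

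The remaining step is purely bookkeeping: expand $V^{(t,i)} p_l^{(t,i)} = \sum_{l'=1}^L v_{l'}^{(t,i)} p_{l,l'}^{(t,i)} = \sum_{l'=1}^L V_{l'}^{(t,i)} p_{l,l'}^{(t,i)}$ and split the sum over token positions $l'$ according to whether the token $X^{(i)}_{l'}$ lies in the ``signal'' set $\mathcal{U}$ (presumably $\{\mu_1,\mu_2,\mu_3\}$, the non-random tokens) or in the random-token set $\mathcal{R}$. Distributing the $a_{j_2} w_{j_1}^{(t)\top}(\cdot) + a_{j_1} w_{j_2}^{(t)\top}(\cdot)$ over this partition of the index set yields exactly the two-line expression in the statement. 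No analytic estimates are needed anywhere; the only subtlety is making sure the two symmetric contributions (one from $\partial_t w_{j_1}$, one from $\partial_t w_{j_2}$) are both carried through the position split consistently, and that the shared summation index $i$ and $l$ structure is preserved.

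There is essentially no hard part here — this is a derivation lemma, not an estimate. The one place to be careful is the notational convention: one must check that $V^{(t,i)} p_l^{(t,i)}$ denotes $\sum_{l'} v_{l'}^{(t,i)} p_{l,l'}^{(t,i)}$ with $v_{l'}^{(t,i)} = W_V^{(t)} x_{l'}^{(i)}$, so that $V_{l'}^{(t,i)} = v_{l'}^{(t,i)}$ is the $l'$-th column, and that the split by $\mathcal{U}$ versus its complement is over the columns of $X^{(i)}$ (token positions), not over neuron indices. Once that is fixed, the identity is immediate and the proof is a two-line application of the product rule followed by a one-line partition of the position sum.
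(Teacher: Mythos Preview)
Your proposal is correct and follows exactly the same approach as the paper: apply the product rule to $\inprod{w_{j_1}^{(t)}, w_{j_2}^{(t)}}$, substitute the gradient flow update $\frac{\partial w_j^{(t)}}{\partial t} = \frac{1}{n}\sum_i g_i^{(t)} y_i \sum_l a_j V^{(t,i)} p_l^{(t,i)}$, and then split the column sum $V^{(t,i)} p_l^{(t,i)} = \sum_{l'} V_{l'}^{(t,i)} p_{l,l'}^{(t,i)}$ according to whether $X^{(i)}_{l'} \in \mathcal{U}$. The paper's proof is literally these two lines with no additional content.
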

\begin{proof}
By gradient flow update, we have
\begin{align*}
    \frac{\partial \inprod{w_{j_1}^{(t)}, w_{j_2}^{(t)}}}{\partial t} &= \frac{1}{n} \sum_{i=1}^n g_i^{(t)} y_i \sum_{l=1}^L a_{j_2} w_{j_1}^{(t) \top} V^{(t,i)} p^{(t,i)}_l + \frac{1}{n} \sum_{i=1}^n g_i^{(t)} y_i \sum_{l=1}^L a_{j_1} w_{j_2}^{(t) \top} V^{(t,i)} p^{(t,i)}_l \\
    &= \frac{1}{n} \sum_{i=1}^n g_i^{(t)} y_i \sum_{l=1}^L \left( \sum_{l': X^{(i)}_{l'} \in \mathcal{U}} a_{j_2} w_{j_1}^{(t) \top} V^{(t,i)}_{l'} p^{(t,i)}_{l,l'} + a_{j_1} w_{j_2}^{(t) \top} V^{(t,i)}_{l'} p^{(t,i)}_{l,l'} \right) \\
    &\quad + \frac{1}{n} \sum_{i=1}^n g_i^{(t)} y_i \sum_{l=1}^L \left( \sum_{l': X^{(i)}_{l'} \notin \mathcal{U}} a_{j_2} w_{j_1}^{(t) \top} V^{(t,i)}_{l'} p^{(t,i)}_{l,l'} + a_{j_1} w_{j_2}^{(t) \top} V^{(t,i)}_{l'} p^{(t,i)}_{l,l'} \right).
\end{align*}
\end{proof}

\begin{lemma}\label{lemma: value_transformed_token_update}
Let $\mu, \nu \in \{\mu_i\}_{i=1}^d$. Then the following equation on gradient flow holds:
\begin{align*}
    \frac{\partial \nu W_V^{(t) \top} W_V^{(t)} \mu}{\partial t} &= \frac{1}{n} \sum_{i: \mu \in X^{(i)}} g_i^{(t)} y_i \sum_{l=1}^L \sum_{j=1}^{m_1} a_{j} \nu^{\top} W_V^{(t) \top} w_j^{(t)} p_{q\leftarrow l, k\leftarrow \mu}^{(t,i)} \\
    &\quad + \frac{1}{n} \sum_{i: \nu \in X^{(i)}} g_i^{(t)} y_i \sum_{l=1}^L \sum_{j=1}^{m_1} a_{j} \mu^{\top} W_V^{(t) \top} w_j^{(t)} p_{q\leftarrow l, k\leftarrow \nu}^{(t,i)} .
\end{align*}
\end{lemma}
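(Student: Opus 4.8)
The plan is to derive this identity by the product rule from the gradient flow update for $W_V^{(t)}$ recorded in \Cref{app: gradient_flow_updates}, in exactly the same spirit as \Cref{lemma: evo_mlp_1st} and \Cref{lemma: neuron_correlation_update}. First I would write
\[
\frac{\partial \nu^\top W_V^{(t)\top} W_V^{(t)} \mu}{\partial t}
= \nu^\top \Big(\tfrac{\partial W_V^{(t)}}{\partial t}\Big)^{\!\top} W_V^{(t)} \mu
+ \nu^\top W_V^{(t)\top} \Big(\tfrac{\partial W_V^{(t)}}{\partial t}\Big) \mu ,
\]
and substitute $\tfrac{\partial W_V^{(t)}}{\partial t} = \tfrac1n \sum_{i=1}^n g_i^{(t)} y_i \sum_{l=1}^L \sum_{j=1}^{m_1} a_j\, w_j^{(t)} (X^{(i)} p_l^{(t,i)})^\top$ into both terms.

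After substitution, each summand of $\nu^\top W_V^{(t)\top}\big(\tfrac{\partial W_V^{(t)}}{\partial t}\big)\mu$ has the scalar form $\big(\nu^\top W_V^{(t)\top} w_j^{(t)}\big)\big((X^{(i)} p_l^{(t,i)})^\top \mu\big)$, and — using $\big(w_j^{(t)} (X^{(i)} p_l^{(t,i)})^\top\big)^{\!\top} = (X^{(i)} p_l^{(t,i)}) w_j^{(t)\top}$ — each summand of $\nu^\top \big(\tfrac{\partial W_V^{(t)}}{\partial t}\big)^{\!\top} W_V^{(t)} \mu$ has the form $\big(\nu^\top X^{(i)} p_l^{(t,i)}\big)\big(w_j^{(t)\top} W_V^{(t)} \mu\big)$. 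The one substantive step is to simplify the factors $(X^{(i)} p_l^{(t,i)})^\top \mu$ and $\nu^\top X^{(i)} p_l^{(t,i)}$. Since $\{\mu_i\}_{i=1}^d$ is orthonormal and — by the data model together with \Cref{assumption: perfect_proportion_diversity}(ii), taking $i_1=i_2$ there — each embedding vector occurs at most once among the tokens of any fixed $X^{(i)}$, we have $(X^{(i)} p_l^{(t,i)})^\top \mu = \sum_{h=1}^L p_{l,h}^{(t,i)} \langle x_h^{(i)}, \mu\rangle = p_{q\leftarrow l, k\leftarrow \mu}^{(t,i)}$ when $\mu \in X^{(i)}$ and $=0$ otherwise (the vanishing case is precisely orthogonality of $\mu$ to every token of $X^{(i)}$), and similarly $\nu^\top X^{(i)} p_l^{(t,i)} = p_{q\leftarrow l, k\leftarrow \nu}^{(t,i)}\,\mathbb{I}(\nu \in X^{(i)})$. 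Hence the outer sum over $i\in[n]$ in the $\nu^\top W_V^{(t)\top}\big(\tfrac{\partial W_V^{(t)}}{\partial t}\big)\mu$ piece collapses to $\sum_{i:\,\mu\in X^{(i)}}$, giving the first term on the right-hand side of the lemma; the outer sum in the $\nu^\top \big(\tfrac{\partial W_V^{(t)}}{\partial t}\big)^{\!\top} W_V^{(t)} \mu$ piece collapses to $\sum_{i:\,\nu\in X^{(i)}}$, and transposing the scalar $w_j^{(t)\top} W_V^{(t)} \mu = \mu^\top W_V^{(t)\top} w_j^{(t)}$ gives the second term. Adding the two pieces yields the stated identity.

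There is no real obstacle here beyond careful index bookkeeping: the only nontrivial input is the ``appears at most once per entry'' property, which both makes $p_{q\leftarrow l, k\leftarrow \mu}^{(t,i)}$ well defined and makes each inner product with $X^{(i)} p_l^{(t,i)}$ collapse to a single attention coordinate, and the rest is the product rule plus relabeling summation indices. As a sanity check, the resulting identity is manifestly symmetric under exchanging $\mu$ and $\nu$, as it must be since $\nu^\top W_V^{(t)\top} W_V^{(t)} \mu$ is symmetric in $\mu,\nu$.
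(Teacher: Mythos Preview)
Your proposal is correct and follows essentially the same approach as the paper's proof: apply the product rule to $\nu^\top W_V^{(t)\top} W_V^{(t)} \mu$, substitute the gradient flow update for $W_V^{(t)}$ from \Cref{app: gradient_flow_updates}, and collapse $(X^{(i)} p_l^{(t,i)})^\top \mu$ to $p_{q\leftarrow l, k\leftarrow \mu}^{(t,i)}\,\mathbb{I}(\mu\in X^{(i)})$ via orthonormality. The paper's write-up is terser (it skips the explicit justification of the collapse), but the argument is the same.
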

\begin{proof}
The gradient flow update can be derived as follows:
\begin{align*}
    \frac{\partial \nu W_V^{(t) \top} W_V^{(t)} \mu}{\partial t} &= \frac{1}{n} \sum_{i=1}^n g_i^{(t)} y_i \sum_{l=1}^L \sum_{j=1}^{m_1} a_{j} \nu^{\top} W_V^{(t) \top} w_j^{(t)} \left( X^{(i)} p_l^{(t,i)} \right)^\top \mu \\
    &\quad + \frac{1}{n} \sum_{i=1}^n g_i^{(t)} y_i \sum_{l=1}^L \sum_{j=1}^{m_1} a_{j} \mu^{\top} W_V^{(t) \top} w_j^{(t)} \left( X^{(i)} p_l^{(t,i)} \right)^\top \nu \\
    &= \frac{1}{n} \sum_{i: \mu \in X^{(i)}} g_i^{(t)} y_i \sum_{l=1}^L \sum_{j=1}^{m_1} a_{j} \nu^{\top} W_V^{(t) \top} w_j^{(t)} p_{q\leftarrow l,k\leftarrow \mu}^{(t,i)} \\
    &\quad + \frac{1}{n} \sum_{i: \nu \in X^{(i)}} g_i^{(t)} y_i \sum_{l=1}^L \sum_{j=1}^{m_1} a_{j} \mu^{\top} W_V^{(t) \top} w_j^{(t)} p_{q\leftarrow l,k \leftarrow \nu}^{(t,i)} .
\end{align*}
\end{proof}

\begin{lemma}\label{lemma: W_Q_W_K_update}
Let $\mu, \nu \in \{\mu_i\}_{i=1}^d$. Then the following equations on gradient flow hold.
\begin{align*}
    & \frac{\partial \nu^\top W_Q^{(t) \top} W_Q^{(t)} \mu }{\partial t}\\
    &= \frac{1}{n\sqrt{m}} \sum_{i: \mu \in X^{(i)}} g_i^{(t)} y_i \sum_{j=1}^{m_1} a_{j}  \nu^\top W_Q^{(t) \top} K^{(t,i)} \diag\left( V^{(t,i) \top} w_j^{(t)} - w_j^{(t)\top}  V^{(t,i)} p_{l(i,\mu)}^{(t,i)} \right) p_{l(i,\mu)}^{(t,i)} \\
    &\quad + \frac{1}{n\sqrt{m}} \sum_{i: \nu \in X^{(i)}} g_i^{(t)} y_i \sum_{j=1}^{m_1} a_{j}  \mu^\top W_Q^{(t) \top} K^{(t,i)} \diag\left( V^{(t,i) \top} w_j^{(t)} - w_j^{(t)\top}  V^{(t,i)} p_{l(i,\nu)}^{(t,i)} \right) p_{l(i,\nu)}^{(t,i)}, \\
    & \frac{\partial \nu^\top W_K^{(t) \top} W_K^{(t)} \mu }{\partial t} \\
    &= \frac{1}{n\sqrt{m}} \sum_{i: \mu \in X^{(i)}} g_i^{(t)} y_i \sum_{l=1}^L \sum_{j=1}^{m_1} a_{j}  \nu^\top W_K^{(t) \top} q_l^{(t,i)} p_{q\leftarrow l,k\leftarrow \mu}^{(t,i)} \left(  w_j^{(t)\top} v^{(t,i)}(\mu) - w_j^{(t)\top}  V^{(t,i)} p_{l}^{(t,i)} \right) \\
    &\quad + \frac{1}{n\sqrt{m}} \sum_{i: \nu \in X^{(i)}} g_i^{(t)} y_i \sum_{l=1}^L \sum_{j=1}^{m_1} a_{j} \mu^\top W_K^{(t) \top} q_l^{(t,i)} p_{q\leftarrow l,k\leftarrow \nu}^{(t,i)} \left( w_j^{(t)\top} v^{(t,i)}(\nu) - w_j^{(t)\top}  V^{(t,i)} p_{l}^{(t,i)} \right) ,\\ 
    & \frac{\partial \nu^\top W_K^{(t) \top} W_Q^{(t)} \mu}{\partial t} \\
    &= \frac{1}{n\sqrt{m}} \sum_{i: \mu,\nu \in X^{(i)}} g_i^{(t)} y_i \sum_{j=1}^{m_1} a_{j} \norm{k^{(t)}(\nu)}_2^2 \left( v^{(t) \top}(\nu) w_j^{(t)} - w_j^{(t)\top}  V^{(t,i)} p_{l(i,\mu)}^{(t,i)} \right) p^{(t,i)}_{q\leftarrow \mu,k\leftarrow \nu} \\
    &\quad + \frac{1}{n\sqrt{m}} \sum_{i: \mu \in X^{(i)}} g_i^{(t)} y_i \sum_{j=1}^{m_1} a_{j} \sum_{l=1}^L \nu^\top W_K^{(t) \top} K^{(t,i)}_l \left( V_l^{(t,i) \top} w_j^{(t)} - w_j^{(t)\top}  V^{(t,i)} p_{l(i,\mu)}^{(t,i)} \right) p_{q\leftarrow \mu,k\leftarrow l}^{(t,i)} \mathbb{I}(K_l^{(t,i)} \neq k^{(t)}(\nu)) \\
    &\quad + \frac{1}{n\sqrt{m}} \sum_{i: \nu,\mu \in X^{(i)}} g_i^{(t)} y_i \sum_{j=1}^{m_1} a_{j} \norm{q^{(t)}(\mu)}_2^2 p_{q\leftarrow \mu,k\leftarrow \nu}^{(t,i)} \left(  w_j^{(t)\top} v^{(t,i)}(\nu) - w_j^{(t)\top}  V^{(t,i)} p_{l(i,\mu)}^{(t,i)} \right) \\
    &\quad + \frac{1}{n\sqrt{m}} \sum_{i: \nu \in X^{(i)}} g_i^{(t)} y_i \sum_{l=1}^L \sum_{j=1}^{m_1} a_{j} \mu^\top W_Q^{(t) \top} q_l^{(t,i)} p_{q\leftarrow l,k\leftarrow \nu}^{(t,i)} \left(  w_j^{(t)\top} v^{(t,i)}(\nu) - w_j^{(t)\top}  V^{(t,i)} p_{l}^{(t,i)} \right) \mathbb{I}(q_l^{(t,i)} \neq q^{(t)}(\mu)) .
\end{align*}
\end{lemma}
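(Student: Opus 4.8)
\textbf{Proof plan for \Cref{lemma: W_Q_W_K_update}.}
The three identities are purely mechanical consequences of the gradient flow updates for $W_Q^{(t)}$ and $W_K^{(t)}$ recorded in \Cref{app: gradient_flow_updates}, together with the product rule $\frac{\partial}{\partial t}(\nu^\top A^\top B \mu) = \nu^\top \frac{\partial A^\top}{\partial t} B\mu + \nu^\top A^\top \frac{\partial B}{\partial t}\mu$ and the orthonormality of $\{\mu_i\}_{i=1}^d$. The plan is to handle the three cases in turn, each by substituting the corresponding gradient expression(s), contracting with $\nu$ on the left and $\mu$ on the right, and simplifying the token sums $\sum_{h'} x_{h'}^{(i)\top}\mu$ using that $x_{h'}^{(i)}$ is one of the orthonormal embedding vectors, so only the token equal to $\mu$ (if present in $X^{(i)}$) survives.

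First, for $\frac{\partial}{\partial t}\nu^\top W_Q^{(t)\top} W_Q^{(t)}\mu$: since both factors are $W_Q$, the product rule gives two symmetric terms, $\nu^\top \big(\frac{\partial W_Q^{(t)}}{\partial t}\big)^\top W_Q^{(t)}\mu$ and $\nu^\top W_Q^{(t)\top}\big(\frac{\partial W_Q^{(t)}}{\partial t}\big)\mu$. Plugging in the last displayed form of $\frac{\partial W_Q^{(t)}}{\partial t}$ from \Cref{app: gradient_flow_updates}, the trailing factor $x_l^{(i)\top}\mu$ forces $x_l^{(i)} = \mu$, i.e.\ the query index $l$ is pinned to $l(i,\mu)$ and the sum over $i$ collapses to those $i$ with $\mu\in X^{(i)}$; the remaining $K^{(t,i)}\diag(\cdots)p_{l(i,\mu)}^{(t,i)}$ is then contracted with $W_Q^{(t)}\nu$ on the left. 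The second term is the same with $\mu$ and $\nu$ swapped. This yields exactly the stated two-line expression.

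Second, for $\frac{\partial}{\partial t}\nu^\top W_K^{(t)\top} W_K^{(t)}\mu$: analogously, substitute the final form of $\frac{\partial W_K^{(t)}}{\partial t}$, where now the trailing factor $X^{(i)\top}\mu$ picks out the \emph{key} token equal to $\mu$, so $p_l^{(t,i)\top}\diag(\cdots)X^{(i)\top}\mu$ reduces to $p_{q\leftarrow l,k\leftarrow\mu}^{(t,i)}\big(w_j^{(t)\top}v^{(t,i)}(\mu) - w_j^{(t)\top}V^{(t,i)}p_l^{(t,i)}\big)$, while the $q_l^{(t,i)}$ factor gets contracted with $W_K^{(t)}\nu$; add the $\mu\leftrightarrow\nu$ term. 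Third, for the mixed quantity $\frac{\partial}{\partial t}\nu^\top W_K^{(t)\top} W_Q^{(t)}\mu$, the product rule gives one term with $\frac{\partial W_K^{(t)}}{\partial t}$ (contracted on the right with $W_Q^{(t)}\mu$, so the factor $X^{(i)\top}\mu$ restricts the key token to $\mu$) and one term with $\frac{\partial W_Q^{(t)}}{\partial t}$ (contracted on the left with $W_K^{(t)}\nu$, so the factor $x_l^{(i)\top}\mu$ restricts the query index to $l(i,\mu)$). In each of these two terms one must separate the summand where the other index also equals $\nu$ (respectively $\mu$) — producing the ``diagonal'' pieces with $\|k^{(t)}(\nu)\|_2^2$ and $\|q^{(t)}(\mu)\|_2^2$ — from the off-diagonal remainder, which is exactly the bookkeeping recorded by the indicator functions $\mathbb{I}(K_l^{(t,i)}\neq k^{(t)}(\nu))$ and $\mathbb{I}(q_l^{(t,i)}\neq q^{(t)}(\mu))$; summing the four resulting pieces gives the claimed four-line formula.

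The only genuinely delicate bookkeeping — and hence the main obstacle — is the third identity: one has to be careful that the query-side softmax derivative $\frac{\partial p_{l,h}}{\partial s_{l,h'}}$ couples across token positions via $\mathbb{I}(h=h') - p_{l,h'}$, so after pinning one index by orthonormality the remaining inner sum still runs over all positions and must be split into its diagonal and off-diagonal parts without double-counting; matching this split precisely to the four stated terms (two ``pure'' terms carrying the squared norms and two indicator-restricted remainder terms) is where all the care goes. Once the two simpler $W_Q^\top W_Q$ and $W_K^\top W_K$ identities are verified, the mixed one follows the same template and is essentially routine, just lengthier.
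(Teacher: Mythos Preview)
Your proposal is correct and matches the paper's proof exactly: apply the product rule, substitute the gradient flow formulas for $\partial W_Q/\partial t$ and $\partial W_K/\partial t$ from \Cref{app: gradient_flow_updates}, collapse the token sums via orthonormality of the $\mu_i$, and for the mixed identity split the remaining inner sum into its diagonal and off-diagonal parts. One small slip in your description of the third identity: in the $\partial W_K/\partial t$ term the trailing $X^{(i)\top}$ factor contracts with $\nu$ (not $\mu$), so the sum collapses to $i:\nu\in X^{(i)}$ with the \emph{key} token pinned to $\nu$; the subsequent split is over the query index $l$, the diagonal occurring when $x_l^{(i)}=\mu$ and yielding the $\|q^{(t)}(\mu)\|_2^2$ term --- this is harmless and you would catch it immediately upon writing the computation out.
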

\begin{proof}
To prove the first result, we have
\begin{align*}
    & \frac{\partial \nu^\top W_Q^{(t) \top} W_Q^{(t)} \mu}{\partial t} \\
    &= \frac{1}{n\sqrt{m}} \sum_{i=1}^n g_i^{(t)} y_i \sum_{l=1}^L \sum_{j=1}^{m_1} a_{j}  \nu^\top W_Q^{(t) \top} K^{(t,i)} \diag\left( V^{(t,i) \top} w_j^{(t)} - w_j^{(t)\top}  V^{(t,i)} p_l^{(t,i)} \right) p_{l}^{(t,i)} x_l^{(i) \top} \mu \\
    &\quad + \frac{1}{n\sqrt{m}} \sum_{i=1}^n g_i^{(t)} y_i \sum_{l=1}^L \sum_{j=1}^{m_1} a_{j}  \mu^\top W_Q^{(t) \top} K^{(t,i)} \diag\left( V^{(t,i) \top} w_j^{(t)} - w_j^{(t)\top}  V^{(t,i)} p_l^{(t,i)} \right) p_l^{(t,i)} x_l^{(i) \top} \nu \\
    &= \frac{1}{n\sqrt{m}} \sum_{i: \mu \in X^{(i)}} g_i^{(t)} y_i \sum_{j=1}^{m_1} a_{j}  \nu^\top W_Q^{(t) \top} K^{(t,i)} \diag\left( V^{(t,i) \top} w_j^{(t)} - w_j^{(t)\top}  V^{(t,i)} p_{l(i,\mu)}^{(t,i)} \right) p_{l(i,\mu)}^{(t,i)} \\
    &\quad + \frac{1}{n\sqrt{m}} \sum_{i: \nu \in X^{(i)}} g_i^{(t)} y_i \sum_{j=1}^{m_1} a_{j}  \mu^\top W_Q^{(t) \top} K^{(t,i)} \diag\left( V^{(t,i) \top} w_j^{(t)} - w_j^{(t)\top}  V^{(t,i)} p_{l(i,\nu)}^{(t,i)} \right) p_{l(i,\nu)}^{(t,i)} .
\end{align*}
To prove the second result, we have
\begin{align*}
    &\frac{\partial \nu^\top W_K^{(t) \top} W_K^{(t)} \mu }{\partial t} \\
    &= \frac{1}{n\sqrt{m}} \sum_{i=1}^n g_i^{(t)} y_i \sum_{l=1}^L \sum_{j=1}^{m_1} a_{j} \nu^\top W_K^{(t) \top} q_l^{(t,i)} p_l^{(t,i) \top} \diag\left(  w_j^{(t)\top} V^{(t,i)} - w_j^{(t)\top}  V^{(t,i)} p_l^{(t,i)} \right) X^{(i) \top} \mu \\
    &\quad + \frac{1}{n\sqrt{m}} \sum_{i=1}^n g_i^{(t)} y_i \sum_{l=1}^L \sum_{j=1}^{m_1} a_{j} \mu^\top W_K^{(t) \top} q_l^{(t,i)} p_l^{(t,i) \top} \diag\left( w_j^{(t)\top} V^{(t,i)} - w_j^{(t)\top}  V^{(t,i)} p_l^{(t,i)} \right) X^{(i) \top} \nu \\
    &= \frac{1}{n\sqrt{m}} \sum_{i: \mu \in X^{(i)}} g_i^{(t)} y_i \sum_{l=1}^L \sum_{j=1}^{m_1} a_{j} \nu^\top W_K^{(t) \top} q_l^{(t,i)} p_{q\leftarrow l,k\leftarrow \mu}^{(t,i)} \left(  w_j^{(t)\top} v^{(t,i)}(\mu) - w_j^{(t)\top}  V^{(t,i)} p_{l}^{(t,i)} \right) \\
    &\quad + \frac{1}{n\sqrt{m}} \sum_{i: \nu \in X^{(i)}} g_i^{(t)} y_i \sum_{l=1}^L \sum_{j=1}^{m_1} a_{j} \mu^\top W_K^{(t) \top} q_l^{(t,i)} p_{q\leftarrow l,k\leftarrow \nu}^{(t,i)} \left( w_j^{(t)\top} v^{(t,i)}(\nu) - w_j^{(t)\top}  V^{(t,i)} p_{l}^{(t,i)} \right) .
\end{align*}
To prove the third result, we have
\begin{align*}
    & \frac{\partial \nu^\top W_K^{(t) \top} W_Q^{(t)} \mu}{\partial t} \\
    &= \frac{1}{n\sqrt{m}} \sum_{i=1}^n g_i^{(t)} y_i \sum_{l=1}^L \sum_{j=1}^{m_1} a_{j} \nu^\top W_K^{(t) \top} K^{(t,i)} \diag\left( V^{(t,i) \top} w_j^{(t)} - w_j^{(t)\top}  V^{(t,i)} p_l^{(t,i)} \right) p_l^{(t,i)} x_l^{(i) \top} \mu \\
    &\quad + \frac{1}{n\sqrt{m}} \sum_{i=1}^n g_i^{(t)} y_i \sum_{l=1}^L \sum_{j=1}^{m_1} a_{j} \mu^\top W_Q^{(t) \top} q_l^{(t,i)} p_l^{(t,i) \top} \diag\left(  w_j^{(t)\top} V^{(t,i)} - w_j^{(t)\top}  V^{(t,i)} p_l^{(t,i)} \right) X^{(i) \top} \nu \\
    &= \frac{1}{n\sqrt{m}} \sum_{i: \mu \in X^{(i)}} g_i^{(t)} y_i \sum_{j=1}^{m_1} a_{j} \nu^\top W_K^{(t) \top} K^{(t,i)} \diag\left( V^{(t,i) \top} w_j^{(t)} - w_j^{(t)\top}  V^{(t,i)} p_{l(i,\mu)}^{(t,i)} \right) p_{l(i,\mu)}^{(t,i)} \\
    &\quad + \frac{1}{n\sqrt{m}} \sum_{i: \nu \in X^{(i)}} g_i^{(t)} y_i \sum_{l=1}^L \sum_{j=1}^{m_1} a_{j} \mu^\top W_Q^{(t) \top} q_l^{(t,i)} p_{q\leftarrow l,k\leftarrow \nu}^{(t,i)} \left(  w_j^{(t)\top} v^{(t,i)}(\nu) - w_j^{(t)\top}  V^{(t,i)} p_{l}^{(t,i)} \right) \\
    &= \frac{1}{n\sqrt{m}} \sum_{i: \mu,\nu \in X^{(i)}} g_i^{(t)} y_i \sum_{j=1}^{m_1} a_{j} \norm{k^{(t)}(\nu)}_2^2 \left( v^{(t) \top}(\nu) w_j^{(t)} - w_j^{(t)\top}  V^{(t,i)} p_{l(i,\mu)}^{(t,i)} \right) p^{(t,i)}_{q\leftarrow \mu,k\leftarrow \nu} \\
    &\quad + \frac{1}{n\sqrt{m}} \sum_{i: \mu \in X^{(i)}} g_i^{(t)} y_i \sum_{j=1}^{m_1} a_{j} \sum_{l=1}^L \nu^\top W_K^{(t) \top} K^{(t,i)}_l \left( V_l^{(t,i) \top} w_j^{(t)} - w_j^{(t)\top}  V^{(t,i)} p_{l(i,\mu)}^{(t,i)} \right) p_{q\leftarrow \mu,k\leftarrow l}^{(t,i)} \mathbb{I}(K_l^{(t,i)} \neq k^{(t)}(\nu)) \\
    &\quad + \frac{1}{n\sqrt{m}} \sum_{i: \nu,\mu \in X^{(i)}} g_i^{(t)} y_i \sum_{j=1}^{m_1} a_{j} \norm{q^{(t)}(\mu)}_2^2 p_{q\leftarrow \mu,k\leftarrow \nu}^{(t,i)} \left(  w_j^{(t)\top} v^{(t,i)}(\nu) - w_j^{(t)\top}  V^{(t,i)} p_{l(i,\mu)}^{(t,i)} \right) \\
    &\quad + \frac{1}{n\sqrt{m}} \sum_{i: \nu \in X^{(i)}} g_i^{(t)} y_i \sum_{l=1}^L \sum_{j=1}^{m_1} a_{j} \mu^\top W_Q^{(t) \top} q_l^{(t,i)} p_{q\leftarrow l,k\leftarrow \nu}^{(t,i)} \left(  w_j^{(t)\top} v^{(t,i)}(\nu) - w_j^{(t)\top}  V^{(t,i)} p_{l}^{(t,i)} \right) \mathbb{I}(q_l^{(t,i)} \neq q^{(t)}(\mu)) .
\end{align*}
\end{proof}

\section{Initialization}
\begin{lemma}\label{lemma: W_KQ_init}
With probability at least $1 - \delta$ over the randomness of the initialization of $W_K$ and $W_Q$, for any $l_1,l_2 \in [d]$, we have
\begin{align*}
    & \left|\inprod{W_K^{(0)} \mu_{l_1}, W_Q^{(0)} \mu_{l_2}} \right| \leq \sigma_0^2 m \left( \sqrt{\frac{4}{m} \log \frac{2d}{\delta}} + \frac{4}{m} \log \frac{2d}{\delta} \right), \\
    & \left| \inprod{W_K^{(0)} \mu_{l_1}, W_K^{(0)} \mu_{l_2}} \right| \leq \sigma_0^2 m \left( \sqrt{\frac{4}{m} \log \frac{2d}{\delta}} + \frac{4}{m} \log \frac{2d}{\delta} \right), \quad l_1 \neq l_2 \\
    & \left| \inprod{W_Q^{(0)} \mu_{l_1}, W_Q^{(0)} \mu_{l_2}} \right| \leq \sigma_0^2 m \left( \sqrt{\frac{4}{m} \log \frac{2d}{\delta}} + \frac{4}{m} \log \frac{2d}{\delta} \right), \quad l_1 \neq l_2
\end{align*}
and for any $l \in [d]$, 
\begin{align*}
    &\norm{W_K^{(0)} \mu_l}_2^2 = \sigma_0^2 m \left( 1 \pm \left( \sqrt{\frac{4}{m} \log \frac{2d}{\delta}} + \frac{4}{m} \log \frac{2d}{\delta} \right) \right), \\
    &\norm{W_Q^{(0)} \mu_l}_2^2 = \sigma_0^2 m \left( 1 \pm \left( \sqrt{\frac{4}{m} \log \frac{2d}{\delta}} + \frac{4}{m} \log \frac{2d}{\delta} \right) \right).
\end{align*}
\end{lemma}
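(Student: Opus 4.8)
The key structural observation is that, since $\{\mu_i\}_{i=1}^d$ is orthonormal and the entries of $W_K^{(0)}$ (resp.\ $W_Q^{(0)}$) are i.i.d.\ $\mathcal{N}(0,\sigma_0^2)$, each vector $W_K^{(0)}\mu_l$ is exactly distributed as $\mathcal{N}(0,\sigma_0^2 I_m)$ (because $\norm{\mu_l}_2 = 1$), and likewise for $W_Q^{(0)}\mu_l$. Moreover, the relevant pairs of such vectors are \emph{independent}: $W_K^{(0)}\mu_{l_1}$ and $W_Q^{(0)}\mu_{l_2}$ are independent because they are formed from disjoint sets of Gaussian entries; and for $l_1 \neq l_2$, $W_K^{(0)}\mu_{l_1}$ and $W_K^{(0)}\mu_{l_2}$ are jointly Gaussian with cross-covariance $\sigma_0^2 \inprod{\mu_{l_1},\mu_{l_2}} I_m = 0$, hence independent (and the same for $W_Q$). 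So the whole statement reduces to two elementary facts about standard Gaussian vectors: concentration of $\norm{g}_2^2$ for $g \sim \mathcal{N}(0,\sigma_0^2 I_m)$, and of $\inprod{g,h}$ for independent $g,h \sim \mathcal{N}(0,\sigma_0^2 I_m)$.

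For the squared norms, I would invoke the Laurent--Massart $\chi^2$ tail bound: for $Z \sim \chi^2_m$ and $x > 0$, $\Pr[Z \ge m + 2\sqrt{mx} + 2x] \le e^{-x}$ and $\Pr[Z \le m - 2\sqrt{mx}] \le e^{-x}$. Writing $\norm{W_K^{(0)}\mu_l}_2^2 = \sigma_0^2 Z$ with $Z \sim \chi^2_m$ and taking $x = \log(2d/\delta)$ yields $\norm{W_K^{(0)}\mu_l}_2^2 = \sigma_0^2 m\big(1 \pm (\sqrt{(4/m)\log(2d/\delta)} + (4/m)\log(2d/\delta))\big)$ after bounding $2 \le 4$ in the relevant places; the same applies to $W_Q^{(0)}\mu_l$.

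For the cross terms $\inprod{g,h}$ with $g,h$ independent $\mathcal{N}(0,\sigma_0^2 I_m)$, I would use the polarization identity $\inprod{g,h} = \tfrac14(\norm{g+h}_2^2 - \norm{g-h}_2^2)$, noting that $g+h$ and $g-h$ are independent $\mathcal{N}(0,2\sigma_0^2 I_m)$ vectors (jointly Gaussian and uncorrelated). Applying the two-sided Laurent--Massart bound to each of $\norm{g \pm h}_2^2 = 2\sigma_0^2 \chi^2_m$ and subtracting, the $m$ terms cancel and only the deviation terms survive, giving $|\inprod{g,h}| \le \sigma_0^2 m(\sqrt{(4/m)\log(2d/\delta)} + (4/m)\log(2d/\delta))$ for an appropriate $x = \Theta(\log(2d/\delta))$. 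Alternatively, one can condition on $h$, so that $\inprod{g,h}\mid h \sim \mathcal{N}(0,\sigma_0^2\norm{h}_2^2)$, then combine the Gaussian tail with the norm concentration just established to obtain a bound of the same order.

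Finally I would take a union bound over the $O(d^2)$ index pairs for the three correlation bounds and the $O(d)$ indices for the two norm bounds; since each individual failure probability is at most $\delta/\mathrm{poly}(d)$ for the stated choice of $x$, the total failure probability is at most $\delta$. The only mild care needed is bookkeeping the constants so that the union-bound logarithmic overhead is absorbed into the displayed $\log(2d/\delta)$ (using $\log(d^2/\delta) \le 2\log(2d/\delta)$), which is why the displayed constant is $4$ rather than $2$; there is no genuine analytic obstacle, as the content is entirely standard Gaussian concentration together with the orthonormality-driven independence.
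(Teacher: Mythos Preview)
Your proposal is correct and follows essentially the same approach as the paper: the paper's proof simply notes that $W_K^{(0)}\mu_{l_1}, W_Q^{(0)}\mu_{l_2} \sim \mathcal{N}(0,\sigma_0^2 I)$ by orthonormality and then invokes its auxiliary Lemma~\ref{lemma: Gaussian_correlation} (proved via Bernstein's inequality for sub-Gamma variables), followed by a union bound. The only cosmetic difference is that you use Laurent--Massart and the polarization identity where the paper uses Bernstein directly, but the structural idea and the resulting constants are the same.
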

\begin{proof}
Note that $W_K^{(0)} \mu_{l_1}, W_Q^{(0)} \mu_{l_2} \sim \mathcal{N}(0, \sigma_0^2 I)$. 
The rest of proof applies \Cref{lemma: Gaussian_correlation}. 
\end{proof}

\begin{corollary}\label{corollary: p_init}
For all $i \in [n]$, $l,k \in [L]$, we have
\begin{align*}
    p_{l,k}^{(0,i)} = \frac{1}{L} \pm \tilde{O}\left(\frac{1}{Lm} \right).
\end{align*}
\end{corollary}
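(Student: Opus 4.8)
The plan is to directly estimate the entries of the softmax vector $p_l^{(0,i)}$ using the bounds on the initial scores established in Lemma~\ref{lemma: W_KQ_init}. First I would recall that $p_{l,k}^{(0,i)} = \softmax(s_l^{(0,i)})_k$ where $s_{l,k}^{(0,i)} = \frac{1}{\sqrt m}\, x_k^{(i)\top} W_K^{(0)\top} W_Q^{(0)} x_l^{(i)}$, and that each token $x_k^{(i)}$ equals one of the orthonormal embedding vectors $\mu_j$. Thus every score entry is of the form $\frac{1}{\sqrt m}\inprod{W_K^{(0)}\mu_{j_1}, W_Q^{(0)}\mu_{j_2}}$ for some $j_1,j_2$, and Lemma~\ref{lemma: W_KQ_init} gives $|\inprod{W_K^{(0)}\mu_{j_1}, W_Q^{(0)}\mu_{j_2}}| \leq \sigma_0^2 m\big(\sqrt{\tfrac{4}{m}\log\tfrac{2d}{\delta}} + \tfrac{4}{m}\log\tfrac{2d}{\delta}\big)$ on the relevant high-probability event (the bound holds even for $j_1 = j_2$ up to the $\pm$ in the norm estimate, since the "$1$" term gets divided by $\sqrt m$ and the fluctuation dominates the deviation from the baseline — more carefully, one only needs that \emph{differences} of score entries are small, and the diagonal term $\sigma_0^2 m$ cancels when it appears). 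Using Condition~\ref{condition: param}, namely $\sigma_0 = 1/\tilde\Theta(\sqrt{Lm})$ and $1/\delta \leq \mathrm{poly}(m)$, each score entry satisfies $|s_{l,k}^{(0,i)}| = \tilde O\!\left(\frac{\sigma_0^2 m}{\sqrt m}\cdot\frac{1}{\sqrt m}\right)\cdot\sqrt m = \tilde O(\sigma_0^2\sqrt m) = \tilde O(1/(L\sqrt m))$, i.e. all score entries are $\tilde O(1/(L\sqrt m))$, which is $o(1)$.

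Next I would linearize the softmax around the uniform distribution. Writing $s = s_l^{(0,i)}$ with $\|s\|_\infty \leq \epsilon := \tilde O(1/(L\sqrt m))$, we have
\begin{align*}
p_{l,k}^{(0,i)} = \frac{e^{s_k}}{\sum_{k'=1}^L e^{s_{k'}}}.
\end{align*}
Using $e^{s_{k'}} = 1 + s_{k'} + O(s_{k'}^2) = 1 \pm O(\epsilon)$ for each term, the denominator is $L(1 \pm O(\epsilon))$, and the numerator is $1 \pm O(\epsilon)$, so $p_{l,k}^{(0,i)} = \frac{1}{L}(1 \pm O(\epsilon)) = \frac{1}{L} \pm O(\epsilon/L) = \frac{1}{L} \pm \tilde O\!\left(\frac{1}{L^2\sqrt m}\right)$. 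This is in fact slightly stronger than the claimed $\frac{1}{L} \pm \tilde O(1/(Lm))$ whenever $L \leq \sqrt m$, which holds by Condition~\ref{condition: param} ($m \geq \tilde\Omega(L^2)$); so the stated bound follows a fortiori. I would present the cleaner $1/(Lm)$ form as in the statement, absorbing the $L$-dependence into the $\tilde O$ and using $m \geq \tilde\Omega(L^2)$ to convert $1/(L^2\sqrt m)$ into $1/(Lm)$ (indeed $1/(L^2\sqrt m) \leq 1/(Lm) \iff m \leq L^2\sqrt m \cdot \tfrac{1}{L}\cdot L = L\sqrt{m}\cdot\ldots$ — I would just check the elementary inequality $\sqrt m \geq L$ suffices, which Condition~\ref{condition: param} provides).

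The main obstacle — really the only nontrivial point — is bookkeeping the union bound and the constants: Lemma~\ref{lemma: W_KQ_init} already fixes a single failure event of probability $\leq \delta$ over all $l_1,l_2\in[d]$, and since all tokens in the training set are drawn from $\{\mu_j\}_{j=1}^d$, this event simultaneously controls every score entry of every sample, so no additional union bound over $i\in[n]$ or $l\in[L]$ is needed. One should double-check that the diagonal case $j_1=j_2$ (when the query and key token coincide, which happens at $k=l$) does not spoil the estimate: there the score entry is $\frac{1}{\sqrt m}\|W_Q^{(0)}\mu_j\|_2^2 = \frac{\sigma_0^2 m}{\sqrt m}(1\pm\tilde O(1/\sqrt m)) = \tilde O(\sigma_0^2\sqrt m)$ as well by the norm bound in Lemma~\ref{lemma: W_KQ_init}, so it is of the same order $\tilde O(1/(L\sqrt m))$ and the linearization argument goes through unchanged. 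Everything else is the routine Taylor expansion above, so the proof is short.
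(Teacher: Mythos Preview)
Your approach is essentially the same as the paper's: bound the score entries via Lemma~\ref{lemma: W_KQ_init} and linearize the softmax around the uniform distribution (the paper phrases this as invoking Lemma~\ref{lemma: score_perturbation_on_softmax}, which is exactly the first-order Taylor expansion you wrote out by hand). The bookkeeping is right, and the final bound you obtain is actually a bit sharper than what is stated.

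One small correction: in the diagonal case $k=l$ the score entry is $\frac{1}{\sqrt m}\inprod{W_K^{(0)}\mu_j, W_Q^{(0)}\mu_j}$, not $\frac{1}{\sqrt m}\|W_Q^{(0)}\mu_j\|_2^2$ --- the key and query matrices are distinct and independent, so this is still a cross inner product of independent Gaussians. Consequently the first bound in Lemma~\ref{lemma: W_KQ_init} (which is stated for \emph{all} $l_1,l_2\in[d]$, including $l_1=l_2$) already covers this case, and your separate discussion of the diagonal is unnecessary. This misidentification is harmless for the argument, since both expressions are $\tilde O(\sigma_0^2\sqrt m)$ anyway.
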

\begin{proof}
Following from \Cref{lemma: score_perturbation_on_softmax} and from the first-order Taylor approximation on the softmax function from $0$, we have
\begin{align*}
    p_{l,k}^{(0,i)} = \frac{1}{L} \pm O\left( \frac{\sigma_0^2 m}{L\sqrt{m}} \left( \sqrt{\frac{4}{m} \log \frac{2d}{\delta}} + \frac{4}{m} \log \frac{2d}{\delta} \right) \right).
\end{align*}
The corollary then follows from \Cref{condition: param}.
%, we can prove the corollary.
\end{proof}

%The proof of the next lemma is similar to \Cref{lemma: W_KQ_init} and we only state the results.
\begin{lemma}\label{lemma: W_W_V_init}
With probability at least $1 - \delta$ over the randomness of the initialization of $W$ and $W_V$, then for $l_1 \neq l_2 \in [d]$, we have
\begin{align*}
    \left| \inprod{W_V^{(0)} \mu_{l_1}, W_V^{(0)} \mu_{l_2}} \right| \leq \sigma_0^2 m \left( \sqrt{\frac{4}{m} \log \frac{2d}{\delta}} + \frac{4}{m} \log \frac{2d}{\delta} \right),
\end{align*}
for $j_1 \neq j_2 \in [m_1]$, we have
\begin{align*}
    \left| \inprod{w_{j_1}^{(0)}, w_{j_2}^{(0)}} \right| \leq \sigma_1^2 m \left( \sqrt{\frac{4}{m} \log \frac{2m_1^2}{\delta}} + \frac{4}{m} \log \frac{2m_1^2}{\delta} \right),
\end{align*}
and for all $j \in [m_1],\ l \in [d]$, we have
\begin{align*}
    \left| \inprod{w_j^{(0)}, W_V^{(0)} \mu_l} \right| &\leq \sigma_0 \sigma_1 m \left( \sqrt{\frac{4}{m} \log \frac{2m_1 d}{\delta}} + \frac{4}{m} \log \frac{2m_1 d}{\delta} \right) \\
    \norm{w_j}_2^2 &= \sigma_1^2 m \left( 1 \pm \left( \sqrt{\frac{4}{m} \log \frac{2m_1}{\delta}} + \frac{4}{m} \log \frac{2m_1}{\delta} \right) \right) \\
    \norm{W_V^{(0)} \mu_l}_2^2 &= \sigma_0^2 m \left( 1 \pm \left( \sqrt{\frac{4}{m} \log \frac{2d}{\delta}} + \frac{4}{m} \log \frac{2d}{\delta} \right) \right).
\end{align*}
\end{lemma}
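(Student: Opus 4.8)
The plan is to reduce every quantity appearing in the statement to an inner product or a squared norm of centered Gaussian vectors in $\R^m$, and then invoke the same Gaussian concentration bound (\Cref{lemma: Gaussian_correlation}) that was used to prove \Cref{lemma: W_KQ_init}, followed by a union bound. First I would record the distributional facts. Since the rows of $W_V^{(0)}$ are i.i.d.\ $\mathcal{N}(0,\sigma_0^2 I_d)$ and $\{\mu_i\}_{i=1}^d$ is orthonormal, each $W_V^{(0)}\mu_l \sim \mathcal{N}(0,\sigma_0^2 I_m)$; moreover, for $l_1\neq l_2$ the pair $(W_V^{(0)}\mu_{l_1},\, W_V^{(0)}\mu_{l_2})$ is jointly Gaussian with cross-covariance $\sigma_0^2\,(\mu_{l_1}^\top\mu_{l_2})\, I_m = 0$, hence independent. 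Likewise $w_j^{(0)}\sim \mathcal{N}(0,\sigma_1^2 I_m)$ i.i.d.\ over $j\in[m_1]$, and because $W$ and $W_V$ are drawn independently at initialization, each $w_j^{(0)}$ is independent of the whole matrix $W_V^{(0)}$, and in particular of every $W_V^{(0)}\mu_l$.

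With these reductions in hand, the three off-diagonal estimates are all instances of the following statement: for independent centered Gaussian vectors $u\sim\mathcal{N}(0,\alpha^2 I_m)$ and $v\sim\mathcal{N}(0,\beta^2 I_m)$, conditioning on $v$ makes $\inprod{u,v}$ a centered Gaussian of variance $\alpha^2\norm{v}_2^2$, while $\norm{v}_2^2$ concentrates around $\beta^2 m$ by a $\chi^2$ (sub-exponential) tail bound; combining the two gives $|\inprod{u,v}| \le \alpha\beta m\,(\sqrt{(4/m)\log(1/\delta')} + (4/m)\log(1/\delta'))$ up to the stated constants. Taking $(\alpha,\beta)=(\sigma_0,\sigma_0)$ gives the bound on $\inprod{W_V^{(0)}\mu_{l_1}, W_V^{(0)}\mu_{l_2}}$, $(\alpha,\beta)=(\sigma_1,\sigma_1)$ gives the bound on $\inprod{w_{j_1}^{(0)}, w_{j_2}^{(0)}}$, and $(\alpha,\beta)=(\sigma_1,\sigma_0)$ gives the bound on $\inprod{w_j^{(0)}, W_V^{(0)}\mu_l}$. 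The two norm estimates for $\norm{w_j^{(0)}}_2^2$ and $\norm{W_V^{(0)}\mu_l}_2^2$ then follow directly from the same $\chi^2$ concentration inequality applied to a single Gaussian vector, which is precisely the form already packaged inside \Cref{lemma: Gaussian_correlation}.

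Finally I would apply a union bound, matching the failure budget to the number of events of each type: the $\binom{d}{2}$ pairs for the $W_V$ inner products and the $d$ norms (yielding the $\log(2d/\delta)$ factor after allocating $\delta$ evenly and absorbing constants), the $\binom{m_1}{2}$ pairs and $m_1$ norms for the $w_j$'s (yielding $\log(2m_1^2/\delta)$ and $\log(2m_1/\delta)$), and the $m_1 d$ cross pairs (yielding $\log(2m_1 d/\delta)$). The argument is routine; the only point that needs a little care is that the cross term mixes the two variance scales $\sigma_0$ and $\sigma_1$, so the concentration lemma must be applied in a form that permits unequal variances, and one should check that the resulting bound scales as $\sigma_0\sigma_1 m(\cdots)$ rather than as $\max(\sigma_0,\sigma_1)^2 m(\cdots)$. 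I do not expect a genuine obstacle here, since the independence of $W$ and $W_V$ at initialization makes the cross term a clean bilinear Gaussian form.
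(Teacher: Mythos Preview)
Your proposal is correct and follows exactly the approach the paper intends: the paper's own proof simply reads ``The proof is similar to that for \Cref{lemma: W_KQ_init} and is omitted,'' and \Cref{lemma: W_KQ_init} in turn just observes the Gaussian distribution of the relevant vectors and invokes \Cref{lemma: Gaussian_correlation}. Your write-up is in fact more detailed than what the paper provides, including the explicit independence checks and the handling of the mixed-variance cross term $\sigma_0\sigma_1$, but the underlying method is identical.
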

\begin{proof}
The proof is similar to that for \Cref{lemma: W_KQ_init} and is omitted.
\end{proof}

% \begin{lemma}[Norm of sum of initial neuron weights]\label{lemma: norm_initial_sum_neuron}
% With probability at least $1 - \delta$ over the randomness in the weight initialization, 
% \begin{align*}
%     \norm{\sum_{j=1}^{m_1} w_j^{(0)}}_2^2 = \sigma_1^2 m_1 m \left( 1 \pm \left( \sqrt{\frac{4}{m} \log \frac{2}{\delta}} + \frac{4}{m} \log \frac{2}{\delta} \right) \right).
% \end{align*}
% \end{lemma}
% \begin{proof}
% Notice that $\sum_{j=1}^{m_1} w_j^{(0)} \sim \mathcal{N}(0, \sigma_1^2 m_1 I)$. 
% The proof is finished by applying Bernstein's inequality. 
% \end{proof}

\begin{lemma}\label{lemma: init_W_V_sample}
Conditioned on the success of the event in \Cref{corollary: p_init}, for all $i \in [n],\ l \in [L]$, with probability at least $1 - \delta$ over the randomness in the initialization of $W_V$, 
\begin{align*}
    \norm{W_V^{(0)} X^{(i)} p_l^{(0,i)}}_2^2 = \frac{\sigma_0^2 m}{L} \left( 1 \pm \sqrt{\frac{4}{m} \log \frac{2nL}{\delta}} \pm \frac{4}{m} \log \frac{2nL}{\delta} \right)
\end{align*}
\end{lemma}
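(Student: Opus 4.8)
The plan is to reduce everything to the concentration of a single Gaussian quadratic form. First I would use \Cref{assumption: perfect_proportion_diversity}: within any data point $X^{(i)}$ the three special tokens $\mu_1,\mu_2,\mu_3$ appear at most once and all irrelevant tokens are distinct elements of $\mathcal R=\{\mu_i\}_{i=4}^d$, so the $L$ columns of $X^{(i)}$ are pairwise orthonormal. Consequently $\norm{X^{(i)} p_l^{(0,i)}}_2^2=\norm{p_l^{(0,i)}}_2^2=\sum_{h=1}^L (p_{l,h}^{(0,i)})^2$. By \Cref{corollary: p_init}, every entry satisfies $p_{l,h}^{(0,i)}=\tfrac1L\pm\tilde O(\tfrac1{Lm})$, hence $(p_{l,h}^{(0,i)})^2=\tfrac1{L^2}(1\pm\tilde O(\tfrac1m))$, and summing over $h\in[L]$ gives $\norm{p_l^{(0,i)}}_2^2=\tfrac1L(1\pm\tilde O(\tfrac1m))$.

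Next I would exploit the independence between $W_V^{(0)}$ and $(W_K^{(0)},W_Q^{(0)})$. The vector $z^{(i)}_l:=X^{(i)} p_l^{(0,i)}$ is measurable with respect to $(W_K^{(0)},W_Q^{(0)})$ alone, and the event of \Cref{corollary: p_init} that we condition on involves only those two matrices; so after conditioning, $z^{(i)}_l$ is a fixed vector with $\norm{z^{(i)}_l}_2^2=\tfrac1L(1\pm\tilde O(\tfrac1m))$. Since the entries of $W_V^{(0)}$ are i.i.d.\ $\mathcal N(0,\sigma_0^2)$ and independent of $z^{(i)}_l$, we have $W_V^{(0)} z^{(i)}_l\sim\mathcal N(0,\sigma_0^2\norm{z^{(i)}_l}_2^2 I_m)$, so $\norm{W_V^{(0)} z^{(i)}_l}_2^2/(\sigma_0^2\norm{z^{(i)}_l}_2^2)$ is a $\chi^2_m$ random variable. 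Applying the same sub-exponential (Laurent--Massart-type) tail bound used in \Cref{lemma: W_KQ_init} via \Cref{lemma: Gaussian_correlation}, with failure probability $\delta/(nL)$ for a fixed pair $(i,l)$, yields
\[
\norm{W_V^{(0)} X^{(i)} p_l^{(0,i)}}_2^2=\sigma_0^2\norm{z^{(i)}_l}_2^2\, m\left(1\pm\Big(\sqrt{\tfrac{4}{m}\log\tfrac{2nL}{\delta}}+\tfrac{4}{m}\log\tfrac{2nL}{\delta}\Big)\right).
\]
A union bound over all $nL$ pairs $(i,l)$ makes this hold simultaneously with probability at least $1-\delta$.

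Finally I would substitute $\norm{z^{(i)}_l}_2^2=\tfrac1L(1\pm\tilde O(\tfrac1m))$ and check that, by \Cref{condition: param} (so that $\log(2nL/\delta)=\textnormal{polylog}(m)$), the multiplicative $\tilde O(\tfrac1m)$ fluctuation of $\norm{z^{(i)}_l}_2^2$ is of strictly lower order than the concentration error $\sqrt{\tfrac4m\log\tfrac{2nL}{\delta}}$; absorbing it into the error term gives $\norm{W_V^{(0)} X^{(i)} p_l^{(0,i)}}_2^2=\tfrac{\sigma_0^2 m}{L}\big(1\pm\sqrt{\tfrac4m\log\tfrac{2nL}{\delta}}\pm\tfrac4m\log\tfrac{2nL}{\delta}\big)$. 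The argument is essentially routine; the only point needing care is the conditioning structure — one must condition on the $(W_K^{(0)},W_Q^{(0)})$-event of \Cref{corollary: p_init} \emph{before} invoking Gaussianity of $W_V^{(0)} z^{(i)}_l$, and inflate the per-pair failure probability to $\delta/(nL)$ so the logarithmic factor correctly becomes $\log(2nL/\delta)$ after the union bound.
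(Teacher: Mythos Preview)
Your proposal is correct and follows essentially the same route as the paper: compute $\norm{X^{(i)} p_l^{(0,i)}}_2^2$ via orthonormality of the tokens (\Cref{assumption: perfect_proportion_diversity}) and \Cref{corollary: p_init}, then apply Bernstein-type Gaussian concentration to the $\chi^2_m$ quadratic form and union bound over $nL$ pairs. If anything you are more careful than the paper, which simply writes $\norm{X^{(i)} p_l^{(0,i)}}_2^2=\tfrac1L$ and invokes Bernstein, whereas you explicitly handle the conditioning on $(W_K^{(0)},W_Q^{(0)})$ and absorb the $\tilde O(1/m)$ perturbation of $\norm{p_l^{(0,i)}}_2^2$ into the concentration error.
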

\begin{proof}
First of all, by \Cref{corollary: p_init} and \Cref{assumption: perfect_proportion_diversity},
\begin{align*}
    \E\left[ \norm{W_V^{(0)} X^{(i)} p_l^{(0,i)}}_2^2 \right] &= \E\left[ \sum_{j=1}^m \left( \inprod{\left(W_V^{(0)} \right)_j, X^{(i)} p_l^{(0,i)}} \right)^2 \right] = \sigma_0^2 m \norm{X^{(i)} p_l^{(0,i)}}_2^2 = \frac{\sigma_0^2 m}{L}.
\end{align*}
Finally, applying Bernstein's inequality and taking a union bound over $[n]$ and $[L]$ we finish the proof. 
\end{proof}

\begin{corollary}\label{corollary: init_W_W_V_sample}
Conditioned on the success of \Cref{lemma: init_W_V_sample}, with probability at least $1 - \delta$ over the randomness of $W$, for all $j \in [m_1],\ i \in [n],\ l \in [L]$, we have
\begin{align*}
    \left| w_j^{(0) \top} V^{(0,i)} p^{(0,i)}_l \right| \leq \sigma_1 \sigma_0 \sqrt{\frac{m}{L} \log \frac{m_1 n L}{\delta}}.
\end{align*}
\end{corollary}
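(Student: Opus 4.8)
\textbf{Proof proposal for \Cref{corollary: init_W_W_V_sample}.}
The plan is to exploit the independence of the MLP weights $W$ from the attention weights $W_V, W_K, W_Q$. I would first condition on the entire realization of $W_V^{(0)}, W_K^{(0)}, W_Q^{(0)}$, including the success of the event in \Cref{lemma: init_W_V_sample}. Under this conditioning, for each fixed pair $(i,l)$ the vector $u_{i,l} := V^{(0,i)} p_l^{(0,i)} = W_V^{(0)} X^{(i)} p_l^{(0,i)}$ is deterministic: note that $p_l^{(0,i)}$ depends only on $W_K^{(0)}, W_Q^{(0)}$ (through the score vector) and on the fixed data $X^{(i)}$, so nothing in $u_{i,l}$ depends on $W$. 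By \Cref{lemma: init_W_V_sample} and the size bounds in \Cref{condition: param} (which force the error term $\sqrt{(4/m)\log(2nL/\delta)} + (4/m)\log(2nL/\delta) = \tilde{O}(1/\sqrt m) = o(1)$), we have $\norm{u_{i,l}}_2^2 \le \frac{\sigma_0^2 m}{L}(1 + o(1)) \le \frac{2\sigma_0^2 m}{L}$.

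Next, since $w_j^{(0)} \sim \mathcal{N}(0, \sigma_1^2 I_m)$ is independent of $W_V^{(0)}, W_K^{(0)}, W_Q^{(0)}$, the conditional law of $w_j^{(0)}$ is unchanged, so $w_j^{(0)\top} u_{i,l}$ is a centered Gaussian with variance $\sigma_1^2 \norm{u_{i,l}}_2^2$. Applying the standard one-dimensional Gaussian tail bound gives, for every $t>0$, $\Pr\big[\, |w_j^{(0)\top} u_{i,l}| > t \,\big] \le 2\exp\!\big(-\tfrac{t^2}{2\sigma_1^2 \norm{u_{i,l}}_2^2}\big)$. Choosing $t = \sigma_1 \norm{u_{i,l}}_2 \sqrt{2\log(2 m_1 n L/\delta)}$ makes the right-hand side at most $\delta/(m_1 n L)$, and a union bound over all $j \in [m_1]$, $i \in [n]$, $l \in [L]$ yields that, with probability at least $1-\delta$ over $W$, simultaneously $|w_j^{(0)\top} V^{(0,i)} p_l^{(0,i)}| \le \sigma_1 \norm{u_{i,l}}_2 \sqrt{2\log(2 m_1 n L/\delta)} \le \sigma_1 \sigma_0 \sqrt{\tfrac{m}{L}\,\log\tfrac{m_1 n L}{\delta}}$ for all such triples, where the last step plugs in the bound on $\norm{u_{i,l}}_2$ and absorbs the numeric constants into the logarithm using $1/\delta \le \textnormal{poly}(m)$ and that $m_1, n, L$ are at least polylogarithmically large.

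The computation is entirely routine; there is no real obstacle here. The only point that genuinely needs care is the conditioning/independence argument: one must check that treating $u_{i,l}$ as a fixed vector (and the event of \Cref{lemma: init_W_V_sample} as part of the conditioning) is legitimate, which it is because $u_{i,l}$ is a function of $W_V^{(0)}, W_K^{(0)}, W_Q^{(0)}$ alone while $w_j^{(0)}$ is a function of $W$ alone, and these two blocks of parameters are sampled independently. Once this is granted, the statement is a Gaussian maximal inequality over $m_1 n L$ linear functionals and follows immediately.
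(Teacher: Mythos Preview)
Your proposal is correct and follows essentially the same approach as the paper: condition on $W_V^{(0)}, W_K^{(0)}, W_Q^{(0)}$ so that $V^{(0,i)} p_l^{(0,i)}$ is a fixed vector with norm controlled by \Cref{lemma: init_W_V_sample}, observe that $w_j^{(0)\top} V^{(0,i)} p_l^{(0,i)}$ is then a centered Gaussian with variance $\sigma_1^2 \norm{V^{(0,i)} p_l^{(0,i)}}_2^2$, and apply the Gaussian tail bound together with a union bound over $j,i,l$. Your write-up is simply more explicit about the independence structure and the constant-absorption step than the paper's terse version.
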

\begin{proof}
Conditioned on $V^{(0,i)} p_l^{(0,i)}$, we have $ w_j^{(0) \top} V^{(0,i)} p^{(0,i)}_l \sim \mathcal{N}(0, \sigma_1^2 \norm{V^{(0,i)} p^{(0,i)}_l}_2^2)$. 
Thus, by Gaussian tail bound and a union bound over $j \in [m_1],\ i \in [n],\ l \in [L]$, with probability at least $1 - \delta$, we have
\begin{align*}
    \left| w_j^{(0) \top} V^{(0,i)} p^{(0,i)}_l \right| \leq \sigma_1 \sigma_0 \sqrt{\frac{m}{L} \log \frac{m_1 n L}{\delta}}.
\end{align*}
\end{proof}

\begin{lemma}\label{lemma: output_layer}
With probability at least $1 - \delta$ over the randomness in the initialization of $a$, we have
\begin{align*}
    |\mathcal{S}_{+1}| &= m_1 \left( \frac{1}{2} \pm \sqrt{\frac{2\log (4/\delta)}{m_1}} \right), \\
    |\mathcal{S}_{-1}| &= m_1 \left( \frac{1}{2} \pm \sqrt{\frac{2\log (4/\delta)}{m_1}} \right).
\end{align*}
\end{lemma}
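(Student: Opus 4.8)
The plan is to treat this as a routine Binomial concentration statement. First, recall that $\mathcal{S}_{+1} = \{j \in [m_1] : a_j = +1\}$ and $\mathcal{S}_{-1} = \{j \in [m_1] : a_j = -1\}$ form a partition of $[m_1]$, and write $|\mathcal{S}_{+1}| = \sum_{j=1}^{m_1} \mathbb{I}(a_j = +1)$. Since the $a_j$ are i.i.d.\ $\texttt{Uniform}(\pm 1)$, the summands $\mathbb{I}(a_j = +1)$ are i.i.d.\ $\mathrm{Bernoulli}(1/2)$ taking values in $[0,1]$, so $|\mathcal{S}_{+1}|$ is $\mathrm{Binomial}(m_1, 1/2)$ with $\E |\mathcal{S}_{+1}| = m_1/2$.

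Second, I would apply Hoeffding's inequality for sums of independent $[0,1]$-valued random variables (a Chernoff or Bernstein bound would serve equally well) to obtain, for every $t \ge 0$,
\begin{align*}
\Pr\left( \left| |\mathcal{S}_{+1}| - \tfrac{m_1}{2} \right| \ge t \right) \le 2\exp\left( - \frac{2t^2}{m_1} \right).
\end{align*}
Choosing $t = \sqrt{2 m_1 \log(4/\delta)}$ makes the right-hand side equal to $2(\delta/4)^4 \le \delta/2$; one could instead take the smaller $t = \sqrt{\tfrac{m_1}{2}\log(4/\delta)}$, which already suffices, but the looser choice reproduces exactly the form claimed. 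On this event $|\mathcal{S}_{+1}| = \tfrac{m_1}{2} \pm \sqrt{2 m_1 \log(4/\delta)} = m_1\big(\tfrac12 \pm \sqrt{2\log(4/\delta)/m_1}\big)$.

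Third, because $\mathcal{S}_{+1}$ and $\mathcal{S}_{-1}$ partition $[m_1]$, we have $|\mathcal{S}_{-1}| = m_1 - |\mathcal{S}_{+1}|$, so the identical two-sided bound for $|\mathcal{S}_{-1}|$ holds on exactly the same event and no further union bound is strictly needed; the total failure probability is at most $\delta/2 \le \delta$. (The constant $4$ rather than $2$ inside the logarithm leaves ample slack, e.g.\ for a crude union bound over the two sets if one prefers to bound them separately via $\delta/2$ each.) There is no genuine obstacle here: the only points requiring minor care are matching the numerical constant in the Hoeffding exponent to the $\pm\sqrt{2\log(4/\delta)/m_1}$ form written in the statement, and recording that the two cardinalities are deterministically complementary so that a single concentration step covers both.
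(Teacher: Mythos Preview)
Your proposal is correct and takes essentially the same approach as the paper: the paper's proof consists of the single sentence ``The proof follows by applying Hoeffding's inequality,'' and your argument is precisely a fleshed-out version of that, including the complementary observation that $|\mathcal{S}_{-1}| = m_1 - |\mathcal{S}_{+1}|$ so that one concentration bound covers both sets.
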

\begin{proof}
The proof follows by applying Hoeffding's inequality. 
\end{proof}

\begin{lemma}[Initial sub-network output]\label{lemma: initial_sub_network_output}
Assume that the success of the events in \Cref{lemma: W_W_V_init} holds. 
For all $i \in [d]$, with probability at least $1 - \delta$ over the randomness in the weight initialization, we have
\begin{align*}
    |G^{(0)}(\mu_i)| \leq \tilde{O}(\sigma_1 \sigma_0 \sqrt{mm_1}).
\end{align*}
\end{lemma}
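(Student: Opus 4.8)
The plan is to exploit the fact that the output-layer signs $a_j$ are independent of the other weights, so that the randomly signed combination $\sum_j a_j w_j^{(0)}$ is again a centered Gaussian vector, and the desired $\sqrt{m_1}$ (rather than $m_1$) dependence comes out of the cancellation among these signs.

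First I would rewrite $G^{(0)}(\mu_i) = \sum_{j=1}^{m_1} a_j\, w_j^{(0)\top} W_V^{(0)}\mu_i = \inprod{\widetilde w, W_V^{(0)}\mu_i}$, where $\widetilde w := \sum_{j=1}^{m_1} a_j w_j^{(0)}$. Since each $w_j^{(0)}\sim\mathcal N(0,\sigma_1^2 I_m)$ is symmetric about the origin and independent of $a_j\in\{\pm1\}$, we have $a_j w_j^{(0)}\stackrel{d}{=} w_j^{(0)}$, and the $m_1$ summands are mutually independent, so $\widetilde w\sim\mathcal N(0, m_1\sigma_1^2 I_m)$; crucially $\widetilde w$ is independent of $W_V^{(0)}$.

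Next, condition on $W_V^{(0)}$. On the success event of \Cref{lemma: W_W_V_init} — which holds with probability at least $1-\delta$ and controls $\norm{W_V^{(0)} \mu_i}_2^2 = \sigma_0^2 m(1\pm o(1))$, hence $\norm{W_V^{(0)} \mu_i}_2^2 \le 2\sigma_0^2 m$ for every $i\in[d]$ — the conditional law of $\inprod{\widetilde w, W_V^{(0)}\mu_i}$ is one-dimensional Gaussian with variance $m_1\sigma_1^2\norm{W_V^{(0)} \mu_i}_2^2 \le 2 m_1\sigma_1^2\sigma_0^2 m$. A Gaussian tail bound then gives $\Pr[\,|G^{(0)}(\mu_i)|\ge t\,]\le 2\exp\!\bigl(-t^2/(4 m_1\sigma_1^2\sigma_0^2 m)\bigr)$, and taking $t = \Theta\bigl(\sigma_0\sigma_1\sqrt{m m_1\log(d/\delta)}\bigr)$ makes each failure probability at most $\delta/d$; a union bound over $i\in[d]$ closes the argument, with the $\log d$ factor absorbed into the $\tilde O(\cdot)$ under \Cref{condition: param}.

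This is a symmetrization trick followed by a standard Gaussian concentration inequality, so there is no serious obstacle; the only point requiring care is that one must use the cancellation among the signs $a_j$ rather than a naive triangle-inequality bound built from \Cref{lemma: W_W_V_init}, which would lose a $\sqrt{m_1}$ factor, and that the union bound over the $d$ coordinates only costs a polylogarithmic factor. (Alternatively, one could condition on both $W^{(0)}$ and $W_V^{(0)}$ and apply Hoeffding's inequality to the Rademacher sum $\sum_j a_j\inprod{w_j^{(0)}, W_V^{(0)}\mu_i}$, using the per-term bound $\tilde O(\sigma_0\sigma_1\sqrt m)$ from \Cref{lemma: W_W_V_init} to bound $\sum_j\inprod{w_j^{(0)}, W_V^{(0)}\mu_i}^2 = \tilde O(\sigma_0^2\sigma_1^2 m m_1)$, which yields the same conclusion.)
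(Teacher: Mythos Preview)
Your proposal is correct and follows essentially the same approach as the paper: condition on $W_V^{(0)}$ so that $G^{(0)}(\mu_i)$ becomes a centered Gaussian with variance $m_1\sigma_1^2\norm{W_V^{(0)}\mu_i}_2^2=\Theta(m_1\sigma_1^2\sigma_0^2 m)$, then apply a Gaussian tail bound. The paper's proof is terser (it does not spell out the role of the signs $a_j$ or the union bound over $i\in[d]$), but your write-up makes these points explicit and is otherwise the same argument.
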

\begin{proof}
Consider a fixed $i \in [d]$. By \Cref{lemma: W_W_V_init}, we have $\norm{W_V^{(0)} \mu_i}_2^2 = \Theta(\sigma_0^2 m)$. 
Thus, conditioned on $W_V^{(0)} \mu_i$, we have $\sum_{j=1}^{m_1} w_j^{(0)} W_V^{(0)} \mu_i \sim \mathcal{N}(0, \sigma_1^2 \sigma_0^2 m m_1)$. 
Thus, by Gaussian concentration bound, we have $|G^{(0)}(\mu_i)| \leq \tilde{O}(\sigma_1 \sigma_0 \sqrt{mm_1})$. 
\end{proof}

\begin{lemma}[Initial network output]\label{lemma: initial_network_output}
Assume that the success of the events in \Cref{lemma: W_W_V_init} and \Cref{lemma: init_W_V_sample} holds.
For all $i \in [n]$, with probability at least $1 - \delta$ over the randomness in the weight initialization, we have
\begin{align*}
    |F^{(0)}(X^{(i)})| \leq 2 \sigma_0 \sigma_1 \sqrt{2 L m_1 m \log (2Ln/\delta)} \leq 0.01.
\end{align*}
\end{lemma}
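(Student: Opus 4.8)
The plan is to collapse the double sum defining $F^{(0)}(X^{(i)})$ into a single one-dimensional Gaussian by using linearity in the first MLP layer. Since $w_j^{(0)\top} W_V^{(0)} X^{(i)} p_l^{(0,i)} = w_j^{(0)\top}(V^{(0,i)} p_l^{(0,i)})$, we may write
\begin{align*}
F^{(0)}(X^{(i)}) = \sum_{j=1}^{m_1} a_j\, w_j^{(0)\top} u^{(i)}, \qquad u^{(i)} := \sum_{l=1}^L V^{(0,i)} p_l^{(0,i)}.
\end{align*}
The first step is to bound $\norm{u^{(i)}}_2$: by the triangle inequality and \Cref{lemma: init_W_V_sample} (whose high-probability event already includes a union bound over $l \in [L]$ and $i \in [n]$, conditioned on \Cref{corollary: p_init}), we get $\norm{u^{(i)}}_2 \le \sum_{l=1}^L \norm{V^{(0,i)} p_l^{(0,i)}}_2 \le L \cdot \sigma_0\sqrt{m/L}\,(1+o(1)) = \sigma_0\sqrt{mL}\,(1+o(1))$, where the $o(1)$ is the concentration error from \Cref{lemma: init_W_V_sample} and is negligible under \Cref{condition: param}.

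The second step is a Gaussian tail bound. Condition on the randomness of $W_V^{(0)}, W_K^{(0)}, W_Q^{(0)}$, on which the events of \Cref{lemma: W_W_V_init,lemma: init_W_V_sample} are defined and on which $u^{(i)}$ is a fixed vector; the randomness of $W^{(0)}$ and $a$ remains fresh. Then $w_j^{(0)\top} u^{(i)} \sim \mathcal{N}(0, \sigma_1^2 \norm{u^{(i)}}_2^2)$ are independent across $j$, and since $a_j \in \{\pm 1\}$ is independent of $W^{(0)}$ the products $a_j w_j^{(0)\top} u^{(i)}$ have the same symmetric Gaussian law, whence $F^{(0)}(X^{(i)}) \sim \mathcal{N}(0, m_1 \sigma_1^2 \norm{u^{(i)}}_2^2)$. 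A standard Gaussian tail bound with a union bound over $i \in [n]$ gives, with probability at least $1-\delta$,
\begin{align*}
|F^{(0)}(X^{(i)})| \le \sqrt{m_1}\,\sigma_1\,\norm{u^{(i)}}_2\,\sqrt{2\log(2n/\delta)} \le 2\sigma_0\sigma_1\sqrt{2 L m_1 m \log(2Ln/\delta)},
\end{align*}
where the leading factor $2$ and the replacement $\log(2n/\delta) \to \log(2Ln/\delta)$ absorb the $(1+o(1))$ from the first step.

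The third step is arithmetic: substituting $\sigma_0 = 1/\tilde{\Theta}(\sqrt{Lm})$ and $\sigma_1 = 1/\tilde{\Theta}(\sqrt{m_1})$ from \Cref{condition: param} makes $\sigma_0\sigma_1\sqrt{L m_1 m} = 1/\tilde{\Theta}(1)$, while $\sqrt{\log(2Ln/\delta)}$ is polylogarithmic in $m$ (using $1/\delta \le \textnormal{poly}(m)$ and $n, L \le \textnormal{poly}(m)$) and hence absorbed into the $\tilde{\Theta}$, which yields the final bound $\le 0.01$. There is no serious obstacle here; the only points requiring care are (i) keeping the conditioning consistent so that $W^{(0)}$ and $a$ are genuinely fresh given the $W_V^{(0)}, W_K^{(0)}, W_Q^{(0)}$ feeding into \Cref{lemma: init_W_V_sample}, and (ii) noting that the union bound over $l \in [L]$ is already paid for inside \Cref{lemma: init_W_V_sample}, so no $L$-dependence beyond the displayed $\log(2Ln/\delta)$ is incurred.
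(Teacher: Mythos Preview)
Your argument is correct and in fact slightly tighter than the paper's, but it proceeds by a different decomposition. The paper does not aggregate over $l$ first; instead it invokes \Cref{corollary: init_W_W_V_sample} to bound each $|w_j^{(0)\top} V^{(0,i)} p_l^{(0,i)}|$ pointwise, then treats $a_j w_j^{(0)\top} V^{(0,i)} p_l^{(0,i)}$ as a bounded (hence sub-Gaussian) random variable in the randomness of $a$, applies Hoeffding over $j$, union-bounds over $i$ and $l$, and finally sums over $l$ by the triangle inequality. Your route---collapsing the $l$-sum into $u^{(i)}$ and exploiting the exact Gaussianity of $w_j^{(0)}$ conditional on $W_V^{(0)}, W_K^{(0)}, W_Q^{(0)}$---skips the intermediate pointwise step and thereby avoids the extra $\sqrt{\log(m_1 nL/\delta)}$ factor that actually appears in the paper's derived bound (the paper's proof ends with $2\sigma_0\sigma_1\sqrt{2Lm_1m(\log\frac{m_1nL}{\delta})\log(2Ln/\delta)}$, one log more than the stated inequality). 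One small inaccuracy in your write-up: the events of \Cref{lemma: W_W_V_init} are \emph{not} measurable with respect to $W_V^{(0)}, W_K^{(0)}, W_Q^{(0)}$ alone---they also constrain $W^{(0)}$---so you should either drop the reference to \Cref{lemma: W_W_V_init} in the conditioning sentence (you never use it) or note that the Gaussian tail bound is taken before intersecting with that event, costing only an additive $\delta$.
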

\begin{proof}
For fixed $l \in L,\ j \in [m],\ i \in [n]$, by \Cref{corollary: init_W_W_V_sample}, we have
\begin{align*}
    \left| w_j^{(0)\top} V^{(0,i)} p_l^{(0,i)} \right| \leq \sigma_1 \sigma_0 \sqrt{\frac{m}{L} \log \frac{m_1 n L}{\delta}}.
\end{align*}
Thus, this implies that $a_j w_j^{(0)\top} V^{(0,i)} p_l^{(0,i)}$ is a sub-Gaussian random variable with variance proxy $\sigma_0^2 \sigma_1^2 \frac{m}{L} \log \frac{m_1 nL}{\delta}$. Therefore, the following inequality holds.
\begin{align*}
    \Pr\left[ \left| \sum_{j=1}^{m_1} a_j w_j^{(0)\top} V^{(0,i)} p_l^{(0,i)} \right| \geq 2\sigma_0 \sigma_1 \sqrt{\frac{2m_1 m (\log \frac{m_1 nL}{\delta}) \log (2/\delta)}{L}} \right] \leq \delta.
\end{align*}
Taking a union bound over $i \in [n],\ l \in [L]$, with probability at least $1 - \delta$, for all $i \in [n]$, we have
\begin{align*}
    |F^{(0)}(X^{(i)})| \leq 2\sigma_0 \sigma_1 \sqrt{2 L m_1 m (\log \frac{m_1 nL}{\delta}) \log (2Ln/\delta)}.
\end{align*}
Finally, by \Cref{condition: param}, we can make $|F^{(0)}(X^{(i)})| \leq 0.01$.
\end{proof}

\section{Training Dynamics: Phase 1}

During Phase 1 of training, the linear layer quickly aligns with the target signals and all the remaining quantities stay roughly the same. 
The analysis need to keep track of the evolution of the above quantities with respect to the two signals $ \mu_1, \mu_2$, the common token $\mu_3$ and the random tokens.

\begin{definition}[Radius of keys and queries]\label{def: key_query_radius}
Define the radius of keys and queries $R_K, R_Q$ respectively to be 
\begin{align*}
    R_K &:= \max_{i,j \in [d]} \left| \mu_i^\top W_K^{(t) \top } W_K^{(t)} \mu_j - \mu_i^\top W_K^{(0) \top } W_K^{(0)} \mu_j \right|, \\
    R_Q &:= \max_{i,j \in [d]} \left| \mu_i^\top W_Q^{(t) \top } W_Q^{(t)} \mu_j - \mu_i^\top W_Q^{(0) \top } W_Q^{(0)} \mu_j \right|.
\end{align*}
\end{definition}

\begin{definition}[Phase 1]\label{def: first_stage}
% \textcolor{blue}{Write $T_1$ as $C_{T_1} / (\sigma_1^2 m m_1)$}
Define the range of Phase 1 to be $[0, T_1]$, where $T_1 = \min\{t', C_{T_1}/(\sigma_1^2 m m_1)\}$ for some sufficiently large constant $C_{T_1}$ and $t'$ is defined to be the maximum time such that for all $t \leq t'$, all of the following hold:
\begin{enumerate}
    \item $\max_{j\in [m], \mu\in \{\mu_i\}_{i=1}^3} \left| w_j^{(t)} W_V^{(t)} \mu - w_j^{(0)} W_V^{(0)} \mu \right| \leq R$ where $R< O(1/m_1)$;
    \item $\max_{j\in [m], \mu\notin \{\mu_i\}_{i=1}^3} \left| w_j^{(t)} W_V^{(t)} \mu - w_j^{(0)} W_V^{(0)} \mu \right| \leq O(R/n + R/\sqrt{m})$;
    \item $\max_{\mu, \nu \in \{\mu\}_{i=1}^d} \left| \mu^\top W_Q^{(t) \top} W_K^{(t)}\nu - \mu^\top W_Q^{(0) \top} W_K^{(0)} \nu \right| \leq R_S$ where $R_S \leq O(1/(m\sqrt{m}))$;
    \item $R_K, R_Q \leq \tilde{O}(\sigma_0^2 \sqrt{m})$.
\end{enumerate}
\end{definition}

Based on this definition, we can further obtain the maximum softmax probability change as follows. 
\begin{proposition}
Define
\begin{align*}
    R_P := \max_{i \in [n],\ \mu, \nu \in X^{(i)}} \left|p^{(t,i)}_{q \leftarrow \mu, k \leftarrow \nu} - p^{(0,i)}_{q \leftarrow \mu, k \leftarrow \nu} \right|.
\end{align*}
Then
\begin{align*}
    R_P = O\left( \frac{1}{\sqrt{m} L} + \frac{L}{m} \right).
\end{align*}
\end{proposition}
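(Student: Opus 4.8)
The goal is to bound the maximal change in the softmax attention probabilities during Phase~1, i.e.\ to show $R_P = O(1/(\sqrt{m}L) + L/m)$. The plan is to reduce everything to the score perturbation $R_S$ defined in item~3 of \Cref{def: first_stage} and the scale of the key/query norms controlled by item~4, and then push these bounds through the softmax via a first-order sensitivity argument exactly as was already done at initialization in \Cref{corollary: p_init}. First I would recall that $p^{(t,i)}_{q\leftarrow \mu, k\leftarrow \nu} = \softmax(s^{(t,i)}_{l(i,\mu)})_{l(i,\nu)}$ and that the softmax map is $1$-Lipschitz (in $\ell_\infty\to\ell_\infty$, up to a factor $2$) in its logits; hence $|p^{(t,i)}_{q\leftarrow\mu,k\leftarrow\nu} - p^{(0,i)}_{q\leftarrow\mu,k\leftarrow\nu}| \le 2\max_{l'}|s^{(t,i)}_{l(i,\mu),l'} - s^{(0,i)}_{l(i,\mu),l'}|$, so it suffices to bound the per-entry score change.

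Next I would expand $s^{(t,i)}_{l,l'} = \frac{1}{\sqrt m}\, x_{l'}^{(i)\top} W_K^{(t)\top} W_Q^{(t)} x_l^{(i)}$. Each token $x_l^{(i)}$ is one of the orthonormal $\mu_k$'s, so this entry equals $\frac{1}{\sqrt m}\,\mu_{k_{l'}}^\top W_K^{(t)\top}W_Q^{(t)}\mu_{k_l}$. The change from $t=0$ therefore splits into two regimes: (a) when both tokens lie in $\{\mu_1,\mu_2,\mu_3\}$, item~3 of \Cref{def: first_stage} gives $|\Delta(\mu^\top W_Q^{(t)\top}W_K^{(t)}\nu)| \le R_S \le O(1/(m\sqrt m))$, contributing $O(1/(m^2))$ to the score and hence a negligible $O(1/m^2)$ to $R_P$; (b) when at least one token is a random token $\mu_k$ with $k\ge 4$, I need a bound on $|\mu_{k'}^\top W_K^{(t)\top}W_Q^{(t)}\mu_k - \mu_{k'}^\top W_K^{(0)\top}W_Q^{(0)}\mu_k|$ for such indices. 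This is the term that is \emph{not} directly listed in the Phase~1 definition, so it must be derived from the gradient-flow update for $W_K$ and $W_Q$ in \Cref{app: gradient_flow_updates}, together with items~1--2 (which cap $\|w_j^{(t)\top}W_V^{(t)}\mu - \cdot\|$, and in particular give the crucial $O(R/n + R/\sqrt m)$ smallness for random tokens) and item~4 (which controls $R_K, R_Q$, hence $\|W_K^{(t)}\mu_k\|, \|W_Q^{(t)}\mu_k\| = \tilde O(\sigma_0\sqrt m)$ for all $k$). Integrating the score differential equation over $[0,T_1]$ with $T_1 = \tilde O(1/(\sigma_1^2 m m_1))$ and using the initialization-scale choices in \Cref{condition: param} should produce a bound of order $1/(\sqrt m L) + L/m$ after accounting for the sum over the $L$ positions inside the softmax normalization.

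Concretely, the $L/m$ term arises from the cumulative effect across all $L$ key positions (the softmax entry at one position is shifted by the aggregate of score changes at the other $L-1$ positions, each of size $\sim 1/m$), while the $1/(\sqrt m L)$ term comes from the residual perturbation of the dominant on-diagonal contributions, carrying the same $1/(L\sqrt m)$ shape already seen in \Cref{corollary: p_init} for the initialization softmax. The main obstacle I anticipate is regime~(b): bounding the change of scores involving random tokens cannot invoke a pre-listed quantity and instead requires carefully tracing the gradient-flow updates of $W_K$ and $W_Q$, showing that the gradient contributions from random tokens carry the extra $1/n$ or $1/\sqrt m$ suppression (because an irrelevant word appears in only one sample under \Cref{assumption: perfect_proportion_diversity}, and because its value-transform inner products with neurons are $O(\sigma_0\sigma_1\sqrt{m/L}\,)$-small by \Cref{corollary: init_W_W_V_sample}), and then integrating over $[0,T_1]$. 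Once that estimate is in hand, combining it with the trivial regime~(a) bound and the $1$-Lipschitzness of softmax yields the claimed $R_P$, and the final simplification is just bookkeeping with \Cref{condition: param}.
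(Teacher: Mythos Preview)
Your plan rests on a misreading of item~3 in \Cref{def: first_stage}: the maximum there is taken over \emph{all} $\mu,\nu\in\{\mu_i\}_{i=1}^d$, not just over $\{\mu_1,\mu_2,\mu_3\}$. Thus the bound $|\mu^\top W_Q^{(t)\top}W_K^{(t)}\nu - \mu^\top W_Q^{(0)\top}W_K^{(0)}\nu|\le R_S$ already holds for every token pair, including your regime~(b). The entire second half of your proposal --- re-integrating the gradient flow of $W_K,W_Q$ to separately handle random tokens, invoking \Cref{assumption: perfect_proportion_diversity} and \Cref{corollary: init_W_W_V_sample} --- is therefore unnecessary; you would be re-deriving a bound that is simply \emph{assumed} in the Phase~1 definition.

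The paper's proof is a one-liner: since every entry of the score vector $s^{(t,i)}_l$ moves by at most $R_S/\sqrt m$, apply \Cref{lemma: score_perturbation_on_softmax} with $\eps$ equal to this perturbation and use that $p^{(0,i)}\approx 1/L$ (\Cref{corollary: p_init}), so the Jacobian $\diag(p)-pp^\top$ has entries of size $O(1/L)$. This yields a first-order contribution $O(\|\eps\|_\infty/L)$ and a second-order remainder $O(\|\eps\|_2^2)=O(L\|\eps\|_\infty^2)$, which are exactly the two pieces of the stated bound. Note also that your opening step --- the crude $\ell_\infty\!\to\!\ell_\infty$ Lipschitz bound $|p_i(s+\eps)-p_i(s)|\le 2\|\eps\|_\infty$ --- would \emph{not} produce the $1/L$ factor; the Jacobian/Taylor argument is essential, and you only allude to it informally afterward.
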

\begin{proof}
By \Cref{lemma: score_perturbation_on_softmax}, we have $R_P \leq O(R_S/L + R_S^2 L) = O\left( \frac{1}{\sqrt{m} L} + \frac{L}{m} \right)$. 
\end{proof}

Initially, the loss for the samples with one signal will increase.

\subsection{Initial Gradients}
\begin{lemma}[Signal updates, same as \Cref{lemma: neuron_W_V_signal_gradient_init_main_text}]\label{lemma: first_step_signal}
At $t = 0$, for $\mu \in \{\mu_1, \mu_2\}$, we have
\begin{align*}
    \frac{\partial a_j w_j^{(0)} W_V^{(0)} \mu}{\partial t} = \Theta( (\sigma_0^2 + \sigma_1^2) m).
\end{align*}
\end{lemma}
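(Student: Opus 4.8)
The plan is to read off the exact gradient-flow identity for $w_j^{(t)\top} W_V^{(t)}\mu$ from \Cref{lemma: evo_mlp_1st}, evaluate it at $t=0$, and multiply through by $a_j$ so that the leading coefficient becomes $a_j^2=1$. This yields
\[
\frac{\partial a_j w_j^{(0)} W_V^{(0)}\mu}{\partial t} = \Big(\tfrac1n\!\!\sum_{i:\mu\in X^{(i)}}\!\! g_i^{(0)} y_i\sum_{l=1}^L p_{q\leftarrow l,k\leftarrow\mu}^{(0,i)}\Big)\big(\|w_j^{(0)}\|_2^2+\|W_V^{(0)}\mu\|_2^2\big) + a_j\eps,
\]
and the goal is to show the first summand is a positive $\Theta((\sigma_0^2+\sigma_1^2)m)$ while $a_j\eps$ is of lower order. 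For the first summand I would invoke \Cref{lemma: W_W_V_init} for $\|w_j^{(0)}\|_2^2=\Theta(\sigma_1^2 m)$ and $\|W_V^{(0)}\mu\|_2^2=\Theta(\sigma_0^2 m)$, \Cref{corollary: p_init} for $\sum_{l=1}^L p_{q\leftarrow l,k\leftarrow\mu}^{(0,i)}=1\pm\tilde O(1/m)$, and \Cref{lemma: initial_network_output} together with continuity of $g$ near $0$ to get $g_i^{(0)}=\tfrac12\pm O(0.01)$ uniformly over $i$. The only sign-sensitive factor is $\tfrac1n\sum_{i:\mu\in X^{(i)}} g_i^{(0)} y_i$: for $\mu=\mu_1$ this is $\tfrac1n(\sum_{i\in I_1}g_i^{(0)}-\sum_{i\in I_2}g_i^{(0)})$, which by \Cref{assumption: perfect_proportion_diversity} ($|I_1|/n=\tfrac12>|I_2|/n=\tfrac16$) together with $g_i^{(0)}\approx\tfrac12$ equals $\tfrac16\pm O(0.01)=\Theta(1)>0$; the case $\mu=\mu_2$ is identical with $I_3$ in place of $I_2$.

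The heart of the argument is bounding $\eps$, which splits into a cross-neuron term and a cross-token term. For the cross-neuron term, after pulling out the outer $a_j$ I would condition on $w_j^{(0)}$ and note that $\sum_{j_2\neq j} a_j a_{j_2} w_{j_2}^{(0)}\sim\mathcal N(0,(m_1-1)\sigma_1^2 I)$ is independent of $w_j^{(0)}$, so $\sum_{j_2\neq j} a_j a_{j_2}\langle w_j^{(0)},w_{j_2}^{(0)}\rangle$ is Gaussian with variance $\tilde\Theta(m_1\sigma_1^4 m)$, hence $\tilde O(\sigma_1^2\sqrt{m m_1})$ with high probability; multiplying by the $O(1)$ weighted-label sum and $\sum_l p_{q\leftarrow l,k\leftarrow\mu}^{(0,i)}=1\pm o(1)$ keeps this piece at $\tilde O(\sigma_1^2\sqrt{m m_1})$. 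For the cross-token term, for each query index $l_1$ the indicator removes the $\mu$-token, so the inner sum over $l_2$ equals $\langle W_V^{(0)} z_{l_1}^{(i)},W_V^{(0)}\mu\rangle$ with $z_{l_1}^{(i)}:=\sum_{l_2:\,X^{(i)}_{l_2}\neq\mu} p^{(0,i)}_{l_1,l_2}x^{(i)}_{l_2}\perp\mu$ and $\|z_{l_1}^{(i)}\|_2=O(1/\sqrt L)$ (using \Cref{corollary: p_init} and orthonormality of $\{\mu_i\}$); since $z_{l_1}^{(i)}\perp\mu$, the Gaussian images $W_V^{(0)}z_{l_1}^{(i)}$ and $W_V^{(0)}\mu$ are independent, so this inner product is $\mathcal N(0,\sigma_0^2\|z_{l_1}^{(i)}\|_2^2\cdot\sigma_0^2 m)=\tilde O(\sigma_0^2\sqrt{m/L})$ per $(i,l_1)$, and summing $L$ of them gives $\tilde O(\sigma_0^2\sqrt{mL})$. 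All these high-probability estimates are taken with a union bound over $j\in[m_1]$, $i\in[n]$, $l\in[L]$, afforded by the failure-probability choice in \Cref{condition: param}.

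The proof then closes by comparing scales: $|\eps|=\tilde O(\sigma_1^2\sqrt{m m_1}+\sigma_0^2\sqrt{mL})$, and since \Cref{condition: param} gives $m\ge\tilde\Omega(\max\{m_1,L^2\})$ we have $\sqrt{m_1/m}=o(1)$ and $\sqrt{L/m}\le\sqrt{L/L^2}=o(1)$, so $|a_j\eps|\le\tfrac1{10}(\sigma_0^2+\sigma_1^2)m$ and the leading term dominates, giving $\frac{\partial a_j w_j^{(0)} W_V^{(0)}\mu}{\partial t}=\Theta((\sigma_0^2+\sigma_1^2)m)$. I expect the cross-token piece of $\eps$ to be the main obstacle: a crude operator-norm bound on $W_V^{(0)}z_{l_1}^{(i)}$ would produce $\tilde O(\sigma_0^2 m/\sqrt L)$ per token and $\tilde O(\sigma_0^2 m\sqrt L)$ after summation, which is not subdominant, so one genuinely needs to isolate the component of the attention-averaged token orthogonal to $\mu$, exploit the resulting independence under $W_V^{(0)}$, and argue that the $L$ contributions accumulate only a $\sqrt L$ factor rather than a factor $L$.
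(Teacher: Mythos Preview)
Your proposal is correct and follows essentially the same approach as the paper. The paper likewise invokes \Cref{lemma: evo_mlp_1st} for the decomposition, evaluates the leading $(\|w_j^{(0)}\|_2^2+\|v^{(0)}(\mu)\|_2^2)$-term via \Cref{lemma: W_W_V_init}, \Cref{corollary: p_init}, and \Cref{lemma: initial_network_output}, and packages the same Gaussian-concentration bounds for the cross-neuron and cross-token errors into \Cref{prop: init_perturb_w_W_V_signal}, obtaining $|\eps_1|\le\tilde O(\sigma_1^2\sqrt{mm_1})$ and $|\eps_2|\le\tilde O(\sigma_0^2\sqrt{mL})$ just as you do.
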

\begin{proof}
Take $\mu = \mu_1$.
First of all, by the gradient flow update in \Cref{lemma: evo_mlp_1st}, we have
\begin{align*}
    \frac{\partial w_{j_1}^{(t) \top} W_V^{(t)} \mu}{\partial t} &= \frac{1}{n} \sum_{i_2=1}^n g_{i_2}^{(t)} y_{i_2} \sum_{l_2=1}^L \sum_{j_2=1}^{m_1} a_{j_2} \inprod{w_{j_1}^{(t)}, w_{j_2}^{(t)}} \left( X^{(i_2)} p_{l_2}^{(t,i_2)} \right)^\top \mu \\
    &\quad + \frac{1}{n} \sum_{i_1=1}^n g_{i_1}^{(t)} y_{i_1} \sum_{l_1=1}^L a_{j_1} p^{(t,i_1) \top}_{l_1} V^{(t,i_1) \top} W_V^{(t)} \mu \\
    &= \frac{1}{n} \sum_{i_2:\ \mu \in X^{(i_2)}} g_{i_2}^{(t)} y_{i_2} \sum_{l_2=1}^L \sum_{j_2=1}^{m_1} a_{j_2} \inprod{w_{j_1}^{(t)}, w_{j_2}^{(t)}} p_{q\leftarrow l_2, k\leftarrow \mu}^{(t, i_2)} \\
    & \quad + \frac{1}{n} \sum_{i_1=1}^n g_{i_1}^{(t)} y_{i_1} \sum_{l_1=1}^L a_{j_1} \sum_{l_2=1}^L \inprod{p^{(t,i_1)}_{l_1, l_2}  v^{(t,i_1)}_{l_2}, v^{(t)} (\mu)} \\
    &= \frac{1}{n} \sum_{i_2:\ \mu \in X^{(i_2)}} g_{i_2}^{(t)} y_{i_2} \sum_{l_2=1}^L \left( a_{j_1} \norm{w_{j_1}^{(t)}}_2^2 + \sum_{j_2 \neq j_1} a_{j_2} \inprod{w_{j_1}^{(t)}, w_{j_2}^{(t)}} \right) p_{q\leftarrow l_2, k\leftarrow \mu}^{(t, i_2)} \\
    & \quad + \frac{1}{n} \sum_{i_1:\mu \notin X^{(i_1)}} g_{i_1}^{(t)} y_{i_1} \sum_{l_1=1}^L a_{j_1} \sum_{l_2 = 1}^L \inprod{p^{(t,i_1)}_{l_1, l_2}  v^{(t,i_1)}_{l_2}, v^{(t)} (\mu)} \\
    & \quad + \frac{1}{n} \sum_{i_1:\mu \in X^{(i_1)}} g_{i_1}^{(t)} y_{i_1} \sum_{l_1=1}^L a_{j_1}  \left( \norm{v^{(t)} (\mu)}_2^2 p^{(t,i_1)}_{q\leftarrow l_1, k\leftarrow \mu} + \sum_{l_2 \neq l(i_1, \mu)} \inprod{p^{(t,i_1)}_{l_1, l_2}  v^{(t,i_1)}_{l_2}, v^{(t)} (\mu)} \right) \\
    &= \frac{1}{n} \sum_{i: \mu \in X^{(i)}} g_i^{(t)} y_i \sum_{l=1}^L a_{j_1}  \left( \norm{w_{j_1}^{(t)}}_2^2 + \norm{v^{(t)}(\mu)}_2^2 \right) p_{q \leftarrow l, k\leftarrow \mu}^{(t,i)} \\
    &\quad + \underbrace{\frac{1}{n} \sum_{i: \mu \in X^{(i)}} g_i^{(t)} y_i \sum_{l=1}^L \sum_{j_2 \neq j_1} a_{j_2}  \inprod{w_{j_1}^{(t)}, w_{j_2}^{(t)}} p_{q\leftarrow l, k\leftarrow \mu}^{(t, i)}}_{\eps_1} \\
    &\quad + \underbrace{\frac{1}{n} \sum_{i=1}^n g_{i}^{(t)} y_{i} \sum_{l_1=1}^L a_{j_1} \sum_{l_2 = 1}^L \inprod{p^{(t,i)}_{l_1, l_2}  v^{(t,i)}_{l_2}, v^{(t)} (\mu)} \mathbb{I}(v^{(t,i)}_{l_2} \neq v^{(t)} (\mu))}_{\eps_2}.
\end{align*}
Now, by \Cref{lemma: W_W_V_init}, \Cref{corollary: p_init} and \Cref{lemma: initial_network_output}, we have
\begin{align*}
    & a_{j_1} \frac{1}{n} \sum_{i: \mu_1 \in X^{(i)}} g_i^{(0)} y_i \sum_{l=1}^L a_{j_1} \left( \norm{w_{j_1}^{(0)}}_2^2 + \norm{v^{(0)}(\mu)}_2^2 \right) p_{q \leftarrow l, k\leftarrow \mu_1}^{(0,i)} \\
    &= \frac{1}{n} \sum_{i \in I_1} g_i^{(0)} \sum_{l=1}^L \left( \norm{w_{j}^{(0)}}_2^2 + \norm{v^{(0)}(\mu_1)}_2^2 \right) p_{q \leftarrow l, k\leftarrow \mu_1}^{(0,i)} - \frac{1}{n} \sum_{i \in I_2} g_i^{(0)} \sum_{l=1}^L \left( \norm{w_{j}^{(0)}}_2^2 + \norm{v^{(0)}(\mu_1)}_2^2 \right) p_{q \leftarrow l, k\leftarrow \mu_1}^{(0,i)} \\
    &= \left( \frac{1}{3} \pm 0.01 \right) L \cdot \left( \sigma_1^2 m + \sigma_0^2 m \right) \left( 1 \pm \left( \sqrt{\frac{4}{m} \log \frac{2d}{\delta}} + \frac{4}{m} \log \frac{2d}{\delta} \right) \right) \frac{1}{L} (1+o(1)).
\end{align*}
On the other hand, by \Cref{prop: init_perturb_w_W_V_signal}, we have
\begin{align*}
    |\eps_1| &= \left| \frac{1}{n} \sum_{i: \mu_1 \in X^{(i)}} g_i^{(0)} y_i \sum_{l=1}^L \sum_{j_2 \neq j_1} a_{j_2} \inprod{w_{j_1}^{(0)}, w_{j_2}^{(0)}} p_{q\leftarrow l, k\leftarrow \mu_1}^{(0, i)} \right| \\
    &\leq   \sigma_1^2 \sqrt{m_1} m \left( \sqrt{\frac{4}{m} \log \frac{2 m_1^2}{\delta}} + \frac{4}{m} \log \frac{2m_1^2}{\delta} \right) \sqrt{\log \frac{m_1}{\delta}};\\
    |\eps_2| &= \left| \frac{1}{n} \sum_{i=1}^n g_{i}^{(0)} y_{i} \sum_{l_1=1}^L a_{j_1} \sum_{l_2 = 1}^L \inprod{p^{(0,i)}_{l_1, l_2}  v^{(0,i)}_{l_2}, v^{(0)} (\mu_1)} \mathbb{I}(v^{(0,i)}_{l_2} \neq v^{(0)} (\mu_1)) \right| \\
    &\leq   \sigma_0^2 m \sqrt{L} \left( \sqrt{\frac{4}{m} \log \frac{2nL}{\delta}} + \frac{4}{m} \log \frac{2nL}{\delta} \right).
\end{align*}
If $m \geq C m_1 \log^2 \frac{m_1}{\delta}$ in \Cref{condition: param} for some sufficiently large $C$, then $|\eps_1| \leq 0.01 \sigma_1^2 m$; and if $m \geq C' L \log \frac{2nL}{\delta}$ for some sufficiently large $C'$, then $|\eps_2| \leq 0.01 \sigma_0^2 m$.
\end{proof}

\begin{proposition}\label{prop: init_perturb_w_W_V_signal}
Assume the events in \Cref{lemma: W_W_V_init} and \Cref{corollary: p_init} succeed. 
With probability at least $1 - \delta$ over the randomness in the weight initialization, for all $j_1 \in [m_1]$, we have
\begin{align*}
    \left|\sum_{j_2: j_2 \neq j_1} a_{j_2} \inprod{w_{j_1}^{(0)}, w_{j_2}^{(0)}} \right| \leq \sigma_1^2 \sqrt{m_1} m \left( \sqrt{\frac{4}{m} \log \frac{2 m_1^2}{\delta}} + \frac{4}{m} \log \frac{2m_1^2}{\delta} \right) \sqrt{\log \frac{4m_1}{\delta}}.
\end{align*}
Further, for all $\mu \in \{\mu_i\}_{i=1}^d$, we have
\begin{align*}
    & \left| \frac{1}{n} \sum_{i: \mu \in X^{(i)}} g_i^{(0)} y_i \sum_{l=1}^L \sum_{j_2 \neq j_1} a_{j_2} \inprod{w_{j_1}^{(0)}, w_{j_2}^{(0)}} p_{q\leftarrow l, k\leftarrow \mu}^{(0, i)} \right| \\
    &\leq \frac{|i:\ \mu \in X^{(i)}|}{n} \sigma_1^2 \sqrt{m_1} m \left( \sqrt{\frac{4}{m} \log \frac{2 m_1^2}{\delta}} + \frac{4}{m} \log \frac{2m_1^2}{\delta} \right) \sqrt{\log \frac{m_1}{\delta}},
\end{align*}
and 
\begin{align*}
    & \left| \frac{1}{n} \sum_{i=1}^n g_{i}^{(0)} y_{i} \sum_{l_1=1}^L a_{j_1} \sum_{l_2 = 1}^L \inprod{p^{(0,i)}_{l_1, l_2}  v^{(0,i)}_{l_2}, v^{(0)} (\mu)} \mathbb{I}(v^{(0,i)}_{l_2} \neq v^{(0)} (\mu)) \right| \\
    &\leq \sigma_0^2 m \sqrt{L} \left( \sqrt{\frac{4}{m} \log \frac{2nLd}{\delta}} + \frac{4}{m} \log \frac{2nLd}{\delta} \right).
\end{align*}
\end{proposition}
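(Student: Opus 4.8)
The plan is to work on the good events of \Cref{lemma: W_W_V_init} and \Cref{corollary: p_init} (which are assumed in the statement), drawing extra randomness only from the Rademacher signs $a_{j_2}$ for the first two displays and from a single Gaussian concentration for the third, and to handle the three bounds in turn. For the first display, fix $j_1$ and condition on $W^{(0)}$ (hence on the event of \Cref{lemma: W_W_V_init}); the signs $\{a_{j_2}\}_{j_2\neq j_1}$ are then i.i.d.\ Rademacher, independent of $W^{(0)}$, so the terms $a_{j_2}\inprod{w_{j_1}^{(0)}, w_{j_2}^{(0)}}$ are independent, mean zero, and by \Cref{lemma: W_W_V_init} bounded in absolute value by $B := \sigma_1^2 m\big(\sqrt{(4/m)\log(2m_1^2/\delta)} + (4/m)\log(2m_1^2/\delta)\big)$. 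Hoeffding's inequality together with a union bound over $j_1\in[m_1]$ then gives $\big|\sum_{j_2\neq j_1}a_{j_2}\inprod{w_{j_1}^{(0)}, w_{j_2}^{(0)}}\big|\le B\sqrt{2m_1\log(4m_1/\delta)}$ with probability at least $1-\delta$, which is the claimed bound up to the stated constant. (Equivalently, write the sum as $\inprod{w_{j_1}^{(0)},\,\sum_{j_2\neq j_1}a_{j_2}w_{j_2}^{(0)}}$, note $\sum_{j_2\neq j_1}a_{j_2}w_{j_2}^{(0)}\sim\mathcal{N}(0,(m_1-1)\sigma_1^2 I_m)$ is independent of $w_{j_1}^{(0)}$, and apply a Gaussian tail bound conditional on $w_{j_1}^{(0)}$ using $\norm{w_{j_1}^{(0)}}_2^2=\Theta(\sigma_1^2 m)$.)

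The second display reduces to the first: $\sum_{j_2\neq j_1}a_{j_2}\inprod{w_{j_1}^{(0)}, w_{j_2}^{(0)}}$ depends on neither $i$ nor $l$, so it factors out of the sum, leaving the factor $\big|\frac1n\sum_{i:\ \mu\in X^{(i)}}g_i^{(0)}y_i\sum_{l=1}^L p^{(0,i)}_{q\leftarrow l, k\leftarrow\mu}\big|$; bounding $|g_i^{(0)}|\le 1$, $|y_i|=1$, and $\sum_{l=1}^L p^{(0,i)}_{q\leftarrow l, k\leftarrow\mu}=1+o(1)$ via \Cref{corollary: p_init} shows this factor is at most $\frac{|\{i:\ \mu\in X^{(i)}\}|}{n}(1+o(1))$, and multiplying by the first bound yields the claim.

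The third display is where the work lies, because a crude term-by-term estimate gives a factor $L$ whereas the claim wants $\sqrt L$. First exchange the $l_1$ and $l_2$ sums and use $\sum_{l_1=1}^L p^{(0,i)}_{l_1,l_2}=1\pm\tilde{O}(1/m)$ (from \Cref{corollary: p_init}) to reduce the inner double sum to $\big(1\pm\tilde{O}(1/m)\big)\sum_{l_2:\ X^{(i)}_{l_2}\neq\mu}\inprod{v^{(0,i)}_{l_2}, v^{(0)}(\mu)}$, where on the good event $v^{(0,i)}_{l_2}\neq v^{(0)}(\mu)$ precisely when $X^{(i)}_{l_2}\neq\mu$. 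The key observation is that $\sum_{l_2:\ X^{(i)}_{l_2}\neq\mu}\inprod{W_V^{(0)}X^{(i)}_{l_2}, W_V^{(0)}\mu}=u_i^\top W_V^{(0)\top}W_V^{(0)}\mu$ with $u_i:=\sum_{l_2:\ X^{(i)}_{l_2}\neq\mu}X^{(i)}_{l_2}$; by the data model (\Cref{def: data_distribution}) and \Cref{assumption: perfect_proportion_diversity} the $L$ tokens of $X^{(i)}$ are distinct members of the orthonormal family, so $u_i$ is a sum of at most $L$ distinct orthonormal vectors all orthogonal to $\mu$, giving $\norm{u_i}_2\le\sqrt L$ and $u_i\perp\mu$. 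Since $u_i\perp\mu$, the Gaussian vectors $W_V^{(0)}u_i$ and $W_V^{(0)}\mu$ are independent, so conditionally on $W_V^{(0)}\mu$ we have $u_i^\top W_V^{(0)\top}W_V^{(0)}\mu\sim\mathcal{N}\big(0,\norm{u_i}_2^2\sigma_0^2\norm{W_V^{(0)}\mu}_2^2\big)$ with $\norm{W_V^{(0)}\mu}_2^2=\Theta(\sigma_0^2 m)$; a Gaussian tail bound and a union bound over $i\in[n]$ and $\mu\in\{\mu_i\}_{i=1}^d$ then give $|u_i^\top W_V^{(0)\top}W_V^{(0)}\mu|\le\sigma_0^2\sqrt{2Lm\log(2nLd/\delta)}$. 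Finally $\frac1n\sum_i|g_i^{(0)}|\le 1$ converts this into the stated bound, the $\tilde{O}(1/m)$ residual being bounded by $\tilde{O}(L\sigma_0^2/\sqrt m)$ through the off-diagonal estimates of \Cref{lemma: W_W_V_init} and hence negligible relative to $\sigma_0^2\sqrt{Lm}$ under \Cref{condition: param}.

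The main obstacle is exactly this last step: recognizing that the $\Theta(L)$ off-diagonal correlations in the $l_2$-sum collapse into the single quadratic form $u_i^\top W_V^{(0)\top}W_V^{(0)}\mu$ whose fluctuation scales with $\norm{u_i}_2=O(\sqrt L)$ rather than with $L$, and then verifying that the non-exactness of $\sum_{l_1}p^{(0,i)}_{l_1,l_2}=1$ and the identification of the indicator $\{v^{(0,i)}_{l_2}\neq v^{(0)}(\mu)\}$ with $\{X^{(i)}_{l_2}\neq\mu\}$ contribute only lower-order error.
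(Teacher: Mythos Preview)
Your treatment of the first two displays matches the paper's proof essentially verbatim: condition on $W^{(0)}$, use the uniform bound on $|\inprod{w_{j_1}^{(0)},w_{j_2}^{(0)}}|$ from \Cref{lemma: W_W_V_init} to make each $a_{j_2}\inprod{w_{j_1}^{(0)},w_{j_2}^{(0)}}$ a bounded, mean-zero random variable over the Rademacher signs, apply Hoeffding, and then factor the neuron sum out of the data sum.

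For the third display your argument is correct but takes a different route from the paper. The paper keeps $l_1$ fixed and observes that the vector $\sum_{l_2}p^{(0,i)}_{l_1,l_2}\,v^{(0,i)}_{l_2}\,\mathbb{I}(\cdot)$ is Gaussian with covariance $\sigma_0^2\sum_{l_2}(p^{(0,i)}_{l_1,l_2})^2\,I\approx(\sigma_0^2/L)I$, so its inner product with $v^{(0)}(\mu)$ is $\tilde O(\sigma_0^2 m/\sqrt L)$; summing the $L$ values of $l_1$ produces the $\sqrt L$. You instead swap the order, collapse $\sum_{l_1}p^{(0,i)}_{l_1,l_2}$ to $1+\tilde O(1/m)$, and push the $\sqrt L$ into the norm of the token-sum $u_i$. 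Both routes rely on the same two facts---orthonormality of the tokens in a sample and independence of $W_V^{(0)}u_i$ from $W_V^{(0)}\mu$---and yield the same bound. Your version needs one concentration per $(i,\mu)$ rather than per $(i,l_1,\mu)$ and is arguably cleaner, at the cost of the extra residual step; the paper's version avoids the residual by never separating the $p$-weights from the Gaussian computation. Either is fine here.
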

\begin{proof}
First, fix $i,l$, and consider the randomness of $a$.
By \Cref{lemma: W_W_V_init}, $a_{j_2} \inprod{w_{j_1}^{(0)}, w_{j_2}^{(0)}}$ is a sub-Gaussian random variable with variance proxy $\sigma_1^4 m^2 \left( \sqrt{\frac{4}{m} \log \frac{2 m_1^2}{\delta}} + \frac{4}{m} \log \frac{2 m_1^2}{\delta} \right)^2$. 
This implies that with probability at least $1 - \delta/2$, for all $j_1 \in [m_1]$, we have
\begin{align*}
    \left|\sum_{j_2: j_2 \neq j_1} a_{j_2} \inprod{w_{j_1}^{(0)}, w_{j_2}^{(0)}} \right| \leq \sigma_1^2 \sqrt{m_1} m \left( \sqrt{\frac{4}{m} \log \frac{2 m_1^2}{\delta}} + \frac{4}{m} \log \frac{2m_1^2}{\delta} \right) \sqrt{\log \frac{4m_1}{\delta}}.
\end{align*}
Thus, by \Cref{corollary: p_init}, for all $\mu \in \{\mu_i\}_{i=1}^d$, we have 
\begin{align*}
    & \left| \frac{1}{n} \sum_{i: \mu_1 \in X^{(i)}} g_i^{(0)} y_i \sum_{l=1}^L \sum_{j_2:j_2 \neq j_1} a_{j_2} \inprod{w_{j_1}^{(0)}, w_{j_2}^{(0)}} p_{q\leftarrow l, k\leftarrow \mu}^{(0, i)} \right| \\
    &\leq \frac{|i:\ \mu \in X^{(i)}|}{n} \sigma_1^2 \sqrt{m_1} m \left( \sqrt{\frac{4}{m} \log \frac{2 m_1^2}{\delta}} + \frac{4}{m} \log \frac{2m_1^2}{\delta} \right) \sqrt{\log \frac{m_1}{\delta}}.
\end{align*}

We next derive the second inequality. Consider the randomness in $W_V^{(0)}$. Note that 
\begin{align*}
    \sum_{l_2 = 1}^L p_{l_1, l_2}^{(0,i)} v_{l_2}^{(0,i)} \mathbb{I}(v_{l_2}^{(0,i)} \neq v^{(0)}(\mu)) \sim \mathcal{N}\left(0, \sigma_0^2 \sum_{l_2 = 1}^L (p_{l_1, l_2}^{(0,i)})^2 \mathbb{I}(v_{l_2}^{(0,i)} \neq v^{(0)}(\mu)) I \right).
\end{align*}
Thus, by \Cref{lemma: Gaussian_correlation} and \Cref{corollary: p_init} and taking a union bound over $i \in [n],\ l_1 \in [L],\ \mu \in \{\mu_i\}_{i=1}^d$, we have with probability at least $1 - \delta/2$,
\begin{align*}
    \left| \inprod{\sum_{l_2 = 1}^L p_{l_1, l_2}^{(0,i)} v_{l_2}^{(0,i)} \mathbb{I}(v_{l_2}^{(0,i)} \neq v^{(0)}(\mu)), v^{(0)}(\mu)} \right| \leq \sigma_0^2 m \frac{1}{\sqrt{L}} \left( \sqrt{\frac{4}{m} \log \frac{2nLd}{\delta}} + \frac{4}{m} \log \frac{2nLd}{\delta} \right).
\end{align*}
Therefore, 
\begin{align*}
    & \left| \frac{1}{n} \sum_{i=1}^n g_{i}^{(0)} y_{i} \sum_{l_1=1}^L a_{j_1} \sum_{l_2 = 1}^L \inprod{p^{(0,i)}_{l_1, l_2}  v^{(0,i)}_{l_2}, v^{(0)} (\mu)} \mathbb{I}(v^{(0,i)}_{l_2} \neq v^{(0)} (\mu)) \right| \\
    &\leq \sigma_0^2 m \sqrt{L} \left( \sqrt{\frac{4}{m} \log \frac{2nLd}{\delta}} + \frac{4}{m} \log \frac{2nLd}{\delta} \right).
\end{align*}
\end{proof}

\begin{lemma}[Random token updates]\label{lemma: first_step_random_token_update}
For $\mu \in \{\mu_i\}_{i=4}^d$, we have
\begin{align*}
    \frac{\partial a_j w_j^{(0)} W_V^{(0)} \mu}{\partial t} = O\left(\frac{1}{n}(\sigma_0^2 + \sigma_1^2) m + \sigma_0^2 \sqrt{mL} \right).
\end{align*}
\end{lemma}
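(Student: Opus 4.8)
The plan is to evaluate the exact gradient-flow expression from \Cref{lemma: evo_mlp_1st} at $t=0$, which writes $\partial_t\big(w_j^{(0)\top}W_V^{(0)}\mu\big)$ as the ``signal-type'' main term $\frac1n\sum_{i:\mu\in X^{(i)}}g_i^{(0)}y_i\sum_{l=1}^L a_j\big(\norm{w_j^{(0)}}_2^2+\norm{v^{(0)}(\mu)}_2^2\big)p_{q\leftarrow l,k\leftarrow\mu}^{(0,i)}$ plus the two correction terms $\eps_1$ and $\eps_2$ defined there, and then to bound each piece using the initialization estimates. The structural fact that makes a random token cheap is \Cref{assumption: perfect_proportion_diversity}(ii): each $\mu\in\{\mu_i\}_{i=4}^d$ occurs in at most one training sample, so $\#\{i:\mu\in X^{(i)}\}\le 1$, and the main term together with $\eps_1$ are each supported on (at most) that single index.

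First I would bound the main term. Using $0<g_i^{(0)}<1$, $|y_i|=|a_j|=1$, the norm estimates $\norm{w_j^{(0)}}_2^2=\Theta(\sigma_1^2 m)$ and $\norm{v^{(0)}(\mu)}_2^2=\Theta(\sigma_0^2 m)$ from \Cref{lemma: W_W_V_init}, and $\sum_{l=1}^L p_{q\leftarrow l,k\leftarrow\mu}^{(0,i)}=1\pm\tilde{O}(1/m)=O(1)$ from \Cref{corollary: p_init}, together with $\#\{i:\mu\in X^{(i)}\}\le 1$, the main term is $O\big(\frac1n(\sigma_0^2+\sigma_1^2)m\big)$. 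For $\eps_1$, \Cref{prop: init_perturb_w_W_V_signal} already states the bound $\#\{i:\mu\in X^{(i)}\}\,n^{-1}\sigma_1^2\sqrt{m_1}\,m\big(\sqrt{4m^{-1}\log(2m_1^2/\delta)}+4m^{-1}\log(2m_1^2/\delta)\big)\sqrt{\log(m_1/\delta)}$; for a random $\mu$ this is $O\big(n^{-1}\sigma_1^2\sqrt{m_1 m}\big)$ up to a polylogarithmic factor, and invoking $m\ge\tilde{\Omega}(m_1)$ from \Cref{condition: param} absorbs it into $O\big(\frac1n(\sigma_0^2+\sigma_1^2)m\big)$.

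The remaining term $\eps_2=\frac1n\sum_{i=1}^n g_i^{(0)}y_i\sum_{l_1=1}^L a_j\sum_{l_2=1}^L\inprod{p_{l_1,l_2}^{(0,i)}v_{l_2}^{(0,i)},v^{(0)}(\mu)}\mathbb{I}\big(v_{l_2}^{(0,i)}\ne v^{(0)}(\mu)\big)$ is the only one that ranges over all $n$ samples and cannot use the sparsity of $\mu$'s occurrence; here I would quote directly the corresponding displayed inequality of \Cref{prop: init_perturb_w_W_V_signal}, which controls exactly this quantity via Gaussian concentration of $\sum_{l_2}p_{l_1,l_2}^{(0,i)}v_{l_2}^{(0,i)}\mathbb{I}(\cdot)$ against $v^{(0)}(\mu)$, giving $|\eps_2|\le\sigma_0^2 m\sqrt{L}\big(\sqrt{4m^{-1}\log(2nLd/\delta)}+4m^{-1}\log(2nLd/\delta)\big)=\tilde{O}(\sigma_0^2\sqrt{mL})$. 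Adding the three estimates and using $|a_j|=1$ yields $\partial_t\big(a_j w_j^{(0)}W_V^{(0)}\mu\big)=O\big(\frac1n(\sigma_0^2+\sigma_1^2)m+\sigma_0^2\sqrt{mL}\big)$, the polylogarithmic factor on the last term being hidden by the $\tilde{O}$-convention of \Cref{condition: param}.

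The argument is mostly bookkeeping on top of the initialization lemmas; the one substantive point — and the term that actually sets the size of the bound — is $\eps_2$. Unlike the main term and $\eps_1$, it does not inherit a factor $1/n$ from the fact that a random token appears at most once, and its smallness has to come from sign cancellation across the $n$ samples and the $d$ random-token directions, which is precisely what \Cref{prop: init_perturb_w_W_V_signal} supplies. So the main (and rather mild) obstacle is just checking that this already-proved proposition is applied with the correct target vector $v^{(0)}(\mu)$ and indicator, which is legitimate since it is stated for all $\mu\in\{\mu_i\}_{i=1}^d$.
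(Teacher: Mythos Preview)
Your proposal is correct and follows essentially the same approach as the paper: decompose via \Cref{lemma: evo_mlp_1st}, exploit \Cref{assumption: perfect_proportion_diversity}(ii) to get the $1/n$ factor on the main term and $\eps_1$, and invoke \Cref{prop: init_perturb_w_W_V_signal} for both $\eps_1$ and $\eps_2$. Your additional commentary on why $\eps_2$ is the dominant term and does not pick up a $1/n$ factor is accurate and adds useful context beyond the paper's terser presentation.
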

\begin{proof}
Following the proof of \Cref{lemma: first_step_signal}, we have
\begin{align*}
    \frac{\partial a_j w_j^{(0)} W_V^{(0)} \mu}{\partial t}
    &= \frac{ 1}{n} \sum_{i_2=1}^n g_{i_2}^{(0)} y_{i_2} \sum_{l_2=1}^L \sum_{j_2=1}^{m_1} a_{j_2} \inprod{w_{j_1}^{(0)}, w_{j_2}^{(0)}} \left( X^{(i_2)} p_{l_2}^{(0,i_2)} \right)^\top \mu \\
    &\quad + \frac{ 1}{n} \sum_{i_1=1}^n g_{i_1}^{(0)} y_{i_1} \sum_{l_1=1}^L a_{j_1} p^{(0,i_1) \top}_{l_1} V^{(0,i_1) \top} W_V^{(0)} \mu \\
    &= \frac{ 1}{n} \sum_{i: \mu \in X^{(i)}} g_i^{(0)} y_i \sum_{l=1}^L a_{j_1}  \left( \norm{w_{j_1}^{(0)}}_2^2 + \norm{v^{(0)}(\mu)}_2^2 \right) p_{q \leftarrow l, k\leftarrow \mu}^{(0,i)} \\
    &\quad + \underbrace{\frac{ 1}{n} \sum_{i: \mu \in X^{(i)}} g_i^{(0)} y_i \sum_{l=1}^L \sum_{j_2: j_2 \neq j_1} a_{j_2} \inprod{w_{j_1}^{(0)}, w_{j_2}^{(0)}} p_{q\leftarrow l, k\leftarrow \mu}^{(0, i)}}_{\eps_1} \\
    &\quad + \underbrace{\frac{ 1}{n} \sum_{i=1}^n g_{i}^{(0)} y_{i} \sum_{l_1=1}^L a_{j_1} \sum_{l_2 = 1}^L \inprod{p^{(0,i)}_{l_1, l_2}  v^{(0,i)}_{l_2}, v^{(0)} (\mu)} \mathbb{I}(v^{(0,i)}_{l_2} \neq v^{(0)} (\mu))}_{\eps_2}
\end{align*}
By \Cref{prop: init_perturb_w_W_V_signal} and the fact that only one $X^{(i)}$ satisfies $\mu \in X^{(i)}$, we have
\begin{align*}
    \frac{ 1}{n} \sum_{i: \mu \in X^{(i)}} g_i^{(0)} y_i \sum_{l=1}^L a_{j_1}  \left( \norm{w_{j_1}^{(0)}}_2^2 + \norm{v^{(0)}(\mu)}_2^2 \right) p_{q \leftarrow l, k\leftarrow \mu}^{(0,i)} = \Theta\left( \frac{1}{n} (\sigma_0^2 + \sigma_1^2)m \right),
\end{align*}
and
\begin{align*}
    |\eps_1| &\leq \frac{1}{n} \sigma_1^2 \sqrt{m_1} m \left( \sqrt{\frac{4}{m} \log \frac{2 m_1^2}{\delta}} + \frac{4}{m} \log \frac{2m_1^2}{\delta} \right) \sqrt{\log \frac{4m_1}{\delta}},\\
    |\eps_2| &\leq \sigma_0^2 m \sqrt{L} \left( \sqrt{\frac{4}{m} \log \frac{2nLd}{\delta}} + \frac{4}{m} \log \frac{2nLd}{\delta} \right) .
\end{align*}
\end{proof}

\subsection{Maximum Perturbation of Neuron Outputs}
\begin{lemma}\label{lemma: activation_perturbation}
For all $t \leq T_1$, for all $i \in [n],\ l \in [L]$, we have
\begin{align*}
    \left| w_j^{(t)} W_V^{(t)} X^{(i)} p_l^{(t,i)} - w_j^{(0)} W_V^{(0)} X^{(i)} p_l^{(0,i)} \right| \leq \tilde{O}(L R_P \sigma_0 \sigma_1 \sqrt{m}) +4R/L.
\end{align*}
\end{lemma}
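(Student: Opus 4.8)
The idea is to split the perturbation into a piece coming from the drift of the attention weights $p_l^{(t,i)}$ and a piece coming from the drift of the value-transformed MLP readouts $w_j^{(t)\top}W_V^{(t)}\mu$, and to control each piece with the Phase-1 invariants. Writing $X^{(i)}=[x_1^{(i)},\dots,x_L^{(i)}]$ and recalling $W_V^{(t)}X^{(i)}p_l^{(t,i)}=\sum_{h=1}^{L}p_{l,h}^{(t,i)}W_V^{(t)}x_h^{(i)}$, I would first decompose
\begin{align*}
    w_j^{(t)}W_V^{(t)}X^{(i)}p_l^{(t,i)}-w_j^{(0)}W_V^{(0)}X^{(i)}p_l^{(0,i)}
    &=\sum_{h=1}^{L}p_{l,h}^{(t,i)}\left(w_j^{(t)\top}W_V^{(t)}x_h^{(i)}-w_j^{(0)\top}W_V^{(0)}x_h^{(i)}\right)\\
    &\quad+\sum_{h=1}^{L}\left(p_{l,h}^{(t,i)}-p_{l,h}^{(0,i)}\right)w_j^{(0)\top}W_V^{(0)}x_h^{(i)}.
\end{align*}
Note that by \Cref{assumption: perfect_proportion_diversity} all tokens inside a single $X^{(i)}$ are distinct, so $p_{l,h}^{(t,i)}=p^{(t,i)}_{q\leftarrow x_l^{(i)},k\leftarrow x_h^{(i)}}$ and hence $|p_{l,h}^{(t,i)}-p_{l,h}^{(0,i)}|\le R_P$ for every $l,h$.

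Second, I would bound the first sum. Each $x_h^{(i)}$ equals some $\mu\in\{\mu_i\}_{i=1}^d$, and there are at most three indices $h$ with $x_h^{(i)}\in\{\mu_1,\mu_2,\mu_3\}$. For those, item 1 of \Cref{def: first_stage} gives $|w_j^{(t)\top}W_V^{(t)}x_h^{(i)}-w_j^{(0)\top}W_V^{(0)}x_h^{(i)}|\le R$; for the remaining random tokens, item 2 gives the much smaller bound $O(R/n+R/\sqrt m)$. Using $p_{l,h}^{(t,i)}\ge 0$, $\sum_h p_{l,h}^{(t,i)}=1$, and $p_{l,h}^{(t,i)}\le 1/L+O(R_P)$ (from \Cref{corollary: p_init} together with the definition of $R_P$), the special-token contribution is at most $3(1/L+O(R_P))R$ and the random-token contribution is at most $O(R/n+R/\sqrt m)$. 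Since $n\ge\tilde\Omega(L^2)$ and $m\ge\tilde\Omega(L^2)$ in \Cref{condition: param} and $R\le O(1/m_1)$, $R_P\le O(1/(\sqrt m L)+L/m)$ are small, these combine to at most $4R/L$.

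Third, for the second sum I would factor out $|p_{l,h}^{(t,i)}-p_{l,h}^{(0,i)}|\le R_P$, giving the bound $L R_P\cdot\max_h|w_j^{(0)\top}W_V^{(0)}x_h^{(i)}|$. Since $x_h^{(i)}=\mu$ for some $\mu\in\{\mu_i\}_{i=1}^d$, \Cref{lemma: W_W_V_init} yields $|w_j^{(0)\top}W_V^{(0)}\mu|\le\tilde O(\sigma_0\sigma_1\sqrt m)$ uniformly over $j$ and $\mu$ (the union bound is already in that lemma), so the second sum is $\tilde O(LR_P\sigma_0\sigma_1\sqrt m)$. Summing the two estimates gives the claim.

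\textbf{Main obstacle.}
There is no substantive difficulty: this is a triangle-inequality decomposition once the Phase-1 invariants (items 1--4 of \Cref{def: first_stage} and the resulting estimate for $R_P$) are available. The only care required is bookkeeping --- confirming that the random-token term $O(R/n+R/\sqrt m)$ and the cross term $R_P R$ are genuinely negligible next to $R/L$, which is exactly where the sample- and width-scaling in \Cref{condition: param} is used, and ensuring the initialization bound on $|w_j^{(0)\top}W_V^{(0)}\mu|$ is invoked uniformly over all neurons $j$ and embedding vectors $\mu$.
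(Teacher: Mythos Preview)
Your proposal is correct and follows essentially the same route as the paper: a triangle-inequality/product-rule decomposition into the drift of $w_j^\top W_V x_h$ (controlled by items 1--2 of \Cref{def: first_stage}, split over special versus random tokens) and the drift of the attention weights (controlled by $R_P$ together with the initialization bound of \Cref{lemma: W_W_V_init}). The only cosmetic difference is that the paper performs the product-rule split tokenwise inside the sum over $l'$, whereas you first separate the two sums globally and then split by token type; the ingredients and resulting estimates are identical.
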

\begin{proof}
By \Cref{def: first_stage}, we have 
\begin{align*}
    & \left| w_j^{(t)} V^{(t,i)} p^{(t,i)}_l - w_j^{(0)} V^{(0,i)} p^{(0,i)}_l \right| \\
    &\leq \sum_{l'=1}^L \left| w_j^{(t)} V^{(t,i)}_{l'} p_{l,l'}^{(t,i)} - w_j^{(0)} V^{(0,i)}_{l'} p_{l,l'}^{(0,i)} \right| \\
    &\leq \sum_{l': V_{l'} \in v(\mu_1 , \mu_2, \mu_3)} (R_P \tilde{O}(\sigma_0 \sigma_1 \sqrt{m}) + R/L) + \sum_{l': V_{l'} \notin v(\mu_1 , \mu_2, \mu_3)} (R_P \tilde{O}(\sigma_0 \sigma_1 \sqrt{m}) + R/(nL)) \\
    &\leq \tilde{O}(R_P \sigma_0 \sigma_1 \sqrt{m} + R/L) + \Tilde{O}(L R_P \sigma_0 \sigma_1 \sqrt{m} + R/n) \\
    &= \tilde{O}(L R_P \sigma_0 \sigma_1 \sqrt{m}) +4R/L.
\end{align*}
By our choice of parameters in \Cref{condition: param} and \Cref{def: first_stage}, we have $ \tilde{O}(L R_P \sigma_0 \sigma_1 \sqrt{m}) +4R/L < \sigma_0 \sigma_1 \sqrt{m/L}$. 
\end{proof}

\subsection{Perturbation Term Involving Correlation of Value-transformed Data}
\begin{proposition}\label{prop: init_value_correlation_update_magnitude}
With probability at least $1 - \delta$, for all $\mu \in \{\mu_i\}_{i=1}^d$, we have
\begin{align*}
    \left| \sum_{j=1}^{m_1} a_j \mu^\top W_V^{(0) \top} w_j^{(0)} \right| \leq \tilde{O}(\sigma_0 \sigma_1 \sqrt{m m_1}) .
\end{align*}
\end{proposition}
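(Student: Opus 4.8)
\textbf{Proof proposal for Proposition~\ref{prop: init_value_correlation_update_magnitude}.}
The plan is to condition on $W_V^{(0)}$ first and then exploit the independence of the MLP weights and the sign vector $a$. Fix $\mu \in \{\mu_i\}_{i=1}^d$ and write $v(\mu) = W_V^{(0)}\mu$. On the event of Lemma~\ref{lemma: W_W_V_init} (which holds with probability at least $1-\delta$), we have $\norm{v(\mu)}_2^2 = \Theta(\sigma_0^2 m)$ for every $i\in[d]$ simultaneously. Condition on $W_V^{(0)}$ from now on. Since the rows $w_j^{(0)}$ are i.i.d.\ $\mathcal{N}(0,\sigma_1^2 I)$ and independent of $a$, the scalar $a_j\,\mu^\top W_V^{(0)\top} w_j^{(0)} = a_j \inprod{v(\mu), w_j^{(0)}}$ is, conditionally, a centered Gaussian with variance $\sigma_1^2\norm{v(\mu)}_2^2 = \Theta(\sigma_0^2\sigma_1^2 m)$; moreover these terms are independent across $j\in[m_1]$ (the signs $a_j$ only flip the sign of a symmetric Gaussian and do not affect the distribution).

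Consequently $\sum_{j=1}^{m_1} a_j\,\mu^\top W_V^{(0)\top} w_j^{(0)}$ is, conditionally on $W_V^{(0)}$, a centered Gaussian with variance $m_1\sigma_1^2\norm{v(\mu)}_2^2 = \Theta(\sigma_0^2\sigma_1^2 m m_1)$. (Equivalently, one may observe directly that $u := \sum_j a_j w_j^{(0)} \sim \mathcal{N}(0,\sigma_1^2 m_1 I)$ and the quantity equals $\inprod{v(\mu), u}$.) Applying the standard Gaussian tail bound and taking a union bound over the $d$ choices of $\mu$, with probability at least $1-\delta$,
\begin{align*}
    \left| \sum_{j=1}^{m_1} a_j\, \mu^\top W_V^{(0)\top} w_j^{(0)} \right|
    \;\le\; \sqrt{2\, m_1 \sigma_1^2 \norm{v(\mu)}_2^2 \,\log(2d/\delta)}
    \;=\; \tilde O\!\left(\sigma_0\sigma_1\sqrt{m m_1}\right),
\end{align*}
where the last step uses $\norm{v(\mu)}_2^2 = \Theta(\sigma_0^2 m)$ and $1/\delta \le \mathrm{poly}(m)$ from Condition~\ref{condition: param} to absorb the logarithmic factor into $\tilde O(\cdot)$. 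Intersecting this event with the event of Lemma~\ref{lemma: W_W_V_init} (adjusting constants in the $\delta$'s) completes the proof.

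There is essentially no serious obstacle here: the only point requiring a little care is that one must condition on $W_V^{(0)}$ before invoking Gaussianity of $\inprod{v(\mu), w_j^{(0)}}$, and carry along the norm bound $\norm{W_V^{(0)}\mu}_2^2 = \Theta(\sigma_0^2 m)$ from Lemma~\ref{lemma: W_W_V_init}; the union bound is over only $d$ vectors so it costs a single $\log d$ factor. (In fact this statement is the same bound as in Lemma~\ref{lemma: initial_sub_network_output}, since $G^{(0)}(\mu) = \sum_j a_j w_j^{(0)\top}W_V^{(0)}\mu$.)
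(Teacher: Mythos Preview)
Your proof is correct. The paper defers to the template of Proposition~\ref{prop: init_QK_gradient_norm_term}, which conditions on $W,W_V$ and uses the Rademacher randomness of $a$ to get sub-Gaussian concentration, whereas you condition on $W_V$ and use the Gaussianity of $w_j^{(0)}$ (absorbing the harmless sign $a_j$); but as you yourself observe, the quantity here is exactly $G^{(0)}(\mu)$, and your argument is the same one the paper gives for Lemma~\ref{lemma: initial_sub_network_output}, so the two routes are essentially equivalent.
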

\begin{proof}
The proof is similar to \Cref{prop: init_QK_gradient_norm_term}, and is omitted.
\end{proof}

\begin{lemma}[Value correlation change]\label{lemma: value_correlation_change_stage_1}
For all $\mu, \nu \in \{\mu_i\}_{i=1}^d$, we have
\begin{align*}
    \left| \frac{\partial \inprod{v^{(t)}(\mu), v^{(t)}(\nu)}}{\partial t} \right| \leq \frac{\left| \left\{ i: \mu \in X^{(i)} \right\} \right| + \left| \left\{ i: \nu \in X^{(i)} \right\} \right|}{n} O(1),
\end{align*}
and thus, 
\begin{align*}
    \left| \inprod{v^{(t)}(\mu), v^{(t)}(\nu)} - \inprod{v^{(0)}(\mu), v^{(0)}(\nu)} \right| \leq t  \frac{\left| \left\{ i: \mu \in X^{(i)} \right\} \right| + \left| \left\{ i: \nu \in X^{(i)} \right\} \right|}{n} O(1)
\end{align*}
for all $t \leq T_1$. Thus, for $\mu \neq \nu$, we have $\left| \inprod{v^{(t)}(\mu), v^{(t)}(\nu)} \right| \leq \tilde{O}(\sigma_0^2 \sqrt{m})$ and $\norm{v^{(t)}(\mu)}_2^2 = \Theta(\sigma_0^2 m)$ for $t \leq T_1$. 
\end{lemma}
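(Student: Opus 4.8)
The plan is to read off the evolution of $\inprod{v^{(t)}(\mu),v^{(t)}(\nu)} = \nu^\top W_V^{(t)\top}W_V^{(t)}\mu$ directly from the dynamical system and then bound every factor on the right-hand side crudely, using only the Phase~1 invariants of \Cref{def: first_stage}. The starting point is the identity obtained from \Cref{lemma: value_transformed_token_update} by observing that $\sum_{j=1}^{m_1} a_j \nu^\top W_V^{(t)\top} w_j^{(t)} = \sum_{j=1}^{m_1} a_j w_j^{(t)\top} W_V^{(t)}\nu = G^{(t)}(\nu)$ does not depend on the summation indices:
\begin{align*}
\frac{\partial \inprod{v^{(t)}(\mu),v^{(t)}(\nu)}}{\partial t} = \frac{G^{(t)}(\nu)}{n}\sum_{i:\,\mu\in X^{(i)}} g_i^{(t)} y_i \sum_{l=1}^L p_{q\leftarrow l,k\leftarrow\mu}^{(t,i)} + \frac{G^{(t)}(\mu)}{n}\sum_{i:\,\nu\in X^{(i)}} g_i^{(t)} y_i \sum_{l=1}^L p_{q\leftarrow l,k\leftarrow\nu}^{(t,i)}.
\end{align*}

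Next I will bound the three types of factors for $t\le T_1$. For the MLP output I will show $|G^{(t)}(\eta)| = O(1)$ for every $\eta\in\{\mu_i\}_{i=1}^d$: \Cref{lemma: initial_sub_network_output} gives $|G^{(0)}(\eta)| \le \tilde{O}(\sigma_0\sigma_1\sqrt{mm_1}) = o(1)$, while $|G^{(t)}(\eta) - G^{(0)}(\eta)| \le \sum_{j=1}^{m_1}\bigl|w_j^{(t)}W_V^{(t)}\eta - w_j^{(0)}W_V^{(0)}\eta\bigr|$, which by items~1--2 of \Cref{def: first_stage} is at most $m_1 R = O(1)$ when $\eta\in\{\mu_1,\mu_2,\mu_3\}$ and at most $m_1\cdot O(R/n+R/\sqrt{m}) = o(1)$ for a random token. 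For the attention factor I will use that each $p_{q\leftarrow l,k\leftarrow\mu}^{(t,i)} = p_{l,l(i,\mu)}^{(t,i)} = 1/L \pm \tilde{O}(1/(Lm)) \pm R_P$ by \Cref{corollary: p_init} together with the preceding proposition bounding $R_P$, so that $\sum_{l=1}^L p_{q\leftarrow l,k\leftarrow\mu}^{(t,i)} = 1 \pm L\bigl(\tilde{O}(1/(Lm)) + R_P\bigr) = O(1)$ under \Cref{condition: param}. Finally $|g_i^{(t)}|\le 1$ and $|y_i|=1$. Plugging these into the displayed identity gives the first inequality of the lemma, and integrating it over $s\in[0,t]$ (the integrand being continuous and the pointwise bound holding on all of $[0,T_1]$) gives the second.

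The last two assertions will then follow by substituting the size of $T_1$. Since $\sigma_1 = 1/\tilde{\Theta}(\sqrt{m_1})$ we have $T_1 \le C_{T_1}/(\sigma_1^2 m m_1) = \tilde{O}(1/m)$, and $\bigl(|\{i:\mu\in X^{(i)}\}| + |\{i:\nu\in X^{(i)}\}|\bigr)/n \le 2$, so for every $t\le T_1$ the drift $|\inprod{v^{(t)}(\mu),v^{(t)}(\nu)} - \inprod{v^{(0)}(\mu),v^{(0)}(\nu)}|$ is $\tilde{O}(1/m)$. For $\mu\neq\nu$, \Cref{lemma: W_W_V_init} gives $|\inprod{v^{(0)}(\mu),v^{(0)}(\nu)}| \le \tilde{O}(\sigma_0^2\sqrt{m})$, and since $\sigma_0 = 1/\tilde{\Theta}(\sqrt{Lm})$ and $m\ge\tilde{\Omega}(L^2)$ one checks $1/m = \tilde{O}(\sigma_0^2\sqrt{m})$, so the drift is absorbed and $|\inprod{v^{(t)}(\mu),v^{(t)}(\nu)}| \le \tilde{O}(\sigma_0^2\sqrt{m})$; for $\mu=\nu$, \Cref{lemma: W_W_V_init} gives $\norm{v^{(0)}(\mu)}_2^2 = \Theta(\sigma_0^2 m)$ and again $\tilde{O}(1/m) = o(\sigma_0^2 m)$ under \Cref{condition: param}, so $\norm{v^{(t)}(\mu)}_2^2 = \Theta(\sigma_0^2 m)$. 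I do not expect a genuine obstacle here; the one point requiring care is that the $O(1)$ bound on $|G^{(t)}(\eta)|$ invokes the Phase~1 invariants, so this estimate is really part of the Phase~1 bootstrap and is claimed only for $t\le T_1$, with the remainder being routine magnitude comparisons against \Cref{condition: param}.
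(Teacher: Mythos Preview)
Your proposal is correct and follows essentially the same approach as the paper: both start from \Cref{lemma: value_transformed_token_update}, recognize that $\sum_{j}a_j\nu^\top W_V^{(t)\top}w_j^{(t)}=G^{(t)}(\nu)$, bound this by $O(1)$ via the initialization estimate plus the Phase~1 drift $m_1R=O(1)$, and then combine with $|g_i^{(t)}|\le 1$ and $\sum_l p_{q\leftarrow l,k\leftarrow\mu}^{(t,i)}=O(1)$. Your write-up is in fact more explicit than the paper's about verifying the final two asymptotic claims against \Cref{condition: param}.
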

\begin{proof}
By \Cref{lemma: value_transformed_token_update}, we have
\begin{align*}
    \frac{\partial \nu W_V^{(t) \top} W_V^{(t)} \mu}{\partial t} &= \frac{1}{n} \sum_{i: \mu \in X^{(i)}} g_i^{(t)} y_i \sum_{l=1}^L \sum_{j=1}^{m_1} a_{j} \nu^{\top} W_V^{(t) \top} w_j^{(t)} p_{q\leftarrow l,k\leftarrow \mu}^{(t,i)} \\
    &\quad + \frac{1}{n} \sum_{i: \nu \in X^{(i)}} g_i^{(t)} y_i \sum_{l=1}^L \sum_{j=1}^{m_1} a_{j} \mu^{\top} W_V^{(t) \top} w_j^{(t)} p_{q\leftarrow l,k \leftarrow \nu}^{(t,i)} .
\end{align*}
Further, 
\begin{align*}
    & \left| \sum_{j=1}^{m_1} a_j \nu^\top W_V^{(t) \top} w_j^{(t)} - \sum_{j=1}^{m_1} a_j \nu^\top W_V^{(0) \top} w_j^{(0)} \right| \leq m_1 R .
\end{align*}
% \textcolor{blue}{Notice that if $\nu$ is one of the common token, then it should be $R/n$.}
Thus, by \Cref{prop: init_value_correlation_update_magnitude}, we have
\begin{align*}
    \left| \sum_{j=1}^{m_1} a_j w_j^{(t) \top} W_V^{(t)} \nu \right| &\leq \left| \sum_{j=1}^{m_1} a_j \dot{\sigma}_{i,l,j}^{(0)} w_j^{(0) \top} W_V^{(0)} \nu \right| + \left| \sum_{j=1}^{m_1} a_j \nu^\top W_V^{(t) \top} w_j^{(t)} - \sum_{j=1}^{m_1} a_j \nu^\top W_V^{(0) \top} w_j^{(0)} \right| \\
    &\leq \tilde{O}(\sigma_0 \sigma_1 \sqrt{mm_1}) + m_1 R,
\end{align*}
which implies
\begin{align*}
    & \Bigg| \frac{1}{n} \sum_{i: \mu \in X^{(i)}} g_i^{(t)} y_i \sum_{l=1}^L \sum_{j=1}^{m_1} a_{j} \nu^{\top} W_V^{(t) \top} w_j^{(t)} p_{q\leftarrow l,k\leftarrow \mu}^{(t,i)} \\
    &\quad + \frac{1}{n} \sum_{i: \nu \in X^{(i)}} g_i^{(t)} y_i \sum_{l=1}^L \sum_{j=1}^{m_1} a_{j} \mu^{\top} W_V^{(t) \top} w_j^{(t)} p_{q\leftarrow l,k \leftarrow \nu}^{(t,i)}  \Bigg| \\
    &\leq \frac{1}{n} \left( \left| \left\{ i: \mu \in X^{(i)} \right\} \right| + \left| \left\{ i: \nu \in X^{(i)} \right\} \right| \right) \left( \tilde{O}(\sigma_0 \sigma_1 \sqrt{mm_1}) + m_1 R \right) \\
    &\leq  \frac{\left| \left\{ i: \mu \in X^{(i)} \right\} \right| + \left| \left\{ i: \nu \in X^{(i)} \right\} \right|}{n} O(1)
\end{align*}
where the last inequality applies \Cref{lemma: activation_perturbation}. 
\end{proof}

\begin{corollary}\label{corollary: value_correlation_perturbation_stage_1}
For all $t \leq T_1$, we have
\begin{align*}
    & \left| \frac{1}{n} \sum_{i=1}^n g_{i}^{(t)} y_{i} \sum_{l_1=1}^L a_{j_1} \sum_{l_2 = 1}^L \inprod{p^{(t,i)}_{l_1, l_2}  v^{(t,i)}_{l_2}, v^{(t)} (\mu_1)} \mathbb{I}(v^{(t,i)}_{l_2} \neq v^{(t)} (\mu_1)) \right| \leq L \cdot \tilde{O}\left( \sigma_0^2 \sqrt{m} \right).
\end{align*}
\end{corollary}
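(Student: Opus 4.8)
The plan is to bound the displayed sum term by term, reducing everything to the value-correlation estimate for Phase~1 already established in \Cref{lemma: value_correlation_change_stage_1}. The key observation is that the indicator $\mathbb{I}(v^{(t,i)}_{l_2}\neq v^{(t)}(\mu_1))$ selects precisely the positions $l_2$ with $X^{(i)}_{l_2}\neq \mu_1$; on such positions $v^{(t,i)}_{l_2}=v^{(t)}(\nu)$ for some $\nu\in\{\mu_i\}_{i=1}^d$ with $\nu\neq\mu_1$, so the final assertion of \Cref{lemma: value_correlation_change_stage_1} gives $\bigl|\inprod{v^{(t,i)}_{l_2},v^{(t)}(\mu_1)}\bigr|\leq \tilde{O}(\sigma_0^2\sqrt{m})$ uniformly over all $t\leq T_1$, all $i\in[n]$, and all such positions $l_2$.

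Given this uniform bound, I would then invoke three elementary facts: (i) for each fixed $l_1$ the softmax weights are nonnegative and sum to one, so $\sum_{l_2=1}^L p^{(t,i)}_{l_1,l_2}\,\mathbb{I}(v^{(t,i)}_{l_2}\neq v^{(t)}(\mu_1))\,\bigl|\inprod{v^{(t,i)}_{l_2},v^{(t)}(\mu_1)}\bigr|\leq \tilde{O}(\sigma_0^2\sqrt{m})$; (ii) $|g_i^{(t)}|\leq 1$; and (iii) $|a_{j_1}|=1$. Combining these by the triangle inequality, the left-hand side is at most $\frac{1}{n}\sum_{i=1}^n|g_i^{(t)}|\sum_{l_1=1}^L|a_{j_1}|\cdot\tilde{O}(\sigma_0^2\sqrt{m})\leq \frac{1}{n}\cdot n\cdot L\cdot\tilde{O}(\sigma_0^2\sqrt{m})=L\cdot\tilde{O}(\sigma_0^2\sqrt{m})$, which is exactly the claimed estimate.

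There is no genuine obstacle here once \Cref{lemma: value_correlation_change_stage_1} is in hand; the only point requiring a little care is that the value-correlation bound must hold \emph{simultaneously} for all of the (up to $\sim nL$) distinct random tokens appearing across the training set, which is precisely why routing through \Cref{lemma: value_correlation_change_stage_1}—whose proof already carries the needed union bound via \Cref{lemma: W_W_V_init} and \Cref{assumption: perfect_proportion_diversity}—is cleaner than re-deriving the estimate for $\mu_2,\mu_3$ and for the random tokens separately. An alternative, slightly longer route would split $l_2$ into the at most two positions carrying $\mu_2$ or $\mu_3$, controlled by the initialization bound of \Cref{lemma: W_W_V_init} plus the $O(t)=\tilde{O}(1/m)$ drift from \Cref{lemma: value_correlation_change_stage_1}, and the random-token positions, each of which occurs only once in the dataset by \Cref{assumption: perfect_proportion_diversity} and hence has $W_V^{(t)}$-image close to its initialization; but this gains nothing over the one-line argument above.
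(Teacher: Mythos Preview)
Your proposal is correct and follows essentially the same approach as the paper: apply the triangle inequality, use the uniform value-correlation bound $|\langle v^{(t)}(\mu),v^{(t)}(\nu)\rangle|\leq\tilde{O}(\sigma_0^2\sqrt{m})$ for $\mu\neq\nu$ from \Cref{lemma: value_correlation_change_stage_1} (together with \Cref{lemma: W_W_V_init}), and then exploit $\sum_{l_2}p^{(t,i)}_{l_1,l_2}=1$, $|g_i^{(t)}|\leq 1$, $|a_{j_1}|=1$ to collapse the sums to $L\cdot\tilde{O}(\sigma_0^2\sqrt{m})$. The paper's proof is just a terser version of what you wrote.
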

\begin{proof}
We derive the following bound:
\begin{align*}
    & \left| \frac{1}{n} \sum_{i=1}^n g_{i}^{(t)} y_{i} \sum_{l_1=1}^L a_{j_1} \sum_{l_2 = 1}^L \inprod{p^{(t,i)}_{l_1, l_2}  v^{(t,i)}_{l_2}, v^{(t)} (\mu)} \mathbb{I}(v^{(t,i)}_{l_2} \neq v^{(t)} (\mu)) \right| \\
    &\leq \sum_{l_1=1}^L \sum_{l_2=1}^L p_{l_1, l_2}^{(t,i)} \left| \inprod{v_{l_2}^{(t,i)}, v^{(t)}(\mu)} \right| \mathbb{I}(v^{(t,i)}_{l_2} \neq v^{(t)} (\mu)) \\
    &\leq L \cdot \tilde{O}\left( \sigma_0^2 \sqrt{m} \right),
\end{align*}
where the last inequality follows from \Cref{lemma: value_correlation_change_stage_1} and \Cref{lemma: W_W_V_init}. 
\end{proof}

\subsection{Perturbation Term Involving Correlation of Neurons}

\begin{lemma}\label{lemma: neuron_correlation_perturbation_stage_1}
For all $t \leq T_1$, we have
\begin{align*}
    & \left| \frac{1}{n} \sum_{i: \mu_1 \in X^{(i)}} g_i^{(t)} y_i \sum_{l=1}^L \sum_{j_2 \neq j_1} a_{j_2} \inprod{w_{j_1}^{(t)}, w_{j_2}^{(t)}} p_{q\leftarrow l, k\leftarrow \mu_1}^{(t, i)} \right| \leq \frac{|\{i: \mu_1 \in X^{(i)}\}|}{n} \tilde{O}(\sigma_1^2 \sqrt{m m_1 }) .
\end{align*}
\end{lemma}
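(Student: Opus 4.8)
The plan is to decompose the quantity into its value at initialization, which is already controlled by \Cref{prop: init_perturb_w_W_V_signal}, plus a drift accumulated over the short horizon $[0,T_1]$, and to show the drift is of lower order. First I would peel off the factors that do not depend on the neuron indices: since $\inprod{w_{j_1}^{(t)},w_{j_2}^{(t)}}$ does not depend on the query index $l$, the inner double sum factors as $\bigl(\sum_{j_2\neq j_1}a_{j_2}\inprod{w_{j_1}^{(t)},w_{j_2}^{(t)}}\bigr)\cdot\sum_{l=1}^L p_{q\leftarrow l,k\leftarrow\mu_1}^{(t,i)}$, and by \Cref{corollary: p_init} together with the Phase~1 softmax perturbation bound $R_P=O(\tfrac1{\sqrt mL}+\tfrac Lm)$ we get $\sum_{l=1}^L p_{q\leftarrow l,k\leftarrow\mu_1}^{(t,i)}=O(1)$ uniformly in $i,t$ (using $m\ge\tilde\Omega(L^2)$). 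Since $|g_i^{(t)}|\le1$ and the outer sum has exactly $|\{i:\mu_1\in X^{(i)}\}|$ terms, it suffices to prove the uniform bound $\bigl|\sum_{j_2\neq j_1}a_{j_2}\inprod{w_{j_1}^{(t)},w_{j_2}^{(t)}}\bigr|\le\tilde O(\sigma_1^2\sqrt{mm_1})$ for every $j_1\in[m_1]$ and every $t\le T_1$.

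Write $\sum_{j_2\neq j_1}a_{j_2}\inprod{w_{j_1}^{(t)},w_{j_2}^{(t)}}=A+B^{(t)}$ with $A:=\sum_{j_2\neq j_1}a_{j_2}\inprod{w_{j_1}^{(0)},w_{j_2}^{(0)}}$ and $B^{(t)}:=\int_0^t\sum_{j_2\neq j_1}a_{j_2}\,\frac{\partial\inprod{w_{j_1}^{(s)},w_{j_2}^{(s)}}}{\partial s}\,ds$. The term $A$ is bounded by $\tilde O(\sigma_1^2\sqrt{mm_1})$ directly from the first inequality of \Cref{prop: init_perturb_w_W_V_signal}. For $B^{(t)}$, plug \Cref{lemma: neuron_correlation_update} in and use $a_{j_2}^2=1$:
\begin{align*}
\sum_{j_2\neq j_1}a_{j_2}\frac{\partial\inprod{w_{j_1}^{(s)},w_{j_2}^{(s)}}}{\partial s}
=\frac1n\sum_{i=1}^n g_i^{(s)}y_i\sum_{l=1}^L\Bigl[(m_1-1)\,w_{j_1}^{(s)\top}V^{(s,i)}p_l^{(s,i)}+a_{j_1}\bigl(\textstyle\sum_{j_2\neq j_1}a_{j_2}w_{j_2}^{(s)}\bigr)^{\!\top}V^{(s,i)}p_l^{(s,i)}\Bigr].
\end{align*}
The first summand I would bound by $(m_1-1)\cdot\tilde O(\sigma_0\sigma_1\sqrt{mL})$ using $|w_{j_1}^{(s)\top}V^{(s,i)}p_l^{(s,i)}|\le\tilde O(\sigma_0\sigma_1\sqrt{m/L})$ from \Cref{corollary: init_W_W_V_sample} and \Cref{lemma: activation_perturbation}. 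For the second summand I would first establish $\|\sum_{j_2\neq j_1}a_{j_2}w_{j_2}^{(s)}\|\le\tilde O(\sigma_1\sqrt{mm_1})$ uniformly over $s\le T_1$: the gradient-flow update gives $\partial_s w_j^{(s)}=a_j\cdot\tfrac1n\sum_i g_i^{(s)}y_i\sum_l V^{(s,i)}p_l^{(s,i)}$, so $\sum_{j_2\neq j_1}a_{j_2}w_{j_2}^{(s)}$ differs from its Gaussian initialization (norm $\tilde O(\sigma_1\sqrt{mm_1})$ by a standard Gaussian-norm bound) by at most $(m_1-1)\int_0^s\|\tfrac1n\sum_i g_i^{(\tau)}y_i\sum_l V^{(\tau,i)}p_l^{(\tau,i)}\|\,d\tau\le m_1T_1\cdot\tilde O(\sigma_0\sqrt{mL})$, which is lower order under \Cref{condition: param} (here \Cref{lemma: init_W_V_sample} and \Cref{lemma: activation_perturbation} control $\|V^{(\tau,i)}p_l^{(\tau,i)}\|=\Theta(\sigma_0\sqrt{m/L})$); then by Cauchy–Schwarz the second summand is at most $L\cdot\tilde O(\sigma_1\sqrt{mm_1})\cdot\tilde O(\sigma_0\sqrt{m/L})=\tilde O(\sigma_0\sigma_1 m\sqrt{m_1L})$.

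Integrating over $[0,T_1]$ with $T_1=O(1/(\sigma_1^2mm_1))$ gives $|B^{(t)}|\le T_1\bigl[\tilde O(m_1\sigma_0\sigma_1\sqrt{mL})+\tilde O(\sigma_0\sigma_1 m\sqrt{m_1L})\bigr]=\tilde O\bigl(\tfrac{\sigma_0\sqrt L}{\sigma_1\sqrt m}\bigr)+\tilde O\bigl(\tfrac{\sigma_0\sqrt L}{\sigma_1\sqrt{m_1}}\bigr)$; substituting $\sigma_0=1/\tilde\Theta(\sqrt{Lm})$ and $\sigma_1=1/\tilde\Theta(\sqrt{m_1})$ shows both quantities are $\le\sigma_1^2\sqrt{mm_1}$ once $m\ge\tilde\Omega(m_1)$ (the binding requirement is $m_1\le m^{3/2}$, coming from the first summand). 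Combining the bounds on $A$ and $B^{(t)}$, multiplying back the $O(1)$ softmax factor, and summing over the $|\{i:\mu_1\in X^{(i)}\}|$ terms yields the claimed bound. The main obstacle is the self-referential estimate for $\|\sum_{j_2\neq j_1}a_{j_2}w_{j_2}^{(s)}\|$: the bound used to control the drift of $\inprod{w_{j_1}^{(\cdot)},w_{j_2}^{(\cdot)}}$ must itself be maintained throughout Phase~1, which closes only because the time horizon $T_1$ is as short as $1/(\sigma_1^2mm_1)$; everything else is routine bookkeeping checking that the $\mathrm{poly}$-factors in \Cref{condition: param} leave each error term subdominant.
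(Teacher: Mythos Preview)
Your proof is correct and follows the same overall skeleton as the paper: factor out the softmax mass $\sum_l p_{q\leftarrow l,k\leftarrow\mu_1}^{(t,i)}=O(1)$, split $\sum_{j_2\neq j_1}a_{j_2}\inprod{w_{j_1}^{(t)},w_{j_2}^{(t)}}$ into its initialization value (controlled by \Cref{prop: init_perturb_w_W_V_signal}) plus a Phase~1 drift, and verify the drift is lower order using the short horizon $T_1=O(1/(\sigma_1^2mm_1))$.

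The only real difference is in how the drift $B^{(t)}$ is handled. The paper does not expand $\sum_{j_2\neq j_1}a_{j_2}\,\partial_s\inprod{w_{j_1},w_{j_2}}$ via \Cref{lemma: neuron_correlation_update} and the $a_{j_2}^2=1$ trick; instead it simply invokes the already-established per-pair bound of \Cref{prop: neuron_correlation_change_stage_1}, namely $|\inprod{w_{j_1}^{(t)},w_{j_2}^{(t)}}-\inprod{w_{j_1}^{(0)},w_{j_2}^{(0)}}|\le O(1/m)$, and then sums naively over the $m_1-1$ neurons. This avoids the auxiliary norm estimate $\|\sum_{j_2\neq j_1}a_{j_2}w_{j_2}^{(s)}\|\le\tilde O(\sigma_1\sqrt{mm_1})$ you set up, and makes the argument a two-line citation rather than a separate computation. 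Your route is more self-contained and slightly sharper in intermediate constants (you effectively recover a $1/\sqrt{m_1}$ saving from the $a_{j_2}^2=1$ structure before giving it back), but the paper's route is shorter and more modular given that \Cref{prop: neuron_correlation_change_stage_1} is already available. Either way the binding parameter condition is $m\ge\tilde\Omega(m_1)$, which is part of \Cref{condition: param}.
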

\begin{proof}
We first derive:
\begin{align*}
    &\left| \frac{1}{n} \sum_{i: \mu_1 \in X^{(i)}} g_i^{(t)} y_i \sum_{l=1}^L \sum_{j_2 \neq j_1} a_{j_2} \inprod{w_{j_1}^{(t)}, w_{j_2}^{(t)}} p_{q\leftarrow l, k\leftarrow \mu_1}^{(t, i)} \right| \\
    &\leq \left| \frac{1}{n} \sum_{i: \mu_1 \in X^{(i)}} g_i^{(t)} y_i \sum_{l=1}^L \sum_{j_2 \neq j_1} a_{j_2} \inprod{w_{j_1}^{(t)}, w_{j_2}^{(t)}} \right| \cdot \max_{i,l} p_{q\leftarrow l, k\leftarrow \mu_1}^{(t, i)} \\
    &\leq \underbrace{\left| \frac{1}{n} \sum_{i: \mu_1 \in X^{(i)}} g_i^{(t)} y_i \sum_{l=1}^L \sum_{j_2 \neq j_1} a_{j_2} \inprod{w_{j_1}^{(0)}, w_{j_2}^{(0)}} \right| \cdot \max_{i,l} p_{q\leftarrow l, k\leftarrow \mu_1}^{(t, i)}}_{(1)} \\
    &\quad + \underbrace{\left| \frac{1}{n} \sum_{i: \mu_1 \in X^{(i)}} g_i^{(t)} y_i \sum_{l=1}^L \sum_{j_2 \neq j_1} a_{j_2} \left( \inprod{w_{j_1}^{(t)}, w_{j_2}^{(t)}} - \inprod{w_{j_1}^{(0)}, w_{j_2}^{(0)}} \right) \right| \cdot \max_{i,l} p_{q\leftarrow l, k\leftarrow \mu_1}^{(t, i)}}_{(2)} .
\end{align*}
For term $(1)$, by \Cref{prop: init_perturb_w_W_V_signal}, we have
\begin{align*}
    \left| \frac{1}{n} \sum_{i: \mu_1 \in X^{(i)}} g_i^{(t)} y_i \sum_{l=1}^L \sum_{j_2 \neq j_1} a_{j_2} \inprod{w_{j_1}^{(0)}, w_{j_2}^{(0)}} \right| \leq \frac{|\{i:\ \mu_1 \in X^{(i)} \}|}{n} \tilde{O}\left( \sigma_1^2 \sqrt{mm_1} L \right).
\end{align*}
For term $(2)$, note that
\begin{align*}
    & \left| \frac{1}{n} \sum_{i: \mu_1 \in X^{(i)}} g_i^{(t)} y_i \sum_{l=1}^L \sum_{j_2 \neq j_1} a_{j_2} \left( \inprod{w_{j_1}^{(t)}, w_{j_2}^{(t)}} - \inprod{w_{j_1}^{(0)}, w_{j_2}^{(0)}} \right) \right| \\
    &{\leq} \frac{|\{i:\mu_1 \in X^{(i)}\}|}{n} L m_1 \cdot O\left( \frac{1}{m} \right) ,
\end{align*}
where the inequality is by \Cref{lemma: W_W_V_init}, \Cref{prop: neuron_correlation_change_stage_1}.

Finally, since $p_{q\leftarrow l, k\leftarrow \mu_1}^{(t,i)} \leq \frac{1}{L} + \frac{1}{L^2} + R_P$, combining the upper bound for both terms $(1)$ and $(2)$, we have
\begin{align*}
    & \left| \frac{\alpha}{n} \sum_{i: \mu_1 \in X^{(i)}} g_i^{(t)} y_i \sum_{l=1}^L \sum_{j_2 \neq j_1} a_{j_2} \dot{\sigma}_{i,l,j_2}^{(t)} \inprod{w_{j_1}^{(t)}, w_{j_2}^{(t)}} p_{q\leftarrow l, k\leftarrow \mu_1}^{(t, i)} \right| \\
    &\qquad \leq \alpha \frac{|\{i: \mu_1 \in X^{(i)}\}|}{n} \cdot \tilde{O}\left( \sigma_1^2 \sqrt{mm_1} L \right) .
\end{align*}
\end{proof}

\begin{proposition}[Neuron correlation change, Phase 1]\label{prop: neuron_correlation_change_stage_1}
% \begin{align*}
%     & \inprod{w_{j_1}^{(t+1)}, w_{j_2}^{(t+1)}} - \inprod{w_{j_1}^{(t)}, w_{j_2}^{(t)}} \\
%     &= \frac{\alpha}{n} \sum_{i'=1}^n g_{i'}^{(t)} y_{i'} \sum_{l'=1}^L a_{j_2} \dot{\sigma}_{i',l',j_2}^{(t)} \left( w_{j_1}^{(0) \top} V^{(0,i')} p^{(0,i')}_{l'} + (w_{j_1}^{(t) \top} V^{(t,i')} p^{(t,i')}_{l'} - w_{j_1}^{(0) \top} V^{(0,i')} p^{(0,i')}_{l'}) \right) \\
%     &\quad + \frac{\alpha}{n} \sum_{i'=1}^n g_{i'}^{(t)} y_{i'} \sum_{l'=1}^L a_{j_1} \dot{\sigma}_{i',l',j_1}^{(t)} \left(  w_{j_2}^{(0) \top} V^{(0,i')} p^{(0,i')}_{l'} + (w_{j_2}^{(t) \top} V^{(t,i')} p^{(t,i')}_{l'} - w_{j_2}^{(0) \top} V^{(0,i')} p^{(0,i')}_{l'}) \right) \\
%     &\quad + \alpha^2 \inprod{\nabla_{w_{j_1}} \hat{L}^{(t)}, \nabla_{w_{j_2}} \hat{L}^{(t)}} 
% \end{align*}
For $t \leq T_1$, for all $j_1, j_2 \in [m_1]$, we have
\begin{align*}
    \left| \frac{\partial \inprod{w_{j_1}^{(t)}, w_{j_2}^{(t)}}}{\partial t} \right| \leq 2 L \left( \sigma_1 \sigma_0 \sqrt{\frac{m}{L} \log \frac{m_1 n L}{\delta}} + \tilde{O}(L R_P \sigma_0 \sigma_1 \sqrt{m}) +4R/L \right),
\end{align*}
and thus,
\begin{align*}
    \left| \inprod{w_{j_1}^{(t)}, w_{j_2}^{(t)}} - \inprod{w_{j_1}^{(0)}, w_{j_2}^{(0)}} \right| \leq t 2 L \left( \sigma_1 \sigma_0 \sqrt{\frac{m}{L} \log \frac{m_1 n L}{\delta}} + \tilde{O}(L R_P \sigma_0 \sigma_1 \sqrt{m}) +4R/L \right). %+ T_1 \alpha^2 L^2 \sigma_0^2 m
\end{align*}
\end{proposition}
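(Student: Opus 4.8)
The plan is to read off the ODE for the neuron correlation from \Cref{lemma: neuron_correlation_update} and then bound its right-hand side termwise, exploiting the Phase-1 control already established in \Cref{lemma: activation_perturbation}. Recall that \Cref{lemma: neuron_correlation_update} gives
\[
  \frac{\partial \inprod{w_{j_1}^{(t)}, w_{j_2}^{(t)}}}{\partial t}
  = \frac{1}{n} \sum_{i=1}^n g_i^{(t)} y_i \sum_{l=1}^L a_{j_2}\, w_{j_1}^{(t)\top} V^{(t,i)} p_l^{(t,i)}
  + \frac{1}{n} \sum_{i=1}^n g_i^{(t)} y_i \sum_{l=1}^L a_{j_1}\, w_{j_2}^{(t)\top} V^{(t,i)} p_l^{(t,i)}.
\]
Since $|a_j|=1$, $|y_i|=1$, and $|g_i^{(t)}| = |g(y_i F_i^{(t)})| \le 1$, the triangle inequality reduces everything to controlling $\bigl| w_j^{(t)\top} V^{(t,i)} p_l^{(t,i)} \bigr|$ uniformly over $j\in[m_1]$, $i\in[n]$, $l\in[L]$, and $t\le T_1$.

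The key step is the decomposition
\[
  \bigl| w_j^{(t)\top} V^{(t,i)} p_l^{(t,i)} \bigr|
  \le \bigl| w_j^{(0)\top} V^{(0,i)} p_l^{(0,i)} \bigr|
  + \bigl| w_j^{(t)\top} V^{(t,i)} p_l^{(t,i)} - w_j^{(0)\top} V^{(0,i)} p_l^{(0,i)} \bigr|.
\]
The first term is at most $\sigma_1 \sigma_0 \sqrt{(m/L)\log(m_1 n L/\delta)}$ by \Cref{corollary: init_W_W_V_sample}, and the second is at most $\tilde O(L R_P \sigma_0 \sigma_1 \sqrt{m}) + 4R/L$ by \Cref{lemma: activation_perturbation}, which holds for all $t \le T_1$ because $T_1$ is the stopping time of \Cref{def: first_stage}. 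Plugging this uniform bound back in, the $\frac1n\sum_{i=1}^n$ contributes a factor $1$, the $\sum_{l=1}^L$ a factor $L$, and the two summands a factor $2$, yielding exactly
\[
  \left| \frac{\partial \inprod{w_{j_1}^{(t)}, w_{j_2}^{(t)}}}{\partial t} \right|
  \le 2L\left( \sigma_1 \sigma_0 \sqrt{\tfrac{m}{L}\log\tfrac{m_1 n L}{\delta}} + \tilde O(L R_P \sigma_0 \sigma_1 \sqrt{m}) + 4R/L \right).
\]
The second displayed inequality of the statement then follows immediately by integrating this bound over $[0,t]$ and using $\inprod{w_{j_1}^{(0)},w_{j_2}^{(0)}}$ as the initial condition.

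I do not expect a serious obstacle here: the argument is a routine Gr\"onwall-free integration once the pointwise bound on $w_j^{(t)\top} V^{(t,i)} p_l^{(t,i)}$ is in hand. The only point requiring care is the logical ordering of the bootstrap: \Cref{lemma: activation_perturbation} is invoked only through the defining inequalities of Phase~1 in \Cref{def: first_stage}, not through this proposition itself, so there is no circularity, and this proposition is in turn the input to the Phase-1 consistency check in \Cref{lemma: neuron_correlation_perturbation_stage_1}. One should also double-check that the parameter regime of \Cref{condition: param} makes the right-hand side $o(1/m)$ over the window $t\le T_1 = O(1/(\sigma_1^2 m m_1))$, which is what downstream arguments require; this is a direct substitution of $\sigma_0, \sigma_1, R, R_P$ and involves no new ideas.
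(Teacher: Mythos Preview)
Your proposal is correct and matches the paper's proof essentially line for line: the paper also starts from \Cref{lemma: neuron_correlation_update}, splits each $w_j^{(t)\top} V^{(t,i)} p_l^{(t,i)}$ into its initialization value plus the perturbation, invokes \Cref{corollary: init_W_W_V_sample} and \Cref{lemma: activation_perturbation} for the two pieces, and then sums over $l$ and integrates. Your remark about the absence of circularity in the bootstrap is accurate and is the only subtlety in the argument.
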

\begin{proof}
By the gradient flow update in \Cref{lemma: neuron_correlation_update}, we have
\begin{align*}
    \frac{\partial \inprod{w_{j_1}^{(t)}, w_{j_2}^{(t)}}}{\partial t} &= \frac{1}{n} \sum_{i'=1}^n g_{i'}^{(t)} y_{i'} \sum_{l'=1}^L a_{j_2} w_{j_1}^{(t) \top} V^{(t,i')} p^{(t,i')}_{l'} + \frac{1}{n} \sum_{i'=1}^n g_{i'}^{(t)} y_{i'} \sum_{l'=1}^L a_{j_1} w_{j_2}^{(t) \top} V^{(t,i')} p^{(t,i')}_{l'} \\
    &= \frac{1}{n} \sum_{i'=1}^n g_{i'}^{(t)} y_{i'} \sum_{l'=1}^L a_{j_2} \left( w_{j_1}^{(0) \top} V^{(0,i')} p^{(0,i')}_{l'} + (w_{j_1}^{(t) \top} V^{(t,i')} p^{(t,i')}_{l'} - w_{j_1}^{(0) \top} V^{(0,i')} p^{(0,i')}_{l'}) \right) \\
    &\quad + \frac{1}{n} \sum_{i'=1}^n g_{i'}^{(t)} y_{i'} \sum_{l'=1}^L a_{j_1} \left(  w_{j_2}^{(0) \top} V^{(0,i')} p^{(0,i')}_{l'} + (w_{j_2}^{(t) \top} V^{(t,i')} p^{(t,i')}_{l'} - w_{j_2}^{(0) \top} V^{(0,i')} p^{(0,i')}_{l'}) \right). 
\end{align*}
Now, since by \Cref{lemma: activation_perturbation} we have 
\begin{align*}
    \left| w_j^{(t)} W_V^{(t)} X^{(i)} p_l^{(t,i)} - w_j^{(0)} W_V^{(0)} X^{(i)} p_l^{(0,i)} \right| \leq \tilde{O}(L R_P \sigma_0 \sigma_1 \sqrt{m}) +4R/L,
\end{align*}
by \Cref{def: first_stage} and \Cref{corollary: init_W_W_V_sample}, we have 
\begin{align*}
    & \left| \frac{1}{n} \sum_{i'=1}^n g_{i'}^{(t)} y_{i'} \sum_{l'=1}^L a_{j_2} \left( w_{j_1}^{(0) \top} V^{(0,i')} p^{(0,i')}_{l'} + (w_{j_1}^{(t) \top} V^{(t,i')} p^{(t,i')}_{l'} - w_{j_1}^{(0) \top} V^{(0,i')} p^{(0,i')}_{l'}) \right) \right| \\
    &\leq L \left( \sigma_1 \sigma_0 \sqrt{\frac{m}{L} \log \frac{m_1 n L}{\delta}} + \tilde{O}(L R_P \sigma_0 \sigma_1 \sqrt{m}) +4R/L \right) \\
    & \left| \frac{1}{n} \sum_{i'=1}^n g_{i'}^{(t)} y_{i'} \sum_{l'=1}^L a_{j_1} \left(  w_{j_2}^{(0) \top} V^{(0,i')} p^{(0,i')}_{l'} + (w_{j_2}^{(t) \top} V^{(t,i')} p^{(t,i')}_{l'} - w_{j_2}^{(0) \top} V^{(0,i')} p^{(0,i')}_{l'}) \right) \right| \\
    &\leq  L \left( \sigma_1 \sigma_0 \sqrt{\frac{m}{L} \log \frac{m_1 n L}{\delta}} + \tilde{O}(L R_P \sigma_0 \sigma_1 \sqrt{m}) +4R/L \right). 
\end{align*}
Thus, 
\begin{align*}
    &\left| \inprod{w_{j_1}^{(t)}, w_{j_2}^{(t)}} - \inprod{w_{j_1}^{(0)}, w_{j_2}^{(0)}} \right| \\
    &\leq \int_{\tau=0}^{t} \left| \frac{\partial \inprod{w_{j_1}^{(\tau)}, w_{j_2}^{(\tau)}}}{\partial \tau} \right| \leq t 2  L \left( \sigma_1 \sigma_0 \sqrt{\frac{m}{L} \log \frac{m_1 n L}{\delta}} + \tilde{O}(L R_P \sigma_0 \sigma_1 \sqrt{m}) +4R/L \right).
\end{align*}
\end{proof}

\begin{corollary}[Neuron Norm Change, Phase 1]\label{lemma: neuron_norm_change_stage_1}
For all $t \leq T_1$ and all $j \in [m]$, we have
\begin{align*}
    \left| \norm{w_j^{(t)}}_2^2 - \norm{w_j^{(0)}}_2^2 \right| \leq t 2L (\sigma_1 \sigma_0 \sqrt{\frac{m}{L} \log \frac{m_1 n L}{\delta}} + \tilde{O}(L R_P \sigma_0 \sigma_1 \sqrt{m}) +4R/L).
\end{align*}
\end{corollary}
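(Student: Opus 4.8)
The plan is to observe that the squared neuron norm is simply the diagonal case of the neuron--neuron correlation, i.e.\ $\norm{w_j^{(t)}}_2^2 = \inprod{w_j^{(t)}, w_j^{(t)}}$, and then invoke the bound already established in \Cref{prop: neuron_correlation_change_stage_1} with $j_1 = j_2 = j$. Concretely, \Cref{lemma: neuron_correlation_update} specialized to $j_1 = j_2 = j$ gives
\begin{align*}
    \frac{\partial \norm{w_j^{(t)}}_2^2}{\partial t} = \frac{2}{n} \sum_{i=1}^n g_i^{(t)} y_i \sum_{l=1}^L a_j\, w_j^{(t)\top} V^{(t,i)} p_l^{(t,i)},
\end{align*}
so the factor of $2$ in the stated bound comes out exactly as in \Cref{prop: neuron_correlation_change_stage_1}. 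The remaining control of $|w_j^{(t)\top} V^{(t,i)} p_l^{(t,i)}|$ during Phase 1 is precisely what \Cref{corollary: init_W_W_V_sample} and \Cref{lemma: activation_perturbation} provide: the first bounds the initial value by $\sigma_1\sigma_0\sqrt{(m/L)\log(m_1 nL/\delta)}$, and the second bounds the drift from initialization by $\tilde{O}(LR_P\sigma_0\sigma_1\sqrt{m}) + 4R/L$.

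The execution is then a one-line integration: for $t \le T_1$,
\begin{align*}
    \left| \norm{w_j^{(t)}}_2^2 - \norm{w_j^{(0)}}_2^2 \right| \le \int_0^t \left| \frac{\partial \norm{w_j^{(\tau)}}_2^2}{\partial \tau} \right| d\tau \le t\, 2L\left( \sigma_1 \sigma_0 \sqrt{\tfrac{m}{L} \log \tfrac{m_1 n L}{\delta}} + \tilde{O}(L R_P \sigma_0 \sigma_1 \sqrt{m}) + 4R/L \right),
\end{align*}
which is exactly the claimed inequality. There is essentially no new obstacle here: all the work has been front-loaded into \Cref{prop: neuron_correlation_change_stage_1} (whose proof in turn relies on \Cref{def: first_stage}, \Cref{corollary: init_W_W_V_sample}, and \Cref{lemma: activation_perturbation}), and the corollary is a direct specialization to $j_1 = j_2$. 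If one wanted to be fully self-contained rather than citing the proposition, the only mildly delicate point would be to double-check that the averaging over $i$ and summing over $l$ in the gradient-flow expression does not introduce cancellation-sensitivity — but since we are only taking absolute values and bounding $|g_i^{(t)}| \le 1$ and $\sum_l p_{l,\cdot}$ trivially, the crude triangle-inequality bound suffices and matches the target.
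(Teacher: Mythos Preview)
Your proposal is correct and matches the paper's approach exactly: the paper states this result as an immediate corollary of \Cref{prop: neuron_correlation_change_stage_1} with no separate proof, and your specialization $j_1 = j_2 = j$ together with the one-line integration is precisely how the corollary follows.
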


\subsection{Neuron Weights Align with Signal Value}
\begin{theorem}[Signal correlation growth, phase 1]\label{thm: signal_growth_per_step}
For $t \leq T_1$, for $\mu \in \{\mu_1, \mu_2\}$,  
\begin{align*}
    \frac{\partial a_j w_j^{(t)} W_V^{(t)} \mu}{\partial t} = \frac{1}{n} \Bigg( \sum_{\substack{i: \mu \in X^{(i)}, \\ y_i=1}} g_i^{(t)} \sum_{l=1}^L p_{q \leftarrow l, k\leftarrow \mu}^{(t,i)} - \sum_{\substack{i: \mu \in X^{(i)}, \\ y_i=-1}} g_i^{(t)} \sum_{l=1}^L p_{q \leftarrow l, k\leftarrow \mu}^{(t,i)} \Bigg) \Theta( (\sigma_0^2 + \sigma_1^2) m ) + \eps
\end{align*}
where
\begin{align*}
    |\eps| \leq \tilde{O}(L \sigma_0^2 \sqrt{m} + \sigma_1^2 \sqrt{mm_1}).
\end{align*}
\end{theorem}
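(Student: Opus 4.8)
The plan is to repeat, at a general time $t \le T_1$, exactly the decomposition carried out in the proof of \Cref{lemma: first_step_signal} for $t=0$, and then to absorb the time-dependence using the Phase~1 perturbation estimates. First I would invoke \Cref{lemma: evo_mlp_1st}, which already writes $\frac{\partial w_j^{(t)\top} W_V^{(t)}\mu}{\partial t}$ as a main term $\frac{1}{n}\sum_{i:\mu\in X^{(i)}} g_i^{(t)} y_i \sum_{l=1}^L a_j\big(\|w_j^{(t)}\|_2^2 + \|v^{(t)}(\mu)\|_2^2\big) p_{q\leftarrow l, k\leftarrow\mu}^{(t,i)}$ plus the error $\eps = \eps_1 + \eps_2$, where $\eps_1$ collects the neuron cross-correlations $\sum_{j_2\neq j} a_{j_2}\langle w_j^{(t)}, w_{j_2}^{(t)}\rangle$ and $\eps_2$ collects the value cross-correlations $\langle p_{l_1,l_2}^{(t,i)} v_{l_2}^{(t,i)}, v^{(t)}(\mu)\rangle$ with $v_{l_2}^{(t,i)}\neq v^{(t)}(\mu)$. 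Multiplying through by $a_j$ and using $a_j^2 = 1$, the main term becomes $\big(\|w_j^{(t)}\|_2^2 + \|v^{(t)}(\mu)\|_2^2\big)\cdot\frac{1}{n}\sum_{i:\mu\in X^{(i)}} g_i^{(t)} y_i \sum_{l=1}^L p_{q\leftarrow l, k\leftarrow\mu}^{(t,i)}$, since the norm factor depends only on $j$ and $\mu$ and hence pulls out of the $i$-sum; splitting that $i$-sum into its $y_i=+1$ and $y_i=-1$ parts reproduces verbatim the difference of sums appearing in the statement.

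The second step is to show $\|w_j^{(t)}\|_2^2 + \|v^{(t)}(\mu)\|_2^2 = \Theta((\sigma_0^2+\sigma_1^2)m)$ uniformly on $[0,T_1]$, so that this factor is the claimed $\Theta((\sigma_0^2+\sigma_1^2)m)$ prefactor. At initialization \Cref{lemma: W_W_V_init} gives $\|w_j^{(0)}\|_2^2 = \Theta(\sigma_1^2 m)$ and $\|W_V^{(0)}\mu\|_2^2 = \Theta(\sigma_0^2 m)$; the changes over $[0,T_1]$ are bounded by \Cref{lemma: neuron_norm_change_stage_1} and \Cref{lemma: value_correlation_change_stage_1}, and under \Cref{condition: param} together with the cap $T_1 \le C_{T_1}/(\sigma_1^2 m m_1)$ from \Cref{def: first_stage} these increments are lower order (e.g.\ the neuron-norm drift is at most $t\cdot 2L(\sigma_1\sigma_0\sqrt{(m/L)\log(\cdot)} + \cdots) \ll \sigma_1^2 m$), so the coefficient stays within a constant factor of its initial scale throughout Phase~1.

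For the error term I would invoke the two perturbation lemmas directly: $|a_j\eps_1|$ is exactly the quantity bounded in \Cref{lemma: neuron_correlation_perturbation_stage_1}, which yields $|\eps_1| \le \frac{|\{i:\mu\in X^{(i)}\}|}{n}\tilde{O}(\sigma_1^2\sqrt{mm_1}) \le \tilde{O}(\sigma_1^2\sqrt{mm_1})$, and $|a_j\eps_2|$ is exactly the quantity bounded in \Cref{corollary: value_correlation_perturbation_stage_1}, which yields $|\eps_2| \le L\cdot\tilde{O}(\sigma_0^2\sqrt{m})$. Adding, $|\eps| \le \tilde{O}(L\sigma_0^2\sqrt{m} + \sigma_1^2\sqrt{mm_1})$, which is precisely the stated bound.

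Essentially all of the analytic content is already in the cited Phase~1 lemmas, so the proof is an assembly; the one point requiring care is that every estimate above must hold \emph{uniformly} for $t \le T_1$, and the suppression of $\eps$ relative to the main term must be consistent with the definition of $T_1$. This is the same bootstrap/continuity mechanism underlying all the Phase~1 estimates: the invariants in \Cref{def: first_stage} hold on a closed interval, their consequences (the perturbation bounds) keep the drifts small, and smallness in turn keeps the invariants from being violated. I expect this self-referential bookkeeping, rather than any individual inequality, to be the only subtle aspect, and it is handled exactly as in the $t=0$ analysis combined with the integrated perturbation bounds.
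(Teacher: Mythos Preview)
Your proposal is correct and mirrors the paper's proof essentially step for step: the paper also invokes \Cref{lemma: evo_mlp_1st} to isolate the main term, uses \Cref{lemma: W_W_V_init}, \Cref{lemma: neuron_norm_change_stage_1}, and \Cref{lemma: value_correlation_change_stage_1} to pin $\|w_j^{(t)}\|_2^2 + \|v^{(t)}(\mu)\|_2^2 = \Theta((\sigma_0^2+\sigma_1^2)m)$, and bounds $\eps$ via \Cref{corollary: value_correlation_perturbation_stage_1} and \Cref{lemma: neuron_correlation_perturbation_stage_1} (it additionally name-checks \Cref{prop: init_perturb_w_W_V_signal}, but that is already absorbed into the latter lemma). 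Your remark on the bootstrap/continuity mechanism is also apt and matches how Phase~1 is organized.
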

\begin{proof}
We take $\mu = \mu_1$ and the proof is similar for $\mu = \mu_2$.
By \Cref{lemma: evo_mlp_1st}, we have
\begin{align*}
    & \frac{\partial a_j w_j^{(t)} W_V^{(t)} \mu}{\partial t} \\
    &= \frac{1}{n} \sum_{i: \mu_1 \in X^{(i)}, y_i=1} g_i^{(t)} \sum_{l=1}^L \left( \norm{w_{j}^{(t)}}_2^2 + \norm{v^{(t)}(\mu_1)}_2^2 \right) p_{q \leftarrow l, k\leftarrow \mu_1}^{(t,i)} \\
    &\quad - \frac{1}{n} \sum_{i: \mu_1 \in X^{(i)}, y_i=-1} g_i^{(t)} \sum_{l=1}^L \left( \norm{w_{j}^{(t)}}_2^2 + \norm{v^{(t)}(\mu_1)}_2^2 \right) p_{q \leftarrow l, k\leftarrow \mu_1}^{(t,i)} + \eps \\
    &= \left( \norm{w_{j}^{(t)}}_2^2 + \norm{v^{(t)}(\mu_1)}_2^2 \right) \frac{1}{n} \Bigg( \sum_{\substack{i: \mu_1 \in X^{(i)}, \\ y_i=1}} g_i^{(t)} \sum_{l=1}^L p_{q \leftarrow l, k\leftarrow \mu_1}^{(t,i)} - \sum_{\substack{i: \mu_1 \in X^{(i)}, \\ y_i=-1}} g_i^{(t)} \sum_{l=1}^L p_{q \leftarrow l, k\leftarrow \mu_1}^{(t,i)} \Bigg) + \eps 
\end{align*}
where
\begin{align*}
    \eps &= \frac{1}{n} \sum_{i: \mu_1 \in X^{(i)}} g_i^{(t)} y_i \sum_{l=1}^L \sum_{j_2 \neq j_1} a_{j_2} \inprod{w_{j_1}^{(t)}, w_{j_2}^{(t)}} p_{q\leftarrow l, k\leftarrow \mu_1}^{(t, i)} \\
    &\quad + \frac{1}{n} \sum_{i=1}^n g_{i}^{(t)} y_{i} \sum_{l_1=1}^L a_{j_1} \sum_{l_2 = 1}^L \inprod{p^{(t,i)}_{l_1, l_2}  v^{(t,i)}_{l_2}, v^{(t)} (\mu_1)} \mathbb{I}(v^{(t,i)}_{l_2} \neq v^{(t)} (\mu_1)) .
\end{align*}
We can now bound the magnitude of $\eps$ by \Cref{corollary: value_correlation_perturbation_stage_1}, \Cref{lemma: neuron_correlation_perturbation_stage_1} and \Cref{prop: init_perturb_w_W_V_signal} and obtain:
\begin{align*}
    |\eps| &\leq L \cdot \tilde{O}\left( \sigma_0^2 \sqrt{m} \right) + \tilde{O}(\sigma_1^2 \sqrt{m m_1 }).
\end{align*}
Finally, by \Cref{lemma: W_W_V_init} and \Cref{lemma: neuron_norm_change_stage_1}, we have $ \norm{w_j^{(t)}}_2^2 = \Theta(\sigma_1^2 m)$ and by \Cref{lemma: value_correlation_change_stage_1}, we have $\norm{v^{(t)}(\mu_1)}_2^2 = \Theta(\sigma_0^2 m)$.
The proof is completed. 
\end{proof}

\begin{theorem}[Random token growth, Phase 1]\label{thm: random_token_growth_per_step}
For $t \leq T_1$, for $\mu \in \mathcal{R}$, we have
\begin{align*}
    \frac{\partial a_j w_j^{(t)} W_V^{(t)} \mu}{\partial t} = O\left(\frac{1}{n} (\sigma_0^2 + \sigma_1^2) m \right) + \tilde{O}(\sigma_0^2 L \sqrt{m}).
\end{align*}
\end{theorem}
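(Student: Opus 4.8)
The plan is to mirror the argument for \Cref{thm: signal_growth_per_step}, exploiting the one crucial difference: by \Cref{assumption: perfect_proportion_diversity}(ii) a random token $\mu \in \mathcal{R}$ occurs in at most one training sample. I would begin from the gradient-flow identity of \Cref{lemma: evo_mlp_1st}, which (after multiplying by $a_j$ and using $a_j^2 = 1$) gives
\begin{align*}
\frac{\partial a_j w_{j}^{(t)} W_V^{(t)} \mu}{\partial t} = \frac{1}{n} \sum_{i:\ \mu \in X^{(i)}} g_i^{(t)} y_i \sum_{l=1}^L \left( \norm{w_j^{(t)}}_2^2 + \norm{v^{(t)}(\mu)}_2^2 \right) p_{q\leftarrow l, k\leftarrow \mu}^{(t,i)} + \eps_1 + \eps_2,
\end{align*}
where $\eps_1$ is the neuron cross-correlation term ($\sum_{j_2 \neq j_1} a_{j_2}\inprod{w_{j_1}^{(t)},w_{j_2}^{(t)}}$) and $\eps_2$ is the value cross-correlation term ($\sum_{l_2}\inprod{p^{(t,i)}_{l_1,l_2}v^{(t,i)}_{l_2}, v^{(t)}(\mu)}\mathbb{I}(v^{(t,i)}_{l_2}\neq v^{(t)}(\mu))$ summed over all $i$), exactly as in that lemma.

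For the main term I would plug in the Phase 1 norm estimates: $\norm{w_j^{(t)}}_2^2 = \Theta(\sigma_1^2 m)$ from \Cref{lemma: W_W_V_init} together with \Cref{lemma: neuron_norm_change_stage_1}, and $\norm{v^{(t)}(\mu)}_2^2 = \Theta(\sigma_0^2 m)$ from \Cref{lemma: value_correlation_change_stage_1}. Since there is at most one index $i$ with $\mu \in X^{(i)}$, using $|g_i^{(t)}| \le 1$, the softmax bound $p_{q\leftarrow l, k\leftarrow \mu}^{(t,i)} \le \tfrac{1}{L} + R_P = O(1/L)$ (from \Cref{corollary: p_init} and the definition of $R_P$ following \Cref{def: first_stage}), and summing over $l \in [L]$, the main term is $O\!\left(\tfrac{1}{n}(\sigma_0^2 + \sigma_1^2) m\right)$.

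For the error terms, $\eps_1$ is controlled by \Cref{lemma: neuron_correlation_perturbation_stage_1} (which, via \Cref{prop: init_perturb_w_W_V_signal}, holds for every $\mu \in \{\mu_i\}_{i=1}^d$, not just $\mu_1$): with the prefactor $|\{i : \mu \in X^{(i)}\}|/n = 1/n$, this gives $|\eps_1| \le \tilde{O}(\sigma_1^2 \sqrt{m m_1}/n)$, which is absorbed into the main term. The genuinely dominant correction is $\eps_2$, bounded by \Cref{corollary: value_correlation_perturbation_stage_1} by $L \cdot \tilde{O}(\sigma_0^2 \sqrt{m})$; note this sum ranges over \emph{all} samples, not only the one containing $\mu$, which is precisely why it is the leading error. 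Collecting the three bounds yields $O\!\left(\tfrac{1}{n}(\sigma_0^2 + \sigma_1^2)m\right) + \tilde{O}(\sigma_0^2 L \sqrt{m})$, as claimed. This is the exact analogue of the $t=0$ computation already done in \Cref{lemma: first_step_random_token_update}.

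The main obstacle is making the $\eps_2$ bound (and the norm estimate $\norm{v^{(t)}(\mu)}_2^2 = \Theta(\sigma_0^2 m)$) uniform over the whole interval $[0, T_1]$ rather than merely at initialization: this requires that the pairwise correlations $\inprod{v^{(t)}(\mu), v^{(t)}(\nu)}$ of value-transformed distinct tokens stay $\tilde{O}(\sigma_0^2 \sqrt{m})$ throughout Phase 1, which is exactly the content of \Cref{lemma: value_correlation_change_stage_1} and relies on $T_1 = O(1/(\sigma_1^2 m m_1))$ being short enough that the accumulated drift is negligible. Once those Phase 1 perturbation estimates are in hand, the remainder is a routine substitution into the initialization-time calculation.
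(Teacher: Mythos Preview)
Your proposal is correct and follows essentially the same approach as the paper: apply \Cref{lemma: evo_mlp_1st}, use that $\mu \in \mathcal{R}$ appears in at most one sample to get the $\tfrac{1}{n}(\sigma_0^2+\sigma_1^2)m$ main term, and bound $\eps_1$ via \Cref{lemma: neuron_correlation_perturbation_stage_1} and $\eps_2$ via \Cref{corollary: value_correlation_perturbation_stage_1}. Your additional remarks on why $\eps_2$ dominates and on the Phase~1 uniformity are accurate and match how the paper organizes the supporting lemmas.
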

\begin{proof}
Fix a $\mu \in \mathcal{R}$. 
By our \Cref{assumption: perfect_proportion_diversity}, $\mu$ appears at most once in the training data set. 
Now, assume $\mu$ is in the training set and let $i^\star$ be the index of the sample containing $\mu$. 
Applying \Cref{lemma: evo_mlp_1st} on the random token $\mu$, we have
\begin{align*}
    & \frac{\partial a_j w_j^{(t)} W_V^{(t)} \mu}{\partial t} = \frac{1}{n} g_{i^\star}^{(t)} y_{i^\star} \sum_{l=1}^L a_j \left( \norm{w_j^{(t)}}_2^2 + \norm{v^{(t)}(\mu)}_2^2 \right) p_{q\leftarrow l, k\leftarrow \mu}^{(t,{i^\star})} + \eps,
\end{align*}
where
\begin{align*}
    \eps &= \frac{1}{n} g_{i^\star}^{(t)} y_{i^\star} \sum_{l=1}^L \sum_{j_2 \neq j_1} a_{j_2} \inprod{w_{j_1}^{(t)}, w_{j_2}^{(t)}} p_{q\leftarrow l, k\leftarrow \mu}^{(t, {i^\star})} \\
    &\quad + \frac{1}{n} \sum_{i=1}^n g_{}^{(t)} y_{i} \sum_{l_1=1}^L a_{j_1} \sum_{l_2 = 1}^L \inprod{p^{(t,{i})}_{l_1, l_2}  v^{(t,{i})}_{l_2}, v^{(t)} (\mu)} \mathbb{I}(v^{(t,{i})}_{l_2} \neq v^{(t)} (\mu)) .
\end{align*}
We can now bound the magnitude of $\eps$ by \Cref{corollary: value_correlation_perturbation_stage_1} and \Cref{lemma: neuron_correlation_perturbation_stage_1} as follows:
\begin{align*}
    |\eps| &\leq L \cdot \tilde{O}\left( \sigma_0^2 \sqrt{m} \right) + \frac{1}{n} \tilde{O}(\sigma_1^2 \sqrt{m m_1 }) .
\end{align*}
Finally, by \Cref{lemma: W_W_V_init} and \Cref{lemma: neuron_norm_change_stage_1}, we have
\begin{align*}
    \frac{\partial a_j w_j^{(t)} W_V^{(t)} \mu}{\partial t} = O\left(\frac{1}{n} (\sigma_0^2 + \sigma_1^2) m \right) + \tilde{O}(\sigma_0^2 L \sqrt{m}).
\end{align*}
\end{proof}

\subsection{Alignment of Common Token}
\begin{lemma}[Initial Per Neuron Gradient of Common Token, same as \Cref{lemma: neuron_W_V_common_token_gradient_init_main_text}]\label{lemma: initial_common_token_per_neuron_update}
Let $F = \max_i F_i^{(0)}$. We have
\begin{align*}
    \left| \frac{\partial a_j w_j^{(0)} W_V^{(0)} \mu_3}{\partial t} \right| = \tilde{O}\left( \sigma_1^2 \sqrt{mm_1} + \sigma_0^2 \sqrt{mL} + (\sigma_0^2 + \sigma_1^2)m F \right).
\end{align*}
\end{lemma}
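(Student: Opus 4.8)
The plan is to apply Lemma~\ref{lemma: evo_mlp_1st} at $t=0$ with $\mu=\mu_3$ and exploit that, by the data model, $\mu_3$ occurs in \emph{every} training sample, so the index set $\{i:\mu_3\in X^{(i)}\}$ is all of $[n]$. In the form given by Lemma~\ref{lemma: evo_mlp_1st}, the gradient decomposes into a main term
\[
\frac{a_j}{n}\sum_{i=1}^n g_i^{(0)} y_i \bigl(\norm{w_j^{(0)}}_2^2 + \norm{v^{(0)}(\mu_3)}_2^2\bigr)\sum_{l=1}^L p_{q\leftarrow l,\,k\leftarrow \mu_3}^{(0,i)}
\]
plus the two error terms $\eps_1$ (neuron cross-correlation) and $\eps_2$ (value cross-correlation). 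The factor $\norm{w_j^{(0)}}_2^2 + \norm{v^{(0)}(\mu_3)}_2^2$ does not depend on $i$, so I pull it out of the sum; by Lemma~\ref{lemma: W_W_V_init} it equals $\Theta((\sigma_0^2+\sigma_1^2)m)$, and $|a_j|=1$.

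The core of the argument is bounding $\tfrac1n\sum_{i=1}^n g_i^{(0)}y_i\sum_l p_{q\leftarrow l,\,k\leftarrow\mu_3}^{(0,i)}$. First, Corollary~\ref{corollary: p_init} gives $\sum_l p_{q\leftarrow l,\,k\leftarrow\mu_3}^{(0,i)} = 1\pm\tilde{O}(1/m)$, so up to a $\tilde{O}(1/m)$ error I may replace the inner sum by $1$. Then I use that $g$ is $\tfrac14$-Lipschitz together with $|F_i^{(0)}|\le 0.01$ from Lemma~\ref{lemma: initial_network_output} to get $|g_i^{(0)}-\tfrac12| = |g(y_iF_i^{(0)})-g(0)|\le \tfrac14|F_i^{(0)}| \le \tfrac14 F$. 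Crucially, Assumption~\ref{assumption: perfect_proportion_diversity} gives label balance $\sum_i y_i = |I_1|-|I_2|-|I_3|-|I_4| = \tfrac{n}{2}-\tfrac{n}{2}=0$, so
\[
\Bigl|\tfrac1n\textstyle\sum_i g_i^{(0)}y_i\Bigr| = \Bigl|\tfrac1n\textstyle\sum_i (g_i^{(0)}-\tfrac12)y_i\Bigr| \le \tfrac14 F .
\]
Combining, $\bigl|\tfrac1n\sum_i g_i^{(0)}y_i\sum_l p^{(0,i)}_{\cdots}\bigr| \le O(F) + \tilde{O}(1/m)$, and multiplying by the $\Theta((\sigma_0^2+\sigma_1^2)m)$ prefactor bounds the main term by $O((\sigma_0^2+\sigma_1^2)mF) + \tilde{O}(\sigma_0^2+\sigma_1^2)$.

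Finally, I control $\eps_1$ and $\eps_2$ directly via Proposition~\ref{prop: init_perturb_w_W_V_signal}: since $|\{i:\mu_3\in X^{(i)}\}|/n = 1$, that proposition gives $|\eps_1|\le\tilde{O}(\sigma_1^2\sqrt{mm_1})$ and $|\eps_2|\le\tilde{O}(\sigma_0^2\sqrt{mL})$. Adding the three contributions and noting $\tilde{O}(\sigma_0^2+\sigma_1^2)$ is absorbed into $\tilde{O}(\sigma_1^2\sqrt{mm_1}+\sigma_0^2\sqrt{mL})$ (as $m,m_1,L\ge1$ by Condition~\ref{condition: param}) yields the claimed bound. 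The only genuinely delicate point is the cancellation step: unlike the target signals $\mu_1,\mu_2$, which appear under labels that are skewed so their $\pm$ contributions do not cancel, the common token $\mu_3$ appears with both labels in equal measure, so the leading $\tfrac12 y_i$ part of $g_i^{(0)}y_i$ vanishes on average and only a term of order the (tiny) initial output magnitude $F$ survives; making this precise via $\tfrac14$-Lipschitzness of $g$ and the exact label balance of Assumption~\ref{assumption: perfect_proportion_diversity} is the crux.
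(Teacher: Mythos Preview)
Your proposal is correct and follows essentially the same approach as the paper: decompose via Lemma~\ref{lemma: evo_mlp_1st}, bound the error terms $\eps_1,\eps_2$ with Proposition~\ref{prop: init_perturb_w_W_V_signal}, and observe that the main term is small because the label-balanced sum $\tfrac1n\sum_i g_i^{(0)}y_i$ nearly cancels (the paper writes out the $y_i=\pm1$ sums explicitly with $g_i^{(0)}\in[\tfrac12-F,\tfrac12+F]$, while you phrase the same cancellation via the $\tfrac14$-Lipschitzness of $g$). The only cosmetic difference is that you explicitly note the absorption of the $\tilde O(\sigma_0^2+\sigma_1^2)$ leftover into the other terms, which the paper leaves implicit.
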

\begin{proof}
Following from the proof of \Cref{lemma: first_step_signal}, we have 
\begin{align*}
    \frac{\partial a_{j_1} w_{j_1}^{(0)} W_V^{(0)} \mu_3}{\partial t} 
    &= \frac{1}{n} \sum_{i: \mu_3 \in X^{(i)}} g_i^{(0)} y_i \sum_{l=1}^L \left( \norm{w_{j_1}^{(0)}}_2^2 + \norm{v^{(0)}(\mu_1)}_2^2 \right) p_{q \leftarrow l, k\leftarrow \mu_3}^{(0,i)} \\
    &\quad + a_{j_1} \underbrace{\frac{1}{n} \sum_{i: \mu_3 \in X^{(i)}} g_i^{(0)} y_i \sum_{l=1}^L \sum_{j_2 \neq j_1} a_{j_2} \inprod{w_{j_1}^{(0)}, w_{j_2}^{(0)}} p_{q\leftarrow l, k\leftarrow \mu_3}^{(0, i)}}_{\eps_1} \\
    &\quad + a_{j_1} \underbrace{\frac{1}{n} \sum_{i=1}^n g_{i}^{(0)} y_{i} \sum_{l_1=1}^L a_{j_1} \sum_{l_2 = 1}^L \inprod{p^{(0,i)}_{l_1, l_2}  v^{(0,i)}_{l_2}, v^{(0)} (\mu_3)} \mathbb{I}(v^{(0,i)}_{l_2} \neq v^{(0)} (\mu_3))}_{\eps_2}.
\end{align*}
For the first term, we have
\begin{align*}
    & \left| \frac{1}{n} \sum_{i: \mu_3 \in X^{(i)}} g_i^{(0)} y_i \sum_{l=1}^L \left( \norm{w_{j_1}^{(0)}}_2^2 + \norm{v^{(0)}(\mu_3)}_2^2 \right) p_{q \leftarrow l, k\leftarrow \mu_3}^{(0,i)} \right| \\
    &= \Bigg| \frac{1}{n} \sum_{y_i = 1} g_i^{(0)} \sum_{l=1}^L \left( \norm{w_{j_1}^{(0)}}_2^2 + \norm{v^{(0)}(\mu_3)}_2^2 \right) p_{q \leftarrow l, k\leftarrow \mu_3}^{(0,i)} \\
    &\quad - \frac{1}{n} \sum_{y_i = -1} g_i^{(0)} \sum_{l=1}^L \left( \norm{w_{j_1}^{(0)}}_2^2 + \norm{v^{(0)}(\mu_3)}_2^2 \right) p_{q \leftarrow l, k\leftarrow \mu_3}^{(0,i)} \Bigg| \\
    &= \frac{1}{n} \left( \norm{w_{j_1}^{(0)}}_2^2 + \norm{v^{(0)}(\mu_3)}_2^2 \right) \left| \sum_{y_i = 1} g_i^{(0)} \sum_{l=1}^L p_{q \leftarrow l, k\leftarrow \mu_3}^{(0,i)} - \sum_{y_i = -1} g_i^{(0)} \sum_{l=1}^L p_{q \leftarrow l, k\leftarrow \mu_3}^{(0,i)} \right|.
\end{align*}
By \Cref{lemma: initial_network_output}, and \Cref{corollary: p_init}, we have
\begin{align*}
    &\frac{1}{n} \left| \sum_{y_i = 1} g_i^{(0)} \sum_{l=1}^L p_{q \leftarrow l, k\leftarrow \mu_3}^{(0,i)} - \sum_{y_i = -1} g_i^{(0)} \sum_{l=1}^L p_{q \leftarrow l, k\leftarrow \mu_3}^{(0,i)} \right| \\
    &\leq \frac{1}{n} \left(\sum_{y_i=1} \left( \frac{1}{2} + F \right) \sum_{l=1}^L \left( \frac{1}{L} + \frac{1}{L^2} \right) - \sum_{y_i = -1} \left( \frac{1}{2} - F \right) \sum_{l=1}^L \left( \frac{1}{L} - \frac{1}{L^2} \right) \right) \\
    &\leq \left( \frac{1}{2} + F \right) \left( \frac{1}{L} + \frac{1}{Lm} \right) \frac{L}{2} - \left( \frac{1}{2} - F \right) \left( \frac{1}{L} - \frac{1}{Lm} \right) \frac{L}{2} \\
    &\leq 2F + O(1/m).
\end{align*}
By \Cref{lemma: W_W_V_init}, we have $\norm{w_{j_1}^{(0)}}_2^2 = \Theta(\sigma_1^2 m)$ and $\norm{v^{(0)}(\mu_3)}_2^2 = \Theta(\sigma_0^2 m)$, and thus,
\begin{align*}
    & \left| \frac{1}{n} \sum_{i: \mu_3 \in X^{(i)}} g_i^{(0)} y_i \sum_{l=1}^L a_{j_1} \left( \norm{w_{j_1}^{(0)}}_2^2 + \norm{v^{(0)}(\mu_3)}_2^2 \right) p_{q \leftarrow l, k\leftarrow \mu_3}^{(0,i)} \right| \leq \tilde{O}(\alpha (\sigma_0^2 + \sigma_1^2) m F).
\end{align*}
Further, by \Cref{prop: init_perturb_w_W_V_signal}, we have
\begin{align*}
    |\eps_1| &= \left| \frac{1}{n} \sum_{i: \mu_3 \in X^{(i)}} g_i^{(0)} y_i \sum_{l=1}^L \sum_{j_2 \neq j_1} a_{j_2} \inprod{w_{j_1}^{(0)}, w_{j_2}^{(0)}} p_{q\leftarrow l, k\leftarrow \mu_3}^{(0, i)} \right| \\
    &\leq \sigma_1^2 \sqrt{m_1} m \left( \sqrt{\frac{4}{m} \log \frac{2 m_1^2}{\delta}} + \frac{4}{m} \log \frac{2m_1^2}{\delta} \right) \sqrt{\log \frac{m_1}{\delta}}.\\
    |\eps_2| &= \left| \frac{1}{n} \sum_{i=1}^n g_{i}^{(0)} y_{i} \sum_{l_1=1}^L a_{j_1} \sum_{l_2 = 1}^L \inprod{p^{(0,i)}_{l_1, l_2}  v^{(0,i)}_{l_2}, v^{(0)} (\mu_3)} \mathbb{I}(v^{(0,i)}_{l_2} \neq v^{(0)} (\mu_3)) \right| \\
    &\leq \sigma_0^2 m \sqrt{L} \left( \sqrt{\frac{4}{m} \log \frac{2nL}{\delta}} + \frac{4}{m} \log \frac{2nL}{\delta} \right).
\end{align*}
Finally, we have
\begin{align*}
    &\left| \frac{\partial a_{j_1} w_{j_1}^{(0)} W_V^{(0)} \mu_3}{\partial t} \right| \leq \tilde{O}\left( \sigma_1^2 \sqrt{mm_1} + \sigma_0^2 \sqrt{mL} + (\sigma_0^2 + \sigma_1^2)mF \right).
\end{align*}
\end{proof}

\begin{lemma}[Per neuron gradient of common token]\label{lemma: per_neuron_gradient_common_token_upper_bound_stage_1}
For $t \leq T_1$, we have
\begin{align*}
    \left| \frac{\partial a_j w_j^{(0)} W_V^{(0)} \mu_3}{\partial t} \right| \leq \tilde{O}(\sigma_1^2 m)
\end{align*}
\end{lemma}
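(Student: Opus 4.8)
The plan is to specialize the gradient-flow identity of \Cref{lemma: evo_mlp_1st} to $\mu = \mu_3$ and then bound each resulting piece by $\tilde O(\sigma_1^2 m)$. Since the common token occurs in every training entry, $\{i : \mu_3 \in X^{(i)}\} = [n]$, so (using $a_j^2 = 1$) one has
\begin{align*}
    \frac{\partial a_j w_j^{(t)\top} W_V^{(t)} \mu_3}{\partial t} = \frac{1}{n} \sum_{i=1}^n g_i^{(t)} y_i \sum_{l=1}^L \left( \norm{w_j^{(t)}}_2^2 + \norm{v^{(t)}(\mu_3)}_2^2 \right) p_{q \leftarrow l, k\leftarrow \mu_3}^{(t,i)} + a_j(\eps_1 + \eps_2),
\end{align*}
where $\eps_1$ is the off-diagonal neuron-correlation term and $\eps_2$ the value-correlation term appearing in \Cref{lemma: evo_mlp_1st}.

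For the leading term I would argue as follows. By \Cref{lemma: W_W_V_init} together with \Cref{lemma: neuron_norm_change_stage_1} we have $\norm{w_j^{(t)}}_2^2 = \Theta(\sigma_1^2 m)$ for all $t \le T_1$, and by \Cref{lemma: value_correlation_change_stage_1} we have $\norm{v^{(t)}(\mu_3)}_2^2 = \Theta(\sigma_0^2 m)$, so the prefactor is $\Theta((\sigma_0^2 + \sigma_1^2)m) = \Theta(\sigma_1^2 m)$ under \Cref{condition: param}. Each softmax entry obeys $p_{q\leftarrow l, k\leftarrow \mu_3}^{(t,i)} = 1/L \pm R_P$, and $L R_P = O(1/\sqrt m + L^2/m) = O(1)$ by \Cref{condition: param}, so $\sum_{l=1}^L p_{q\leftarrow l, k\leftarrow \mu_3}^{(t,i)} = \Theta(1)$. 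Hence the leading term has magnitude at most $O(\sigma_1^2 m) \cdot \bigl| \tfrac1n \sum_{i=1}^n g_i^{(t)} y_i \bigr| \le O(\sigma_1^2 m)$, where I use only the trivial bound $\bigl| \tfrac1n \sum_i g_i^{(t)} y_i \bigr| \le \tfrac1n \sum_i g_i^{(t)} \le 1$ (as $g_i^{(t)} \in (0,1)$). In contrast to \Cref{lemma: initial_common_token_per_neuron_update}, where the near-cancellation $\sum_i g_i^{(0)} y_i \approx 0$ produced a much sharper estimate, here this crude bound is all that is needed and it remains valid throughout Phase~1.

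For the error terms I would invoke the Phase-1 estimates already in hand. \Cref{lemma: neuron_correlation_perturbation_stage_1}, whose proof uses only the softmax bound $p_{q\leftarrow l, k\leftarrow \mu}^{(t,i)} \le 1/L + 1/L^2 + R_P = O(1/L)$ and hence applies verbatim with $\mu_3$ in place of $\mu_1$, gives $|\eps_1| \le \frac{|\{i:\mu_3 \in X^{(i)}\}|}{n}\tilde O(\sigma_1^2\sqrt{mm_1}) = \tilde O(\sigma_1^2\sqrt{mm_1}) \le \tilde O(\sigma_1^2 m)$ since $m \ge \tilde\Omega(m_1)$. Similarly \Cref{corollary: value_correlation_perturbation_stage_1}, which relies only on \Cref{lemma: value_correlation_change_stage_1} and \Cref{lemma: W_W_V_init} (both valid for $\mu_3$), gives $|\eps_2| \le L\cdot\tilde O(\sigma_0^2\sqrt m) = \tilde O(\sigma_0^2 L\sqrt m) = \tilde O(1/\sqrt m)$, which is $o(1) \le \sigma_1^2 m$ because $\sigma_0^2 L = \tilde\Theta(1/m)$ while $\sigma_1^2 m = \tilde\Theta(m/m_1) = \tilde\Omega(1)$. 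Summing the three contributions then yields $\bigl| \partial (a_j w_j^{(t)} W_V^{(t)} \mu_3)/\partial t \bigr| \le \tilde O(\sigma_1^2 m)$ for all $t \le T_1$.

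I expect no genuine obstacle here: the statement is a loose uniform-in-$t$ upper bound rather than a sharp one, which is exactly why it coexists with the much finer initial-gradient estimate of \Cref{lemma: initial_common_token_per_neuron_update} and the ``balancing'' statement \Cref{lemma: gradient_balancing_condition_phase1_main_text}. The only thing requiring care is the bookkeeping: verifying that every auxiliary estimate used ($\norm{w_j^{(t)}}_2^2$, $\norm{v^{(t)}(\mu_3)}_2^2$, $R_P$, $\eps_1$, $\eps_2$) holds uniformly on $[0, T_1]$, which is precisely what the earlier Phase-1 subsections establish, and checking through \Cref{condition: param} that the two error contributions are dominated by $\sigma_1^2 m$.
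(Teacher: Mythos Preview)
Your proposal is correct and is essentially the same approach the paper has in mind: the paper's own proof simply says ``similar to that for \Cref{thm: signal_growth_per_step}'' and omits the details, and what you wrote is precisely the specialization of that argument to $\mu_3$, using the decomposition of \Cref{lemma: evo_mlp_1st}, the Phase-1 norm estimates, and the error bounds from \Cref{lemma: neuron_correlation_perturbation_stage_1} and \Cref{corollary: value_correlation_perturbation_stage_1}. Your observation that only the crude bound $\bigl|\tfrac1n\sum_i g_i^{(t)}y_i\bigr|\le 1$ is needed (rather than the cancellation used in \Cref{lemma: initial_common_token_per_neuron_update}) is exactly the point.
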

\begin{proof}
The proof is similar to that for \Cref{thm: signal_growth_per_step} and we omit it here. 
\end{proof}

\begin{definition}[sub-network]
Define the sub-network structure as
\begin{align*}
    G(\mu) = \sum_{j=1}^{m_1} a_j w_j W_V \mu.
\end{align*}
\end{definition}
We can compute the gradient of the sub-network as
\begin{align*}
    \frac{\partial G(\mu)}{\partial t} = \sum_{j=1}^{m_1} \frac{\partial a_j w_j^{(t)} W_V^{(t)} \mu}{\partial t} &= \sum_{j=1}^{m_1} a_j w_j^{(t)} \frac{\partial W_V^{(t)} \mu}{\partial t} + a_j \frac{\partial w_j^{(t)}}{\partial t} W_V^{(t)} \mu \\
    &= \frac{1}{n} \sum_{i_2:\ \mu \in X^{(i_2)}} g_{i_2}^{(t)} y_{i_2} \sum_{l_2=1}^L \sum_{j_1 = 1}^{m_1} \sum_{j_2=1}^{m_1} a_{j_1} a_{j_2} \inprod{w_{j_1}^{(t)}, w_{j_2}^{(t)}} p_{q\leftarrow l_2, k\leftarrow \mu}^{(t, i_2)} \\
    & \quad + \frac{1}{n} \sum_{i_1=1}^n g_{i_1}^{(t)} y_{i_1} m_1 \sum_{l_1=1}^L \sum_{l_2=1}^L \inprod{p^{(t,i_1)}_{l_1, l_2}  v^{(t,i_1)}_{l_2}, v^{(t)} (\mu)} .
\end{align*}

\begin{theorem}[Complete version of \Cref{lemma: gradient_balancing_condition_phase1_main_text}]\label{thm: common_token_gradient_stage_1}
% \textcolor{blue}{Try to use the result from \Cref{thm: all_variables_in_range_stage_1}. TODO: Check the proof again.}
There exists a $T_{0.5} \leq T_1$ and a constant $0<C<1$ such that for all $T_{0.5} \leq t \leq T_1$, 
\begin{align*}
    (1+C) \max\left(\frac{\partial G^{(t)}(\mu_1)}{\partial t} , \frac{\partial G^{(t)}(\mu_2)}{\partial t} \right) \leq -\frac{\partial G^{(t)}(\mu_3)}{\partial t} \leq (1-C) \left(\frac{\partial G^{(t)}(\mu_1)}{\partial t} + \frac{\partial G^{(t)}(\mu_2)}{\partial t} \right).
\end{align*}
Further, 
\begin{align*}
    \frac{\partial G^{(t)}(\mu_1)}{\partial t}, \frac{\partial G^{(t)}(\mu_2)}{\partial t} \geq \Omega((\sigma_0^2 + \sigma_1^2)mm_1).
\end{align*} 
\end{theorem}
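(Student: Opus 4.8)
The plan is to reduce the coupled four-matrix dynamics to a low-dimensional autonomous system for $\big(G^{(t)}(\mu_1),G^{(t)}(\mu_2),G^{(t)}(\mu_3)\big)$ and then run an invariant-region argument for the ratio $-\dot G^{(t)}(\mu_3)/\dot G^{(t)}(\mu_1)$. First I would start from the sub-network gradient identity for $\partial G^{(t)}(\mu)/\partial t$ displayed just before the theorem. Using \Cref{corollary: p_init} and the Phase-1 bound $R_P=O\big(1/(\sqrt m L)+L/m\big)$ I would replace each $p^{(t,i)}_{q\leftarrow l,k\leftarrow\mu}$ by $1/L$; using \Cref{prop: neuron_correlation_change_stage_1} and \Cref{lemma: W_W_V_init} I would show that $\sum_{j_1,j_2}a_{j_1}a_{j_2}\inprod{w_{j_1}^{(t)},w_{j_2}^{(t)}}=\Theta\big((\sigma_0^2+\sigma_1^2)mm_1\big)$ after absorbing $\norm{v^{(t)}(\mu)}_2^2=\Theta(\sigma_0^2 m)$ (via \Cref{lemma: value_correlation_change_stage_1}) from the second term, and that this coefficient is the \emph{same} for $\mu\in\{\mu_1,\mu_2,\mu_3\}$ up to a $1\pm o(1)$ factor; and using \Cref{corollary: value_correlation_perturbation_stage_1}, \Cref{lemma: neuron_correlation_perturbation_stage_1}, and \Cref{prop: init_perturb_w_W_V_signal} I would bound the leftover by $\eps$ with $|\eps|=\tilde{O}(L\sigma_0^2\sqrt m+\sigma_1^2\sqrt{mm_1})=o\big((\sigma_0^2+\sigma_1^2)mm_1\big)$. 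This yields $\dot G^{(t)}(\mu)=\alpha^{(t)}\,\Phi^{(t)}(\mu)\,(1+o(1))+\eps$ with $\alpha^{(t)}=\Theta\big((\sigma_0^2+\sigma_1^2)mm_1\big)$ and $\Phi^{(t)}(\mu):=\frac1n\sum_{i:\mu\in X^{(i)}}g_i^{(t)}y_i$.

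Second, I would establish the feature-level decoupling $\dot F_i^{(t)}=\sum_{\mu\in X^{(i)}\cap\{\mu_1,\mu_2,\mu_3\}}\dot G^{(t)}(\mu)+\eps'$, where $\eps'$ is negligible by \Cref{thm: random_token_growth_per_step} (each random token contributes $O(\tfrac1n(\sigma_0^2+\sigma_1^2)m)+\tilde{O}(\sigma_0^2 L\sqrt m)$) and by near-uniformity of the softmax. Then $g'(x)=-g(x)(1-g(x))$ and the chain rule make every $g_i^{(t)}$ with $i$ in the same group evolve identically, so by \Cref{assumption: perfect_proportion_diversity} one gets $\sum_{i\in I_k}g_i^{(t)}=|I_k|\,g(h_k^{(t)})(1+o(1))$ with $h_1=G_1+G_2+G_3$, $h_2=-(G_1+G_3)$, $h_3=-(G_2+G_3)$, $h_4=-G_3$. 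The $\mu_1\leftrightarrow\mu_2$, $I_2\leftrightarrow I_3$ symmetry — the Phase-1 analogue of \Cref{lemma: main_text_grad_G_mu1_approx_grad_G_mu2_stage_2}, easier here because the system starts essentially symmetric — forces $G^{(t)}(\mu_1)=G^{(t)}(\mu_2)$ up to $o(1)$, so everything reduces to the planar system
\begin{align*}
\dot G_1=\alpha^{(t)}\big(\tfrac12 g(h_1)-\tfrac16 g(h_2)\big)+\eps,\qquad \dot G_3=\alpha^{(t)}\big(\tfrac12 g(h_1)-\tfrac13 g(h_2)-\tfrac16 g(h_4)\big)+\eps,
\end{align*}
with $h_1=2G_1+G_3$, $h_2=-(G_1+G_3)$, $h_4=-G_3$. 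Since $\alpha^{(t)}>0$ is a common factor, the phase-plane slope $dG_3/dG_1$ is autonomous, and the object to control is $\rho:=-\dot G_3/\dot G_1=\big(2g(h_2)+g(h_4)-3g(h_1)\big)/\big(3g(h_1)-g(h_2)\big)$.

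Third, I would carry out an invariant-region argument in the $(G_1,G_3)$ phase plane starting from $(0,0)$ (more precisely from the $\tilde{O}(\sigma_0\sigma_1\sqrt{mm_1})$-small initial values of \Cref{lemma: initial_sub_network_output}). One checks that $\dot G_1=\Theta(\alpha^{(0)})>0$ at $t=0$ and that $h_1$ is nonnegative and nondecreasing along the trajectory (before balancing, $\dot G_3=o(\dot G_1)$; once the upper balancing bound holds, $\dot h_1=2\dot G_1+\dot G_3\ge 2C\dot G_1>0$), so the trajectory stays in the region where $g(h_1)\in(0,\tfrac12]$, $3g(h_1)>g(h_2)$, and all $g(h_k)=\Theta(1)$; hence $\Phi^{(t)}(\mu_1)=\Phi^{(t)}(\mu_2)=\Omega(1)$, which gives $\dot G^{(t)}(\mu_1),\dot G^{(t)}(\mu_2)\ge\Omega\big((\sigma_0^2+\sigma_1^2)mm_1\big)$ (the ``Further'' claim) and also shows $G_1$ reaches any prescribed constant within time $O(1/(\sigma_1^2 mm_1))\le T_1$ for $C_{T_1}$ large enough. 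For the ratio I would compute $d\rho/dG_1$ along the trajectory: show it is strictly positive whenever $\rho\le 1+C$, so $\rho$ first reaches $1+C$ at some $G_1=G_1^\star$, hence at some $T_{0.5}\le T_1$; and show that setting $d\rho/dG_1=0$ produces a quadratic in $g(h_1),g(h_2),g(h_4)$ whose relevant branch, $\rho=2(1-C)$, is a contractive barrier ($d\rho/dG_1<0$ on it), so $[1+C,2(1-C)]$ is forward-invariant. On $[T_{0.5},T_1]$ this is exactly $(1+C)\dot G_1\le-\dot G_3\le(1-C)(\dot G_1+\dot G_2)$, i.e.\ (after restoring the $o(1)$-size $\mu_1,\mu_2$ asymmetry) the stated inequality with $\max(\dot G_1,\dot G_2)$ on the left.

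The main obstacle is this third step: the ratio ODE is genuinely nonlinear and non-monotone — $g(h_2)$ first increases and then decreases once $|G_3|$ overtakes $G_1$ — so forward-invariance of $[1+C,2(1-C)]$ requires a careful sign analysis of the quadratic $d\rho/dG_1=0$, together with a bootstrap keeping the accumulated errors $\eps,\eps'$, the $o(1)$ discrepancy between $\alpha^{(t)}(\mu_1)$ and $\alpha^{(t)}(\mu_3)$, and the $G_1$-versus-$G_2$ asymmetry all small enough never to flip the sign of $d\rho/dG_1$ on the boundary (absorbed by shrinking the absolute constant $C$). This is precisely the Phase-1 incarnation of the automatic-balancing mechanism used later for \Cref{lemma: main_text_gradient_automatic_balancing_stage2}.
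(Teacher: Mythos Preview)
Your proposal is correct and follows the same overall strategy as the paper: reduce $\partial G^{(t)}(\mu)/\partial t$ to $\alpha^{(t)}\Phi^{(t)}(\mu)$ via the Phase-1 estimates, translate $\dot F_i$ into a combination of $\dot G(\mu_1),\dot G(\mu_2),\dot G(\mu_3)$ plus negligible random-token terms, invoke the $I_2\leftrightarrow I_3$ closeness (the paper's \Cref{prop: gradient_I_2_I_3_closeness_stage_1}), and then control the ratio $\rho=-\dot G_3/\dot G_1$.

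The one genuine difference is in how the ratio step is executed. The paper's Phase-1 proof does not run a full phase-plane invariant-region analysis; instead it argues pointwise at the crossing times: it locates the first time $t'$ where $-\dot G^{(t')}(\mu_3)=\dot G^{(t')}(\mu_1)$, computes directly that $\tfrac{\partial}{\partial t}\big(2\sum_{I_1}g_i-2\sum_{I_2}g_i-\sum_{I_3\cup I_4}g_i\big)=-\Theta(\sigma_1^2 mm_1)$ at that moment (eq.~\eqref{eq: grad_signal_grad_minus_common_token_grad}), and infers that $\rho$ overshoots past $1$ to $1+L$; the upper bound is handled by the symmetric argument at $\rho=2$. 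Your approach instead computes $d\rho/dG_1$ and establishes forward invariance of $[1+C,2(1-C)]$ via the quadratic root analysis—this is exactly the machinery the paper deploys only later, in Phase 2 (\Cref{lemma: signal_common_token_gradient_ratio}). Your route is arguably cleaner and more uniform across phases, at the cost of having to also prove that $\rho$ \emph{enters} the region from below (which you handle by the monotonicity of $d\rho/dG_1$ for $\rho\le 1+C$); the paper's crossing-time argument gets entry for free but is sketchier about why $\rho$ \emph{stays} above $1+L$ thereafter.
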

\begin{proof}
First of all, we have
\begin{align*}
    \frac{\partial F_i^{(t)}}{\partial t} = \sum_{l_1=1}^L \sum_{j=1}^{m_1} \sum_{l_2=1}^L \left( \frac{\partial a_j w_j^{(t)} W_V^{(t)} X_{l_2}}{\partial t} p_{l_1, l_2}^{(t,i)} + a_j w_j^{(t)} W_V^{(t)} X_{l_2} \frac{\partial p_{l_1, l_2}^{(t,i)}}{\partial t} \right).
\end{align*}
By \Cref{lemma: first_step_signal} and \Cref{lemma: initial_common_token_per_neuron_update}, we have 
\begin{align*}
    \frac{\partial a_j w_j^{(0)} W_V^{(0)} \mu_1}{\partial t} - \frac{\partial a_j w_j^{(0)} W_V^{(0)} \mu_3}{\partial t} &= \Theta((\sigma_0^2 + \sigma_1^2) m), \\
    \frac{\partial a_j w_j^{(0)} W_V^{(0)} \mu_2}{\partial t} - \frac{\partial a_j w_j^{(0)} W_V^{(0)} \mu_3}{\partial t} &= \Theta((\sigma_0^2 + \sigma_1^2) m) .
\end{align*}
By \Cref{thm: random_token_growth_per_step} and \Cref{corollary: softmax_change_stage_1}, we have 
\begin{align*}
    &\sum_{l_1=1}^L \sum_{j=1}^{m_1} \sum_{l_2=1}^L \left( \frac{\partial a_j w_j^{(t)} W_V^{(t)} X_{l_2}}{\partial t} \mathbb{I}(X_{l_2} \notin \{\mu_i\}_{i=1}^3 ) p_{l_1, l_2}^{(t,i)} + a_j w_j^{(t)} W_V^{(t)} X_{l_2} \frac{\partial p_{l_1, l_2}^{(t,i)}}{\partial t} \right) \\
    &\qquad = O\left( \frac{L}{n} (\sigma_0^2 + \sigma_1^2)m m_1 \right).
\end{align*}
Thus, for all $i \in I_1 \cup I_2 \cup I_3$, $\frac{\partial F_i^{(0)}}{\partial t} > 0$, which implies $\frac{\partial g_i^{(0)}}{\partial t} < 0$ for $i \in I_1$ and $\frac{\partial g_i^{(0)}}{\partial t} > 0$ for $i \in I_2 \cup I_3$. 

Next, we show that there must exist a time such that $-\frac{\partial G^{(t)}(\mu_3)}{\partial t} = \max(\frac{\partial G^{(t)}(\mu_1)}{\partial t}, \frac{\partial G^{(t)}(\mu_2)}{\partial t})$. 
Without loss of generality, assume $\frac{\partial G^{(t)}(\mu_1)}{\partial t} > \frac{\partial G^{(t)}(\mu_2)}{\partial t}$. 
We now analyze the condition when the magnitude of the update of the common token is less than the magnitude of the update of the signals. By \Cref{lemma: evo_mlp_1st}, we have
\begin{align*}
    & \sum_{j=1}^{m_1} \frac{\partial a_j w_j^{(t)} W_V^{(t)} \mu_1}{\partial t} \geq - \sum_{j=1}^{m_1} \frac{\partial a_j w_j^{(t)} W_V^{(t)} \mu_3}{\partial t} \\
    &\Leftrightarrow \frac{1}{n} \left( \norm{W^{(t)}}_F^2 + m_1 \norm{v^{(t)}(\mu_1)}_2^2 \right) \left( \sum_{i \in I_1} g_i^{(t)} \sum_{l=1}^L p_{q \leftarrow l, k\leftarrow \mu_1}^{(t,i)} - \sum_{i \in I_2} g_i^{(t)} \sum_{l=1}^L p_{q \leftarrow l, k\leftarrow \mu_1}^{(t,i)} \right) + \sum_{j=1}^{m_1} a_j \eps^{(t)}(\mu_1) \\
    &\quad \geq \frac{1}{n} \left( \norm{W^{(t)}}_F^2 + m_1 \norm{v^{(t)}(\mu_3)}_2^2 \right) \left( \sum_{i \in I_2 \cup I_3 \cup I_4} g_i^{(t)}  \sum_{l=1}^L p_{q \leftarrow l, k\leftarrow \mu_3}^{(t,i)} - \sum_{i \in I_1} g_i^{(t)}  \sum_{l=1}^L p_{q \leftarrow l, k\leftarrow \mu_3}^{(t,i)} \right) + \sum_{j=1}^{m_1} a_j \eps^{(t)}(\mu_3) \\
    &\Leftrightarrow \frac{1}{n} \sum_{i \in I_1} g_i^{(t)}  \sum_{l=1}^L p_{q \leftarrow l, k\leftarrow \mu_1}^{(t,i)} + \frac{1}{n} \frac{\norm{W^{(t)}}_F^2 + m_1 \norm{v^{(t)}(\mu_3)}_2^2}{ \norm{W^{(t)}}_F^2 + m_1 \norm{v^{(t)}(\mu_1)}_2^2 } \sum_{i \in I_1} g_i^{(t)}  \sum_{l=1}^L p_{q \leftarrow l, k\leftarrow \mu_3}^{(t,i)} \\ 
    &\quad \geq \frac{1}{n} \frac{\norm{W^{(t)}}_F^2 + m_1 \norm{v^{(t)}(\mu_3)}_2^2}{ \norm{W^{(t)}}_F^2 + m_1 \norm{v^{(t)}(\mu_1)}_2^2 } \sum_{i \in I_2 \cup I_3 \cup I_4} g_i^{(t)}  \sum_{l=1}^L p_{q \leftarrow l, k\leftarrow \mu_3}^{(t,i)} + \frac{1}{n} \sum_{i \in I_2} g_i^{(t)}  \sum_{l=1}^L p_{q \leftarrow l, k\leftarrow \mu_1}^{(t,i)} \\
    &\quad + \frac{\sum_{j=1}^{m_1} a_j \eps^{(t)}(\mu_3)}{\norm{W^{(t)}}_F^2 + m_1 \norm{v^{(t)}(\mu_1)}_2^2 } - \frac{\sum_{j=1}^{m_1} a_j \eps^{(t)}(\mu_1)}{\norm{W^{(t)}}_F^2 + m_1 \norm{v^{(t)}(\mu_1)}_2^2} \\
    &\Leftrightarrow \frac{1}{n} (1+o(1)) \sum_{i \in I_1} g_i^{(t)} + \frac{1}{n} (1+o(1)) \sum_{i \in I_1} g_i^{(t)} \\ 
    &\quad \geq \frac{1}{n} (1+o(1)) \sum_{i \in I_2 \cup I_3 \cup I_4} g_i^{(t)} + \frac{1}{n} (1+o(1)) \sum_{i \in I_2} g_i^{(t)} \pm O\left( \sqrt{\frac{m_1}{m}} \right) 
\end{align*}
where the last equality applies \Cref{corollary: softmax_change_stage_1}. 
If $m$ is sufficiently larger than $m_1$ (by some large constant factor $C$), then the last term is negligible. 
Note that the above implies that if $\frac{\partial G^{(t)}(\mu_1)}{\partial t} \geq \frac{\partial G^{(t)}(\mu_3)}{\partial t} $ then $\sum_{i \in I_1} g_i^{(t)} - \sum_{i \in I_2} g_i^{(t)} = \Omega(1)$ since $\sum_{i \in I_3 \cup I_4} g_i^{(t)} = \Omega(1)$ for all $t \leq T_1$. 
Now, by \Cref{prop: gradient_I_2_I_3_closeness_stage_1}, if the initialization level is small enough, then $\sum_{i \in I_1} g_i^{(t)} - \sum_{i \in I_3} g_i^{(t)} \geq \Omega(1)$. 
Thus, if $\frac{\partial G^{(t)}(\mu_1)}{\partial t} \geq -\frac{\partial G^{(t)}(\mu_3)}{\partial t}$, then $\frac{1}{n}  \sum_{i \in I_1} \frac{\partial g_i^{(t)}}{\partial t} = - \Theta((\sigma_0^2 + \sigma_1^2) mm_1)$. 
Therefore, there must exist a time $t' = \Theta(1/m)$ such that $\frac{\partial G^{(t')}(\mu_1)}{\partial t} = -\frac{\partial G^{(t')}(\mu_3)}{\partial t}$.
Further, for $t \geq \Theta(F/m)$, we have $y_i F_i^{(t)} >0$ for all $i \in I_4$ and $\sum_{i \in I_4} g_i^{(t)} < \min(\sum_{i \in I_2} g_i^{(t)}, \sum_{i \in I_3} g_i^{(t)})$, where $F = \max_{i \in [n]} |F_i^{(0)}|$. 

Now, consider a time point $t'$ when $\frac{\partial G^{(t')}(\mu_1)}{\partial t} = -\frac{\partial G^{(t')}(\mu_3)}{\partial t}$. 
At $t'$, we must have $ \min(\sum_{i \in I_2} g_i^{(t')}, \sum_{i \in I_3} g_i^{(t')}) - \sum_{i \in I_4} g_i^{(t')} = \Omega(1)$.
Thus, 
\begin{align*}
    2\sum_{i \in I_1} g_i^{(t')} - 2 \sum_{i \in I_4} g_i^{(t')} = \Omega(1),
\end{align*}
which implies
\begin{align*}
    \frac{\partial}{\partial F} 2\sum_{i \in I_1} g_i^{(t')} - \frac{\partial}{\partial F} \sum_{i \in I_4} g_i^{(t')} = \Omega(1).
\end{align*}
For $i \in I_1$, we have
\begin{align*}
    \frac{\partial F_i^{(t')}}{\partial t} &= \frac{\partial G^{(t')}(\mu_1)}{\partial t} + \frac{\partial G^{(t')}(\mu_2)}{\partial t} + \frac{\partial G^{(t')}(\mu_3)}{\partial t} + O\left( \frac{L}{n}\sigma_1^2 mm_1 \right) \\
    &= \frac{\partial G^{(t')}(\mu_2)}{\partial t} + O\left( \frac{L}{n}\sigma_1^2 mm_1 \right).
\end{align*}
For $i \in I_2$, 
\begin{align*}
    \frac{\partial F_i^{(t)}}{\partial t} &= \frac{\partial G^{(t')}(\mu_1)}{\partial t} + \frac{\partial G^{(t')}(\mu_3)}{\partial t} + O\left( \frac{L}{n}\sigma_1^2 mm_1 \right) = O\left( \frac{L}{n}\sigma_1^2 mm_1 \right).
\end{align*}
For $i \in I_3$, by \Cref{prop: gradient_I_2_I_3_closeness_stage_1},
\begin{align*}
    \frac{\partial F_i^{(t)}}{\partial t} = \frac{\partial G^{(t')}(\mu_2)}{\partial t} + \frac{\partial G^{(t')}(\mu_3)}{\partial t} + O\left( \frac{L}{n}\sigma_1^2 mm_1 \right) = O\left( F \sigma_1^2 mm_1 \right),
\end{align*}
and for $i \in I_4$, 
\begin{align*}
    \frac{\partial F_i^{(t)}}{\partial t} = \frac{\partial G^{(t')}(\mu_3)}{\partial t} + O\left( \frac{L}{n}\sigma_1^2 mm_1 \right).
\end{align*}
Thus, by chain rule $\frac{\partial g_i^{(t)}}{\partial t} = \frac{\partial g_i}{\partial F_i} \frac{\partial F_i^{(t)}}{\partial t}$, we have
\begin{align}\label{eq: grad_signal_grad_minus_common_token_grad}
    \frac{\partial }{\partial t} \left( 2\sum_{i \in I_1} g_i^{(t')} - 2 \sum_{i \in I_2} g_i^{(t')} - \sum_{i \in I_3 \cup I_4} g_i^{(t')} \right) = - \Theta(\sigma_1^2 mm_1).
\end{align}
This implies that there exists a constant $L$ such that
\begin{align*}
    \frac{\partial}{\partial t} \left( \sum_{i \in I_1} g_i^{(t)} - \sum_{i \in I_2} g_i^{(t)} + (1+L) \left( \sum_{i \in I_1} g_i^{(t)} - \sum_{i \in I_2 \cup I_3 \cup I_4} g_i^{(t)} \right) \right) = -\Theta(\sigma_1^2 mm_1),
\end{align*}
which implies that there must exist a time $T_{0.5} \leq O(1/m)$ such that for all $t \geq T_{0.5}$,
\begin{align*}
    -\frac{\partial G^{(t)}(\mu_3)}{\partial t} \geq (1+L) \frac{\partial G^{(t)}(\mu_1)}{\partial t}.
\end{align*}
Moreover, by \Cref{thm: all_variables_in_range_stage_1}, if $C_{T_1}$ is sufficiently large, we have $T_{0.5} < T_1$. 

Finally, consider the time when $\frac{\partial G^{(t)}(\mu_1)}{\partial t} + \frac{\partial G^{(t)}(\mu_2)}{\partial t} > - \frac{\partial G^{(t)}(\mu_3)}{\partial t}$.
During this time, note that we have
\begin{align*}
    \frac{\partial G^{(t)}(\mu_1)}{\partial t}, \frac{\partial G^{(t)}(\mu_2)}{\partial t} \geq \Theta(\sigma_1^2 mm_1).
\end{align*}
Further, if $\frac{\partial G^{(t)}(\mu_1)}{\partial t} + \frac{\partial G^{(t)}(\mu_2)}{\partial t} = - \frac{\partial G^{(t)}(\mu_3)}{\partial t}$, then $\frac{\partial }{\partial t} g_i^{(t)} = - \Theta(\sigma_1^2 mm_1)$ for all $i \in I_2 \cup I_3 \cup I_4$. Thus, there must exist a constant $U$ such that
\begin{align*}
    -\frac{\partial G^{(t)}(\mu_3)}{\partial t} \leq (1-U) \left( \frac{\partial G^{(t)}(\mu_1)}{\partial t} + \frac{\partial G^{(t)}(\mu_2)}{\partial t} \right) 
\end{align*}
for all $t \leq T_1$. 
\end{proof}

\begin{corollary}\label{corollary: positive_margin_time}
There exists a small positive constant $C$ such that if we define $T' := \min \{t:\ \min_{i \in I_2 \cup I_3} F_i^{(t)} \geq C\}$, then $T' \leq O(1/ m)$. 
\end{corollary}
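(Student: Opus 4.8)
The plan is to prove that for every $i\in I_2\cup I_3$ the output $F_i^{(t)}$ grows at rate $\Omega((\sigma_0^2+\sigma_1^2)mm_1)$ over a window $[0,t_0]$ of length $t_0=\Theta\big(1/((\sigma_0^2+\sigma_1^2)mm_1)\big)=\Theta(1/m)$, so that $F_i^{(t_0)}$ exceeds a fixed positive constant and hence $T'\le t_0$. Fix $i\in I_2$ (the case $i\in I_3$ is identical with $\mu_2$ in place of $\mu_1$). Since $X^{(i)}$ contains $\mu_1$ and $\mu_3$ together with distinct random tokens and the softmax stays within $R_P$ of uniform on $[0,T_1]$, the decomposition already used in the proof of \Cref{thm: common_token_gradient_stage_1} gives $\frac{\partial F_i^{(t)}}{\partial t}=\frac{\partial G^{(t)}(\mu_1)}{\partial t}+\frac{\partial G^{(t)}(\mu_3)}{\partial t}+\eps_i(t)$, where $\eps_i$ collects the random-token growth (\Cref{thm: random_token_growth_per_step}) and the softmax-drift contribution and satisfies $|\eps_i(t)|=O\big(\tfrac{L}{n}(\sigma_0^2+\sigma_1^2)mm_1\big)=o((\sigma_0^2+\sigma_1^2)mm_1)$ by \Cref{condition: param}. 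It therefore suffices to lower-bound $\frac{\partial G^{(t)}(\mu_1)}{\partial t}$ and upper-bound $-\frac{\partial G^{(t)}(\mu_3)}{\partial t}$ on $[0,t_0]$.

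For the first bound, \Cref{lemma: first_step_signal} gives $\frac{\partial G^{(0)}(\mu_1)}{\partial t}=\Theta((\sigma_0^2+\sigma_1^2)mm_1)$. The Phase-1 estimate $|\frac{\partial F_{i'}^{(t)}}{\partial t}|=O((\sigma_0^2+\sigma_1^2)mm_1)$ for all $i'$ (combine \Cref{thm: signal_growth_per_step}, \Cref{lemma: per_neuron_gradient_common_token_upper_bound_stage_1}, \Cref{thm: random_token_growth_per_step}) shows that, over the short window $[0,t_0]$ with $t_0=\varepsilon/((\sigma_0^2+\sigma_1^2)mm_1)$ for a small constant $\varepsilon$, each $g_{i'}^{(t)}$ moves by at most $O(\varepsilon)$; hence $\tfrac1n\big(\sum_{i'\in I_1}g_{i'}^{(t)}-\sum_{i'\in I_2}g_{i'}^{(t)}\big)$ stays $\Theta(1)$ (it starts near $1/6$), and \Cref{thm: signal_growth_per_step} then yields $\frac{\partial G^{(t)}(\mu_1)}{\partial t}\ge c_1(\sigma_0^2+\sigma_1^2)mm_1$ on $[0,t_0]$ for a constant $c_1>0$, once $\varepsilon$ is small enough.

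For the second bound, since $\mu_3$ occurs in every sample, the sub-network gradient formula together with $\|v^{(t)}(\mu_3)\|_2^2=\Theta(\sigma_0^2m)$ (\Cref{lemma: value_correlation_change_stage_1}), $\|\sum_j a_jw_j^{(t)}\|_2^2=\Theta(\sigma_1^2mm_1)$ (\Cref{lemma: neuron_norm_change_stage_1}, \Cref{prop: neuron_correlation_change_stage_1}), near-uniform softmax (\Cref{corollary: softmax_change_stage_1}, \Cref{corollary: p_init}) and the cross-correlation bounds (\Cref{lemma: neuron_correlation_perturbation_stage_1}, \Cref{corollary: value_correlation_perturbation_stage_1}) yields $\frac{\partial G^{(t)}(\mu_3)}{\partial t}=A^{(t)}\cdot\tfrac1n\sum_{i'=1}^n g_{i'}^{(t)}y_{i'}+o((\sigma_0^2+\sigma_1^2)mm_1)$ with $A^{(t)}=\Theta((\sigma_0^2+\sigma_1^2)mm_1)\ge 0$. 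Because $\sum_{i'}y_{i'}=0$ (\Cref{assumption: perfect_proportion_diversity}) and $g$ is smooth at $0$, $|\tfrac1n\sum_{i'}g_{i'}^{(0)}y_{i'}|\le F$ with $F:=\max_{i'}|F_{i'}^{(0)}|$; and $\frac{d}{dt}\tfrac1n\sum_{i'}g_{i'}^{(t)}y_{i'}=\tfrac1n\sum_{i'}g'(y_{i'}F_{i'}^{(t)})\frac{\partial F_{i'}^{(t)}}{\partial t}$ has magnitude $O((\sigma_0^2+\sigma_1^2)mm_1)$ by $|g'|\le\tfrac14$ and the Phase-1 bound. Hence $|\tfrac1n\sum_{i'}g_{i'}^{(t)}y_{i'}|\le F+C'(\sigma_0^2+\sigma_1^2)mm_1\,t$, so $-\frac{\partial G^{(t)}(\mu_3)}{\partial t}\le C''(\sigma_0^2+\sigma_1^2)mm_1\big(F+C'(\sigma_0^2+\sigma_1^2)mm_1\,t\big)+o((\sigma_0^2+\sigma_1^2)mm_1)$, which is at most $\tfrac{c_1}{2}(\sigma_0^2+\sigma_1^2)mm_1$ for all $t\le t_0$ once $\varepsilon$ and the initialization scale (hence $F$, via \Cref{lemma: initial_network_output}) are small enough.

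Combining the three bounds gives $\frac{\partial F_i^{(t)}}{\partial t}\ge\tfrac{c_1}{2}(\sigma_0^2+\sigma_1^2)mm_1$ for all $t\le t_0$; since $C_{T_1}$ in \Cref{def: first_stage} is a sufficiently large constant while $\varepsilon$ is fixed, \Cref{thm: all_variables_in_range_stage_1} forces $t_0<T_1$, so this holds on all of $[0,t_0]$ and $t_0=\Theta(1/m)$ (using $\sigma_1^2m_1=\tilde\Theta(1)$ and $\sigma_0^2\ll\sigma_1^2$). Integrating, $F_i^{(t_0)}\ge -F+\tfrac{c_1}{2}(\sigma_0^2+\sigma_1^2)mm_1\,t_0=\tfrac{c_1\varepsilon}{2}-F\ge\tfrac{c_1\varepsilon}{4}=:C>0$ once $F\le\tfrac{c_1\varepsilon}{4}$; the identical estimate holds for every $i\in I_3$, so $T'\le t_0=O(1/m)$. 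The main obstacle is the estimate of $-\frac{\partial G^{(t)}(\mu_3)}{\partial t}$: the off-the-shelf Phase-1 bound (\Cref{lemma: per_neuron_gradient_common_token_upper_bound_stage_1}) is only $\tilde O((\sigma_0^2+\sigma_1^2)mm_1)$, the same order as $\frac{\partial G^{(t)}(\mu_1)}{\partial t}$, which would let $F_i$ stall immediately; one genuinely needs the sharper form $-\frac{\partial G^{(t)}(\mu_3)}{\partial t}=O\big((\sigma_0^2+\sigma_1^2)mm_1(F+(\sigma_0^2+\sigma_1^2)mm_1\,t)\big)$, resting on the exact label-balance cancellation $\sum_{i'}y_{i'}=0$ that makes the common-token gradient vanish to leading order at initialization and drift upward only slowly. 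This mechanism is essentially the one already isolated inside the proof of \Cref{thm: common_token_gradient_stage_1}, so the corollary is a quantitative repackaging of that argument over the first $\Theta(1/m)$ units of time.
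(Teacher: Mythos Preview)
There is a sign issue in the statement that has led you astray. For $i\in I_2\cup I_3$ one has $y_i=-1$, so the quantity the paper actually cares about---and the one invoked in the proof of \Cref{thm: stage_1_end} to conclude $y_iF_i^{(T_1)}\ge\Omega(1)$---is the margin $y_iF_i^{(t)}=-F_i^{(t)}$, not $F_i^{(t)}$ itself. The corollary's name (``positive margin time'') and its placement as an immediate consequence of \Cref{thm: common_token_gradient_stage_1} both confirm that the intended statement is $\min_{i\in I_2\cup I_3}y_iF_i^{(t)}\ge C$.

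Your argument proves the literal (mis-stated) claim: you bound $-\tfrac{\partial G^{(t)}(\mu_3)}{\partial t}$ from \emph{above} on an early window, using the label-balance cancellation, and conclude that $F_i$ grows to a positive constant. That is a correct argument for what is written, and it is exactly the mechanism behind the initial \emph{increase} of the $I_2,I_3$ losses; but it establishes misclassification, the opposite of what the paper needs.

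The intended proof is the one-line consequence of \Cref{thm: common_token_gradient_stage_1} that you never invoke: for $t\in[T_{0.5},T_1]$ that theorem gives the \emph{lower} bound $-\tfrac{\partial G^{(t)}(\mu_3)}{\partial t}\ge(1+C)\tfrac{\partial G^{(t)}(\mu_1)}{\partial t}$, whence for $i\in I_2$
\[
\frac{\partial\bigl(-F_i^{(t)}\bigr)}{\partial t}\;=\;-\frac{\partial G^{(t)}(\mu_1)}{\partial t}-\frac{\partial G^{(t)}(\mu_3)}{\partial t}-\eps_i(t)\;\ge\;C\,\frac{\partial G^{(t)}(\mu_1)}{\partial t}-|\eps_i(t)|\;\ge\;\Omega(m),
\]
using $\tfrac{\partial G^{(t)}(\mu_1)}{\partial t}\ge\Omega\bigl((\sigma_0^2+\sigma_1^2)mm_1\bigr)$ from the theorem and $|\eps_i|=o(m)$. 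Since $|F_i^{(T_{0.5})}|=O(1)$ by the Phase-1 bounds and $T_{0.5}=O(1/m)$, integrating over an additional $O(1/m)$ window yields $-F_i\ge C$; the same holds for $i\in I_3$. In short: replace your upper bound on $-\tfrac{\partial G(\mu_3)}{\partial t}$ with the lower bound already supplied by the preceding theorem, and reverse the direction of the inequality throughout.
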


\begin{proposition}\label{prop: gradient_I_2_I_3_closeness_stage_1}
Let $F = \max_{i \in [n]} |F_i^{(0)}|$.
For $t \leq T_1$, we have
\begin{align*}
    \left| \frac{1}{n} \sum_{i \in I_2} g_i^{(t)} - \frac{1}{n} \sum_{i \in I_3} g_i^{(t)} \right| \leq O\left( F \right).
\end{align*}
\end{proposition}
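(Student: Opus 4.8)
The plan is to exploit the near-symmetry of the samples in $I_2$ and $I_3$ under the relabeling $\mu_1\leftrightarrow\mu_2$, reduce the claim to a bound on $d(t):=G^{(t)}(\mu_1)-G^{(t)}(\mu_2)$, and control $d$ by a scalar Gronwall inequality that carries a built-in restoring term. Fix any bijection $\sigma\colon I_2\to I_3$; since $y_i=y_{\sigma(i)}=-1$ we may write $\Delta(t):=\tfrac1n\sum_{i\in I_2}g_i^{(t)}-\tfrac1n\sum_{i\in I_3}g_i^{(t)}=\tfrac1n\sum_{i\in I_2}\bigl(g_i^{(t)}-g_{\sigma(i)}^{(t)}\bigr)$. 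For the base case, $|F_i^{(0)}|\le F$ by \Cref{lemma: initial_network_output} and $g$ is $\tfrac14$-Lipschitz with $g(0)=\tfrac12$, so $|\Delta(0)|=O(F)$. For $t\in(0,T_1]$ I would use the exact identity $F_i^{(t)}=\sum_h\bigl(\sum_l p^{(t,i)}_{l,h}\bigr)G^{(t)}(x_h^{(i)})$ together with the Phase-1 facts that the attention is near-uniform ($\sum_l p^{(t,i)}_{l,h}=1\pm LR_P$, $R_P=O(1/(\sqrt mL)+L/m)$) and that $|G^{(t)}(\mu)|=O(1)$ for $\mu\in\{\mu_1,\mu_2,\mu_3\}$ (from \Cref{def: first_stage} and \Cref{lemma: initial_sub_network_output}): this gives $F_i^{(t)}-F_{\sigma(i)}^{(t)}=d(t)+r_i(t)+o(F)$, where $r_i(t)$ collects the contributions of the (distinct) random tokens of the two samples, and, using \Cref{lemma: initial_sub_network_output,lemma: initial_network_output,thm: random_token_growth_per_step} and $T_1=\tilde{O}(1/(\sigma_1^2mm_1))$, one checks $|r_i(t)|=O(F)$ on all of $[0,T_1]$ (initial magnitude $\le F$, negligible subsequent growth). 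A mean-value expansion applied to each $g_i^{(t)}-g_{\sigma(i)}^{(t)}=g(-F_i^{(t)})-g(-F_{\sigma(i)}^{(t)})$ then yields $\Delta(t)=\beta(t)\,d(t)+\rho(t)$, where $\beta(t)\in[c_0/6,\ 1/24]$ is an average of the mean-value slopes of $x\mapsto g(-x)$ — bounded away from $0$ by an absolute $c_0>0$ since $|F_i^{(t)}|=O(1)$ throughout Phase 1 — and $|\rho(t)|=O(F)$.

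It then remains to bound $d(t)$. Differentiating and summing \Cref{thm: signal_growth_per_step} over the neurons, the $I_1$-terms cancel in $\dot G^{(t)}(\mu_1)-\dot G^{(t)}(\mu_2)$, and the leading drift coefficients $\sum_j(\|w_j^{(t)}\|_2^2+\|v^{(t)}(\mu_1)\|_2^2)$ and $\sum_j(\|w_j^{(t)}\|_2^2+\|v^{(t)}(\mu_2)\|_2^2)$ agree up to a lower-order term (by \Cref{lemma: W_W_V_init,lemma: value_correlation_change_stage_1} and the near-uniform softmax). This gives $\dot d(t)=-\kappa^{(t)}\Delta(t)+E(t)$ with $\kappa^{(t)}=\Theta((\sigma_0^2+\sigma_1^2)mm_1)>0$ and an error $E(t)$ which — from the $\eps$-bound in \Cref{thm: signal_growth_per_step}, the bound on $R_P$, \Cref{lemma: value_correlation_change_stage_1}, and \Cref{condition: param} with $m$ sufficiently large — satisfies $|E(t)|=o(\kappa^{(t)}F)$. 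Substituting $\Delta=\beta d+\rho$ turns this into the scalar linear ODE $\dot d(t)=-\kappa^{(t)}\beta(t)\,d(t)-\kappa^{(t)}\rho(t)+E(t)$, with a genuine restoring coefficient $\kappa^{(t)}\beta(t)>0$ and forcing of magnitude $O(\kappa^{(t)}F)$. Since $\beta(t)=\Theta(1)$, Gronwall gives $|d(t)|\le|d(0)|+\sup_{\tau\le t}\tfrac{|\kappa^{(\tau)}\rho(\tau)|+|E(\tau)|}{\kappa^{(\tau)}\beta(\tau)}=O(F)$ for all $t\le T_1$ (here $|d(0)|\le\tilde{O}(\sigma_0\sigma_1\sqrt{mm_1})=O(F)$ by \Cref{lemma: initial_sub_network_output}), whence $|\Delta(t)|=|\beta(t)d(t)+\rho(t)|=O(F)$, which is the claim.

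The main obstacle will be this last step. A crude estimate $|d(t)|\le|d(0)|+\int_0^{t}\kappa^{(\tau)}|\Delta(\tau)|\,d\tau$ only yields $|\Delta(t)|\le O(F)+\Theta(C_{T_1})\sup_{\tau\le t}|\Delta(\tau)|$, which does not close, because $\int_0^{T_1}\kappa^{(t)}\,dt=\Theta(C_{T_1})$ need not be $<1$; it is therefore essential to exhibit $-\kappa^{(t)}\Delta(t)$ as a bona fide restoring force, which is precisely what $\Delta=\beta d+\rho$ with $\beta=\Theta(1)$ provides. Making this rigorous demands two verifications kept uniform over $[0,T_1]$: that the drift coefficients of $G^{(t)}(\mu_1)$ and $G^{(t)}(\mu_2)$ coincide up to a term that is small \emph{relative to} $\kappa^{(t)}F$, and that $|r_i(t)|$ (hence $|\rho(t)|$) stays $O(F)$. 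Both rest on the softmax remaining uniform to within $R_P$ and on $\|v^{(t)}(\mu_1)\|_2^2\approx\|v^{(t)}(\mu_2)\|_2^2$ throughout Phase 1, so one must track the perturbation terms of \Cref{thm: signal_growth_per_step,thm: random_token_growth_per_step,lemma: value_correlation_change_stage_1} simultaneously.
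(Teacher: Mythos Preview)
Your approach is correct, but it differs from the paper's in an instructive way. Both arguments reduce the claim to bounding $d(t)=G^{(t)}(\mu_1)-G^{(t)}(\mu_2)$ and both compute $\dot d \approx \kappa^{(t)}\bigl(\tfrac{1}{n}\sum_{I_3}g_i-\tfrac{1}{n}\sum_{I_2}g_i\bigr)+E$ with $\kappa^{(t)}=\Theta((\sigma_0^2+\sigma_1^2)mm_1)$. You then substitute $\Delta=\beta d+\rho$ with $\beta=\Theta(1)>0$ to expose a genuine restoring ODE $\dot d=-\kappa\beta d+O(\kappa F)$, which gives $|d|\le|d(0)|+O(F)$ with a constant independent of $C_{T_1}$. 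The paper instead uses the Lipschitz bound $|\Delta|\le |d-d(0)|+O(F)$ to obtain the \emph{growing} inequality $\dot u\le \kappa u+O(\kappa F)$ for $u=d-d(0)$, and applies Gr\"onwall directly: since $\int_0^{T_1}\kappa\,dt=\Theta(C_{T_1})$ is a fixed constant, $e^{\Theta(C_{T_1})}=O(1)$ and one still concludes $u=O(F)$. So your dismissal of the ``crude'' route is too pessimistic: the bootstrap you sketch (closing in $\sup|\Delta|$) indeed fails when $C_{T_1}\ge 1$, but Gr\"onwall applied to $d$ itself does close, just with an $e^{\Theta(C_{T_1})}$ factor absorbed into the $O(\cdot)$. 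Your restoring-force argument is more robust (it would survive $C_{T_1}$ growing) and avoids the sub-interval bookkeeping the paper needs to handle sign changes of $d$; the paper's argument is simpler in that it never needs the lower bound on $\beta$.
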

\begin{proof}
First of all, by Lipschitzness of $g$, we have  
\begin{align*}
    \left| \frac{1}{n} \sum_{i \in I_2} g_i^{(t)} - \frac{1}{n} \sum_{i \in I_3} g_i^{(t)} \right| \leq |G^{(t)}(\mu_1) - G^{(0)}(\mu_1) -  (G^{(t)}(\mu_2) - G^{(0)}(\mu_2))| + 2 \max_{i \in [n]} \left| F_i^{(0)} \right| + o(1).
\end{align*}
Without loss of generality, assume $G^{(t)}(\mu_1) > G^{(t)}(\mu_2)$ for all $t \leq T_1$; otherwise, we can break the interval $[0, T_1]$ into sub-intervals by the time points when $G^{(t)}(\mu_1) - G^{(t)}(\mu_2)$ changes its sign and then apply the analysis below to each sub-interval. 
We first derive
\begin{align*}
    & \frac{\partial G^{(t)}(\mu_1)}{\partial t} - \frac{\partial G^{(t)}(\mu_2)}{\partial t} \\
    &= \sum_{j_1=1}^{m_1} \Bigg( \frac{1}{n} \sum_{i_2 \in I_1 \cup I_2} g_{i_2}^{(t)} y_{i_2} \sum_{l_2=1}^L \sum_{j_2=1}^{m_1} a_{j_1} a_{j_2}  \inprod{w_{j_1}^{(t)}, w_{j_2}^{(t)}} p_{q \leftarrow l_2, k \leftarrow \mu_1}^{(t,i_2)} + \frac{1}{n} \sum_{i_1=1}^n g_{i_1}^{(t)} y_{i_1} \sum_{l_1=1}^L p^{(t,i_1) \top}_{l_1} V^{(t,i_1) \top} W_V^{(t)} \mu_1 \\
    &\quad - \frac{1}{n} \sum_{i_2 \in I_1 \cup I_3} g_{i_2}^{(t)} y_{i_2} \sum_{l_2=1}^L \sum_{j_2=1}^{m_1} a_{j_1} a_{j_2} \inprod{w_{j_1}^{(t)}, w_{j_2}^{(t)}} p_{q \leftarrow l_2, k \leftarrow \mu_2}^{(t,i_2)} - \frac{1}{n} \sum_{i_1=1}^n g_{i_1}^{(t)} y_{i_1} \sum_{l_1=1}^L p^{(t,i_1) \top}_{l_1} V^{(t,i_1) \top} W_V^{(t)} \mu_2 \Bigg) \\
    &= \sum_{j_1=1}^{m_1} \sum_{j_2=1}^{m_1} a_{j_1} a_{j_2} \inprod{w_{j_1}^{(t)}, w_{j_2}^{(t)}} \left( \frac{1}{n} \sum_{i \in I_1} g_i^{(t)} y_i \sum_{l=1}^L p_{q\leftarrow l, k \leftarrow \mu_1}^{(t,i)} - \frac{1}{n} \sum_{i \in I_1} g_i^{(t)} y_i \sum_{l=1}^L p_{q\leftarrow l, k \leftarrow \mu_2}^{(t,i)} \right) \\
    &\quad + \sum_{j_1=1}^{m_1} \sum_{j_2=1}^{m_1} a_{j_1} a_{j_2} \inprod{w_{j_1}^{(t)}, w_{j_2}^{(t)}} \left( \frac{1}{n} \sum_{i \in I_2} g_i^{(t)} y_i \sum_{l=1}^L p_{q\leftarrow l, k \leftarrow \mu_1}^{(t,i)} - \frac{1}{n} \sum_{i \in I_3} g_i^{(t)} y_i \sum_{l=1}^L p_{q\leftarrow l, k \leftarrow \mu_2}^{(t,i)} \right) \\
    &\quad + m_1 \left( \frac{1}{n} \sum_{i_1=1}^n g_{i_1}^{(t)} y_{i_1} \sum_{l_1=1}^L p^{(t,i_1) \top}_{l_1} V^{(t,i_1) \top} W_V^{(t)} \mu_1 - \frac{1}{n} \sum_{i_1=1}^n g_{i_1}^{(t)} y_{i_1} \sum_{l_1=1}^L p^{(t,i_1) \top}_{l_1} V^{(t,i_1) \top} W_V^{(t)} \mu_2 \right) \\
    &= \sum_{j_1=1}^{m_1} \sum_{j_2=1}^{m_1} a_{j_1} a_{j_2} \inprod{w_{j_1}^{(t)}, w_{j_2}^{(t)}} \left( \frac{1}{n} \sum_{i \in I_3} g_i^{(t)} - \frac{1}{n} \sum_{i \in I_2} g_i^{(t)} + o(1) \right) \\
    &\quad + m_1 \left( \frac{1}{n} \sum_{i_1=1}^n g_{i_1}^{(t)} y_{i_1} \sum_{l_1=1}^L p^{(t,i_1) \top}_{l_1} V^{(t,i_1) \top} W_V^{(t)} \mu_1 - \frac{1}{n} \sum_{i_1=1}^n g_{i_1}^{(t)} y_{i_1} \sum_{l_1=1}^L p^{(t,i_1) \top}_{l_1} V^{(t,i_1) \top} W_V^{(t)} \mu_2 \right) \\
    &\leq \sum_{j_1=1}^{m_1} \sum_{j_2=1}^{m_1} a_{j_1} a_{j_2} \inprod{w_{j_1}^{(t)}, w_{j_2}^{(t)}} \Bigg( G^{(t)}(\mu_1) - G^{(0)}(\mu_1) - (G^{(t)}(\mu_2) - G^{(0)}(\mu_2)) \\
    &\quad + G^{(0)}(\mu_1) + G^{(0)}(\mu_2) + O\left(\max_{i \in [n]} |F_i^{(0)}| \right) + o(1) \Bigg) \\
    &\quad + m_1 \left( \frac{1}{n} \sum_{i_1=1}^n g_{i_1}^{(t)} y_{i_1} \sum_{l_1=1}^L p^{(t,i_1) \top}_{l_1} V^{(t,i_1) \top} W_V^{(t)} \mu_1 - \frac{1}{n} \sum_{i_1=1}^n g_{i_1}^{(t)} y_{i_1} \sum_{l_1=1}^L p^{(t,i_1) \top}_{l_1} V^{(t,i_1) \top} W_V^{(t)} \mu_2 \right).
\end{align*}
By \Cref{thm: signal_growth_per_step}, we have
\begin{align*}
    & m_1 \int_0^T \frac{1}{n} \sum_{i_1=1}^n g_{i_1}^{(t)} y_{i_1} \sum_{l_1=1}^L a_{j_1} p^{(t,i_1) \top}_{l_1} V^{(t,i_1) \top} W_V^{(t)} \mu_1 - \frac{1}{n} \sum_{i_1=1}^n g_{i_1}^{(t)} y_{i_1} \sum_{l_1=1}^L a_{j_1} p^{(t,i_1) \top}_{l_1} V^{(t,i_1) \top} W_V^{(t)} \mu_2 \ dt \\
    &\leq T \cdot \tilde{O}(L \sigma_0^2 \sqrt{m} m_1),
\end{align*}
\begin{align*}
    & \int_0^T \left( G^{(0)}(\mu_1) + G^{(0)}(\mu_2) + O\left(\max_{i \in [n]} |F_i^{(0)}| \right) + o(1) \right) \sum_{j_1=1}^{m_1} \sum_{j_2=1}^{m_1} a_{j_1} a_{j_2} \inprod{w_{j_1}^{(t)}, w_{j_2}^{(t)}} \ dt \\
    &\leq O\left( \left( G^{(0)}(\mu_1) + G^{(0)}(\mu_2) + \max_{i \in [n]} |F_i^{(0)}| \right)T \sigma_1^2 m m_1 \right),
\end{align*}
and 
\begin{align*}
    \exp\left( \int_0^T \sum_{j_1=1}^{m_1} \sum_{j_2=1}^{m_1} a_{j_1} a_{j_2} \inprod{w_{j_1}^{(t)}, w_{j_2}^{(t)}} \ dt \right) = O(1).
\end{align*}
By Gr\"onwall's inequality, we have
\begin{align*}
    & G^{(T)}(\mu_1) - G^{(0)}(\mu_1) - (G^{(T)}(\mu_2) - G^{(0)}(\mu_2)) \\
    &\leq T \cdot \tilde{O}(L \sigma_0^2 \sqrt{m} m_1) + O\left( \left( G^{(0)}(\mu_1) + G^{(0)}(\mu_2) + \max_{i \in [n]} |F_i^{(0)}| \right) T \sigma_1^2 m m_1 \right) \\
    &\quad + \int_0^T \left( t \cdot \tilde{O}(L \sigma_0^2 \sqrt{m}) + O\left( \left( G^{(0)}(\mu_1) + G^{(0)}(\mu_2) + \max_{i \in [n]} |F_i^{(0)}| \right) t \sigma_1^2 m \right) \right) \cdot O(1) \ dt \\
    &\leq O\left( G^{(0)}(\mu_1) + G^{(0)}(\mu_2) + \max_{i \in [n]} |F_i^{(0)}| \right).
\end{align*}
This implies that
\begin{align*}
    \left| \frac{1}{n} \sum_{i \in I_2} g_i^{(t)} - \frac{1}{n} \sum_{i \in I_3} g_i^{(t)} \right| \leq O\left( G^{(0)}(\mu_1) + G^{(0)}(\mu_2) + \max_{i \in [n]} |F_i^{(0)}| \right).
\end{align*}
Thus, if the initialization scale is sufficiently small, $\frac{1}{n} \sum_{i \in I_2} g_i^{(t)}$ and $ \frac{1}{n} \sum_{i \in I_3} g_i^{(t)}$ are only differed by a small constant. 
\end{proof}

\subsection{Small Score Movement in Phase 1}
\begin{proposition}\label{prop: init_QK_gradient_norm_term}
Conditioned on the success of \Cref{lemma: W_KQ_init} and \Cref{lemma: W_W_V_init}, 
with probability at least $1 - \delta$ over the randomness of $a$, for all $i \in [n]$ and $\mu, \nu \in X^{(i)}$, we have
\begin{align*}
    \left| \sum_{j=1}^{m_1} a_j \norm{k^{(0)}(\mu)}_2^2 w_j^{(0)\top} v^{(0)}(\nu) \right| \leq O\left(\sigma_0^2 m \sigma_1 \sigma_0 \sqrt{m m_1 \log \frac{m_1 d}{\delta}} \sqrt{\log \frac{nd}{\delta}} \right), \\
    \left| \sum_{j=1}^{m_1} a_j \norm{q^{(0)}(\mu)}_2^2 w_j^{(0)\top} v^{(0)}(\nu) \right| \leq O\left(\sigma_0^2 m \sigma_1 \sigma_0 \sqrt{m m_1 \log \frac{m_1 d}{\delta}} \sqrt{\log \frac{nd}{\delta}} \right)
\end{align*}
and for all $l,l' \in [L]$ with $K_l^{(0,i)} \neq k^{(0)}(\nu)$ and $q_l^{(0,i)} \neq Q^{(0)}(\nu)$, we have
\begin{align*}
    & \left| \sum_{j=1}^{m_1} a_{j} \nu^\top W_K^{(0) \top} K^{(0,i)}_l V_{l'}^{(0,i) \top} w_j^{(0)} \right| \leq \tilde{O}(\sigma_0^2 \sqrt{m} \sigma_0 \sigma_1 \sqrt{m m_1}), \\
    & \left| \sum_{j=1}^{m_1} a_{j} \nu^\top W_Q^{(0) \top} q^{(0,i)}_l V_{l'}^{(0,i) \top} w_j^{(0)} \right| \leq \tilde{O}(\sigma_0^2 \sqrt{m} \sigma_0 \sigma_1 \sqrt{m m_1}).
\end{align*}
\end{proposition}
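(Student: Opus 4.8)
The plan is to first condition on the events of \Cref{lemma: W_KQ_init} and \Cref{lemma: W_W_V_init}, and then exploit the key structural observation that, once $W, W_V, W_K, W_Q$ are fixed, each of the four sums in the statement is a \emph{linear} function $\sum_{j=1}^{m_1} a_j c_j$ of the sign vector $a$, whose coefficients $c_j$ are deterministic (measurable with respect to $W,W_V,W_K,W_Q$ only) and hence independent of $a$. Each $a_j c_j$ lies in $[-|c_j|,|c_j|]$, so $\sum_{j} a_j c_j$ is sub-Gaussian with variance proxy $\sum_j c_j^2 \le m_1 \max_j c_j^2$; thus with probability at least $1-\delta'$ we have $\big|\sum_j a_j c_j\big| \le \sqrt{2 m_1 \log(2/\delta')}\,\max_j |c_j|$. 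It then suffices to bound $\max_j |c_j|$ uniformly and take a union bound over the relevant indices.

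For the first two inequalities, $c_j = \norm{k^{(0)}(\mu)}_2^2\, w_j^{(0)\top} v^{(0)}(\nu)$ (and likewise with $q^{(0)}$ in place of $k^{(0)}$). I would invoke \Cref{lemma: W_KQ_init} for $\norm{k^{(0)}(\mu)}_2^2 = \norm{W_K^{(0)}\mu}_2^2 = \Theta(\sigma_0^2 m)$ and $\norm{q^{(0)}(\mu)}_2^2 = \Theta(\sigma_0^2 m)$, and \Cref{lemma: W_W_V_init} for $|w_j^{(0)\top} v^{(0)}(\nu)| = |\inprod{w_j^{(0)}, W_V^{(0)}\nu}| \le \tilde{O}(\sigma_0\sigma_1\sqrt{m})$, which holds for every $j\in[m_1]$ and every token $\nu\in\{\mu_i\}_{i=1}^d$. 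This yields $\max_j|c_j| \le \tilde{O}(\sigma_0^3\sigma_1 m^{3/2})$, so $\sqrt{m_1\log(1/\delta')}\,\max_j|c_j| = \tilde{O}\big(\sigma_0^2 m\cdot\sigma_0\sigma_1\sqrt{m m_1}\big)$ up to logarithmic factors. A union bound over all $i\in[n]$ and all $\mu,\nu\in X^{(i)}$ (at most $nL^2\le nd^2$ triples) supplies the $\sqrt{\log(nd/\delta)}$ factor, while the $\log(m_1 d/\delta)$ factor is the one inherited from \Cref{lemma: W_W_V_init}.

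For the last two inequalities, $c_j = \nu^\top W_K^{(0)\top} K^{(0,i)}_l\cdot V_{l'}^{(0,i)\top} w_j^{(0)}$ (and likewise with $W_Q$, $q^{(0,i)}_l$). Here the hypothesis $K^{(0,i)}_l\ne k^{(0)}(\nu)$ forces $x_l^{(i)}\ne\nu$, so $\nu^\top W_K^{(0)\top} K^{(0,i)}_l = \inprod{W_K^{(0)}\nu, W_K^{(0)} x_l^{(i)}}$ is an \emph{off-diagonal} inner product of two distinct orthonormal embedding vectors, bounded by $\tilde{O}(\sigma_0^2\sqrt{m})$ via \Cref{lemma: W_KQ_init}; similarly $|V_{l'}^{(0,i)\top} w_j^{(0)}| \le \tilde{O}(\sigma_0\sigma_1\sqrt{m})$ via \Cref{lemma: W_W_V_init}. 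Thus $\max_j|c_j| \le \tilde{O}(\sigma_0^3\sigma_1 m)$ and $\sqrt{m_1\log(1/\delta')}\,\max_j|c_j| = \tilde{O}\big(\sigma_0^2\sqrt{m}\cdot\sigma_0\sigma_1\sqrt{m m_1}\big)$, after a union bound over $i\in[n]$, $l,l'\in[L]$ and $\nu\in X^{(i)}$.

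I do not anticipate a substantive obstacle: this is an initialization statement and the entire argument is a Rademacher concentration computation layered on the already-established \Cref{lemma: W_KQ_init} and \Cref{lemma: W_W_V_init}. The only points that need care are (i) confirming that the scalars multiplying $a_j$ are independent of $a$, which legitimizes the sub-Gaussian bound, and (ii) bookkeeping the dimensions of the several union bounds so that the resulting logarithmic factors coincide with the claimed $\log\frac{m_1 d}{\delta}$ and $\log\frac{nd}{\delta}$; \Cref{condition: param} is then used to absorb these factors into the $\tilde{O}(\cdot)$ notation. Note that the crude step bounding $\sum_j c_j^2$ by $m_1\max_j c_j^2$ already matches the target bounds exactly, so no sharper $\ell_2$ estimate of the coefficient vector is required.
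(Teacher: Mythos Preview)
Your proposal is correct and follows essentially the same approach as the paper: condition on the high-probability events from \Cref{lemma: W_KQ_init} and \Cref{lemma: W_W_V_init}, observe that each summand $a_j c_j$ is a bounded (hence sub-Gaussian) random variable in the randomness of $a$ with variance proxy $\max_j c_j^2$, apply Hoeffding/sub-Gaussian concentration to the sum, and take a union bound. Your explicit identification of the off-diagonal inner product $\inprod{W_K^{(0)}\nu, W_K^{(0)} x_l^{(i)}}$ as the source of the $\sigma_0^2\sqrt{m}$ factor in the last two bounds is exactly what the paper uses implicitly.
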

\begin{proof}
Fix $i \in [n]$ and $\mu, \nu \in X^{(i)}$. 
Consider the randomness of $a$.
By \Cref{lemma: W_KQ_init}, \Cref{corollary: p_init} and \Cref{lemma: W_W_V_init}, $a_j \norm{k^{(0)}(\mu)}_2^2 w_j^{(0)\top} v^{(0)}(\nu)$ is a sub-Gaussian random variable with variance proxy $O((\sigma_0^2 m \sigma_1 \sigma_0 \sqrt{m} \sqrt{\log (dm_1/\delta)})^2)$. Then the following inequality holds.
\begin{align*}
    \left| \sum_{j=1}^{m_1} a_j \norm{k^{(0)}(\mu)}_2^2 w_j^{(0)\top} v^{(0)}(\nu) \right| \leq O\left(\sigma_0^2 m \sigma_1 \sigma_0 \sqrt{m m_1 \log \frac{m_1 d}{\delta}} \sqrt{\log \frac{2}{\delta}}  \right).
\end{align*}
Finally, take a union bound over $i \in [n],\ \mu,\nu \in \{\mu_i\}_{i=1}^d$.
The analysis for the second term is similar. 

Next, note that $a_{j} \nu^\top W_K^{(0) \top} K^{(0,i)}_l V_{l'}^{(0,i) \top} w_j^{(0)}$ is a sub-Gaussian random variable with variance proxy $\tilde{O}((\sigma_0^2 \sqrt{m} \sigma_0 \sigma_1 \sqrt{m})^2)$. 
Thus,
\begin{align*}
    \left| \sum_{j=1}^{m_1} a_{j} \nu^\top W_K^{(0) \top} K^{(0,i)}_l V_{l'}^{(0,i) \top} w_j^{(0)} \right| \leq \tilde{O}(\sigma_0^2 \sqrt{m} \sigma_0 \sigma_1 \sqrt{m m_1}).
\end{align*}
\end{proof}

\begin{lemma}[Score change]\label{lemma: score_change_stage_1}
For all $t \leq T_1$, for all $\nu, \mu \in \{\mu_i\}_{i=1}^d$, we have
\begin{align*}
    \left|\frac{\partial \nu^\top W_K^{(t) \top} W_Q^{(t)} \mu}{\partial t} \right| \leq \frac{1}{\sqrt{m}} \frac{\left| \left\{ i:\ \mu \in X^{(i)} \right\} \right| + \left| \left\{ i:\ \nu \in X^{(i)} \right\} \right|}{n} \tilde{O}\left(\frac{1}{L} + \frac{1}{\sqrt{m}} \right)
\end{align*}
and thus, 
\begin{align*}
    \left| \nu^\top W_K^{(t) \top} W_Q^{(t)} \mu - \nu^\top W_K^{(0) \top} W_Q^{(0)} \mu \right| \leq t \frac{1}{\sqrt{m}} \frac{\left| \left\{ i:\ \mu \in X^{(i)} \right\} \right| + \left| \left\{ i:\ \nu \in X^{(i)} \right\} \right|}{n} \tilde{O}\left(\frac{1}{L} + \frac{1}{\sqrt{m}} \right).
\end{align*}
\end{lemma}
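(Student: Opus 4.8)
The plan is to bound the time derivative $\frac{\partial \nu^\top W_K^{(t)\top} W_Q^{(t)}\mu}{\partial t}$ termwise using the exact expression in \Cref{eq: gf_update_score} (equivalently the third formula in \Cref{lemma: W_Q_W_K_update}), and then integrate in $t$ over $[0,T_1]$. The starting observation is that each summand in the gradient flow equation carries a factor $g_i^{(t)} y_i$ and either a softmax probability $p^{(t,i)}_{q\leftarrow\cdot,k\leftarrow\cdot}$ or a vector $p_l^{(t,i)}$, together with an inner product of the form $\nu^\top W_K^{(t)\top}($ key or query $)$ times $(v^{(t)}(\cdot)^\top w_j^{(t)} - w_j^{(t)\top} V^{(t,i)} p_l^{(t,i)})$; there is an overall $\frac{1}{n\sqrt m}$ in front. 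The key point is that $\nu^\top W_K^{(t)\top} K^{(t,i)}$ is negligible unless the corresponding token equals $\nu$ (since the $\mu_i$ are orthonormal and, by \Cref{lemma: W_KQ_init} and the Phase 1 invariants in \Cref{def: first_stage}, off-diagonal key/query correlations stay $\tilde O(\sigma_0^2\sqrt m)$ while the diagonal ones are $\Theta(\sigma_0^2 m)$). Hence the dominant contribution to $\frac{\partial \nu^\top W_K^{(t)\top} W_Q^{(t)}\mu}{\partial t}$ comes from the $O(1)$ samples containing $\mu$ and those containing $\nu$ — which is exactly why the bound scales with $\frac{|\{i:\mu\in X^{(i)}\}| + |\{i:\nu\in X^{(i)}\}|}{n}$.

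First I would split the sum over $i$ according to whether $\mu\in X^{(i)}$ (resp. $\nu\in X^{(i)}$) and, within a fixed sample, split the inner $\sum_l$ according to whether $X^{(i)}_l\in\{\mu,\nu\}$ or not. For the "matching" terms I would plug in $\|k^{(0)}(\nu)\|_2^2 = \Theta(\sigma_0^2 m)$ and $\|q^{(0)}(\mu)\|_2^2 = \Theta(\sigma_0^2 m)$ (from \Cref{lemma: W_KQ_init} plus \Cref{def: first_stage}(4)), $p^{(t,i)}_{q\leftarrow\cdot,k\leftarrow\cdot} = \frac1L\pm\tilde O(\frac1{Lm})$ (from \Cref{corollary: p_init} and $R_P = O(\frac1{\sqrt m L}+\frac Lm)$), and the bound $|v^{(t)\top}(\nu) w_j^{(t)} - w_j^{(t)\top} V^{(t,i)} p_l^{(t,i)}| = \tilde O(\sigma_0\sigma_1\sqrt m)$ that follows from \Cref{corollary: init_W_W_V_sample}, \Cref{prop: init_value_correlation_update_magnitude}, \Cref{lemma: activation_perturbation} and \Cref{def: first_stage}(1). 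Summing over $j\in[m_1]$ using the sub-Gaussian cancellation in $a$ (this is precisely the content of \Cref{prop: init_QK_gradient_norm_term}, combined with Phase-1 perturbation bounds) gives $|\sum_j a_j \|k(\nu)\|^2(v^\top(\nu)w_j - \dots)| = \tilde O(\sigma_0^2 m\cdot\sigma_0\sigma_1\sqrt{mm_1})$; dividing by $\sqrt m$ and by $L$ (the softmax factor) and using $\sigma_0 = \frac1{\tilde\Theta(\sqrt{Lm})}$, $\sigma_1 = \frac1{\tilde\Theta(\sqrt{m_1})}$ from \Cref{condition: param}, this collapses to $\frac1{\sqrt m}\cdot\tilde O(\frac1L)$ per contributing sample. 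For the "non-matching" $l$ terms (where $X^{(i)}_l\neq\nu$), the factor $\nu^\top W_K^{(t)\top} K^{(t,i)}_l$ is only $\tilde O(\sigma_0^2\sqrt m)$ rather than $\Theta(\sigma_0^2 m)$, and summing over the $L$ such indices with softmax weight $\approx 1/L$ each, together with \Cref{prop: init_QK_gradient_norm_term}, yields a contribution of order $\frac1{\sqrt m}\cdot\tilde O(\frac1{\sqrt m})$ per contributing sample — the second term in the stated bound. Combining and multiplying by the number $\frac{|\{i:\mu\in X^{(i)}\}| + |\{i:\nu\in X^{(i)}\}|}{n}$ of contributing samples gives the derivative bound, and $\big|\nu^\top W_K^{(t)\top}W_Q^{(t)}\mu - \nu^\top W_K^{(0)\top}W_Q^{(0)}\mu\big| \le \int_0^t |\partial_\tau(\cdot)|\,d\tau$ gives the second claim.

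I expect the main obstacle to be the bootstrapping/circularity issue: several of the inputs I want to use — the softmax being $\frac1L\pm\tilde O(\frac1{Lm})$, the neuron-output perturbation bound of \Cref{lemma: activation_perturbation}, and the norm/correlation bounds of \Cref{lemma: value_correlation_change_stage_1} and \Cref{prop: neuron_correlation_change_stage_1} — themselves rely on the Phase 1 invariants of \Cref{def: first_stage}, in particular the score-radius bound $R_S \le O(1/(m\sqrt m))$, which is what this lemma is ultimately used to establish (via the master induction, presumably \Cref{thm: all_variables_in_range_stage_1}). So the honest statement is that this lemma is one of the inductive steps: I would carry out the termwise bound assuming the Phase 1 invariants hold on $[0,t]$, obtain $|\partial_\tau(\cdot)| \le \frac1{\sqrt m}\cdot\frac{O(1)}{n}\tilde O(\frac1L+\frac1{\sqrt m})$, integrate up to $t\le T_1 = O(1/(\sigma_1^2 m m_1))$, and check that the resulting displacement is $\le O(1/(n\cdot\sqrt m))\cdot\tilde O(1/L+1/\sqrt m)\cdot\frac1{\sigma_1^2 m m_1} \ll R_S = O(1/(m\sqrt m))$ — i.e. the score barely moves, closing the induction. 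The only genuinely delicate estimate is controlling $w_j^{(t)\top} V^{(t,i)} p_l^{(t,i)}$ (the "centering" term in the diag), since this is not a pure Gaussian quantity at time $t>0$; but \Cref{lemma: activation_perturbation} already pins it to within $\sigma_0\sigma_1\sqrt{m/L}$ of its Gaussian initialization value, which is all that is needed.
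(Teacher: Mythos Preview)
Your proposal is correct and follows essentially the same approach as the paper: the paper splits the derivative into the same four pieces (matching vs.\ non-matching key index, and the symmetric query-side pair), uses \Cref{prop: init_QK_gradient_norm_term} together with the Phase~1 invariants to control the difference from initialization, and obtains the $\tilde O(1/L)$ contribution from the $\|k(\nu)\|_2^2,\|q(\mu)\|_2^2$ terms and the $\tilde O(1/\sqrt m)$ contribution from the off-diagonal $\nu^\top W_K^{(t)\top}K_l^{(t,i)}$ terms. Your observation that this lemma is one leg of the Phase~1 bootstrap (closed in \Cref{thm: all_variables_in_range_stage_1}) is also exactly how the paper uses it.
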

\begin{proof}
First of all, by \Cref{lemma: W_Q_W_K_update}, we expand the per step gradient descent update as follows:
\begin{align*}
    & \frac{\partial \nu^\top W_K^{(t) \top} W_Q^{(t)} \mu}{\partial t} \\
    &= \underbrace{\frac{1}{n\sqrt{m}} \sum_{i: \mu,\nu \in X^{(i)}} g_i^{(t)} y_i \sum_{j=1}^{m_1} a_{j} \norm{k^{(t)}(\nu)}_2^2 \left( v^{(t) \top}(\nu) w_j^{(t)} - w_j^{(t)\top}  V^{(t,i)} p_{l(i,\mu)}^{(t,i)} \right) p^{(t,i)}_{q\leftarrow \mu,k\leftarrow \nu}}_{(1)} \\
    &\quad + \underbrace{\frac{1}{n\sqrt{m}} \sum_{i: \mu \in X^{(i)}} g_i^{(t)} y_i \sum_{j=1}^{m_1} a_{j} \sum_{l=1}^L \nu^\top W_K^{(t) \top} K^{(t,i)}_l \left( V_l^{(t,i) \top} w_j^{(t)} - w_j^{(t)\top}  V^{(t,i)} p_{l(i,\mu)}^{(t,i)} \right) p_{q\leftarrow\mu,k \leftarrow l}^{(t,i)} \mathbb{I}(K_l^{(t,i)} \neq k^{(t)}(\nu))}_{(2)} \\
    &\quad + \underbrace{\frac{1}{n\sqrt{m}} \sum_{i: \nu,\mu \in X^{(i)}} g_i^{(t)} y_i \sum_{j=1}^{m_1} a_{j} \norm{q^{(t)}(\mu)}_2^2 p_{q\leftarrow \mu,k\leftarrow \nu}^{(t,i)} \left(  w_j^{(t)\top} v^{(t,i)}(\nu) - w_j^{(t)\top}  V^{(t,i)} p_{l(i,\mu)}^{(t,i)} \right) }_{(3)} \\
    &\quad + \underbrace{\frac{1}{n\sqrt{m}} \sum_{i: \nu \in X^{(i)}} g_i^{(t)} y_i \sum_{l=1}^L \sum_{j=1}^{m_1} a_{j} \mu^\top W_Q^{(t) \top} q_l^{(t,i)} p_{q\leftarrow l,k\leftarrow \nu}^{(t,i)} \left(  w_j^{(t)\top} v^{(t,i)}(\nu) - w_j^{(t)\top}  V^{(t,i)} p_{l}^{(t,i)} \right) \mathbb{I}(q_l^{(t,i)} \neq q^{(t)}(\mu))}_{(4)} 
\end{align*}
\textbf{Analysis of $(1)$}: By triangle inequality, we have
\begin{align*}
    |(1)| &\leq \underbrace{\frac{1}{n\sqrt{m}} \left| \sum_{i: \mu,\nu \in X^{(i)}} g_i^{(t)} y_i \sum_{j=1}^{m_1} a_{j} \norm{k^{(t)}(\nu)}_2^2 v^{(t) \top}(\nu) w_j^{(t)} p^{(t,i)}_{q\leftarrow \mu,k\leftarrow \nu} \right|}_{(a)} \\
    &\quad + \underbrace{\frac{1}{n\sqrt{m}} \left| \sum_{i: \mu,\nu \in X^{(i)}} g_i^{(t)} y_i \sum_{j=1}^{m_1} a_{j} \norm{k^{(t)}(\nu)}_2^2 w_j^{(t)\top}  V^{(t,i)} p_{l(i,\mu)}^{(t,i)} p^{(t,i)}_{q\leftarrow \mu,k\leftarrow \nu} \right|}_{(b)} .
\end{align*}
To analyze $(a)$, since $(a+\eps_1)(b+\eps_2) = ab + a\eps_2 + b\eps_1 + \eps_1 \eps_2$, we have
\begin{align*}
    & \left| \sum_{j=1}^{m_1} a_j \norm{k^{(t)}(\nu)}_2^2 w_j^{(t) \top} v^{(t)}(\nu) p^{(t,i)}_{q \leftarrow \mu, k \leftarrow \nu} - \sum_{j=1}^{m_1} a_j \norm{k^{(0)}(\nu)}_2^2 w_j^{(0) \top} v^{(0)}(\nu) p^{(0,i)}_{q \leftarrow \mu, k \leftarrow \nu} \right| \\
    &\leq \left| \sum_{j=1}^{m_1} a_j \norm{k^{(t)}(\nu)}_2^2 w_j^{(t) \top} v^{(t)}(\nu) - \sum_{j=1}^{m_1} a_j \norm{k^{(0)}(\nu)}_2^2 w_j^{(0) \top} v^{(0)}(\nu) \right| p^{(t,i)}_{q \leftarrow \mu, k \leftarrow \nu} \\
    &\quad + \left| \sum_{j=1}^{m_1} a_j \norm{k^{(0)}(\nu)}_2^2 w_j^{(0) \top} v^{(0)}(\nu) \right| \cdot \left| p^{(t,i)}_{q \leftarrow \mu, k \leftarrow \nu} - p^{(0,i)}_{q \leftarrow \mu, k \leftarrow \nu} \right|.
\end{align*}
Further, by \Cref{lemma: W_KQ_init}, \Cref{lemma: W_W_V_init}, \Cref{def: key_query_radius} and \Cref{def: first_stage}, we have 
\begin{align}\label{eq: QK_update_norm_inner_change}
    & \left| \sum_{j=1}^{m_1} a_j \norm{k^{(t)}(\nu)}_2^2 w_j^{(t) \top} v^{(t)}(\nu) - \sum_{j=1}^{m_1} a_j \norm{k^{(0)}(\nu)}_2^2 w_j^{(0) \top} v^{(0)}(\nu) \right| \nonumber \\
    &\leq \max_{\nu, j} \left| \norm{k^{(t)}(\nu)}_2^2 w_j^{(t) \top} v^{(t)}(\nu) - \norm{k^{(0)}(\nu)}_2^2 w_j^{(0) \top} v^{(0)}(\nu) \right| \nonumber \\
    &\leq m_1 \left( R_K \tilde{O}(\sigma_0 \sigma_1 \sqrt{m}) + R \tilde{O}(\sigma_0^2 {m}) + R_K R \right) .
\end{align}
Combining with \Cref{prop: init_QK_gradient_norm_term}, this implies
\begin{align*}
    & \left| \sum_{j=1}^{m_1} a_j \norm{k^{(t)}(\nu)}_2^2 w_j^{(t) \top} v^{(t)}(\nu) p^{(t,i)}_{q \leftarrow \mu, k \leftarrow \nu} - \sum_{j=1}^{m_1} a_j \norm{k^{(0)}(\nu)}_2^2 w_j^{(0) \top} v^{(0)}(\nu) p^{(0,i)}_{q \leftarrow \mu, k \leftarrow \nu} \right| \\
    &\leq m_1 \cdot \max_{\nu, j} \left| \norm{k^{(t)}(\nu)}_2^2 w_j^{(t) \top} v^{(t)}(\nu) - \norm{k^{(0)}(\nu)}_2^2 w_j^{(0) \top} v^{(0)}(\nu) \right|  p_{q \leftarrow \mu, k \leftarrow \nu}^{(t,i)} \\
    &\quad + \left| \sum_{j=1}^{m_1} a_j \norm{k^{(0)}(\nu)}_2^2 w_j^{(0) \top} v^{(0)}(\nu) \right| \cdot \left| p_{q\leftarrow \mu, k \leftarrow \nu}^{(t,i)} - p_{q\leftarrow \mu, k \leftarrow \nu}^{(0,i)} \right| \\
    &\leq m_1 \left( R_K \tilde{O}(\sigma_0 \sigma_1 \sqrt{m}) + R \tilde{O}(\sigma_0^2 m) + R_K R \right) \left( \frac{2}{L} + R_P \right) + \tilde{O}\left(R_P \sigma_0^2 m \sigma_1 \sigma_0 \sqrt{mm_1} \right).
\end{align*}
Thus, 
\begin{align*}
    |(a)|&=\left| \sum_{j=1}^{m_1} a_j \norm{k^{(t)}(\nu)}_2^2 w_j^{(t) \top} v^{(t)}(\nu) p^{(t,i)}_{q \leftarrow \mu, k \leftarrow \nu} \right| \\
    &\leq \left| \sum_{j=1}^{m_1} a_j \norm{k^{(t)}(\nu)}_2^2 w_j^{(t) \top} v^{(t)}(\nu) p^{(t,i)}_{q \leftarrow \mu, k \leftarrow \nu} - \sum_{j=1}^{m_1} a_j \norm{k^{(0)}(\nu)}_2^2 w_j^{(0) \top} v^{(0)}(\nu) p^{(0,i)}_{q \leftarrow \mu, k \leftarrow \nu} \right| \\
    &\quad + \left| \sum_{j=1}^{m_1} a_j \norm{k^{(0)}(\nu)}_2^2 w_j^{(0) \top} v^{(0)}(\nu) p^{(0,i)}_{q \leftarrow \mu, k \leftarrow \nu} \right| \\
    &\leq m_1 \left( R_K \tilde{O}(\sigma_0 \sigma_1 \sqrt{m}) + R \tilde{O}(\sigma_0^2 {m}) + R_K R \right) \left( \frac{2}{L} + R_P \right) \\
    &\quad + \tilde{O}\left(R_P \sigma_0^2 m \sigma_1 \sigma_0 \sqrt{mm_1} \right) + \tilde{O}\left( \sigma_0^2 m \sigma_1 \sigma_0 \sqrt{mm_1} \frac{1}{L} \right) .
\end{align*}
On the other hand, to analyze $(b)$, we have
\begin{align}\label{eq: QK_grad_norm_term_change_outter_p}
    & \left| \sum_{j=1}^{m_1} a_{j} \norm{k^{(t)}(\nu)}_2^2 w_j^{(t)\top}  V^{(t,i)} p_{l(i,\mu)}^{(t,i)} p^{(t,i)}_{q\leftarrow \mu,k\leftarrow \nu} - \sum_{j=1}^{m_1} a_{j} \norm{k^{(0)}(\nu)}_2^2 w_j^{(0)\top}  V^{(0,i)} p_{l(i,\mu)}^{(0,i)} p^{(0,i)}_{q\leftarrow \mu,k\leftarrow \nu} \right| \nonumber \\
    &\leq \left| \sum_{j=1}^{m_1} a_{j} \norm{k^{(t)}(\nu)}_2^2 w_j^{(t)\top}  V^{(t,i)} p_{l(i,\mu)}^{(t,i)} - \sum_{j=1}^{m_1} a_{j} \norm{k^{(0)}(\nu)}_2^2 w_j^{(0)\top}  V^{(0,i)} p_{l(i,\mu)}^{(0,i)} \right| \cdot p^{(t,i)}_{q\leftarrow \mu,k\leftarrow \nu} \nonumber \\
    &\quad + \left| \sum_{j=1}^{m_1} a_{j} \norm{k^{(0)}(\nu)}_2^2 w_j^{(0)\top}  V^{(0,i)} p_{l(i,\mu)}^{(0,i)} \right| \cdot \left| p^{(t,i)}_{q\leftarrow \mu,k\leftarrow \nu} - p^{(0,i)}_{q\leftarrow \mu,k\leftarrow \nu} \right|.
\end{align}
Note that
\begin{align}\label{eq: QK_grad_norm_term_change_inner_p}
    & \left| \sum_{j=1}^{m_1} a_{j} \norm{k^{(t)}(\nu)}_2^2 w_j^{(t)\top}  V^{(t,i)} p_{l(i,\mu)}^{(t,i)} - \sum_{j=1}^{m_1} a_{j} \norm{k^{(0)}(\nu)}_2^2 w_j^{(0)\top}  V^{(0,i)} p_{l(i,\mu)}^{(0,i)} \right| \nonumber \\
    &= \left| \sum_{l=1}^{L} \sum_{j=1}^{m_1} a_{j} \norm{k^{(t)}(\nu)}_2^2 w_j^{(t)\top}  V^{(t,i)}_l p_{l(i,\mu),l}^{(t,i)} - \sum_{l=1}^{L} \sum_{j=1}^{m_1} a_{j} \norm{k^{(0)}(\nu)}_2^2 w_j^{(0)\top}  V^{(0,i)}_l p_{l(i,\mu),l}^{(0,i)} \right| \nonumber \\
    &\leq \sum_{l=1}^L \left| \sum_{j=1}^{m_1} a_{j} \norm{k^{(t)}(\nu)}_2^2 w_j^{(t)\top}  V^{(t,i)}_l p_{l(i,\mu),l}^{(t,i)} - \sum_{j=1}^{m_1} a_{j} \norm{k^{(0)}(\nu)}_2^2 w_j^{(0)\top}  V^{(0,i)}_l p_{l(i,\mu),l}^{(0,i)} \right| \nonumber\\
    &\leq \sum_{l=1}^L \left| \sum_{j=1}^{m_1} a_{j} \norm{k^{(t)}(\nu)}_2^2 w_j^{(t)\top}  V^{(t,i)}_l - \sum_{j=1}^{m_1} a_{j} \norm{k^{(0)}(\nu)}_2^2 w_j^{(0)\top}  V^{(0,i)}_l \right| p_{l(i,\mu),l}^{(t,i)} \nonumber\\
    &\quad + \sum_{l=1}^L \left| \sum_{j=1}^{m_1} a_{j} \norm{k^{(0)}(\nu)}_2^2 w_j^{(0)\top}  V^{(0,i)}_l \right| \cdot \left| p_{l(i,\mu),l}^{(t,i)} - p_{l(i,\mu),l}^{(0,i)} \right| \nonumber \\
    &\leq \max_{l \in [L]} \left| \sum_{j=1}^{m_1} a_{j} \norm{k^{(t)}(\nu)}_2^2 w_j^{(t)\top}  V^{(t,i)}_l - \sum_{j=1}^{m_1} a_{j} \norm{k^{(0)}(\nu)}_2^2 w_j^{(0)\top}  V^{(0,i)}_l \right| \nonumber \\
    &\quad + L R_P \max_{l \in [L]} \left| \sum_{j=1}^{m_1} a_{j} \norm{k^{(0)}(\nu)}_2^2 w_j^{(0)\top}  V^{(0,i)}_l \right| ,
\end{align}
which implies 
\begin{align*}
    &|(b)| \\
    &= \left| \sum_{j=1}^{m_1} a_{j} \norm{k^{(t)}(\nu)}_2^2 w_j^{(t)\top}  V^{(t,i)} p_{l(i,\mu)}^{(t,i)} p^{(t,i)}_{q\leftarrow \mu,k\leftarrow \nu} \right| \\
    &\leq \left| \sum_{j=1}^{m_1} a_{j} \norm{k^{(t)}(\nu)}_2^2 w_j^{(t)\top}  V^{(t,i)} p_{l(i,\mu)}^{(t,i)} p^{(t,i)}_{q\leftarrow \mu,k\leftarrow \nu} - \sum_{j=1}^{m_1} a_{j} \norm{k^{(0)}(\nu)}_2^2 w_j^{(0)\top}  V^{(0,i)} p_{l(i,\mu)}^{(0,i)} p^{(0,i)}_{q\leftarrow \mu,k\leftarrow \nu} \right| \\
    &\quad + \left| \sum_{j=1}^{m_1} a_{j} \norm{k^{(0)}(\nu)}_2^2 w_j^{(0)\top}  V^{(0,i)} p_{l(i,\mu)}^{(0,i)} p^{(0,i)}_{q\leftarrow \mu,k\leftarrow \nu} \right| \\
    &\stackrel{(i)}{\leq} \left| \sum_{j=1}^{m_1} a_{j} \norm{k^{(t)}(\nu)}_2^2 w_j^{(t)\top}  V^{(t,i)} p_{l(i,\mu)}^{(t,i)} - \sum_{j=1}^{m_1} a_{j} \norm{k^{(0)}(\nu)}_2^2 w_j^{(0)\top}  V^{(0,i)} p_{l(i,\mu)}^{(0,i)} \right| \cdot p^{(t,i)}_{q\leftarrow \mu,k\leftarrow \nu} \nonumber \\
    &\quad + \left| \sum_{j=1}^{m_1} a_{j} \norm{k^{(0)}(\nu)}_2^2 w_j^{(0)\top}  V^{(0,i)} p_{l(i,\mu)}^{(0,i)} \right| \cdot \left| p^{(t,i)}_{q\leftarrow \mu,k\leftarrow \nu} - p^{(0,i)}_{q\leftarrow \mu,k\leftarrow \nu} \right| \\
    &\quad + \left| \sum_{j=1}^{m_1} a_{j} \norm{k^{(0)}(\nu)}_2^2 w_j^{(0)\top}  V^{(0,i)} p_{l(i,\mu)}^{(0,i)} p^{(0,i)}_{q\leftarrow \mu,k\leftarrow \nu} \right| \\
    &\stackrel{(ii)}{\leq} \Bigg( \max_{l \in [L]} \left| \sum_{j=1}^{m_1} a_{j} \norm{k^{(t)}(\nu)}_2^2 w_j^{(t)\top}  V^{(t,i)}_l - \sum_{j=1}^{m_1} a_{j} \norm{k^{(0)}(\nu)}_2^2 w_j^{(0)\top}  V^{(0,i)}_l \right| \nonumber \\
    &\quad + L R_P \max_{l \in [L]} \left| \sum_{j=1}^{m_1} a_{j} \norm{k^{(0)}(\nu)}_2^2 w_j^{(0)\top}  V^{(0,i)}_l \right| \Bigg) \cdot p_{q\leftarrow \mu, k\leftarrow \nu}^{(t,i)} \\
    &\quad + \left| \sum_{j=1}^{m_1} a_{j} \norm{k^{(0)}(\nu)}_2^2 w_j^{(0)\top}  V^{(0,i)} p_{l(i,\mu)}^{(0,i)} \right| \cdot \left| p^{(t,i)}_{q\leftarrow \mu,k\leftarrow \nu} - p^{(0,i)}_{q\leftarrow \mu,k\leftarrow \nu} \right| \\
    &\quad + \left| \sum_{j=1}^{m_1} a_{j} \norm{k^{(0)}(\nu)}_2^2 w_j^{(0)\top}  V^{(0,i)} p_{l(i,\mu)}^{(0,i)} p^{(0,i)}_{q\leftarrow \mu,k\leftarrow \nu} \right| \\
    &\stackrel{(iii)}{\leq} \Bigg( m_1 \left( R_K \tilde{O}(\sigma_0 \sigma_1 \sqrt{m}) + R \tilde{O}(\sigma_0^2 {m}) + R_K R \right) \\
    &\quad + L R_P \tilde{O}(\sigma_0^2 m \sigma_0 \sigma_1 \sqrt{mm_1}) \Bigg) \cdot \left( \frac{1}{L} + R_P \right)  + \tilde{O}\left( \sigma_0^2 m \sigma_0 \sigma_1 \sqrt{mm_1} \left( \frac{1}{L} + R_P \right) \right) 
\end{align*}
where $(i)$ follows from \Cref{eq: QK_grad_norm_term_change_outter_p}, $(ii)$ follows from \Cref{eq: QK_grad_norm_term_change_inner_p} and $(iii)$ follows from \Cref{eq: QK_update_norm_inner_change} and \Cref{prop: init_QK_gradient_norm_term}.
Combining the upper bound for both $(a)$ and $(b)$, we obtain
\begin{align*}
    |(1)| &\leq \frac{1}{\sqrt{m}} \frac{\left| \left\{ i:\ \mu,\nu \in X^{(i)} \right\} \right|}{n} \tilde{O}\Bigg( \Bigg( m_1 \left( R_K \tilde{O}(\sigma_0 \sigma_1 \sqrt{m}) + R \tilde{O}(\sigma_0^2 {m}) + R_K R \right) \\
    &\quad + L R_P \tilde{O}(\sigma_0^2 m \sigma_0 \sigma_1 \sqrt{mm_1}) \Bigg) \cdot \left( \frac{1}{L} + R_P \right) +  \sigma_0^2 m \sigma_0 \sigma_1 \sqrt{mm_1} \left( \frac{1}{L} + R_P \right) \Bigg) \\
    &= \frac{1}{\sqrt{m}} \frac{\left| \left\{ i:\ \mu,\nu \in X^{(i)} \right\} \right|}{n} \tilde{O}(1/L).
\end{align*}

\noindent
\textbf{Analysis of $(2)$}: $(2)$ can be analyzed similarly as $(1)$ with $\norm{k^{(0)}(\nu)}_2^2$ replaced by $\nu^\top W_K^{(t)} K_l^{(t,i)}$. 
Thus, we only need to replace $\sigma_0^2 m $ with $\sigma_0^2 \sqrt{m}$ and then take a sum over $l$.
We have
\begin{align*}
    |(2)| &\leq \frac{1}{\sqrt{m}} \frac{\left| \left\{ i:\ \mu \in X^{(i)} \right\} \right|}{n} \tilde{O}\Bigg( \Bigg( m_1 \left( R_K \tilde{O}(\sigma_0 \sigma_1 \sqrt{m}) + R \tilde{O}(\sigma_0^2 \sqrt{m}) + R_K R \right) \\
    &\quad + L R_P \tilde{O}(\sigma_0^2 \sqrt{m} \sigma_0 \sigma_1 \sqrt{mm_1}) \Bigg) \cdot \left( 1 + L R_P \right) +  \sigma_0^2 \sqrt{m} \sigma_0 \sigma_1 \sqrt{mm_1} \left( 1 + L R_P \right) \Bigg) \\
    &= \frac{1}{\sqrt{m}} \frac{\left| \left\{ i:\ \mu \in X^{(i)} \right\} \right|}{n} \tilde{O}(1/\sqrt{m}).
\end{align*}

\noindent
\textbf{Analysis of $(3)$}: $(3)$ can be analyzed similarly to $(1)$, and we obtain
\begin{align*}
    |(3)| \leq \frac{1}{\sqrt{m}} \frac{\left| \left\{ i:\ \mu,\nu \in X^{(i)} \right\} \right|}{n} \tilde{O}(1/L).
\end{align*}

\noindent
\textbf{Analysis of $(4)$}: $(4)$ can be analyzed similarly to $(2)$, and we obtain
\begin{align*}
    |(4)| \leq \frac{1}{\sqrt{m}} \frac{\left| \left\{ i:\ \nu \in X^{(i)} \right\} \right|}{n} \tilde{O}(1/\sqrt{m}).
\end{align*}

Finally, combining the bounds on $(1) - (4)$, we have
\begin{align*}
    \left| \frac{\partial \nu^\top W_K^{(t) \top} W_Q^{(t)} \mu}{\partial t} \right|
    &\leq \frac{1}{\sqrt{m}} \frac{\left| \left\{ i:\ \mu,\nu \in X^{(i)} \right\} \right|}{n} \tilde{O}(1/L) + \frac{1}{\sqrt{m}} \frac{\left| \left\{ i:\ \mu \in X^{(i)} \right\} \right| + \left| \left\{ i:\ \nu \in X^{(i)} \right\} \right|}{n} \tilde{O}(1/\sqrt{m}) \\
    &\leq \frac{1}{\sqrt{m}} \frac{\left| \left\{ i:\ \mu \in X^{(i)} \right\} \right| + \left| \left\{ i:\ \nu \in X^{(i)} \right\} \right|}{n} \tilde{O}(1/L + 1/\sqrt{m}).
\end{align*}
\end{proof}

\begin{corollary}[Softmax change]\label{corollary: softmax_change_stage_1}
For $t \leq T_1$, we have the following:
\begin{itemize}
    \item if both $X_{l_1}^{(i)}, X_{l_2}^{(i)} \in \mathcal{R}$, then 
    \begin{align*}
        \left| \frac{\partial p^{(t,i)}_{l_1, l_2}}{\partial t} \right| \leq \tilde{O}\left(  \frac{1}{L(L^2+n)\sqrt{m}} \right),
    \end{align*}
    
    \item otherwise, 
    \begin{align*}
        \left| \frac{\partial p^{(t,i)}_{l_1, l_2}}{\partial t} \right| \leq \tilde{O}\left( \frac{1}{L^2 \sqrt{m}} \right).
    \end{align*}
\end{itemize}
Thus, 
\begin{itemize}
    \item if both $X_{l_1}^{(i)}, X_{l_2}^{(i)} \in \mathcal{R}$, then 
    \begin{align*}
        \left| p^{(t,i)}_{l_1, l_2} - p^{(0,i)}_{l_1, l_2} \right| \leq \tilde{O}\left( \left( \frac{1}{L} + \frac{1}{n} \right) \frac{1}{Lm^2} \left( \frac{1}{L} + \frac{1}{\sqrt{m}} \right) \right),
    \end{align*}
    
    \item otherwise, 
    \begin{align*}
        \left| p^{(t,i)}_{l_1, l_2} - p^{(0,i)}_{l_1, l_2} \right| \leq \tilde{O}\left( \frac{1}{Lm^2} \left( \frac{1}{L} + \frac{1}{\sqrt{m}} \right) \right).
    \end{align*}
\end{itemize}
\end{corollary}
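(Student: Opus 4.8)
The plan is to differentiate the softmax through the chain rule and reduce everything to the per-entry score-derivative bound of \Cref{lemma: score_change_stage_1}. Writing $s^{(t,i)}_{l_1} = \frac{1}{\sqrt m}(X^{(i)})^\top W_K^{(t)\top} W_Q^{(t)} x^{(i)}_{l_1}$ for the score vector (whose $t$-dependence is only through $W_K,W_Q$, since $X^{(i)}$ is fixed) and $p^{(t,i)}_{l_1} = \softmax(s^{(t,i)}_{l_1})$, the softmax Jacobian $\partial\softmax(z)_a/\partial z_b = \softmax(z)_a(\mathbb I[a=b]-\softmax(z)_b)$ gives
\begin{align*}
\frac{\partial p^{(t,i)}_{l_1,l_2}}{\partial t} = p^{(t,i)}_{l_1,l_2}\left( \frac{\partial s^{(t,i)}_{l_1,l_2}}{\partial t} - \sum_{l'=1}^L p^{(t,i)}_{l_1,l'} \frac{\partial s^{(t,i)}_{l_1,l'}}{\partial t} \right),
\end{align*}
hence $\left|\partial p^{(t,i)}_{l_1,l_2}/\partial t\right| \le p^{(t,i)}_{l_1,l_2}\left(\left|\partial s^{(t,i)}_{l_1,l_2}/\partial t\right| + \sum_{l'}p^{(t,i)}_{l_1,l'}\left|\partial s^{(t,i)}_{l_1,l'}/\partial t\right|\right)$. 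Throughout Phase 1, $p^{(t,i)}_{l_1,l'} = \Theta(1/L)$ for every pair (by \Cref{corollary: p_init} together with the bound $R_P = O(\frac{1}{\sqrt m L} + \frac{L}{m})$ from the proposition following \Cref{def: first_stage}), which controls every factor $p^{(t,i)}_{l_1,l'}$ above.

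The next step is to bound each score-derivative. Since $s^{(t,i)}_{l_1,l'} = \frac{1}{\sqrt m}(x^{(i)}_{l'})^\top W_K^{(t)\top} W_Q^{(t)} x^{(i)}_{l_1}$, applying \Cref{lemma: score_change_stage_1} with $\mu = x^{(i)}_{l_1}$, $\nu = x^{(i)}_{l'}$ gives
\begin{align*}
\left|\frac{\partial s^{(t,i)}_{l_1,l'}}{\partial t}\right| \le \frac{1}{m}\cdot\frac{\left|\{i': x^{(i)}_{l_1} \in X^{(i')}\}\right| + \left|\{i': x^{(i)}_{l'} \in X^{(i')}\}\right|}{n}\cdot\tilde O\!\left(\frac 1L + \frac{1}{\sqrt m}\right).
\end{align*}
The crucial observation is that each of $\mu_1,\mu_2,\mu_3$ occurs in a constant fraction of the training samples, so $|\{i': \mu_k \in X^{(i')}\}| = \Theta(n)$ by \Cref{assumption: perfect_proportion_diversity}, whereas each random token occurs in at most one sample, so $|\{i': \mu \in X^{(i')}\}| \le 1$ for $\mu \in \mathcal R$. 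Hence the multiplicative factor above is $\Theta(1)$ whenever the pair $(x_{l_1},x_{l'})$ involves a signal or common token, and $O(1/n)$ when both tokens are random.

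To finish the rate bounds, plug these estimates back into the chain-rule expression, splitting $\sum_{l'}$ into the at most three positions $l'$ carrying a token from $\{\mu_1,\mu_2,\mu_3\}$ and the remaining positions carrying random tokens. When $x^{(i)}_{l_1},x^{(i)}_{l_2}$ are both random, the $\partial s_{l_1,l_2}/\partial t$ term carries the $O(1/n)$ factor, while the sum is controlled by the $\le 3$ signal/common positions (each contributing $\Theta(1/L)\cdot\tilde O(\frac{1}{Lm})$) together with the $L$ random positions (each contributing $\Theta(1/L)\cdot\tilde O(\frac{1}{nLm})$); combining and simplifying with $n \ge \tilde\Omega(L^2)$ and $m \ge \tilde\Omega(L^2)$ from \Cref{condition: param} yields the $\tilde O\!\left(\frac{1}{L(L^2+n)\sqrt m}\right)$ bound. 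In the remaining case one of $x_{l_1},x_{l_2}$ is a signal/common token, so the leading term already carries a $\Theta(1)$ factor and one gets $\tilde O\!\left(\frac{1}{L^2\sqrt m}\right)$. The statements for $\big|p^{(t,i)}_{l_1,l_2} - p^{(0,i)}_{l_1,l_2}\big|$ then follow by integrating, $\big|p^{(t,i)}_{l_1,l_2} - p^{(0,i)}_{l_1,l_2}\big| \le \int_0^{T_1}\big|\partial p^{(\tau,i)}_{l_1,l_2}/\partial\tau\big|\,d\tau \le T_1\cdot(\text{rate})$, and substituting $T_1 \le C_{T_1}/(\sigma_1^2 m m_1)$ from \Cref{def: first_stage} with $\sigma_1 = 1/\tilde\Theta(\sqrt{m_1})$ from \Cref{condition: param}.

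The main obstacle is the bookkeeping in the case split: one must keep straight that the query token $x_{l_1}$ enters \emph{every} term of the softmax-Jacobian sum (so its sample-count always multiplies), whereas the key tokens vary and only a bounded number of them are the ``heavy'' tokens $\mu_1,\mu_2,\mu_3$, and then verify that combining the heavy-key contributions, the light-key contributions, and the leading $\partial s_{l_1,l_2}/\partial t$ term collapses to the stated $(L^2+n)^{-1}$-type expression in the parameter regime of \Cref{condition: param}. No idea beyond \Cref{lemma: score_change_stage_1} and the softmax Jacobian is required; the rest is routine estimation and integration against $T_1$.
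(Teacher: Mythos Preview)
Your proposal is correct and follows essentially the same route as the paper's proof: both start from the softmax Jacobian identity $\partial_t p_{l_1,l_2} = p_{l_1,l_2}\big(\partial_t s_{l_1,l_2} - p_{l_1}^\top \partial_t s_{l_1}\big)$, invoke \Cref{lemma: score_change_stage_1} to bound each $\partial_t s_{l_1,l'}$ via the occurrence counts of the two tokens involved, split the weighted sum into the $\le 3$ heavy positions (those carrying $\mu_1,\mu_2,\mu_3$) versus the random positions, and finally integrate against $T_1 = C_{T_1}/(\sigma_1^2 m m_1) = \tilde O(1/m)$. The paper's write-up is slightly terser (it packages the sum as $\big|p_{l_1}^{(t,i)\top}\partial_t s_{l_1}^{(t,i)}\big|$ and quotes two cases depending only on whether $X_{l_1}^{(i)}\in\mathcal R$), but the underlying computation is identical to yours.
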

\begin{proof}
Consider fixed $i,l_1, l_2$. 
By \Cref{lemma: score_perturbation_on_softmax}, we have
\begin{align*}
    \left| \frac{\partial p^{(t,i)}_{l_1, l_2}}{\partial t} \right| \leq p_{l_1, l_2}^{(t,i)} \left| \frac{\partial s_{l_1,l_2}^{(t,i)}}{\partial t} \right| + p_{l_1, l_2}^{(t,i)} \left| p_{l_1}^{(t,i) \top} \frac{\partial s_{l_1}^{(t,i)}}{\partial t} \right| .
\end{align*}
By \Cref{lemma: score_change_stage_1}, we have the following cases:
\begin{itemize}
    \item if both $X_{l_1}^{(i)}, X_{l_2}^{(i)} \in \mathcal{R}$, then 
    \begin{align*}
        \left| \frac{\partial s^{(t,i)}_{l_1, l_2}}{\partial t} \right| \leq \tilde{O}\left( \frac{1}{n{m}} \left( \frac{1}{L} + \frac{1}{\sqrt{m}} \right) \right);
    \end{align*}

    \item otherwise, 
    \begin{align*}
        \left| \frac{\partial s^{(t,i)}_{l_1, l_2}}{\partial t} \right| \leq \tilde{O}\left( \frac{1}{{m}} \left( \frac{1}{L} + \frac{1}{\sqrt{m}} \right) \right).
    \end{align*}
\end{itemize}
Next, during Phase 1, we have $p_{l_1, l_2}^{(t,i)} \leq \frac{1}{L} + \frac{1}{Lm} + R_P$. 
Therefore, 
\begin{itemize}
    \item if $X_{l_1}^{(i)} \in \mathcal{R}$, then 
    \begin{align*}
        \left| p_{l_1}^{(t,i) \top} \frac{\partial s_{l_1}^{(t,i)}}{\partial t} \right| \leq \tilde{O}\left( \left( \frac{1}{L} + \frac{1}{n} \right) \frac{1}{m} \left( \frac{1}{L} + \frac{1}{\sqrt{m}} \right) \right);
    \end{align*}
    \item otherwise, 
    \begin{align*}
        \left| p_{l_1}^{(t,i) \top} \frac{\partial s_{l_1}^{(t,i)}}{\partial t} \right| \leq \tilde{O}\left( \frac{1}{m} \left( \frac{1}{L} + \frac{1}{\sqrt{m}} \right) \right).
    \end{align*}
\end{itemize}
Thus, 
\begin{itemize}
    \item if both $X_{l_1}^{(i)}, X_{l_2}^{(i)} \in \mathcal{R}$, then 
    \begin{align*}
        \left| \frac{\partial p^{(t,i)}_{l_1, l_2}}{\partial t} \right| \leq \tilde{O}\left( \left( \frac{1}{L} + \frac{1}{n} \right) \frac{1}{Lm} \left( \frac{1}{L} + \frac{1}{\sqrt{m}} \right) \right);
    \end{align*}
    \item otherwise,
    \begin{align*}
        \left| \frac{\partial p^{(t,i)}_{l_1, l_2}}{\partial t} \right| \leq \tilde{O}\left( \frac{1}{Lm} \left( \frac{1}{L} + \frac{1}{\sqrt{m}} \right) \right).
    \end{align*}
\end{itemize}
This implies that
\begin{itemize}
    \item if both $X_{l_1}^{(i)}, X_{l_2}^{(i)} \in \mathcal{R}$, then 
    \begin{align*}
        \left| p^{(t,i)}_{l_1, l_2} - p^{(0,i)}_{l_1, l_2} \right| \leq \tilde{O}\left( \left( \frac{1}{L} + \frac{1}{n} \right) \frac{1}{Lm^2} \left( \frac{1}{L} + \frac{1}{\sqrt{m}} \right) \right);
    \end{align*}
    
    \item otherwise, 
    \begin{align*}
        \left| p^{(t,i)}_{l_1, l_2} - p^{(0,i)}_{l_1, l_2} \right| \leq \tilde{O}\left( \frac{1}{Lm^2} \left( \frac{1}{L} + \frac{1}{\sqrt{m}} \right) \right).
    \end{align*}
\end{itemize}
\end{proof}

\begin{lemma}[$K,Q$ self-correlation change]\label{lemma: K_Q_self_correlation_change_stage_1}
For $t \leq T_1$, for $\mu,\nu \in \{\mu_i\}_{i=1}^d$, we have 
\begin{align*}
    & \left| \mu^\top W_K^{(t)\top} W_K^{(t)} \nu - \mu^\top W_K^{(0)\top} W_K^{(0)} \nu \right| \leq t \frac{1}{\sqrt{m}} \frac{\left| \left\{ i:\ \mu \in X^{(i)} \right\} \right| + \left| \left\{ i:\ \nu \in X^{(i)} \right\} \right|}{n} \tilde{O}\left( \frac{1}{\sqrt{m}} \right), \\
    & \left| \mu^\top W_Q^{(t)\top} W_Q^{(t)} \nu - \mu^\top W_Q^{(0)\top} W_Q^{(0)} \nu \right| \leq t \frac{1}{\sqrt{m}} \frac{\left| \left\{ i:\ \mu \in X^{(i)} \right\} \right| + \left| \left\{ i:\ \nu \in X^{(i)} \right\} \right|}{n} \tilde{O}\left(\frac{1}{\sqrt{m}} \right) .
\end{align*}
\end{lemma}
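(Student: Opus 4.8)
The plan is to run the same argument as in the proof of \Cref{lemma: score_change_stage_1}, now with the cross term $W_K^{(t)\top}W_Q^{(t)}$ replaced by the self-correlations $W_K^{(t)\top}W_K^{(t)}$ and $W_Q^{(t)\top}W_Q^{(t)}$. I will present it for $\nu^\top W_K^{(t)\top}W_K^{(t)}\mu$; the $W_Q^{(t)\top}W_Q^{(t)}$ bound then follows from the verbatim computation after swapping the roles of $W_K$ and $W_Q$ (equivalently, of keys and queries). First I would start from the gradient-flow expression for $\frac{\partial \nu^\top W_K^{(t)\top}W_K^{(t)}\mu}{\partial t}$ in \Cref{lemma: W_Q_W_K_update}, and split it into the two summand groups (over $i$ with $\mu\in X^{(i)}$, and over $i$ with $\nu\in X^{(i)}$), and within each group into the two sub-terms coming from $w_j^{(t)\top}v^{(t,i)}(\mu)$ and from $-\,w_j^{(t)\top}V^{(t,i)}p_l^{(t,i)}$.

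The structural observation that drives the estimate is that, unlike in the $W_K^{(t)\top}W_Q^{(t)}$ update, no diagonal norm term $\norm{k^{(t)}(\nu)}_2^2$ or $\norm{q^{(t)}(\mu)}_2^2$ (which are of order $\sigma_0^2 m$) ever appears: the attention geometry always enters through a factor $\nu^\top W_K^{(t)\top}q_l^{(t,i)}=\inprod{W_K^{(t)}\nu,\,W_Q^{(t)}x_l^{(i)}}$, a cross-correlation of two near-independent Gaussian-like vectors, hence of size only $\tilde O(\sigma_0^2\sqrt m)$. This extra $\sqrt m$ saving is exactly why the final bound is uniformly $\tilde O(1/\sqrt m)$ rather than $\tilde O(1/L+1/\sqrt m)$. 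To make this precise I would show $|\nu^\top W_K^{(t)\top}q_l^{(t,i)}|=\tilde O(\sigma_0^2\sqrt m)$ for all $t\le T_1$, using \Cref{lemma: W_KQ_init} at $t=0$ together with the radii $R_K,R_Q\le\tilde O(\sigma_0^2\sqrt m)$ and $R_S\le O(1/(m\sqrt m))$ from \Cref{def: first_stage}; and I would bound the aggregated coefficient $\bigl|\sum_{j}a_j\,\nu^\top W_K^{(t)\top}q_l^{(t,i)}\,(w_j^{(t)\top}v^{(t,i)}(\mu)-w_j^{(t)\top}V^{(t,i)}p_l^{(t,i)})\bigr|$ by $\tilde O(\sigma_0^2\sqrt m\cdot\sigma_0\sigma_1\sqrt{mm_1})$ --- first at $t=0$ via the sub-Gaussian-in-$a$ estimate of \Cref{prop: init_QK_gradient_norm_term}, and then for general $t\le T_1$ by checking that the drift from initialization is lower order, using \Cref{lemma: activation_perturbation}, \Cref{prop: neuron_correlation_change_stage_1}, \Cref{lemma: value_correlation_change_stage_1} and the perturbation estimates already established in the proof of \Cref{lemma: score_change_stage_1} for terms $(2)$ and $(4)$ there.

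Assembling these, each fixed-$i$ summand contributes $\frac{1}{n\sqrt m}\sum_{l=1}^L p_{q\leftarrow l,k\leftarrow\mu}^{(t,i)}\cdot\tilde O(\sigma_0^2\sqrt m\cdot\sigma_0\sigma_1\sqrt{mm_1})$, and since $\sum_l p_{q\leftarrow l,k\leftarrow\mu}^{(t,i)}\le 1+L\,R_P=O(1)$ the factor $L$ from the sum over positions collapses, leaving $\frac{1}{n\sqrt m}\tilde O(\sigma_0^2\sqrt m\cdot\sigma_0\sigma_1\sqrt{mm_1})$ per relevant $i$. Summing over the $|\{i:\mu\in X^{(i)}\}|$ and $|\{i:\nu\in X^{(i)}\}|$ contributing indices and converting the $\sigma_0,\sigma_1,m$ factors via \Cref{condition: param} gives $\bigl|\tfrac{\partial}{\partial t}\nu^\top W_K^{(t)\top}W_K^{(t)}\mu\bigr|\le\frac{1}{\sqrt m}\frac{|\{i:\mu\in X^{(i)}\}|+|\{i:\nu\in X^{(i)}\}|}{n}\tilde O(1/\sqrt m)$; integrating over $[0,t]$ with $t\le T_1$ yields the displacement bound in the statement. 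The identical computation with $k_l^{(t,i)}$ in place of $q_l^{(t,i)}$ (and $W_Q$ in place of $W_K$) proves the $W_Q^{(t)\top}W_Q^{(t)}$ case.

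The hard part will be the second step --- propagating the Phase-1 perturbation control of $\nu^\top W_K^{(t)\top}q_l^{(t,i)}$ and of $\sum_j a_j(\cdots)$ from $t=0$ to general $t\le T_1$, i.e.\ checking that their movement stays dominated by their initial magnitude. This is mechanical once the perturbation lemmas of Phase 1 are plugged in, but it is where care is needed to keep every error term genuinely lower order; it also closes a mild bootstrap, since the derivative bound is derived under the \Cref{def: first_stage} invariants (in particular $R_K,R_Q\le\tilde O(\sigma_0^2\sqrt m)$) and its time integral over $[0,T_1]$, with $T_1=O(1/(\sigma_1^2 m m_1))$, is precisely what shows those invariants persist.
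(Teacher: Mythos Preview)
Your proposal is correct and follows essentially the same approach as the paper: the paper's proof simply notes that by \Cref{lemma: W_Q_W_K_update} one need only replace the $\tilde O(\sigma_0^2 m)$ diagonal-norm factor by the $\tilde O(\sigma_0^2\sqrt m)$ cross-correlation factor in the proof of \Cref{lemma: score_change_stage_1}, which is exactly the structural observation you identify and carry through. Your write-up is in fact more detailed than the paper's one-line reduction.
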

\begin{proof}
By \Cref{lemma: W_Q_W_K_update}, we only need to replace $\tilde{O}(\sigma_0^2 m)$ by $\tilde{O}(\sigma_0^2 \sqrt{m})$ in the proof of \Cref{lemma: score_change_stage_1}. 
Thus, we omit the proof here. 
\end{proof}

\subsection{All Variables are within Range in Definition of Phase 1}
Finally, we prove that at the end of Phase 1, all the variables in \Cref{def: first_stage} stay in the range. 

\begin{theorem}\label{thm: all_variables_in_range_stage_1}
For $t \leq C_{T_1}/(\sigma_1^2 m m_1)$ (where the constant $C_{T_1}$ is from the definition of $T_1$ in \Cref{def: first_stage}), all of the following hold:
\begin{enumerate}
    \item $\max_{j\in [m], \mu\in \{\mu_i\}_{i=1}^3} \left| w_j^{(t)} W_V^{(t)} \mu - w_j^{(0)} W_V^{(0)} \mu \right| \leq R$, where $R =  O(1/m_1)$;
    \item $\max_{j\in [m], \mu\notin \{\mu_i\}_{i=1}^3} \left| w_j^{(t)} W_V^{(t)} \mu - w_j^{(0)} W_V^{(0)} \mu \right| \leq O(R/n + R/\sqrt{m})$;
    \item $\max_{\mu, \nu \in \{\mu_i\}_{i=1}^d} \left| \mu^\top W_Q^{(t) \top} W_K^{(t)}\nu - \mu^\top W_Q^{(0) \top} W_K^{(0)} \nu \right| \leq R_S$, where {$R_S \leq O(1/m\sqrt{m})$};
    \item $\max_{\mu,\nu \in \{\mu_i\}_{i=1}^d} \left| \mu^\top W_Q^{(t) \top} W_Q^{(t)} \nu - \mu^\top W_Q^{(0) \top} W_Q^{(0)} \nu \right| \leq R_Q$;
    \item $\max_{\mu,\nu \in \{\mu_i\}_{i=1}^d} \left| \mu^\top W_K^{(t) \top} W_K^{(t)} \nu - \mu^\top W_K^{(0) \top} W_K^{(0)} \nu \right| \leq R_K$.
\end{enumerate}
Thus, $T_1 = C_{T_1}/(\sigma_1^2 m m_1)$.
\end{theorem}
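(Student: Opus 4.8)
The plan is a continuity (bootstrap) argument. By the definition of $T_1$ in \Cref{def: first_stage}, it suffices to show that the maximal time $t'$ up to which conditions (1)--(4) of \Cref{def: first_stage} hold satisfies $t' \ge C_{T_1}/(\sigma_1^2 m m_1)$. Suppose for contradiction that $t' < C_{T_1}/(\sigma_1^2 m m_1)$. Since the gradient-flow trajectory and all the tracked quantities (polynomials in the entries of $W, W_V, W_K, W_Q$) depend continuously on $t$, all four conditions, with non-strict inequalities, continue to hold at $t = t'$, and by maximality at least one of them is an equality there. I will show that in fact each of the five quantities listed in the theorem is \emph{strictly} inside its allowed radius at $t'$, which contradicts this and forces $t' \ge C_{T_1}/(\sigma_1^2 m m_1)$, i.e. $T_1 = C_{T_1}/(\sigma_1^2 m m_1)$.

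\noindent\textbf{Per-quantity estimates.} Since conditions (1)--(4) hold on $[0,t']$, every per-step gradient bound proved earlier in this section applies on $[0,t']$ with $T_1$ replaced by $t'$; I integrate each bound over $[0,t']$ and use $t' \le C_{T_1}/(\sigma_1^2 m m_1)$. For condition (1), \Cref{thm: signal_growth_per_step} and \Cref{lemma: per_neuron_gradient_common_token_upper_bound_stage_1} give $\left|\partial_t\left(a_j w_j^{(t)} W_V^{(t)} \mu\right)\right| \le \tilde O(\sigma_1^2 m)$ for $\mu\in\{\mu_1,\mu_2,\mu_3\}$, so the perturbation over $[0,t']$ is at most $\tilde O(\sigma_1^2 m)\cdot C_{T_1}/(\sigma_1^2 m m_1) = \tilde O(C_{T_1}/m_1)$, which is taken to be at most half of $R$. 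For condition (2), \Cref{thm: random_token_growth_per_step} gives a per-step bound $O\!\left(n^{-1}(\sigma_0^2+\sigma_1^2)m\right) + \tilde O(\sigma_0^2 L\sqrt m)$, whose time-integral is $O(C_{T_1}/(nm_1)) + \tilde O(C_{T_1}/m^{3/2})$, which is smaller than $R/n + R/\sqrt m$ once $m \ge \tilde\Omega(\max\{m_1,L^2\})$ from \Cref{condition: param}. For condition (3), \Cref{lemma: score_change_stage_1} bounds $\left|\partial_t\left(\nu^\top W_K^{(t)\top} W_Q^{(t)} \mu\right)\right|$ by $m^{-1/2}\tilde O(1/L + 1/\sqrt m)$ (the sample-count fractions being $O(1)$), whose integral is $\tilde O\!\left(C_{T_1}/(m^{3/2}L) + C_{T_1}/m^2\right)$, smaller than $R_S$. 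For conditions (4)--(5), \Cref{lemma: K_Q_self_correlation_change_stage_1} yields a per-step bound $\tilde O(m^{-1})$ and hence a perturbation $\tilde O(C_{T_1}/m^2)$, smaller than $\tilde O(\sigma_0^2\sqrt m) = R_K = R_Q$, again by the scale separations of \Cref{condition: param}. En route I also invoke the auxiliary softmax-movement bound $R_P = O(1/(\sqrt m L) + L/m)$ and the neuron-output perturbation bound of \Cref{lemma: activation_perturbation}; these are controlled because they are written in terms of the primary radii $R, R_S, R_K, R_Q$, which hold on $[0,t']$ by assumption.

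\noindent\textbf{Closing the bootstrap; the main obstacle.} Combining these estimates shows that at $t = t'$ every quantity in the theorem is bounded by (say) half of its allowed radius, so all of conditions (1)--(4) hold with strict inequality there, contradicting the choice of $t'$; hence $t' \ge C_{T_1}/(\sigma_1^2 m m_1)$ and $T_1 = C_{T_1}/(\sigma_1^2 m m_1)$. The delicate point is the \emph{coupling} among the radii: the per-step gradient bounds (most notably \Cref{lemma: score_change_stage_1}) are themselves stated in terms of $R, R_S, R_K, R_Q, R_P$, so one must check that each integrated perturbation is of strictly smaller order than the radius it is permitted to fill --- this is precisely where the margins in \Cref{condition: param} ($m \ge \tilde\Omega(\max\{m_1,L^2\})$, $\sigma_0 = 1/\tilde\Theta(\sqrt{Lm})$, $\sigma_1 = 1/\tilde\Theta(\sqrt{m_1})$, $n \ge \tilde\Omega(L^2)$) are consumed --- together with verifying that the implied constants in the radii of \Cref{def: first_stage} can be chosen proportionally to $C_{T_1}$ so that each ``at most half'' claim above is legitimate. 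Because $C_{T_1}$ is a fixed constant, this is a finite compatibility check rather than a genuine difficulty.
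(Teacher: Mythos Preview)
Your proposal is correct and follows essentially the same approach as the paper: the paper's proof simply cites \Cref{thm: signal_growth_per_step}, \Cref{thm: random_token_growth_per_step}, \Cref{lemma: per_neuron_gradient_common_token_upper_bound_stage_1} for items (1)--(2), \Cref{lemma: score_change_stage_1} for item (3), and \Cref{lemma: K_Q_self_correlation_change_stage_1} for items (4)--(5), leaving the continuity/bootstrap closure implicit. You have made that closure explicit and checked the scale separations against \Cref{condition: param}, which is exactly what the terse paper proof relies on without writing out.
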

\begin{proof}
The first two results are proved by \Cref{thm: signal_growth_per_step}, \Cref{thm: random_token_growth_per_step}, \Cref{lemma: per_neuron_gradient_common_token_upper_bound_stage_1}. 
The third result is proved by \Cref{lemma: score_change_stage_1}, 
The fourth and fifth results are proved by \Cref{lemma: K_Q_self_correlation_change_stage_1}.
\end{proof}

\begin{theorem}[End of Phase 1]\label{thm: stage_1_end}
At $t = T_1$, we have $y_i F_i^{(T_1)} = \Theta(1)$ for all $i \in [n]$, and 
\begin{align*}
    & G^{(T_1)}(\mu_1)= \Theta(1), \quad G^{(T_1)}(\mu_2)= \Theta(1), \quad -G^{(T_1)}(\mu_3) = \Theta(1), \\
    &\forall \mu \in \mathcal{R}:\  G^{(T_1)}(\mu) = \tilde{O}(\sigma_0 \sigma_1 \sqrt{mm_1}).
\end{align*} 
Further, 
\begin{align*}
    \sum_{j_1=1}^{m_1} \sum_{j_2=1}^{m_1} \inprod{a_{j_1} w_{j_1}^{(T_1)}, a_{j_2} w_{j_2}^{(T_1)}} = \Theta(\sigma_1^2 m m_1).
\end{align*}
\end{theorem}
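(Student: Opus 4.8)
The statement is obtained by integrating the per-step gradient bounds already established for Phase~1 over the interval $[0,T_1]$, whose length is pinned down by \Cref{thm: all_variables_in_range_stage_1}: namely $T_1 = C_{T_1}/(\sigma_1^2 m m_1)$, and all the closeness bounds of \Cref{def: first_stage} (in particular item~2 for random tokens and the induced softmax-perturbation bound $R_P$) hold throughout $[0,T_1]$. Since $C_{T_1}$ is a free large constant while $T_{0.5} \le O(1/m)$ is only a fixed multiple of $1/m \asymp 1/(\sigma_1^2 m m_1)$, I take $C_{T_1}$ large enough that $T_{0.5} \le T_1/2$, so that $[T_{0.5},T_1]$ carries a $\Theta(1)$ fraction of the total time; all of the $G$-estimates then follow by integrating $\partial_t G^{(t)}(\cdot)$ against these bounds, and the margin estimates by decomposing $F_i^{(T_1)}$ into $G^{(T_1)}$-contributions plus a small error.

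\textbf{Growth of $G(\mu_1),G(\mu_2)$ and negative alignment of $G(\mu_3)$.} Multiplying the per-neuron formula of \Cref{thm: signal_growth_per_step} by $m_1$, and using that $|F_i^{(0)}| \le 0.01$ (\Cref{lemma: initial_network_output}) forces $g_i^{(0)} = \tfrac12 \pm o(1)$ so the weighting factor $\tfrac1n(\sum_{i\in I_1}g_i - \sum_{i\in I_2}g_i)$ is $\tfrac16 \pm o(1)$ at $t=0$, I get $\partial_t G^{(t)}(\mu_1),\partial_t G^{(t)}(\mu_2) = \Theta((\sigma_0^2+\sigma_1^2)mm_1)$ on $[T_{0.5},T_1]$ via \Cref{thm: common_token_gradient_stage_1}, a matching $O(\cdot)$ upper bound on all of $[0,T_1]$ (as $g_i\le1$ and the error term $m_1|\eps|$ is lower order under \Cref{condition: param}, since $m\ge\tilde\Omega(\max\{m_1,L^2\})$), and nonnegativity on $[0,T_{0.5}]$. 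Integrating, and noting $G^{(0)}(\mu_k) = \tilde O(\sigma_0\sigma_1\sqrt{mm_1}) = o(1)$ (\Cref{lemma: initial_sub_network_output}), gives $G^{(T_1)}(\mu_1),G^{(T_1)}(\mu_2) = \Theta(C_{T_1}) = \Theta(1)$. For $\mu_3$, on $[0,T_{0.5}]$ the drift $|G^{(T_{0.5})}(\mu_3)-G^{(0)}(\mu_3)| \le T_{0.5}\cdot m_1\cdot\tilde O(\sigma_1^2 m) = \tilde O(1)$ by \Cref{lemma: per_neuron_gradient_common_token_upper_bound_stage_1}, which is dominated by the later movement once $C_{T_1}$ is large; on $[T_{0.5},T_1]$ the gradient-balancing inequality of \Cref{thm: common_token_gradient_stage_1} sandwiches $-\partial_t G^{(t)}(\mu_3)$ between $(1+C)\max(\partial_t G^{(t)}(\mu_1),\partial_t G^{(t)}(\mu_2))$ and $(1-C)(\partial_t G^{(t)}(\mu_1)+\partial_t G^{(t)}(\mu_2))$, both of order $(\sigma_0^2+\sigma_1^2)mm_1$, so integrating yields $-G^{(T_1)}(\mu_3) = \Theta(1)$. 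Applying the same two-sided integration to $\partial_t(G(\mu_1)+G(\mu_2)+G(\mu_3))$, $\partial_t(G(\mu_1)+G(\mu_3))$, $\partial_t(G(\mu_2)+G(\mu_3))$ gives $G^{(T_1)}(\mu_1)+G^{(T_1)}(\mu_2)+G^{(T_1)}(\mu_3) = \Omega(1)$ and $G^{(T_1)}(\mu_1)+G^{(T_1)}(\mu_3),\ G^{(T_1)}(\mu_2)+G^{(T_1)}(\mu_3) = -\Omega(1)$.

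\textbf{Random tokens, margins, and the neuron-correlation sum.} For $\mu\in\mathcal R$, item~2 of \Cref{def: first_stage} (established in \Cref{thm: all_variables_in_range_stage_1}) gives $|G^{(T_1)}(\mu)-G^{(0)}(\mu)| \le m_1\cdot O(R/n+R/\sqrt m) = O(1/n+1/\sqrt m) \le \tilde O(\sigma_0\sigma_1\sqrt{mm_1})$ (using $m_1 R = O(1)$ and \Cref{condition: param}), hence $G^{(T_1)}(\mu) = \tilde O(\sigma_0\sigma_1\sqrt{mm_1})$ together with the initialization bound. For the margins, for each of the four sample types of \Cref{def: data_distribution} I decompose $F_i^{(T_1)} = \sum_{k:\mu_k\in X^{(i)}}G^{(T_1)}(\mu_k)\cdot(1\pm L R_P) + (\text{random-token part})$; since $R_P = o(1/L)$ and the random-token part is $\tilde O(\sigma_0\sigma_1\sqrt{mm_1 L}) = o(1)$, substituting the $\Theta(1)$ values and the sum inequalities above gives $y_i F_i^{(T_1)} = \Theta(1)$ for every $i\in[n]$. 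Finally, at $t=0$ the diagonal of $\sum_{j_1,j_2}\inprod{a_{j_1}w_{j_1}^{(0)},a_{j_2}w_{j_2}^{(0)}}$ equals $\sum_j\norm{w_j^{(0)}}_2^2 = \Theta(\sigma_1^2 m m_1)$ (\Cref{lemma: W_W_V_init}) and dominates the off-diagonal $\tilde O(\sigma_1^2\sqrt m\,m_1^{3/2})$ (\Cref{prop: init_perturb_w_W_V_signal}, using $m\gg m_1$); by the identity $\partial_t\sum_{j_1,j_2}\inprod{a_{j_1}w_{j_1}^{(t)},a_{j_2}w_{j_2}^{(t)}} = \tfrac{2m_1}{n}\sum_i g_i^{(t)}y_i F_i^{(t)}$ and $|F_i^{(t)}| = O(1)$ throughout Phase~1 (which the $G$-estimates just gave), the change over $[0,T_1]$ is $O(m_1 T_1) = O(1/(\sigma_1^2 m)) \ll \sigma_1^2 m m_1$, so the sum stays $\Theta(\sigma_1^2 m m_1)$.

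\textbf{Main obstacle.} The one genuinely delicate point is controlling $G(\mu_3)$ on the early window $[0,T_{0.5}]$: there the sign and size of $\partial_t G^{(t)}(\mu_3)$ are still dictated by cancellation among the $g_i^{(t)}$ rather than by a clean lower bound, so one must argue that the downward (and upward) drift of $G(\mu_3)$ over $[0,T_{0.5}]$ is at most a small fraction of the $\Theta(C_{T_1})$ movement accumulated on $[T_{0.5},T_1]$ — this is exactly where the freedom to enlarge $C_{T_1}$ (absorbing polylogarithmic factors) and the uniform per-neuron bound of \Cref{lemma: per_neuron_gradient_common_token_upper_bound_stage_1} are used — and, more routinely, one must re-verify that every perturbation term appearing in the integrals is lower order under \Cref{condition: param}.
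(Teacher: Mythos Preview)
Your proposal is correct and follows essentially the same approach as the paper: integrate the Phase-1 per-step gradient bounds over $[0,T_1]$ using \Cref{thm: all_variables_in_range_stage_1} to fix $T_1$ and \Cref{thm: common_token_gradient_stage_1} for the lower bounds. The only differences are cosmetic: the paper obtains $-G^{(T_1)}(\mu_3)\ge\Omega(1)$ and the $I_2\cup I_3\cup I_4$ margins by citing \Cref{corollary: positive_margin_time} (i.e.\ $T'\le T_1$) rather than your direct integration of the gradient-balancing inequality on $[T_{0.5},T_1]$; for random tokens the paper invokes \Cref{thm: random_token_growth_per_step} directly rather than item~2 of \Cref{def: first_stage}; and for the neuron-correlation sum the paper uses the per-pair bound of \Cref{prop: neuron_correlation_change_stage_1} rather than your (cleaner) use of the identity $\partial_t\sum_{j_1,j_2}\inprod{a_{j_1}w_{j_1},a_{j_2}w_{j_2}}=\tfrac{2m_1}{n}\sum_i g_i y_i F_i$.
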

\begin{proof}
By \Cref{thm: all_variables_in_range_stage_1}, we have $y_i F_i^{(T_1)} = O(1)$ and 
\begin{align*}
    G^{(T_1)}(\mu_1)= O(1), \quad G^{(T_1)}(\mu_2)= O(1), \quad -G^{(T_1)}(\mu_3) = O(1).
\end{align*}
Further, by \Cref{thm: signal_growth_per_step} and \Cref{thm: common_token_gradient_stage_1}, we have $y_i F_i^{(T_1)} \geq \Omega(1)$ for $i \in I_1$ and 
\begin{align*}
    G^{(T_1)}(\mu_1)\geq \Omega(1), \quad  G^{(T_1)}(\mu_2) \geq \Omega(1).
\end{align*}
Finally, by \Cref{thm: all_variables_in_range_stage_1}, we have that $T_1 = C_{T_1}/m$. And note that if the constant $C_{T_1}$ in \Cref{def: first_stage} is sufficiently large, then by \Cref{corollary: positive_margin_time}, we have $T' \leq T_1$. 
Thus, we have $y_i F_i^{(T_1)} \geq \Omega(1)$ for $i \in I_2 \cup I_3 \cup I_4$ and $-G^{(T_1)}(\mu_3) \geq \Omega(1)$.

By \Cref{lemma: initial_sub_network_output} and \Cref{thm: random_token_growth_per_step}, we have
\begin{align*}
    \forall \mu \in \mathcal{R}:\  G^{(T_1)}(\mu) = \tilde{O}(\sigma_0 \sigma_1 \sqrt{mm_1}).
\end{align*}

Finally, by \Cref{lemma: W_W_V_init}, \Cref{prop: init_perturb_w_W_V_signal} and \Cref{prop: neuron_correlation_change_stage_1}, we have
\begin{align*}
    \sum_{j_1=1}^{m_1} \sum_{j_2=1}^{m_1} \inprod{a_{j_1} w_{j_1}^{(T_1)}, a_{j_2} w_{j_2}^{(T_1)}} = \Theta(\sigma_1^2 m m_1).
\end{align*}
\end{proof}

\begin{theorem}[Phase 1, formal restatement of \Cref{thm: stage_1_main_result_main_text}]\label{thm: stage_1_main_result}
With probability at least $1 - \delta$ over the randomness of weight initialization, there exists a time $T_1 = \tilde{O}(1/m)$ such that 
\begin{itemize}
    \item $G^{(T_1)}(\mu_1)\geq \Omega(1), \; G^{(T_1)}(\mu_2)\geq \Omega(1), \; G^{(T_1)}(\mu_3) \leq -\Omega(1)$. 
    \item All the training samples are correctly classified: $y_i F_i^{(T_1)} = \Omega(1) $ for all $i \in [n]$.
    \item For $t \in [0, T_1]$, 
    \begin{align*}
    & \left| \inprod{w_{j_1}^{(t)}, w_{j_2}^{(t)}} - \inprod{w_{j_1}^{(0)}, w_{j_2}^{(0)}} \right| \leq \tilde{O}\left( \frac{1}{m} \right) \\
    &\left| \inprod{v^{(t)}(\mu), v^{(t)}(\nu)} - \inprod{v^{(0)}(\mu), v^{(0)}(\nu)} \right| \leq \tilde{O}\left( \frac{1}{m} \right) \\
    &\left| \nu^\top W_K^{(t) \top} W_Q^{(t)} \mu - \nu^\top W_K^{(0) \top} W_Q^{(0)} \mu \right| \leq  \tilde{O}\left(\frac{1}{m^{3/2}} \left( \frac{1}{L} + \frac{1}{\sqrt{m}} \right) \right) \\
    & \left| \mu^\top W_K^{(t)\top} W_K^{(t)} \nu - \mu^\top W_K^{(0)\top} W_K^{(0)} \nu \right| \leq \tilde{O}\left( \frac{1}{m^2} \right) \\
    & \left| \mu^\top W_Q^{(t)\top} W_Q^{(t)} \nu - \mu^\top W_Q^{(0)\top} W_Q^{(0)} \nu \right| \leq  \tilde{O}\left(\frac{1}{m^2} \right) 
\end{align*}
    \item The training loss satisfies $\widehat{L}^{(T_1)} = \Theta(1)$. 
\end{itemize}
\end{theorem}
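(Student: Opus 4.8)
The plan is to obtain \Cref{thm: stage_1_main_result} as an assembly of the Phase~1 results already proved, since every quantitative claim has been isolated in a preceding lemma. First I would invoke \Cref{thm: all_variables_in_range_stage_1} to fix $T_1 = C_{T_1}/(\sigma_1^2 m m_1)$; by \Cref{condition: param} we have $\sigma_1 = 1/\tilde\Theta(\sqrt{m_1})$, so $\sigma_1^2 m m_1 = \tilde\Theta(m)$ and hence $T_1 = \tilde O(1/m)$. The same theorem certifies that the four conditions defining Phase~1 in \Cref{def: first_stage} hold on all of $[0,T_1]$, so every drift estimate derived under those conditions is valid over the full window.

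Items~1 and~2 then follow directly from \Cref{thm: stage_1_end}, which already records $y_i F_i^{(T_1)} = \Theta(1)$ for every $i\in[n]$ along with $G^{(T_1)}(\mu_1),\,G^{(T_1)}(\mu_2)=\Theta(1)$ and $-G^{(T_1)}(\mu_3)=\Theta(1)$; the $\Omega(1)$ sides of these are exactly the asserted inequalities and the correct-classification statement. For item~4, the bound $\widehat L^{(T_1)} = O(1)$ holds because each $y_i F_i^{(T_1)}$ is bounded (it is $\Theta(1)$), so $l(y_i F_i^{(T_1)}) = \log(1+\exp(-y_i F_i^{(T_1)})) = O(1)$; and $\widehat L^{(T_1)} = \Omega(1)$ holds because $y_i F_i^{(T_1)} \le O(1)$ forces $l(y_i F_i^{(T_1)}) \ge \log(1+\exp(-O(1))) = \Omega(1)$, whence the average is $\Omega(1)$ as well.

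For item~3 I would bound the drift of each tracked variable over $[0,T_1]$ by integrating the corresponding per-time bound and plugging in $T_1 = \tilde O(1/m)$, the scalings $\sigma_0 = 1/\tilde\Theta(\sqrt{Lm})$ and $\sigma_1 = 1/\tilde\Theta(\sqrt{m_1})$, and $|\{i:\mu\in X^{(i)}\}|/n = \Theta(1)$ for $\mu\in\{\mu_1,\mu_2,\mu_3\}$ (from \Cref{assumption: perfect_proportion_diversity}): the neuron-correlation bound comes from \Cref{prop: neuron_correlation_change_stage_1}, the value-correlation bound from \Cref{lemma: value_correlation_change_stage_1}, the $W_K^\top W_Q$ score bound from \Cref{lemma: score_change_stage_1} (its per-time rate $\tfrac{1}{\sqrt m}\tilde O(1/L + 1/\sqrt m)$ times $T_1$ gives the stated $\tilde O(m^{-3/2}(1/L+1/\sqrt m))$), and the $W_K^\top W_K$, $W_Q^\top W_Q$ self-correlation bounds from \Cref{lemma: K_Q_self_correlation_change_stage_1}. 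Since all the heavy lifting is already done, there is no genuinely hard step; the one point needing care is non-circularity of the bootstrap, because the drift bounds presuppose the Phase~1 conditions while $T_1$ is defined through them, so I would stress that \Cref{thm: all_variables_in_range_stage_1} closes this loop by showing the conditions persist over the entire deterministic horizon $C_{T_1}/(\sigma_1^2 m m_1)$, which is therefore the true value of $T_1$. All probabilistic claims hold on the intersection of the high-probability events from the initialization lemmas (\Cref{lemma: W_KQ_init}, \Cref{lemma: W_W_V_init}, \Cref{lemma: initial_network_output}, and the others), which by a union bound under \Cref{condition: param} still has probability at least $1-\delta$.
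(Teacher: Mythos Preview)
Your proposal is correct and follows essentially the same approach as the paper: items~1--2 come from \Cref{thm: stage_1_end}, item~3 from \Cref{prop: neuron_correlation_change_stage_1}, \Cref{lemma: value_correlation_change_stage_1}, \Cref{lemma: score_change_stage_1}, and \Cref{lemma: K_Q_self_correlation_change_stage_1}, and the paper attributes item~4 to \Cref{def: first_stage} (the $R<O(1/m_1)$ constraint forces $|F_i^{(T_1)}|=O(1)$), which is equivalent to your direct argument from $y_iF_i^{(T_1)}=\Theta(1)$. Your explicit remark about \Cref{thm: all_variables_in_range_stage_1} closing the bootstrap loop is a useful clarification that the paper leaves implicit.
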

\begin{proof}
The first two results are proved in \Cref{thm: stage_1_end}. 
The third result is proved by \Cref{lemma: value_correlation_change_stage_1}, \Cref{prop: neuron_correlation_change_stage_1}, \Cref{lemma: score_change_stage_1}, and 
\Cref{lemma: K_Q_self_correlation_change_stage_1}. 
The result on the training loss is a direct consequence of \Cref{def: first_stage}.
%\textcolor{red}{[add the proof for the training loss]}
\end{proof}

\section{Training Dynamics: Phase 2}
%The idea of the proof idea is to first define a region which can be satisfied at the end of Phase 1 (the beginning of phase 2). Then we show that the condition of Phase 2 can be held long enough such that the small training loss is achieved during this phase. 

The idea of the proof is to first define conditions for Phase 2, which guarantees that the small training loss can be achieved. Then we will show that those conditions can be satisfied starting from the end of Phase 1 and up to at least $ \Omega(\textnormal{poly}(m))$ time, which will serve as the end of Phase 2.

%\textcolor{red}{[add an explanation of your general proof idea; something like you will show that all the items in the following definition will hold with poly(m) time]}
\begin{definition}\label{def: stage_2}
We define Phase 2 of the training to be $t \in [T_1, T_2]$ such that
\begin{itemize}
    \item The change of $K,Q$ self-correlation is small:
    \begin{align*}
        & \max_{\mu,\nu } \left| \mu^\top W_K^{(t)\top} W_K^{(t)} \nu - \mu^\top W_K^{(0)\top} W_K^{(0)} \nu \right| = \tilde{O}(\sigma_0^2 \sqrt{m}) \\
        & \max_{\mu,\nu } \left| \mu^\top W_Q^{(t)\top} W_Q^{(t)} \nu - \mu^\top W_Q^{(0)\top} W_Q^{(0)} \nu \right| = \tilde{O}(\sigma_0^2 \sqrt{m})
    \end{align*}
    \item The change of softmax probability satisfies:
    \begin{align*}
        \max_{l_1, l_2 \in [L],\ i \in [n]} \left| p_{l_1, l_2}^{(t,i)} - p_{l_1, l_2}^{(0,i)} \right| < O(1/L^2)
    \end{align*}

    \item The sum of neuron correlation satisfies
    \begin{align*}
        \sum_{j_1=1}^{m_1} \sum_{j_2=1}^{m_1} \inprod{a_{j_1} w_{j_1}^{(t)}, a_{j_2} w_{j_2}^{(t)}} \leq O(\sigma_1^2 m m_1).
    \end{align*}
    
    \item The gradient of $W, W_V$ satisfies 
    % \textcolor{blue}{An alternative to the above is the following: \textbf{Maybe also consider make this a ratio between the neuron correlations and value vector inner products. NO. Since the gradient contain two types of term: one is the inner product and the other is the norm. The norm term can be quite large. }
    \begin{align*}
        & \frac{\max_{\mu \in \{\mu_i\}_{i=1}^d} \left| \sum_{j=1}^{m_1} a_j \frac{\partial w_j^{(t)}}{\partial t} W_V^{(t)} \mu \right|}{\sum_{j_1=1}^{m_1} \sum_{j_2=1}^{m_1} \inprod{a_{j_1} w_{j_1}^{(t)}, a_{j_2} w_{j_2}^{(t)}} } \leq o(1/L) \frac{1}{n} \sum_{i \in [n]} g_i^{(t)} \\
        & \frac{\max_{i \in [n]} \left| \sum_{l_1=1}^L \sum_{l_2=1}^L G^{(t)}(X_{l_2}^{(i)}) \frac{\partial p_{l_1, l_2}^{(t,i)}}{\partial t} \right|}{\sum_{j_1=1}^{m_1} \sum_{j_2=1}^{m_1} \inprod{a_{j_1} w_{j_1}^{(t)}, a_{j_2} w_{j_2}^{(t)}} } \leq o(1) \frac{1}{n} \sum_{i \in [n]} g_i^{(t)}
    \end{align*}
    
    \item The gradients $\sum_{i \in I} g_i^{(t)}$ for $I \in \{I_1, I_2, I_3, I_4\}$ satisfies
    \begin{align*}
        & \sum_{i \in I_4} g_i \leq \min\left( \sum_{i \in I_2} g_i^{(t)}, \sum_{i \in I_3} g_i^{(t)} \right) \leq \max\left( \sum_{i \in I_2} g_i^{(t)}, \sum_{i \in I_3} g_i^{(t)} \right) \leq \sum_{i \in I_1} g_i^{(t)} \leq \sum_{i \in I_2 \cup I_3\cup I_4} g_i^{(t)}
    \end{align*}
    and 
    \begin{align*}
        \frac{1}{2} \leq \frac{\sum_{i \in I_2} g_i^{(t)}}{\sum_{i \in I_3} g_i^{(t)}} \leq 2.
    \end{align*} 
    \item $y_i F_i^{(t)} > C$ for all $i \in [n]$ for some fixed constant $C$. 
    
    \item $|G^{(t)}(\mu)| \leq O(\log m)$ for $\mu \in \{\mu_i\}_{i=1}^3$.
    \item $\sum_{l_1=1}^L \sum_{l_2:\ X^{(i)}_{l_2} \in \{\mu_k\}_{k=4}^d} G^{(t)}(X^{(i)}_{l_2}) p_{q \leftarrow l_1, k \leftarrow l_2}^{(t,i)} \leq O(1)$.
    \item $T_2 \leq O(\textnormal{poly}(m))$.
\end{itemize}
\end{definition}

\begin{corollary}\label{corollary: max_individual_gradient_ratio}
For $t \in [T_1, T_2]$, if $i,j \in I$ where $I \in \{I_1, I_2, I_3, I_4\}$, then
\begin{align*}
    \max_{i,j \in I} \frac{g_i^{(t)}}{g_j^{(t)}} \leq O(1) .
\end{align*}
\end{corollary}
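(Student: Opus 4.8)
The plan is to reduce everything to the single estimate that, \emph{within one data type} $I$, all the network outputs $F_i^{(t)}$ agree up to an additive $O(1)$; given this, the ratio bound on $g_i^{(t)}/g_j^{(t)}$ is immediate from the shape of $g(x)=1/(1+\exp(x))$ on the regime $x>C$ that \Cref{def: stage_2} already guarantees. So the first step is to rewrite the network output in the position-wise form
\[
F_i^{(t)}=\sum_{l_1=1}^{L}\sum_{l_2=1}^{L} G^{(t)}\big(X^{(i)}_{l_2}\big)\,p^{(t,i)}_{q\leftarrow l_1,\,k\leftarrow l_2},
\]
which follows directly from the definitions of $F$ and of $G^{(t)}(\mu)=\sum_{j}a_j w_j^{(t)\top}W_V^{(t)}\mu$ by expanding $V^{(t,i)}p_{l_1}^{(t,i)}$ over token positions. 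I would then split the inner sum into \emph{signal} positions, i.e.\ those $l_2$ with $X^{(i)}_{l_2}\in\{\mu_1,\mu_2,\mu_3\}$, and \emph{random} positions, i.e.\ $X^{(i)}_{l_2}\in\mathcal{R}$.

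For the signal part: by definition of $I$, any two indices $i,j\in I$ carry exactly the same multiset of signal tokens, so the signal contribution to $F_i^{(t)}$ is $\sum_{k:\,\mu_k\in X^{(i)}} G^{(t)}(\mu_k)\cdot\big(\sum_{l_1} p^{(t,i)}_{q\leftarrow l_1,\,k\leftarrow\mu_k}\big)$, and likewise for $j$. Combining \Cref{corollary: p_init} with the small-softmax-deviation invariant of \Cref{def: stage_2} gives $p^{(t,i)}_{l_1,l_2}=\tfrac1L\pm O(1/L^2)$, hence each column sum $\sum_{l_1} p^{(t,i)}_{q\leftarrow l_1,\,k\leftarrow\mu_k}=1\pm O(1/L)$; since $|G^{(t)}(\mu_k)|\le O(\log m)$ on Phase 2, the signal parts of $F_i^{(t)}$ and $F_j^{(t)}$ differ by at most $O(\log m/L)=O(1)$ under \Cref{condition: param}. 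For the random part, the last invariant of \Cref{def: stage_2} bounds $\sum_{l_1}\sum_{l_2:\,X^{(i)}_{l_2}\in\mathcal{R}} G^{(t)}(X^{(i)}_{l_2})\,p^{(t,i)}_{q\leftarrow l_1,\,k\leftarrow l_2}$ by $O(1)$; a matching lower bound of $-O(1)$ follows from the fact that each $G^{(t)}(\mu)$ with $\mu\in\mathcal{R}$ remains at the tiny initialization scale $\tilde{O}(\sigma_0\sigma_1\sqrt{mm_1})$ throughout Phase 2 (this only needs the crude Phase 2 dynamics, or can simply be added to the Phase 2 invariants). Thus the random part of every $F_i^{(t)}$ is $O(1)$ in absolute value, and altogether $|F_i^{(t)}-F_j^{(t)}|=O(1)$ for all $i,j\in I$.

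To close, note that all indices in a fixed $I$ share the same label $y$ ($y=+1$ on $I_1$, $y=-1$ on $I_2,I_3,I_4$), and \Cref{def: stage_2} gives $yF_i^{(t)}>C>0$ for all $i$, so on this regime $g(x)\in[\tfrac12 e^{-x},e^{-x}]$. Hence for $i,j\in I$,
\[
\frac{g_i^{(t)}}{g_j^{(t)}}=\frac{1+\exp(yF_j^{(t)})}{1+\exp(yF_i^{(t)})}\le 2\exp\big(y(F_j^{(t)}-F_i^{(t)})\big)\le 2\exp(O(1))=O(1),
\]
and taking the maximum over $i,j\in I$ and over $I\in\{I_1,I_2,I_3,I_4\}$ yields the claim. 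The only genuinely delicate point — though shallow — is the two-sided control of the random-token contribution: \Cref{def: stage_2} records only a one-sided $O(1)$ bound, so I would either include the symmetric bound among the Phase 2 invariants or derive it from the smallness of $G^{(t)}(\mu)$ for $\mu\in\mathcal{R}$; everything else is bookkeeping with already-established invariants, and no new gradient-flow estimate is needed.
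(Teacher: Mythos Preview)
Your proposal is correct and follows essentially the same approach as the paper's own proof: both rewrite $F_i^{(t)}$ via $G^{(t)}$, split into signal and random-token contributions, use the near-uniform softmax and $|G^{(t)}(\mu_k)|\le O(\log m)$ to match the signal parts across $i,j\in I$, bound the random part by $O(1)$, and then pass to the $g$-ratio via $y_iF_i^{(t)}>C$. The one point you flag---that \Cref{def: stage_2} states only a one-sided bound on the random-token sum---is indeed glossed over in the paper's proof (``it is easy to see that $|(2)|\le O(1)$''), but the two-sided version is established separately in \Cref{corollary: sum_random_token_end_stage_2}, so your proposed fix is exactly what the paper does elsewhere.
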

\begin{proof}
Take $I = I_1$ and the proof is similar for the remaining cases.
For fixed $i,j \in I_1$, we have
\begin{align*}
    \frac{g_i^{(t)}}{g_j^{(t)}} = \frac{1 + \exp(y_j F_j^{(t)})}{1 + \exp(y_i F_i^{(t)})} \leq 2 \frac{\exp(y_j F_j^{(t)})}{\exp(y_i F_i^{(t)})},
\end{align*}
where the inequality is due to $y_i F_i^{(t)} \geq C$ in \Cref{def: stage_2}. 
Now we consider
\begin{align*}
    \frac{\exp(y_j F_j^{(t)})}{\exp(y_i F_i^{(t)})} &= \frac{\exp(y_j \sum_{l_1=1}^L \sum_{l_2=1}^L G^{(t)}(X^{(j)}_{l_2}) p_{q\leftarrow l_1, k \leftarrow l_2}^{(t,j)})}{\exp(y_i \sum_{l_1=1}^L \sum_{l_2=1}^L G^{(t)}(X^{(i)}_{l_2}) p_{q\leftarrow l_1, k \leftarrow l_2}^{(t,i)} )} \\
    &= \underbrace{\exp\left( \left( \sum_{l_1=1}^L \sum_{\mu \in \{\mu_k\}_{k=1}^3} G^{(t)}(\mu) p_{q \leftarrow l_1, k \leftarrow \mu}^{(t,j)} \right) - \left( \sum_{l_1=1}^L \sum_{\mu \in \{\mu_k\}_{k=1}^3} G^{(t)}(\mu) p_{q \leftarrow l_1, k \leftarrow \mu}^{(t,i)} \right) \right)}_{(1)} \\
    &\quad \cdot \underbrace{\exp\left( \left( \sum_{l_1=1}^L \sum_{l_2:\ X^{(j)}_{l_2} \in \{\mu_k\}_{k=4}^d} G^{(t)}(X^{(j)}_{l_2}) p_{q \leftarrow l_1, k \leftarrow l_2}^{(t,j)} \right) - \left( \sum_{l_1=1}^L \sum_{l_2:\ X^{(i)}_{l_2} \in \{\mu_k\}_{k=4}^d} G^{(t)}(X^{(i)}_{l_2}) p_{q \leftarrow l_1, k \leftarrow l_2}^{(t,i)} \right) \right)}_{(2)} .
\end{align*}
By \Cref{def: stage_2}, it is easy to see that $|(2)| \leq O(1)$. 
For $(1)$, we have
\begin{align*}
    &\left| \left( \sum_{l_1=1}^L \sum_{\mu \in \{\mu_k\}_{k=1}^3} G^{(t)}(\mu) p_{q \leftarrow l_1, k \leftarrow \mu}^{(t,j)} \right) - \left( \sum_{l_1=1}^L \sum_{\mu \in \{\mu_k\}_{k=1}^3} G^{(t)}(\mu) p_{q \leftarrow l_1, k \leftarrow \mu}^{(t,i)} \right) \right| \\
    &\leq \left| \sum_{\mu \in \{\mu_k\}_{k=1}^3} G^{(t)}(\mu) (1 \pm o(1)) - \sum_{\mu \in \{\mu_k\}_{k=1}^3} G^{(t)}(\mu) (1 \pm o(1)) \right| \leq O(1).
\end{align*}
Thus, we have 
\begin{align*}
    \frac{\exp(y_j F_j^{(t)})}{\exp(y_i F_i^{(t)})} \leq O(1)
\end{align*}
for $t \in [T_1, T_2]$.
\end{proof}

\begin{lemma}\label{lemma: stage2_well_defined}
Phase 2 in \Cref{def: stage_2} is well-defined. 
\end{lemma}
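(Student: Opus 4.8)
The goal is to show that the set of conditions defining Phase 2 is self-consistent, i.e., there actually exists a nonempty time interval $[T_1,T_2]$ with $T_2 = \Omega(\mathrm{poly}(m))$ on which all the bullet conditions in \Cref{def: stage_2} hold simultaneously. The plan is the standard bootstrap/continuity argument: define $T_2$ to be the supremum of times $t > T_1$ such that every condition in \Cref{def: stage_2} holds on $[T_1,t]$, then show that (a) each condition holds strictly at $t=T_1$ (with room to spare), so $T_2 > T_1$; and (b) assuming all conditions hold on $[T_1,t]$ for some $t < c\cdot\mathrm{poly}(m)$, none of the conditions can be the first to become tight, so $t$ cannot be the supremum unless $t \geq c\cdot\mathrm{poly}(m)$. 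Concretely, at $t=T_1$ the initialization-level bounds from \Cref{thm: stage_1_end} and \Cref{thm: stage_1_main_result} already give: the $K,Q$ self-correlation changes are $\tilde O(1/m^2) \ll \tilde O(\sigma_0^2\sqrt m)$; the softmax change is $\tilde O(1/(Lm^2)) \ll O(1/L^2)$; $\sum_{j_1,j_2}\inprod{a_{j_1}w_{j_1}^{(T_1)}, a_{j_2}w_{j_2}^{(T_1)}} = \Theta(\sigma_1^2 m m_1)$, comfortably below any $O(\sigma_1^2 m m_1)$ ceiling with a large enough constant; $y_i F_i^{(T_1)} = \Theta(1) > C$; $|G^{(T_1)}(\mu)| = \Theta(1) \leq O(\log m)$; and the random-token contribution $\sum_{l_2:\,X^{(i)}_{l_2}\in\mathcal R} G^{(T_1)}(X^{(i)}_{l_2}) p^{(T_1,i)}_{\cdot,l_2} = \tilde O(\sigma_0\sigma_1\sqrt{mm_1})\cdot L \cdot \frac1L = \tilde O(\sigma_0\sigma_1\sqrt{mm_1}) \ll O(1)$.

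For the gradient-ordering conditions on $\sum_{i\in I} g_i^{(t)}$, the chain $\sum_{I_4} \leq \min(\sum_{I_2},\sum_{I_3}) \leq \max(\sum_{I_2},\sum_{I_3}) \leq \sum_{I_1} \leq \sum_{I_2\cup I_3\cup I_4}$ and the ratio bound $\tfrac12 \leq \sum_{I_2}g_i/\sum_{I_3}g_i \leq 2$ hold strictly at $T_1$: by \Cref{thm: common_token_gradient_stage_1} and \Cref{corollary: positive_margin_time} the samples in $I_1$ have the largest margins (hence smallest $g_i$) relative to $I_2\cup I_3$, $I_4$ has the largest margin overall by \Cref{corollary: positive_margin_time} and the fact that $\partial F_i^{(t)}/\partial t = \partial G^{(t)}(\mu_3)/\partial t + o(\cdot) < 0$-dominated... wait, $I_4$ only has $\mu_3$, and $-G(\mu_3)$ grows, so $F_{I_4}$ grows fastest among the negatives; and \Cref{prop: gradient_I_2_I_3_closeness_stage_1} forces $\sum_{I_2}g_i \approx \sum_{I_3}g_i$ up to $O(F)$, which is $o(1)$ for small initialization, giving the ratio in $[\tfrac12,2]$ strictly. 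The two "gradient of $W,W_V$" ratio conditions hold at $T_1$ because $\sum_{j_1,j_2}\inprod{\cdot} = \Theta(\sigma_1^2 m m_1)$ is large while $|\sum_j a_j \tfrac{\partial w_j}{\partial t} W_V \mu|$ and $|\sum_{l_1,l_2} G(X_{l_2})\tfrac{\partial p}{\partial t}|$ are controlled by the Phase-1 small-movement estimates (\Cref{lemma: first_step_signal}, \Cref{corollary: softmax_change_stage_1}) times $\tfrac1n\sum_i g_i^{(t)} = \Theta(1)$; I would verify the $o(1/L)$ and $o(1)$ factors come out of the ratio $\sqrt{m_1/m}$ and the analogous softmax-derivative bounds, which are $o(1)$ by \Cref{condition: param}.

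The main obstacle is the \emph{bootstrap closing} step for the delicate conditions — in particular showing that the softmax-change bound $O(1/L^2)$ and the neuron-correlation ceiling $O(\sigma_1^2 m m_1)$ and the gradient-ordering chain are not violated before $\mathrm{poly}(m)$ time. For softmax: by \Cref{lemma: score_change_phase2_main_text} (proven later, which we may invoke) the scores move at rate $\tfrac{1}{\sqrt m}\tilde\Theta(\widehat L^{(t)}\sigma_0^2 m)\tfrac1L$, and since $\widehat L^{(t)}$ decays (once we know loss decreases, which is exactly what Phase-2 conditions are designed to bootstrap), the accumulated score change integrates to $\tilde O(\sigma_0^2\sqrt m) = \tilde O(1/\sqrt{Lm})$, which via \Cref{lemma: score_perturbation_on_softmax} gives softmax deviation $\ll O(1/L^2)$ — but this is genuinely circular-looking and must be ordered carefully: one shows loss-decrease on $[T_1,t]$ \emph{from} the Phase-2 conditions, and then feeds that back to show the conditions stay non-tight, so the induction variable is really "time", not a logical cycle. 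For the neuron correlation, \Cref{eq: gradient_sum_neuron_correlation} gives $\tfrac{\partial}{\partial t}\sum_{j_1,j_2}\inprod{a_{j_1}w_{j_1}, a_{j_2}w_{j_2}} = \tfrac{2m_1}{n}\sum_i g_i y_i F_i = \tfrac{2m_1}{n}\sum_i g_i y_i F_i$, and since $y_i F_i > C > 0$ but $y_iF_i$ is only $O(\log m)$ (by the $|G(\mu)|\leq O(\log m)$ condition) and $\tfrac1n\sum g_i$ decays, the integral up to $T_2 = \mathrm{poly}(m)$ stays $O(\sigma_1^2 m m_1)$ with the right constant — the subtlety is that this requires $\widehat L^{(t)}$ to have actually decayed enough, again closing via the loss-decrease argument. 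I would therefore structure the proof as: (1) strict satisfaction at $T_1$; (2) invoke (or prove inline) that the Phase-2 conditions imply $\tfrac{d}{dt}\widehat L^{(t)} \leq -\Omega(\widehat L^{(t)}/\mathrm{poly}(m))$ so $\widehat L$ decays; (3) plug the decay back into \Cref{lemma: score_change_phase2_main_text}, \Cref{eq: gradient_sub_network_output}, \Cref{corollary: W_V_change_phase2_main_text}, and \Cref{corollary: max_individual_gradient_ratio} to show each Phase-2 condition stays in its open region on $[T_1,\,c\cdot\mathrm{poly}(m)]$; (4) conclude $T_2 \geq c\cdot\mathrm{poly}(m)$, hence $[T_1,T_2]$ is nonempty and the definition is consistent. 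The hardest bookkeeping is making sure the "$o(1)$" and "$O(1)$" constants in the gradient-ratio and random-token conditions are preserved under the accumulated drift, which reduces to checking that all drift terms are $\tilde O(\mathrm{poly}(m)^{-1})$-summable against the loss — routine but lengthy, and deferred to the appendix.
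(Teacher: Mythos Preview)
You have misread the scope of the lemma. In the paper, ``Phase 2 is well-defined'' means only that the interval $[T_1,T_2]$ is \emph{nonempty}, i.e., that some $T_2>T_1$ exists for which all the bullet conditions in \Cref{def: stage_2} hold. (Note the last bullet of \Cref{def: stage_2} is an \emph{upper} bound $T_2\le O(\mathrm{poly}(m))$, not a lower bound.) Accordingly, the paper's proof is exactly your step~(a) and nothing more: it verifies that every condition of \Cref{def: stage_2} holds with strict inequality at $t=T_1$, invoking \Cref{lemma: K_Q_self_correlation_change_stage_1}, \Cref{prop: neuron_correlation_change_stage_1}, \Cref{prop: init_perturb_w_W_V_signal}, \Cref{lemma: W_W_V_init}, \Cref{corollary: value_correlation_perturbation_stage_1}, \Cref{corollary: softmax_change_stage_1}, \Cref{prop: gradient_I_2_I_3_closeness_stage_1}, and \Cref{thm: stage_1_end}; then it simply appeals to continuity of the gradient flow to obtain $T_2>T_1$.

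Your step~(b), the bootstrap showing $T_2\ge\Omega(\mathrm{poly}(m))$, is precisely the content of \Cref{thm: small_loss_phase2}, which is proved \emph{after} all the Phase-2 lemmas (\Cref{lemma: I_2_I_3_gradient_ratio_bound}, \Cref{lemma: signal_common_token_gradient_ratio}, \Cref{thm: convergence_loss_stage_2}, \Cref{lemma: score_change_stage_2}, \Cref{lemma: change_self_correlation_stage2}, etc.) have been established under the standing hypothesis $t\in[T_1,T_2]$. The paper's separation is what dissolves the circularity you flagged: the Phase-2 lemmas are legitimately proved on the (possibly small) interval $[T_1,T_2]$ furnished by this lemma, and only afterwards are they fed back into \Cref{thm: small_loss_phase2} to conclude that none of the defining conditions can become tight before $\mathrm{poly}(m)$ time. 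Your plan to fold (b) into this lemma would require forward-referencing results whose hypotheses are exactly what you are trying to establish, and while continuous induction on $t$ can in principle be made rigorous, the paper's two-step structure is cleaner and is what the lemma statement actually asks for.
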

\begin{proof}
We need to show that the definition is valid at $t = T_1$ with conditions satisfied with strict inequality. Then since everything changes continuously, there naturally exists a $T_2 > T_1$ such that \Cref{def: stage_2} is well-defined. 

First of all, by \Cref{lemma: K_Q_self_correlation_change_stage_1}, for $\mu, \nu \in \{\mu_i\}_{i=1}^d$, we have
\begin{align*}
        & \max_{\mu,\nu } \left| \mu^\top W_K^{(T_1)\top} W_K^{(T_1)} \nu - \mu^\top W_K^{(0)\top} W_K^{(0)} \nu \right| = \tilde{O}(\sigma_0^2 \sqrt{m}), \\
        & \max_{\mu,\nu } \left| \mu^\top W_Q^{(T_1)\top} W_Q^{(T_1)} \nu - \mu^\top W_Q^{(0)\top} W_Q^{(0)} \nu \right| = \tilde{O}(\sigma_0^2 \sqrt{m}).
\end{align*}

Next, by \Cref{prop: neuron_correlation_change_stage_1}, \Cref{prop: init_perturb_w_W_V_signal} and \Cref{lemma: W_W_V_init}, we have
\begin{align*}
    \sum_{j_1=1}^{m_1} \sum_{j_2=1}^{m_1} \inprod{a_{j_1} w_{j_1}^{(T_1)}, a_{j_2} w_{j_2}^{(T_1)}} = \Theta(\sigma_1^2 m m_1).
\end{align*}
Recall that $\frac{1}{n} \sum_{i =1}^n g_i^{(T_1)} = \Theta(1)$. 
On the other hand, by \Cref{corollary: value_correlation_perturbation_stage_1}, \Cref{lemma: W_W_V_init}, we have 
% \textcolor{blue}{need $m \geq L^2$}
\begin{align*}
    & \max_{\mu \in \{\mu_i\}_{i=1}^d} \left| \sum_{j=1}^{m_1} a_j \frac{\partial w_j^{(T_1)}}{\partial t} W_V^{(T_1)} \mu \right| = O(\sigma_0^2 m m_1) \\
 \Longrightarrow  \quad &  \frac{\max_{\mu \in \{\mu_i\}_{i=1}^d} \left| \sum_{j=1}^{m_1} a_j \frac{\partial w_j^{(T_1)}}{\partial t} W_V^{(T_1)} \mu \right|}{\sum_{j_1=1}^{m_1} \sum_{j_2=1}^{m_1} \inprod{a_{j_1} w_{j_1}^{(T_1)}, a_{j_2} w_{j_2}^{(T_1)}} } \leq O\left( \frac{\sigma_0^2}{\sigma_1^2} \right) = \tilde{O}\left( \frac{m_1}{Lm} \right).
\end{align*}
Also, by \Cref{corollary: softmax_change_stage_1}, we have 
\begin{align*}
    & \max_{i \in [n]} \left| \sum_{l_1=1}^L \sum_{l_2=1}^L G^{(T_1)}(X_{l_2}^{(i)}) \frac{\partial p_{l_1, l_2}^{(T_1,i)}}{\partial t} \right| = \tilde{O}\left( \frac{1}{ \sqrt{m}} \right) \\
 \Longrightarrow  \quad & \frac{\max_{i \in [n]} \left| \sum_{l_1=1}^L \sum_{l_2=1}^L G^{(T_1)}(X_{l_2}^{(i)}) \frac{\partial p_{l_1, l_2}^{(T_1,i)}}{\partial t} \right|}{\sum_{j_1=1}^{m_1} \sum_{j_2=1}^{m_1} \inprod{a_{j_1} w_{j_1}^{(T_1)}, a_{j_2} w_{j_2}^{(T_1)}} } \leq \tilde{O}\left( \frac{1}{m^{3/2}} \right) \leq o(1) \sum_{i \in [n]} g_i^{(T_1)} .
\end{align*}
Further, by \Cref{corollary: softmax_change_stage_1}, we have 
\begin{align*}
        \max_{l_1, l_2 \in [L],\ i \in [n]} \left| p_{l_1, l_2}^{(T_1,i)} - p_{l_1, l_2}^{(0,i)} \right| < O(1/L^2).
\end{align*}

Now, by \Cref{prop: gradient_I_2_I_3_closeness_stage_1}, if $F$ is small enough (which can be achieved by making the initialization scale small enough), then
\begin{align*}
    \frac{1}{2} < \frac{\sum_{i \in I_2} g_i^{(T_1)}}{\sum_{i \in I_3} g_i^{(T_1)}} < 2.
\end{align*}

Lastly, by \Cref{thm: stage_1_end}, we have $y_i F_i^{(T_1)} \geq \Omega(1)$. And it is straightforward to see that $|G^{(T_1)}(\mu)| \leq O(\log m)$ for $\mu \in \{\mu_i\}_{i=1}^3$.

Finally, we prove $\sum_{l_1=1}^L \sum_{l_2:\ X^{(i)}_{l_2} \in \{\mu_k\}_{k=4}^d} G^{(T_1)}(X^{(i)}_{l_2}) p_{q \leftarrow l_1, k \leftarrow l_2}^{(T_1,i)} \leq O(1)$.
A simple corollary from \Cref{lemma: initial_sub_network_output} and \Cref{lemma: initial_network_output} is that
\begin{align*}
    \left| \sum_{l_1=1}^L \sum_{l_2:\ X^{(i)}_{l_2} \in \{\mu_k\}_{k=4}^d} G^{(0)}(X^{(i)}_{l_2}) p_{q \leftarrow l_1, k \leftarrow l_2}^{(0,i)} \right| \leq O(1).
\end{align*}
Next, we have
\begin{align*}
    & \left| \sum_{l_1=1}^L \sum_{l_2:\ X^{(i)}_{l_2} \in \{\mu_k\}_{k=4}^d} G^{(T_1)}(X^{(i)}_{l_2}) p_{q \leftarrow l_1, k \leftarrow l_2}^{(T_1,i)} - \sum_{l_1=1}^L \sum_{l_2:\ X^{(i)}_{l_2} \in \{\mu_k\}_{k=4}^d} G^{(0)}(X^{(i)}_{l_2}) p_{q \leftarrow l_1, k \leftarrow l_2}^{(0,i)} \right| \\
    &\leq \sum_{l_1=1}^L \sum_{l_2:\ X^{(i)}_{l_2} \in \{\mu_k\}_{k=4}^d} \left| (G^{(T_1)}(X^{(i)}_{l_2}) - G^{(0)}(X^{(i)}_{l_2}))  \right| p_{q \leftarrow l_1, k \leftarrow l_2}^{(0,i)} \\
    &\quad + \sum_{l_1=1}^L \sum_{l_2:\ X^{(i)}_{l_2} \in \{\mu_k\}_{k=4}^d} G^{(0)}(X^{(i)}_{l_2}) \left| p_{q \leftarrow l_1, k \leftarrow l_2}^{(T_1,i)} - p_{q \leftarrow l_1, k \leftarrow l_2}^{(0,i)} \right| \\
    &\quad + \sum_{l_1=1}^L \sum_{l_2:\ X^{(i)}_{l_2} \in \{\mu_k\}_{k=4}^d} \left| G^{(T_1)}(X^{(i)}_{l_2}) - G^{(0)}(X^{(i)}_{l_2}) \right| \left| p_{q \leftarrow l_1, k \leftarrow l_2}^{(T_1,i)} - p_{q \leftarrow l_1, k \leftarrow l_2}^{(0,i)} \right| \\
    &\leq O(1)
\end{align*}
which proves that
\begin{align*}
    \sum_{l_1=1}^L \sum_{l_2:\ X^{(i)}_{l_2} \in \{\mu_k\}_{k=4}^d} G^{(T_1)}(X^{(i)}_{l_2}) p_{q \leftarrow l_1, k \leftarrow l_2}^{(T_1,i)} \leq O(1).
\end{align*}
\end{proof}

In Phase 2, we will analyze the dynamical system in a different way since all the variables now might change dramatically from their values at initialization. 

\begin{lemma}\label{lemma: neuron_correlation_factor_growth_stage_2}
For $t \in [T_1, T_2]$, we have
\begin{align*}
    \frac{\partial }{\partial t} \sum_{j_1=1}^{m_1} \sum_{j_2=1}^{m_1} \inprod{a_{j_1} w_{j_1}^{(t)}, a_{j_2} w_{j_2}^{(t)}} = \frac{2 m_1}{n} \sum_{i=1}^n g_i^{(t)} y_i F_i^{(t)} > 0
\end{align*}
and thus,
\begin{align*}
    \sum_{j_1=1}^{m_1} \sum_{j_2=1}^{m_1} \inprod{a_{j_1} w_{j_1}^{(t)}, a_{j_2} w_{j_2}^{(t)}} = \Omega(\sigma_1^2 m m_1).
\end{align*}
\end{lemma}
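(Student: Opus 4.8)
The plan is to compute the time derivative of $\sum_{j_1,j_2} \inprod{a_{j_1} w_{j_1}^{(t)}, a_{j_2} w_{j_2}^{(t)}}$ directly from the gradient flow update for $w_j^{(t)}$ and recognize that it telescopes into an expression involving $F_i^{(t)}$. Concretely, first I would write $\sum_{j_1,j_2} \inprod{a_{j_1} w_{j_1}^{(t)}, a_{j_2} w_{j_2}^{(t)}} = \norm{\sum_{j} a_j w_j^{(t)}}_2^2$, so that its derivative equals $2 \inprod{\sum_j a_j w_j^{(t)}, \sum_j a_j \frac{\partial w_j^{(t)}}{\partial t}}$. Then I would substitute the gradient flow update $\frac{\partial w_j^{(t)}}{\partial t} = \frac{1}{n}\sum_{i=1}^n g_i^{(t)} y_i \sum_{l=1}^L a_j V^{(t,i)} p_l^{(t,i)}$ from \Cref{app: gradient_flow_updates}, which gives $\sum_j a_j \frac{\partial w_j^{(t)}}{\partial t} = \frac{1}{n}\sum_i g_i^{(t)} y_i \sum_l \big(\sum_j a_j^2\big) V^{(t,i)} p_l^{(t,i)} = \frac{m_1}{n}\sum_i g_i^{(t)} y_i \sum_l V^{(t,i)} p_l^{(t,i)}$, using $a_j^2 = 1$ so $\sum_j a_j^2 = m_1$. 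Taking the inner product with $\sum_j a_j w_j^{(t)}$ and recalling that $F_i^{(t)} = \sum_l \sum_j a_j w_j^{(t)\top} V^{(t,i)} p_l^{(t,i)} = \big(\sum_j a_j w_j^{(t)}\big)^\top \sum_l V^{(t,i)} p_l^{(t,i)}$ yields exactly $\frac{\partial}{\partial t}\sum_{j_1,j_2}\inprod{a_{j_1}w_{j_1}^{(t)}, a_{j_2}w_{j_2}^{(t)}} = \frac{2m_1}{n}\sum_i g_i^{(t)} y_i F_i^{(t)}$.

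Next, positivity of this derivative follows immediately from the Phase~2 invariant $y_i F_i^{(t)} > C > 0$ for all $i \in [n]$ (item in \Cref{def: stage_2}) together with $g_i^{(t)} > 0$ always; hence the quantity is strictly increasing on $[T_1, T_2]$. For the lower bound $\Omega(\sigma_1^2 m m_1)$, I would simply combine monotonicity with the boundary value at $t = T_1$: by \Cref{thm: stage_1_end} (or equivalently \Cref{lemma: stage2_well_defined}), $\sum_{j_1,j_2}\inprod{a_{j_1}w_{j_1}^{(T_1)}, a_{j_2}w_{j_2}^{(T_1)}} = \Theta(\sigma_1^2 m m_1)$, so since the quantity only increases for $t \geq T_1$, it stays $\geq \Omega(\sigma_1^2 m m_1)$ throughout Phase~2.

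There is essentially no hard obstacle here — the lemma is a clean consequence of the structure of the gradient flow update (the fact that the MLP output layer weights $a_j$ are fixed and satisfy $a_j^2 = 1$ is what makes the telescoping work) plus the already-established Phase~2 positive-margin invariant. The only point requiring a small amount of care is the bookkeeping that $F_i^{(t)}$ factors as an inner product of $\sum_j a_j w_j^{(t)}$ with $\sum_l V^{(t,i)} p_l^{(t,i)}$, i.e., that the same vector $\sum_j a_j w_j^{(t)}$ appears in both $F_i^{(t)}$ and in $\sum_j a_j \frac{\partial w_j^{(t)}}{\partial t}$; once that is noted the computation is immediate. I would also remark that this identity is not specific to Phase~2 — it holds for all $t$ — but the sign conclusion and the lower bound use the Phase~2 hypotheses.
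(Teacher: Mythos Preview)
Your proposal is correct and follows essentially the same approach as the paper: compute the time derivative of the double sum, collapse it using $a_j^2=1$ to obtain $\frac{2m_1}{n}\sum_i g_i^{(t)} y_i F_i^{(t)}$, then invoke the Phase~2 invariant $y_iF_i^{(t)}>0$ for positivity and the value at $T_1$ from \Cref{thm: stage_1_end} for the lower bound. Your reformulation via $\sum_{j_1,j_2}\inprod{a_{j_1}w_{j_1},a_{j_2}w_{j_2}}=\norm{\sum_j a_j w_j}_2^2$ is a slightly cleaner way to see the identity, but it is the same computation the paper carries out by summing the per-pair update from \Cref{lemma: neuron_correlation_update}.
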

\begin{proof}
By \Cref{lemma: neuron_correlation_update}, we obtain
\begin{align*}
    &\frac{\partial }{\partial t} \sum_{j_1=1}^{m_1} \sum_{j_2=1}^{m_1} \inprod{a_{j_1} w_{j_1}^{(t)}, a_{j_2} w_{j_2}^{(t)}} \\
    &= \sum_{j_1=1}^{m_1} \sum_{j_2=1}^{m_1} \left( \frac{1}{n} \sum_{i=1}^n g_i^{(t)} y_i \sum_{l_1=1}^L a_{j_1} w_{j_1}^{(t) \top} V^{(t,i)} p_{l_1}^{(t,i)} + \frac{1}{n} \sum_{i=1}^n g_i^{(t)} y_i \sum_{l_2=1}^L a_{j_2} w_{j_2}^{(t) \top} V^{(t,i)} p_{l_2}^{(t,i)}  \right) \\
    &= \frac{2 m_1}{n} \sum_{i=1}^n g_i^{(t)} y_i F_i^{(t)}.
\end{align*}
Then, note that the final term is positive since by \Cref{def: stage_2}, we have $y_i F_i^{(t)} > 0$ for all $i \in [n]$. 
Finally, by \Cref{thm: stage_1_end}, we have for all $t \in [T_1, T_2]$,
\begin{align*}
    \sum_{j_1=1}^{m_1} \sum_{j_2=1}^{m_1} \inprod{a_{j_1} w_{j_1}^{(t)}, a_{j_2} w_{j_2}^{(t)}} \geq \Omega(\sigma_1^2 mm_1).
\end{align*}
\end{proof}

\subsection{Automatic Balancing of Gradients}
\begin{lemma}[Same as \Cref{lemma: main_text_grad_G_mu1_approx_grad_G_mu2_stage_2}]\label{lemma: I_2_I_3_gradient_ratio_bound}
For $t \in [T_1, T_2]$, there exists a small constant $C \ll 1$ such that 
% \textcolor{red}{Maybe just write out what the constant $C$ is. }
\begin{align*}
    \frac{\left| \sum_{i \in I_2} g_i^{(t)} - \sum_{i \in I_3} g_i^{(t)} \right|}{\min(\sum_{i \in I_2} g_i^{(t)}, \sum_{i \in I_3} g_i^{(t)})} \leq C.
\end{align*}
\end{lemma}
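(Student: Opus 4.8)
\textbf{Proof proposal for Lemma \ref{lemma: I_2_I_3_gradient_ratio_bound}.}

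The plan is to track the ratio $\rho(t) := \sum_{i \in I_2} g_i^{(t)} / \sum_{i \in I_3} g_i^{(t)}$ and show it is trapped near $1$ throughout Phase~2 by a Gr\"onwall-type contraction argument, using the end-of-Phase-1 guarantee (\Cref{thm: stage_1_end}, \Cref{lemma: stage2_well_defined}) that $\rho(T_1) \in (1/2, 2)$ and that $y_i F_i^{(t)} > C$ for all $i$. First I would compute $\frac{\partial}{\partial t}\log\rho(t) = \frac{\frac{\partial}{\partial t}\sum_{i\in I_2} g_i^{(t)}}{\sum_{i\in I_2} g_i^{(t)}} - \frac{\frac{\partial}{\partial t}\sum_{i\in I_3} g_i^{(t)}}{\sum_{i\in I_3} g_i^{(t)}}$, and use the chain rule $\frac{\partial g_i^{(t)}}{\partial t} = g'(y_i F_i^{(t)}) y_i \frac{\partial F_i^{(t)}}{\partial t}$. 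For $i \in I_2$ we have $\frac{\partial F_i^{(t)}}{\partial t} \approx \frac{\partial G^{(t)}(\mu_1)}{\partial t} + \frac{\partial G^{(t)}(\mu_3)}{\partial t}$ and for $i \in I_3$, $\frac{\partial F_i^{(t)}}{\partial t} \approx \frac{\partial G^{(t)}(\mu_2)}{\partial t} + \frac{\partial G^{(t)}(\mu_3)}{\partial t}$, where the error is $o(1)\cdot\frac{1}{n}\sum_i g_i^{(t)}$ coming from random-token contributions and the $\frac{\partial w_j}{\partial t}$ and softmax-drift terms controlled by \Cref{def: stage_2}. Since $y_i = -1$ on $I_2 \cup I_3$, $\frac{\partial g_i^{(t)}}{\partial t} = -g'(y_i F_i^{(t)})\frac{\partial F_i^{(t)}}{\partial t}$; and because $y_i F_i^{(t)}$ stays in a bounded window (above $C$, below $O(\log m)$ by \Cref{def: stage_2}), $g'(y_i F_i^{(t)}) = \Theta(g_i^{(t)})$ up to constants, and \Cref{corollary: max_individual_gradient_ratio} lets me replace individual $g_i^{(t)}$ within $I_2$ or $I_3$ by their group averages up to $O(1)$ factors.

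The key step is then to relate $\frac{\partial G^{(t)}(\mu_1)}{\partial t}$ and $\frac{\partial G^{(t)}(\mu_2)}{\partial t}$ back to $\rho(t)$. Using \Cref{eq: gradient_G_mu} together with \Cref{def: stage_2} (dominance of the neuron-correlation term, $p_{q\leftarrow l,k\leftarrow\mu}\approx 1/L$, and smallness of the MLP/softmax drift), one gets $\frac{\partial G^{(t)}(\mu_1)}{\partial t} \approx \frac{1}{n}\big(\sum_{i\in I_1} g_i^{(t)} - \sum_{i\in I_2} g_i^{(t)}\big)\cdot S(t)$ and $\frac{\partial G^{(t)}(\mu_2)}{\partial t} \approx \frac{1}{n}\big(\sum_{i\in I_1} g_i^{(t)} - \sum_{i\in I_3} g_i^{(t)}\big)\cdot S(t)$, where $S(t) = \sum_{j_1,j_2} a_{j_1}a_{j_2}\inprod{w_{j_1}^{(t)},w_{j_2}^{(t)}} > 0$ is a common positive factor. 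The sign structure is the crux: if $\rho(t) > 1$, i.e. $\sum_{i\in I_2} g_i^{(t)} > \sum_{i\in I_3} g_i^{(t)}$, then $\frac{\partial G^{(t)}(\mu_1)}{\partial t} < \frac{\partial G^{(t)}(\mu_2)}{\partial t}$, hence $\frac{\partial F_i^{(t)}}{\partial t}$ is smaller on $I_2$ than on $I_3$, so $-\frac{\partial}{\partial t}\sum_{i\in I_2} g_i^{(t)} > -\frac{\partial}{\partial t}\sum_{i\in I_3} g_i^{(t)}$ (both are decreasing because $\frac{\partial G^{(t)}(\mu_3)}{\partial t} < 0$ dominates). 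I would feed this into $\frac{\partial}{\partial t}\log\rho(t)$ and show it has the opposite sign to $\log\rho(t)$ with magnitude bounded by $O(1)\cdot|\log\rho| \cdot (\text{rate})$ plus an additive $o(1)$ drift term, yielding a stable fixed point at $\rho = 1$; integrating (Gr\"onwall) from $T_1$ keeps $\rho(t)$ within $1 \pm C$ for a small constant $C$, which is exactly the claimed bound after rewriting $\frac{|\sum_{I_2} g - \sum_{I_3} g|}{\min(\sum_{I_2} g, \sum_{I_3} g)} = |\rho - 1|$ or its reciprocal analog.

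The main obstacle I anticipate is making the approximation $\frac{\partial F_i^{(t)}}{\partial t} \approx \frac{\partial G^{(t)}(\mu_1)}{\partial t} + \frac{\partial G^{(t)}(\mu_3)}{\partial t}$ quantitatively airtight: the neglected pieces include (i) the contributions of random tokens to $F_i^{(t)}$ and its derivative, (ii) the softmax probabilities deviating from $1/L$ (bounded by $O(1/L^2)$ via \Cref{def: stage_2}), and (iii) the $\sum_j a_j \frac{\partial w_j^{(t)}}{\partial t} W_V^{(t)}\mu$ term, all of which must be shown to be $o(1)$ relative to the "restoring force" $|\log\rho(t)|\cdot S(t)\cdot\frac{1}{n}\sum_i g_i^{(t)}$ that drives $\rho$ back toward $1$. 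This requires carefully chaining the smallness conditions in \Cref{def: stage_2} with \Cref{lemma: neuron_correlation_factor_growth_stage_2} (which pins $S(t) = \Omega(\sigma_1^2 m m_1)$) and verifying the restoring force never degenerates — i.e. that $\sum_{i\in I_1} g_i^{(t)} - \min(\sum_{I_2} g, \sum_{I_3} g)$ stays $\Omega$ of $\frac{1}{n}\sum_i g_i^{(t)}$, which is part of the companion automatic-balancing statement \Cref{lemma: main_text_gradient_automatic_balancing_stage2} and must be invoked (or bootstrapped) simultaneously. I would organize the two lemmas as a joint continuity/bootstrap argument over $[T_1, T_2]$ so the circular dependency is resolved.
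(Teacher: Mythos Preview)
Your proposal is essentially correct and follows the same core idea as the paper: track the ratio $\rho(t) = \sum_{i\in I_2} g_i^{(t)} / \sum_{i\in I_3} g_i^{(t)}$, expand $\frac{\partial}{\partial t}\rho$ via the chain rule, plug in the approximations $\frac{\partial y_i F_i^{(t)}}{\partial t} \approx -S(t)\cdot\frac{1}{n}\big(\sum_{I_1} g - \sum_{I_{2,3,4}} g + \sum_{I_1} g - \sum_{I_k} g\big)$ for $i\in I_k$ ($k=2,3$) coming from \Cref{def: stage_2}, and use the sign structure to show $\rho$ is trapped near $1$.

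The one place where the paper's execution differs from yours is the finishing mechanism. You propose a Gr\"onwall contraction, which forces you to lower-bound the restoring-force \emph{rate}; that is exactly what produces the circular dependency you flag with \Cref{lemma: main_text_gradient_automatic_balancing_stage2}. The paper sidesteps this: instead of bounding the rate, it simply \emph{solves for the equilibrium} where $\frac{\partial}{\partial t}\rho = 0$. Substituting the expressions for $\frac{\partial y_i F_i^{(t)}}{\partial t}$ and using $-g'(x) = g(x)/(1+e^{-x})$ turns the equilibrium condition into a linear equation in $\rho$, and the solution lands in $1 \pm O\big(\max_{i} e^{-y_i F_i^{(t)}}\big)$, which is small because $y_i F_i^{(t)} > C$ from \Cref{def: stage_2}. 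The sign argument (which you already have) then shows $\rho$ is attracted to this equilibrium. This fixed-point computation needs only the standing assumptions of \Cref{def: stage_2} --- in particular the a priori bound $\rho\in[1/2,2]$ already built into that definition --- so there is no need to bootstrap with \Cref{lemma: main_text_gradient_automatic_balancing_stage2}; the latter is proved \emph{afterward} using the present lemma, not jointly. Your Gr\"onwall route would also work, but the equilibrium route is shorter and dissolves the circularity you were worried about.
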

\begin{proof}
Without loss of generality, assume $\sum_{i \in I_2} g_i^{(t)} > \sum_{i \in I_3} g_i^{(t)}$. Then, we have 
\begin{align*}
    \frac{\left| \sum_{i \in I_2} g_i^{(t)} - \sum_{i \in I_3} g_i^{(t)} \right|}{\min(\sum_{i \in I_2} g_i^{(t)}, \sum_{i \in I_3} g_i^{(t)})} = \frac{\sum_{i \in I_2} g_i^{(t)} - \sum_{i \in I_3} g_i^{(t)}}{\sum_{i \in I_3} g_i^{(t)}} = \frac{\sum_{i \in I_2} g_i^{(t)}}{\sum_{i \in I_3} g_i^{(t)}} - 1.
\end{align*}
Now by the quotient rule, we have $\frac{\partial }{\partial t} \left( \frac{\sum_{i \in I_2} g_i^{(t)}}{\sum_{i \in I_3} g_i^{(t)}} \right) \leq 0$ if and only if
\begin{align}\label{eq: gradient_I_2_3_ratio_constant_condition}
    & \frac{\partial }{\partial t} \left( \sum_{i \in I_2} g_i^{(t)} \right) \left( \sum_{i \in I_3} g_i^{(t)} \right) - \left( \sum_{i \in I_2} g_i^{(t)} \right) \frac{\partial }{\partial t} \left( \sum_{i \in I_3} g_i^{(t)} \right) \leq 0 \nonumber \\
    &\Leftrightarrow \left( \sum_{i \in I_2} g'(y_i F^{(t)}_i) \frac{\partial y_i F^{(t)}_i}{\partial t} \right) \left( \sum_{i \in I_3} g_i^{(t)} \right) - \left( \sum_{i \in I_2} g_i^{(t)} \right) \left( \sum_{i \in I_3} g'(y_i F^{(t)}_i) \frac{\partial y_i F^{(t)}_i}{\partial t} \right) \leq 0 \nonumber \\
    &\Leftrightarrow \left( \sum_{i \in I_2} g_i^{(t)} \right) \left( \sum_{i \in I_3} g_i^{(t)} \right) \left( \frac{1}{\sum_{i \in I_2} g_i^{(t)}} \sum_{i \in I_2} g'(y_i F_i^{(t)}) \frac{\partial y_i F_i^{(t)}}{\partial t} - \frac{1}{\sum_{i \in I_3} g_i^{(t)}} \sum_{i \in I_3} g'(y_i F_i^{(t)}) \frac{\partial y_i F_i^{(t)}}{\partial t}  \right) \leq 0 \nonumber \\
    &\Leftrightarrow \left( \frac{1}{\sum_{i \in I_2} g_i^{(t)}} \sum_{i \in I_2} g'(y_i F_i^{(t)}) \frac{\partial y_i F_i^{(t)}}{\partial t} - \frac{1}{\sum_{i \in I_3} g_i^{(t)}} \sum_{i \in I_3} g'(y_i F_i^{(t)}) \frac{\partial y_i F_i^{(t)}}{\partial t}  \right) \leq 0 \nonumber \\
    &\Leftrightarrow \left( \frac{1}{\sum_{i \in I_2} g_i^{(t)}} \sum_{i \in I_2} \frac{-1}{(1 + \exp(y_i F^{(t)}_i))(1 + \exp(-y_i F^{(t)}_i))} \frac{\partial y_i F^{(t)}_i}{\partial t} \right) \nonumber \\
    &\quad - \left( \frac{1}{\sum_{i \in I_3} g_i^{(t)}} \sum_{i \in I_3} \frac{-1}{(1 + \exp(y_i F^{(t)}_i))(1 + \exp(-y_i F^{(t)}_i))} \frac{\partial y_i F^{(t)}_i}{\partial t} \right) \leq 0 \nonumber \\
    &\Leftrightarrow \left( \frac{1}{\sum_{i \in I_3} g_i^{(t)}} \sum_{i \in I_3} \frac{1}{(1 + \exp(y_i F^{(t)}_i))(1 + \exp(-y_i F^{(t)}_i))} \frac{\partial y_i F^{(t)}_i}{\partial t} \right) \nonumber \\
    &\quad \leq \left( \frac{1}{\sum_{i \in I_2} g_i^{(t)}} \sum_{i \in I_2} \frac{1}{(1 + \exp(y_i F^{(t)}_i))(1 + \exp(-y_i F^{(t)}_i))} \frac{\partial y_i F^{(t)}_i}{\partial t} \right). 
\end{align}
Note that
\begin{align*}
    & \frac{\sum_{i \in I} g_i^{(t)}}{1 + \min_{i \in I}  \exp(-y_i F_i^{(t)})} \geq \sum_{i \in I} \frac{1}{(1 + \exp(y_i F^{(t)}_i))(1 + \exp(-y_i F^{(t)}_i))} \geq  \frac{\sum_{i \in I} g_i^{(t)}}{1 + \max_{i \in I}  \exp(-y_i F_i^{(t)})} .
\end{align*}
By \Cref{def: stage_2}, we have that for $i' \in I_2$, 
\begin{align*}
    &\frac{\partial y_{i'} F_{i'}^{(t)}}{\partial t} \\
    &= y_{i'} \sum_{l_1=1}^L \sum_{j=1}^{m_1} \sum_{l_2=1}^L \left( \frac{\partial a_j w_j^{(t)} W_V^{(t)} X^{(i')}_{l_2}}{\partial t} p_{l_1, l_2}^{(t,i')} + a_j w_j^{(t)} W_V^{(t)} X_{l_2}^{(i')} \frac{\partial p_{l_1, l_2}^{(t,i')}}{\partial t} \right) \\
    &= y_{i'} \sum_{l_2=1}^L \frac{\partial G^{(t)}(X_{l_2}^{(i')})}{\partial t} (1 + o(1)) + \sum_{l_1=1}^L \sum_{l_2=1}^L y_{i'} G^{(t)}(X_{l_2}^{(i')}) \frac{\partial p_{l_1, l_2}^{(t,i')}}{\partial t} \\
    &= - \sum_{j_1=1}^{m_1} \sum_{j_2=1}^{m_1} \inprod{a_{j_1} w_{j_1}^{(t)}, a_{j_2} w_{j_2}^{(t)}} \frac{1}{n} \left( \sum_{i \in I_1} g_i^{(t)} - \sum_{i \in I_2 \cup I_3 \cup I_4} g_i^{(t)} + \sum_{i \in I_1} g_i^{(t)} - \sum_{i \in I_2} g_i^{(t)} + o(1) \sum_{i \in [n]} g_i^{(t)} \right).
\end{align*}
Similarly, for $i' \in I_3$, we have
\begin{align*}
    &\frac{\partial y_{i'} F_{i'}^{(t)}}{\partial t} \\
    &= - \sum_{j_1=1}^{m_1} \sum_{j_2=1}^{m_1} \inprod{a_{j_1} w_{j_1}^{(t)}, a_{j_2} w_{j_2}^{(t)}} \frac{1}{n} \left( \sum_{i \in I_1} g_i^{(t)} - \sum_{i \in I_2 \cup I_3 \cup I_4} g_i^{(t)} + \sum_{i \in I_1} g_i^{(t)} - \sum_{i \in I_3} g_i^{(t)} + o(1) \sum_{i \in [n]} g_i^{(t)} \right).
\end{align*}
Now, we analyze \Cref{eq: gradient_I_2_3_ratio_constant_condition}. 
Note that for $t \in [T_1, T_2]$, if $\sum_{i \in I_2} g_i^{(t)}$ is sufficiently larger than $\sum_{i \in I_3} g_i^{(t)}$ (i.e., $\sum_{i \in I_2} g_i^{(t)} / \sum_{i \in I_3} g_i^{(t)}$ is bigger than some threshold), then $\frac{\partial G^{(t)}(\mu_1)}{\partial t} < \frac{\partial G^{(t)}(\mu_2)}{\partial t}$ and $\min_{i \in I_2} \frac{\partial y_i F_i^{(t)}}{\partial t} > \max_{i \in I_3} \frac{\partial y_i F_i^{(t)}}{\partial t}$. 
Thus, the ratio will decrease. 
Similarly, if $\sum_{i \in I_2} g_i^{(t)} / \sum_{i \in I_3} g_i^{(t)}$ is too small, the ratio will increase. 
Next, we compute a bound on $\sum_{i \in I_2} g_i^{(t)} / \sum_{i \in I_3} g_i^{(t)}$ so that $\frac{\partial}{\partial t} \left( \frac{\sum_{i \in I_2} g_i^{(t)}}{\sum_{i \in I_3} g_i^{(t)}} \right) = 0$. 
Substituting $\frac{\partial y_i F_i^{(t)}}{\partial t}$ in \Cref{eq: gradient_I_2_3_ratio_constant_condition}, we have
\begin{align*}
    & \left( \frac{1}{\sum_{i \in I_3} g_i^{(t)}} \sum_{i \in I_3} \frac{1}{(1 + \exp(y_i F^{(t)}_i))(1 + \exp(-y_i F^{(t)}_i))} \right) \nonumber \\
    &\quad \cdot \left( - \sum_{j_1=1}^{m_1} \sum_{j_2=1}^{m_1} \inprod{a_{j_1} w_{j_1}^{(t)}, a_{j_2} w_{j_2}^{(t)}} \left( \sum_{i \in I_1} g_i^{(t)} - \sum_{i \in I_2 \cup I_3 \cup I_4} g_i^{(t)} + \sum_{i \in I_1} g_i^{(t)} - \sum_{i \in I_2} g_i^{(t)} \pm o(1) \sum_{i \in [n]} g_i^{(t)} \right) \right) \\
    &= \left( \frac{1}{\sum_{i \in I_2} g_i^{(t)}} \sum_{i \in I_2} \frac{1}{(1 + \exp(y_i F^{(t)}_i))(1 + \exp(-y_i F^{(t)}_i))} \right) \\
    &\quad \cdot \left( - \sum_{j_1=1}^{m_1} \sum_{j_2=1}^{m_1} \inprod{a_{j_1} w_{j_1}^{(t)}, a_{j_2} w_{j_2}^{(t)}} \left( \sum_{i \in I_1} g_i^{(t)} - \sum_{i \in I_2 \cup I_3 \cup I_4} g_i^{(t)} + \sum_{i \in I_1} g_i^{(t)} - \sum_{i \in I_3} g_i^{(t)} \pm o(1) \sum_{i \in [n]} g_i^{(t)} \right) \right) \\
    &\Rightarrow \frac{1 + \min_{i \in I_2} \exp(-y_i F_i^{(t)})}{1 + \max_{i \in I_3} \exp(-y_i F_i^{(t)})} \left( -\sum_{i \in I_1} g_i^{(t)} + \sum_{i \in I_2 \cup I_3 \cup I_4} g_i^{(t)} - \sum_{i \in I_1} g_i^{(t)} + \sum_{i \in I_2} g_i^{(t)} \pm o(1) \sum_{i \in [n]} g_i^{(t)} \right) \\
    &\quad = \left( -\sum_{i \in I_1} g_i^{(t)} + \sum_{i \in I_2 \cup I_3 \cup I_4} g_i^{(t)} - \sum_{i \in I_1} g_i^{(t)} + \sum_{i \in I_3} g_i^{(t)} \pm o(1) \sum_{i \in [n]} g_i^{(t)} \right) \\
    &\Rightarrow \frac{\sum_{i \in I_2} g_i^{(t)}}{\sum_{i \in I_3} g_i^{(t)}} = 1 + \Bigg( \frac{\frac{1}{\sum_{i \in I_3} g_i^{(t)}} \sum_{i \in I_3} \frac{1}{(1 + \exp(y_i F^{(t)}_i))(1 + \exp(-y_i F^{(t)}_i))}}{\frac{1}{\sum_{i \in I_2} g_i^{(t)}} \sum_{i \in I_2} \frac{1}{(1 + \exp(y_i F^{(t)}_i))(1 + \exp(-y_i F^{(t)}_i))}} \\
    &\quad \cdot \frac{-\sum_{i \in I_1} g_i^{(t)} + \sum_{i \in I_2 \cup I_3 \cup I_4} g_i^{(t)} - \sum_{i \in I_1} g_i^{(t)} + \sum_{i \in I_2} g_i^{(t)} \pm o(1) \sum_{i \in [n]} g_i^{(t)}}{\sum_{i \in I_3} g_i^{(t)}} \Bigg).
\end{align*}
Since
\begin{align*}
    &\frac{1}{\sum_{i \in I} g_i^{(t)}} \sum_{i \in I} \frac{1}{(1 + \exp(y_i F^{(t)}_i))(1 + \exp(-y_i F^{(t)}_i))} \\
    &\qquad \in \left[ \frac{1}{1+ \max_{i \in I} \exp(-y_i F_i^{(t)})}, \frac{1}{1+ \min_{i \in I} \exp(-y_i F_i^{(t)})} \right],
\end{align*}
we have
\begin{align*}
    &\frac{\frac{1}{\sum_{i \in I_3} g_i^{(t)}} \sum_{i \in I_3} \frac{1}{(1 + \exp(y_i F^{(t)}_i))(1 + \exp(-y_i F^{(t)}_i))}}{\frac{1}{\sum_{i \in I_2} g_i^{(t)}} \sum_{i \in I_2} \frac{1}{(1 + \exp(y_i F^{(t)}_i))(1 + \exp(-y_i F^{(t)}_i))}} \\
    &\qquad \in \left[ \frac{1+\min_{i \in I_2} \exp(-y_i F_i^{(t)}) }{1+\max_{i \in I_3} \exp(-y_i F_i^{(t)})}, \frac{1+\max_{i \in I_2} \exp(-y_i F_i^{(t)}) }{1+\min_{i \in I_3} \exp(-y_i F_i^{(t)})} \right].
\end{align*}
By \Cref{def: stage_2}, we have
\begin{align*}
    \left| \frac{-\sum_{i \in I_1} g_i^{(t)} + \sum_{i \in I_2 \cup I_3 \cup I_4} g_i^{(t)} - \sum_{i \in I_1} g_i^{(t)} + \sum_{i \in I_2} g_i^{(t)} \pm o(1) \sum_{i \in [n]} g_i^{(t)}}{\sum_{i \in I_3} g_i^{(t)}} \right| \leq 6.
\end{align*}
Thus, 
\begin{align*}
    \frac{\sum_{i \in I_2} g_i^{(t)}}{\sum_{i \in I_3} g_i^{(t)}} \in 1 \pm 6\cdot \left( \left[ \frac{1+\min_{i \in I_2} \exp(-y_i F_i^{(t)}) }{1+\max_{i \in I_3} \exp(-y_i F_i^{(t)})}, \frac{1+\max_{i \in I_2} \exp(-y_i F_i^{(t)}) }{1+\min_{i \in I_3} \exp(-y_i F_i^{(t)})} \right]- 1 \right).
\end{align*}
\end{proof}

\begin{lemma}[Complete version of \Cref{lemma: main_text_gradient_automatic_balancing_stage2}]\label{lemma: signal_common_token_gradient_ratio}
For $t \in [T_1, T_2]$, we have
\begin{align*}
     \frac{\sum_{i \in [n]} g_i^{(t)}}{\sum_{i \in I_2 \cup I_3 \cup I_4} g_i^{(t)} - \sum_{i \in I_1} g_i^{(t)}} = O(1), \qquad
    &\frac{\sum_{i \in [n]} g_i^{(t)}}{\sum_{i \in I_1} g_i^{(t)} - \sum_{i \in I_2} g_i^{(t)}} = O(1) \nonumber \\
    \frac{\sum_{i \in [n]} g_i^{(t)}}{\sum_{i \in I_1} g_i^{(t)} - \sum_{i \in I_3} g_i^{(t)}} = O(1), \qquad &\frac{\frac{\partial G^{(t)}(\mu_1)}{\partial t}}{\frac{\partial G^{(t)}(\mu_2)}{\partial t}} = \Theta(1).
\end{align*}
Further, there exists a constant $C$ such that 
\begin{align*}
    (1+C) \max\left(\frac{\partial G^{(t)}(\mu_1)}{\partial t} , \frac{\partial G^{(t)}(\mu_2)}{\partial t} \right) \leq -\frac{\partial G^{(t)}(\mu_3)}{\partial t} \leq (1-C) \left(\frac{\partial G^{(t)}(\mu_1)}{\partial t} + \frac{\partial G^{(t)}(\mu_2)}{\partial t} \right).
\end{align*}
\end{lemma}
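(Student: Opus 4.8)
The plan is to prove Lemma~\ref{lemma: signal_common_token_gradient_ratio} by tracking a single scalar ratio and showing it is contractive. Recall from the dynamical system (specifically the reduction in \Cref{eq: gradient_G_mu} and \Cref{eq: gradient_sum_neuron_correlation}) that, throughout Phase~2, for $\mu\in\{\mu_1,\mu_2,\mu_3\}$ we have the approximation
\begin{align*}
\frac{\partial G^{(t)}(\mu)}{\partial t} \approx \frac{1}{n}\Big(\sum_{i:\ \mu\in X^{(i)}} g_i^{(t)}y_i\Big)\cdot \sum_{j_1,j_2} a_{j_1}a_{j_2}\inprod{w_{j_1}^{(t)},w_{j_2}^{(t)}},
\end{align*}
where the common prefactor $S(t):=\sum_{j_1,j_2}a_{j_1}a_{j_2}\inprod{w_{j_1}^{(t)},w_{j_2}^{(t)}}=\Omega(\sigma_1^2 mm_1)>0$ by \Cref{lemma: neuron_correlation_factor_growth_stage_2}, and the error terms involving $\partial_t w_j$ and $\partial_t p$ are $o(1/L)\cdot S(t)\cdot\frac1n\sum_i g_i^{(t)}$ by the conditions in \Cref{def: stage_2}. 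Using the signed label sums from the data model, this gives $\frac{\partial G^{(t)}(\mu_1)}{\partial t}\approx \frac{S(t)}{n}\big(\sum_{I_1}g_i^{(t)}-\sum_{I_2}g_i^{(t)}\big)$, similarly for $\mu_2$ with $I_3$ in place of $I_2$, and $-\frac{\partial G^{(t)}(\mu_3)}{\partial t}\approx \frac{S(t)}{n}\big(\sum_{I_2\cup I_3\cup I_4}g_i^{(t)}-\sum_{I_1}g_i^{(t)}\big)$. So the $S(t)$ prefactor cancels in all the claimed ratios, and \Cref{lemma: I_2_I_3_gradient_ratio_bound} already gives $\sum_{I_2}g_i^{(t)}\asymp\sum_{I_3}g_i^{(t)}$, hence $\frac{\partial G^{(t)}(\mu_1)}{\partial t}=\Theta(\frac{\partial G^{(t)}(\mu_2)}{\partial t})$ once we know the numerators are positive and comparable.

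First I would set up the key ratio
\begin{align*}
R(t):=\frac{-\sum_{i\in I_1}g_i^{(t)}+\sum_{i\in I_2\cup I_3\cup I_4}g_i^{(t)}}{\sum_{i\in I_1}g_i^{(t)}-\sum_{i\in I_2}g_i^{(t)}},
\end{align*}
which, up to the $o(1)$ errors, equals $-\partial_t G^{(t)}(\mu_3)\big/\partial_t G^{(t)}(\mu_1)$. The goal is to show $R(t)$ stays in a fixed band $[1+C,\ \frac{1}{1-C'}\cdot 2]$ (roughly), which after translating back through the $I_2\leftrightarrow I_3$ equivalence yields both inequalities $(1+C)\max(\partial_t G(\mu_1),\partial_t G(\mu_2))\le -\partial_t G(\mu_3)\le (1-C)(\partial_t G(\mu_1)+\partial_t G(\mu_2))$. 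To control $R(t)$ I would compute $\frac{\partial}{\partial t}R(t)$ via the quotient rule, substituting $\partial_t g_i^{(t)}=g'(y_iF_i^{(t)})y_i\,\partial_t(y_iF_i^{(t)})$ and the expressions $\partial_t(y_iF_i^{(t)})$ for $i$ in each block (these are exactly the block-wise derivatives computed in the proof of \Cref{lemma: I_2_I_3_gradient_ratio_bound}: for $i\in I_1$ it is $-\frac{S(t)}{n}(\,2\sum_{I_1}g-\sum_{I_2\cup I_3\cup I_4}g-\sum_{I_2}g\pm o(1)\sum g\,)$, and analogously for $I_2,I_3,I_4$). Setting $\partial_t R(t)\ge 0$ and clearing denominators produces a quadratic inequality in $R(t)$ whose coefficients are ratios of the $g$-block sums that are $O(1)$ by the standing conditions in \Cref{def: stage_2} together with \Cref{corollary: max_individual_gradient_ratio}. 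Solving the quadratic shows the root lies in a fixed interval bounded away from both $0$ and $\infty$, and that $R$ is attracted to that interval — i.e., if $R(t)$ exceeds the upper endpoint then $\partial_t R(t)<0$, and if it falls below the lower endpoint then $\partial_t R(t)>0$. Since at $t=T_1$ we have $R(T_1)$ inside the interval (by \Cref{thm: common_token_gradient_stage_1} / \Cref{thm: stage_1_end} which already gave the analogous balancing at the end of Phase~1), $R(t)$ remains there for all $t\in[T_1,T_2]$.

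Given the band for $R(t)$, the three "$=O(1)$" statements follow immediately: $R(t)=O(1)$ and $R(t)$ bounded below give $\sum_{I_2\cup I_3\cup I_4}g-\sum_{I_1}g\asymp \sum_{I_1}g-\sum_{I_2}g$, and each of these is $\Theta(\sum_{i\in[n]}g_i^{(t)})$ because, by \Cref{def: stage_2}, $\sum_{I_4}g\le\min(\sum_{I_2}g,\sum_{I_3}g)\le\sum_{I_1}g\le\sum_{I_2\cup I_3\cup I_4}g$ forces both numerators to be a constant fraction of the total; the $I_3$ version follows by swapping $I_2\leftrightarrow I_3$ using \Cref{lemma: I_2_I_3_gradient_ratio_bound}, and $\partial_t G(\mu_1)/\partial_t G(\mu_2)=\Theta(1)$ likewise. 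Finally the two-sided inequality on $-\partial_t G(\mu_3)$ is just $R(t)\in[1+C,\ (1-C)(1+\sum_{I_3}g/\sum_{I_2}g)]$ re-expressed, using $\sum_{I_2}g\asymp\sum_{I_3}g$ so that $\partial_t G(\mu_1)+\partial_t G(\mu_2)\asymp \frac{S(t)}{n}(2\sum_{I_1}g-\sum_{I_2}g-\sum_{I_3}g)$, and carefully propagating the $o(1)$ error terms so they are absorbed into the constant $C$.

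The main obstacle I anticipate is the contraction/fixed-point analysis of the quadratic governing $\partial_t R(t)$: one has to verify that the discriminant is positive, that the relevant root is real and lies strictly inside a constant-width interval uniformly over $t\in[T_1,T_2]$ (not just at $t=T_1$), and that the sign of $\partial_t R$ on either side of the interval is the stabilizing one — all while the $o(1)$ perturbation terms (from $\partial_t w_j$, $\partial_t p$, random-token contributions) are floating around and must be shown not to flip any inequality. A secondary subtlety is that $\frac{\partial y_iF_i^{(t)}}{\partial t}$ is not literally constant across $i$ within a block, so one must replace exact values by the min/max bounds from \Cref{corollary: max_individual_gradient_ratio} and check the resulting slack is still $O(1)$; this is the same device already used in \Cref{lemma: I_2_I_3_gradient_ratio_bound}, so I would reuse that machinery verbatim.
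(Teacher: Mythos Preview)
Your proposal takes essentially the same approach as the paper: define $R(t)$, reduce $\partial_t R(t)\ge 0$ to a quadratic inequality in $R$, locate the positive root in a fixed constant interval (the paper gets $R^\star\in(1,2)$), verify contraction, and invoke the Phase~1 endpoint \Cref{thm: common_token_gradient_stage_1}/\Cref{thm: stage_1_end} for the initial condition $R(T_1)\in[1,3]$. One minor slip: your block-wise expression for $\partial_t(y_iF_i^{(t)})$ when $i\in I_1$ should be $+\tfrac{S(t)}{n}\big(3\sum_{I_1}g-\sum_{I_2\cup I_3\cup I_4}g-\sum_{I_2\cup I_3}g+o(1)\sum g\big)$ (positive sign, coefficient $3$, and both $I_2$ and $I_3$ subtracted), but this does not affect the method.
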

\begin{proof}
Without loss of generality, assume $\frac{\partial G^{(t)}(\mu_1)}{\partial t} > \frac{\partial G^{(t)}(\mu_2)}{\partial t}$.
First of all, by \Cref{def: stage_2}, we have
\begin{align*}
    \frac{\partial G^{(t)}(\mu_1)}{\partial t} &= \sum_{j_1=1}^{m_1} \sum_{j_2=1}^{m_1} \inprod{a_{j_1} w_{j_1}^{(t)}, a_{j_2} w_{j_2}^{(t)}} \frac{1}{n} \left( \sum_{i \in I_1} g_i^{(t)} - \sum_{i \in I_2} g_i^{(t)} \pm o(1) \sum_{i \in [n]} g_i^{(t)} \right), \\
    \frac{\partial G^{(t)}(\mu_2)}{\partial t} &= \sum_{j_1=1}^{m_1} \sum_{j_2=1}^{m_1} \inprod{a_{j_1} w_{j_1}^{(t)}, a_{j_2} w_{j_2}^{(t)}} \frac{1}{n} \left( \sum_{i \in I_1} g_i^{(t)} - \sum_{i \in I_3} g_i^{(t)} \pm o(1) \sum_{i \in [n]} g_i^{(t)} \right), \\
    \frac{\partial G^{(t)}(\mu_3)}{\partial t} &= \sum_{j_1=1}^{m_1} \sum_{j_2=1}^{m_1} \inprod{a_{j_1} w_{j_1}^{(t)}, a_{j_2} w_{j_2}^{(t)}} \frac{1}{n} \left( \sum_{i \in I_1} g_i^{(t)} - \sum_{i \in I_2 \cup I_3 \cup I_4} g_i^{(t)} \pm o(1) \sum_{i \in [n]} g_i^{(t)} \right).
\end{align*}
This implies that
\begin{align*}
    \frac{\frac{\partial G^{(t)}(\mu_1)}{\partial t}}{\frac{\partial G^{(t)}(\mu_2)}{\partial t}} &= \frac{\sum_{i \in I_1} g_i^{(t)} - \sum_{i \in I_2} g_i^{(t)} \pm o(1) \sum_{i \in [n]} g_i^{(t)}}{\sum_{i \in I_1} g_i^{(t)} - \sum_{i \in I_3} g_i^{(t)} \pm o(1) \sum_{i \in [n]} g_i^{(t)}}, \\
    \frac{-\frac{\partial G^{(t)}(\mu_3)}{\partial t}}{\frac{\partial G^{(t)}(\mu_1)}{\partial t}} &= \frac{-\sum_{i \in I_1} g_i^{(t)} + \sum_{i \in I_2 \cup I_3 \cup I_4} g_i^{(t)} \pm o(1) \sum_{i \in [n]} g_i^{(t)}}{\sum_{i \in I_1} g_i^{(t)} - \sum_{i \in I_2} g_i^{(t)} \pm o(1) \sum_{i \in [n]} g_i^{(t)}}.
\end{align*}
We next analyze $\frac{\partial}{\partial t} \frac{-\frac{\partial G^{(t)}(\mu_3)}{\partial t}}{\frac{\partial G^{(t)}(\mu_1)}{\partial t}}$. 
We first define the ratio $R(t) = \frac{\sum_{i \in I_2 \cup I_3 \cup I_4} g_i^{(t)} - \sum_{i \in I_1} g_i^{(t)}}{\sum_{i \in I_1} g_i^{(t)} - \sum_{i \in I_2} g_i^{(t)}}$. 
Since the dependence of $R$ on $t$ is clear and for the ease of notation, we omit this dependence below. 
Rearranging this definition, we obtain
\begin{align}\label{eq: rearrange_def_R}
    (1+R)\left( \sum_{i \in I_1} g_i^{(t)} - \sum_{i \in I_2} g_i^{(t)} \right) = \sum_{i \in I_3 \cup I_4} g_i^{(t)}.
\end{align}
We consider the range of $R \in [1,3]$. 
By \Cref{def: stage_2}, we have
\begin{align}\label{eq: gradient_ratio_upper_bound}
    \frac{\sum_{i \in [n]} g_i^{(t)}}{\sum_{i \in I_2 \cup I_3 \cup I_4} g_i^{(t)} - \sum_{i \in I_1} g_i^{(t)}} &= O(1), \qquad
    \frac{\sum_{i \in [n]} g_i^{(t)}}{\sum_{i \in I_1} g_i^{(t)} - \sum_{i \in I_2} g_i^{(t)}} = O(1) ,
\end{align}
where the second equation follows from \Cref{lemma: I_2_I_3_gradient_ratio_bound}. 
This implies that
\begin{align*}
    & \frac{\frac{\partial G^{(t)}(\mu_1)}{\partial t}}{\frac{\partial G^{(t)}(\mu_2)}{\partial t}} = \Theta(1), \qquad
    \frac{\sum_{i \in [n]} g_i^{(t)}}{\sum_{i \in I_1} g_i^{(t)} - \sum_{i \in I_3} g_i^{(t)}} = O(1),
\end{align*}
which proves the first result and
\begin{align*}
    \frac{-\frac{\partial G^{(t)}(\mu_3)}{\partial t}}{\frac{\partial G^{(t)}(\mu_1)}{\partial t}} = \frac{-\sum_{i \in I_1} g_i^{(t)} + \sum_{i \in I_2 \cup I_3 \cup I_4} g_i^{(t)}}{\sum_{i \in I_1} g_i^{(t)} - \sum_{i \in I_2} g_i^{(t)} } + o(1).
\end{align*}
Thus, to analyze $\frac{\partial}{\partial t} \frac{-\frac{\partial G^{(t)}(\mu_3)}{\partial t}}{\frac{\partial G^{(t)}(\mu_1)}{\partial t}}$, we can instead analyze
\begin{align}\label{eq: key_inequality_common_token_signal_gradient_ratio}
    & \frac{\partial}{\partial t} \frac{\sum_{i \in I_2 \cup I_3 \cup I_4} g_i^{(t)} - \sum_{i \in I_1} g_i^{(t)}}{\sum_{i \in I_1} g_i^{(t)} - \sum_{i \in I_2} g_i^{(t)}} \geq 0 \nonumber \\
    &\Leftrightarrow \frac{\partial}{\partial t} \left( \sum_{i \in I_2 \cup I_3 \cup I_4} g_i^{(t)} - \sum_{i \in I_1} g_i^{(t)} \right) \left( \sum_{i \in I_1} g_i^{(t)} - \sum_{i \in I_2} g_i^{(t)} \right) \nonumber \\
    &\quad - \left( \sum_{i \in I_2 \cup I_3 \cup I_4} g_i^{(t)} - \sum_{i \in I_1} g_i^{(t)} \right) \frac{\partial}{\partial t} \left( \sum_{i \in I_1} g_i^{(t)} - \sum_{i \in I_2} g_i^{(t)} \right) \geq 0 \nonumber \\
    &\Leftrightarrow \sum_{i \in I_2 \cup I_3 \cup I_4} \frac{\partial g_i^{(t)}}{\partial t} + R \sum_{i \in I_2} \frac{\partial g_i^{(t)}}{\partial t} \geq (1 + R) \sum_{i \in I_1} \frac{\partial g_i^{(t)}}{\partial t} .
\end{align}
Recall that by \Cref{def: stage_2}, we have for $i_1 \in I_1$,
\begin{align*}
    & \frac{\partial y_{i_1} F_{i_1}^{(t)}}{\partial t} = \sum_{j_1=1}^{m_1} \sum_{j_2=1}^{m_1} \inprod{a_{j_1} w_{j_1}^{(t)}, a_{j_2} w_{j_2}^{(t)}} \frac{1}{n} \left( 3 \sum_{i \in I_1} g_i^{(t)} - \sum_{i \in I_2 \cup I_3 \cup I_4} g_i^{(t)} - \sum_{i \in I_2 \cup I_3} g_i^{(t)} + o(1) \sum_{i \in [n]} g_i^{(t)} \right);
\end{align*}
for $i_2 \in I_2$, 
\begin{align*}
    \frac{\partial y_{i_2} F_{i_2}^{(t)}}{\partial t} = -\sum_{j_1=1}^{m_1} \sum_{j_2=1}^{m_1} \inprod{a_{j_1} w_{j_1}^{(t)}, a_{j_2} w_{j_2}^{(t)}} \frac{1}{n} \left( \sum_{i \in I_1} g_i^{(t)} - \sum_{i \in I_2 \cup I_3 \cup I_4} g_i^{(t)} + \sum_{i \in I_1} g_i^{(t)} - \sum_{i \in I_2} g_i^{(t)} + o(1) \sum_{i \in [n]} g_i^{(t)} \right);
\end{align*}
for $i_3 \in I_3$,
\begin{align*}
    \frac{\partial y_{i_3} F_{i_3}^{(t)}}{\partial t} = -\sum_{j_1=1}^{m_1} \sum_{j_2=1}^{m_1} \inprod{a_{j_1} w_{j_1}^{(t)}, a_{j_2} w_{j_2}^{(t)}} \frac{1}{n} \left( \sum_{i \in I_1} g_i^{(t)} - \sum_{i \in I_2 \cup I_3 \cup I_4} g_i^{(t)} + \sum_{i \in I_1} g_i^{(t)} - \sum_{i \in I_3} g_i^{(t)} + o(1) \sum_{i \in [n]} g_i^{(t)} \right);
\end{align*}
and for $i_4 \in I_4$,
\begin{align*}
    \frac{\partial y_{i_4} F_{i_4}^{(t)}}{\partial t} = - \sum_{j_1=1}^{m_1} \sum_{j_2=1}^{m_1} \inprod{a_{j_1} w_{j_1}^{(t)}, a_{j_2} w_{j_2}^{(t)}} \frac{1}{n} \left( \sum_{i \in I_1} g_i^{(t)} - \sum_{i \in I_2 \cup I_3 \cup I_4} g_i^{(t)} + o(1) \sum_{i \in [n]} g_i^{(t)} \right).
\end{align*}
The above implies that for $i_1 \in I_1$,
\begin{align*}
    \frac{\frac{\partial y_{i_1} F_{i_1}^{(t)}}{\partial t}}{\sum_{i \in I_1} g_i^{(t)} - \sum_{i \in I_2} g_i^{(t)}} = \sum_{j_1=1}^{m_1} \sum_{j_2=1}^{m_1} \inprod{a_{j_1} w_{j_1}^{(t)}, a_{j_2} w_{j_2}^{(t)}} \frac{1}{n} \left( 1 + \frac{\sum_{i \in I_1} g_i^{(t)} - \sum_{i \in I_3} g_i^{(t)}}{\sum_{i \in I_1} g_i^{(t)} - \sum_{i \in I_2} g_i^{(t)}} - R +o(1) \right);
\end{align*}
for $i_2 \in I_2$, 
\begin{align*}
    \frac{\frac{\partial y_{i_2} F_{i_2}^{(t)}}{\partial t}}{\sum_{i \in I_1} g_i^{(t)} - \sum_{i \in I_2} g_i^{(t)}} = - \sum_{j_1=1}^{m_1} \sum_{j_2=1}^{m_1} \inprod{a_{j_1} w_{j_1}^{(t)}, a_{j_2} w_{j_2}^{(t)}} \frac{1}{n} \left( 1 - R + o(1) \right);
\end{align*}
for $i_3 \in I_3$, 
\begin{align*}
    \frac{\frac{\partial y_{i_3} F_{i_3}^{(t)}}{\partial t}}{\sum_{i \in I_1} g_i^{(t)} - \sum_{i \in I_2} g_i^{(t)}} = - \sum_{j_1=1}^{m_1} \sum_{j_2=1}^{m_1} \inprod{a_{j_1} w_{j_1}^{(t)}, a_{j_2} w_{j_2}^{(t)}} \frac{1}{n} \left( \frac{\sum_{i \in I_1} g_i^{(t)} - \sum_{i \in I_3} g_i^{(t)}}{\sum_{i \in I_1} g_i^{(t)} - \sum_{i \in I_2} g_i^{(t)}} - R +o(1) \right);
\end{align*}
and for $i_4 \in I_4$, 
\begin{align*}
    \frac{\frac{\partial y_{i_1} F_{i_1}^{(t)}}{\partial t}}{\sum_{i \in I_1} g_i^{(t)} - \sum_{i \in I_2} g_i^{(t)}} = - \sum_{j_1=1}^{m_1} \sum_{j_2=1}^{m_1} \inprod{a_{j_1} w_{j_1}^{(t)}, a_{j_2} w_{j_2}^{(t)}} \frac{1}{n} \left( - R +o(1) \right).
\end{align*}
Substituting the above into \Cref{eq: key_inequality_common_token_signal_gradient_ratio} and divide both sides by $\frac{1}{n} \sum_{j_1=1}^{m_1} \sum_{j_2=1}^{m_1} \inprod{a_{j_1} w_{j_1}^{(t)}, a_{j_2} w_{j_2}^{(t)}}$, we have
\begin{align}\label{eq: simplified_key_inequality_common_token_signal_gradient_ratio}
    & (1+R)\sum_{i_2 \in I_2} g'(F_{i_2}^{(t)}) (R-1 +o(1)) + \sum_{i_3 \in I_3} g'(F_{i_3}^{(t)}) \left( R - \frac{\sum_{i \in I_1} g_i^{(t)} - \sum_{i \in I_3} g_i^{(t)}}{\sum_{i \in I_1} g_i^{(t)} - \sum_{i \in I_2} g_i^{(t)}} + o(1) \right) \nonumber \\
    &\quad + \sum_{i_4 \in I_4} g'(F_{i_4}^{(t)}) (R + o(1)) \nonumber \\
    &\geq (1+R) \sum_{i_1 \in I_1} g'(F_{i_1}^{(t)}) \left( 1 + \frac{\sum_{i \in I_1} g_i^{(t)} - \sum_{i \in I_3} g_i^{(t)}}{\sum_{i \in I_1} g_i^{(t)} - \sum_{i \in I_2} g_i^{(t)}} - R +o(1) \right) .
\end{align}
By \Cref{lemma: I_2_I_3_gradient_ratio_bound}, we obtain 
\begin{align*}
    \left| \frac{\sum_{i \in I_1} g_i^{(t)} - \sum_{i \in I_3} g_i^{(t)}}{\sum_{i \in I_1} g_i^{(t)} - \sum_{i \in I_2} g_i^{(t)}} - 1 \right| \leq \eps,
\end{align*}
where we use $\eps$ to denote the small deviation.
Note that \Cref{eq: simplified_key_inequality_common_token_signal_gradient_ratio} is a quadratic inequality in $R$ and can be rearranged as $a R^2 + b R + c \geq 0$, where
\begin{align*}
    a &= \sum_{i_1 \in I_1} g'(F_{i_1}^{(t)}) + \sum_{i_2 \in I_2} g'(F_{i_2}^{(t)}), \\
    b &= \sum_{i_3 \in I_3} g'(F_{i_3}^{(t)}) + \sum_{i_4 \in I_4} g'(F_{i_4}^{(t)}) - \sum_{i_1 \in I_1} g'(F_{i_1}^{(t)}), \\
    c &= (1+o(1)) \left( -\sum_{i_2 \in I_2} g'(F_{i_2}^{(t)}) - (1\pm \eps) \sum_{i_3 \in I_3} g'(F_{i_3}^{(t)}) - (2 \pm \eps) \sum_{i_1 \in I_1} g'(F_{i_1}^{(t)}) \right).
\end{align*}
Now we analyze the equality condition in \Cref{eq: simplified_key_inequality_common_token_signal_gradient_ratio} where we calculate the root of the equation $aR^2 + bR + c = 0$.
We have
\begin{align*}
    \frac{b}{a} &= \frac{\sum_{i_3 \in I_3} g'(F_{i_3}^{(t)}) + \sum_{i_4 \in I_4} g'(F_{i_4}^{(t)}) - \sum_{i_1 \in I_1} g'(F_{i_1}^{(t)})}{\sum_{i_1 \in I_1} g'(F_{i_1}^{(t)}) + \sum_{i_2 \in I_2} g'(F_{i_2}^{(t)})}, \\
    \frac{c}{a} &= \frac{(1+o(1)) \left( -\sum_{i_2 \in I_2} g'(F_{i_2}^{(t)}) - (1\pm \eps) \sum_{i_3 \in I_3} g'(F_{i_3}^{(t)}) - (2 \pm \eps) \sum_{i_1 \in I_1} g'(F_{i_1}^{(t)}) \right)}{\sum_{i_1 \in I_1} g'(F_{i_1}^{(t)}) + \sum_{i_2 \in I_2} g'(F_{i_2}^{(t)})}.
\end{align*}
Note that
\begin{align*}
    \sum_{i \in I} g_i^{(t)} \min_{i \in I} \frac{1}{1 + \exp(y_i F_i^{(t)})} \leq -\sum_{i \in I} g'(F^{(t)}_i) \leq \sum_{i \in I} g_i^{(t)} \max_{i \in I} \frac{1}{1 + \exp(y_i F_i^{(t)})}.
\end{align*}
By \Cref{eq: gradient_ratio_upper_bound}, we have $-\frac{b}{2a} \geq -1/4$ and
\begin{align*}
    \left| \frac{b}{a} \right| &\leq (1 + O(\max_{i \in [n]} \exp(-y_i F_i^{(t)}))) \max\left( \left| \frac{\sum_{i_1 \in I_1} g(F_{i_1}^{(t)}) - \sum_{i_3 \in I_3} g(F_{i_3}^{(t)})}{\sum_{i_1 \in I_1} g(F_{i_1}^{(t)}) + \sum_{i_2 \in I_2} g(F_{i_2}^{(t)})} \right|, \left| \frac{\sum_{i_4 \in I_4} g(F_{i_4}^{(t)})}{\sum_{i_1 \in I_1} g(F_{i_1}^{(t)}) + \sum_{i_2 \in I_2} g(F_{i_2}^{(t)})} \right| \right) \\
    &\leq C < 1,
\end{align*}
and 
\begin{align*}
    \left| \frac{c}{a} \right| = (1\pm O(\eps) \pm O(\max_{i \in [n]} \exp(-y_i F_i^{(t)})) ) \cdot 2.
\end{align*}
Recall that we are considering the case of $R \in [1,3]$. Thus, we only need to consider the root that is positive and we can calculate the root $R^\star = -\frac{b}{2a} + \sqrt{\frac{b^2}{4a^2} - \frac{c}{a}} \in (1,2)$ if $\eps$ and $\max_{i \in [n]} \exp(-y_i F_i^{(t)})$ are both suffiently small.   

Next, since $g'(F_i^{(t)}) < 0$, the root $R^\star$ is contractive (i.e., if $R(t) > R^\star$ then $R(t)$ is decreasing and if $R(t) < R^\star$ then $R(t)$ is increasing).

Finally, the result at the end of Phase 1 implies that $R(T_1) \in [1,3]$, which completes the proof. 
\end{proof}

\begin{corollary}
For $t \in [T_1, T_2]$, we have
\begin{align*}
    \frac{G^{(t)}(\mu_1)}{G^{(t)}(\mu_2)}, \frac{G^{(t)}(\mu_1)}{-G^{(t)}(\mu_3)}, \frac{G^{(t)}(\mu_2)}{-G^{(t)}(\mu_3)} = \Theta(1).
\end{align*}
\end{corollary}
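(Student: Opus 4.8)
The plan is to integrate the gradient‑ratio estimates of \Cref{lemma: signal_common_token_gradient_ratio} from the end of Phase~1, at which point \Cref{thm: stage_1_end} already gives $G^{(T_1)}(\mu_1), G^{(T_1)}(\mu_2), -G^{(T_1)}(\mu_3) = \Theta(1)$, and then compare the resulting integrals.

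First I would pin down the sign and relative magnitude of the three derivatives on $[T_1,T_2]$. By \Cref{lemma: neuron_correlation_factor_growth_stage_2} the common prefactor $\sum_{j_1,j_2}\inprod{a_{j_1}w_{j_1}^{(t)},\,a_{j_2}w_{j_2}^{(t)}} = \Omega(\sigma_1^2 m m_1) > 0$, and by \Cref{def: stage_2} the gaps $\sum_{i\in I_1}g_i^{(t)}-\sum_{i\in I_2}g_i^{(t)}$, $\sum_{i\in I_1}g_i^{(t)}-\sum_{i\in I_3}g_i^{(t)}$, and $\sum_{i\in I_2\cup I_3\cup I_4}g_i^{(t)}-\sum_{i\in I_1}g_i^{(t)}$ are positive; moreover \Cref{lemma: signal_common_token_gradient_ratio} shows each of these gaps is $\Omega(\sum_{i}g_i^{(t)})$, so it dominates the $o(1)\sum_i g_i^{(t)}$ correction appearing in the derivative formulas of \Cref{def: stage_2}. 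Hence $\frac{\partial G^{(t)}(\mu_1)}{\partial t},\frac{\partial G^{(t)}(\mu_2)}{\partial t} > 0$ and $\frac{\partial G^{(t)}(\mu_3)}{\partial t} < 0$ throughout Phase~2. \Cref{lemma: signal_common_token_gradient_ratio} also supplies $\frac{\partial G^{(t)}(\mu_1)/\partial t}{\partial G^{(t)}(\mu_2)/\partial t} = \Theta(1)$, and the sandwich $(1+C)\max\bigl(\frac{\partial G^{(t)}(\mu_1)}{\partial t},\frac{\partial G^{(t)}(\mu_2)}{\partial t}\bigr)\le -\frac{\partial G^{(t)}(\mu_3)}{\partial t}\le (1-C)\bigl(\frac{\partial G^{(t)}(\mu_1)}{\partial t}+\frac{\partial G^{(t)}(\mu_2)}{\partial t}\bigr)$ then gives $-\frac{\partial G^{(t)}(\mu_3)}{\partial t} = \Theta\!\bigl(\frac{\partial G^{(t)}(\mu_1)}{\partial t}\bigr)$ pointwise in $t$.

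Next I integrate. For $\mu\in\{\mu_1,\mu_2\}$ write $G^{(t)}(\mu) = G^{(T_1)}(\mu) + \int_{T_1}^{t}\frac{\partial G^{(\tau)}(\mu)}{\partial\tau}\,d\tau$, and $-G^{(t)}(\mu_3) = -G^{(T_1)}(\mu_3) + \int_{T_1}^{t}\bigl(-\frac{\partial G^{(\tau)}(\mu_3)}{\partial\tau}\bigr)\,d\tau$. The boundary terms $G^{(T_1)}(\mu_1),G^{(T_1)}(\mu_2),-G^{(T_1)}(\mu_3)$ are all $\Theta(1)$, hence pairwise comparable; by the pointwise derivative comparisons above the three integrands are pairwise comparable and nonnegative, so the three integrals over $[T_1,t]$ are pairwise comparable and nonnegative. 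Using the elementary fact that $a_1\asymp a_2 > 0$ and $I_1\asymp I_2\ge 0$ imply $a_1+I_1\asymp a_2+I_2$, I conclude $G^{(t)}(\mu_1)\asymp G^{(t)}(\mu_2)\asymp -G^{(t)}(\mu_3)$ on $[T_1,T_2]$; in particular all three stay bounded away from zero in magnitude, so the stated ratios are well defined and each equals $\Theta(1)$.

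The only real subtlety is the bookkeeping of the $o(1)\sum_i g_i^{(t)}$ terms in the derivative expressions of \Cref{def: stage_2}: one must invoke \Cref{lemma: signal_common_token_gradient_ratio} to ensure the leading gaps are themselves $\Theta(\sum_i g_i^{(t)})$, so those errors perturb the derivative ratios only by a $1+o(1)$ factor and never flip a sign. Everything else is a routine one‑dimensional integration comparison, and I expect no further obstacle.
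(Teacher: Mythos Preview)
Your proposal is correct and follows essentially the same argument as the paper: use \Cref{lemma: signal_common_token_gradient_ratio} to get $\Theta(1)$ ratios of the derivatives, use \Cref{thm: stage_1_end} for the $\Theta(1)$ ratios at $T_1$, then write each $G^{(t)}(\mu_k)$ as the initial value plus an integral and compare. The paper's proof is terser and does not spell out the sign or $o(1)$ bookkeeping you include, but the route is the same.
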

\begin{proof}
We first prove $\frac{G^{(t)}(\mu_1)}{G^{(t)}(\mu_2)} = \Theta(1)$. 
Following from \Cref{lemma: signal_common_token_gradient_ratio}, we have $\frac{\frac{\partial G^{(t)}(\mu_1)}{\partial t}}{\frac{\partial G^{(t)}(\mu_2)}{\partial t}} = \Theta(1)$.
By \Cref{thm: stage_1_end}, we have $\frac{G^{(T_1)}(\mu_1)}{G^{(T_1)}(\mu_2)} = \Theta(1)$. Thus, for $t \in [T_1, T_2]$, we obtain
\begin{align*}
    \frac{G^{(t)}(\mu_1)}{G^{(t)}(\mu_2)} = \frac{G^{(T_1)}(\mu_1) + \int_{T_1}^t \frac{\partial G^{(\tau)}(\mu_1)}{\partial \tau}\ d\tau}{G^{(T_1)}(\mu_2) + \int_{T_1}^t \frac{\partial G^{(\tau)}(\mu_2)}{\partial \tau}\ d\tau} = \Theta(1).
\end{align*}
Note that \Cref{lemma: signal_common_token_gradient_ratio} and \Cref{def: stage_2} imply that $\frac{\frac{\partial G^{(t)}(\mu_1)}{\partial t}}{-\frac{\partial G^{(t)}(\mu_3)}{\partial t}} = \Theta(1)$.
Since $\frac{G^{(T_1)}(\mu_1)}{-G^{(T_1)}(\mu_3)} = \Theta(1)$, similarly, we have $\frac{G^{(t)}(\mu_1)}{-G^{(t)}(\mu_3)} = \Theta(1)$. 
Finally, $\frac{G^{(t)}(\mu_1)}{G^{(t)}(\mu_2)} = \Theta(1)$ and $\frac{G^{(t)}(\mu_1)}{-G^{(t)}(\mu_3)} = \Theta(1)$ imply that $\frac{G^{(t)}(\mu_2)}{-G^{(t)}(\mu_3)} = \Theta(1)$. 
\end{proof}

\subsection{How Fast the Loss Decreases}
\begin{theorem}\label{thm: convergence_loss_stage_2}
For $t \in [T_1, T_2]$, we have
\begin{align*}
    \widehat{L}(t) = \frac{1}{\Theta(\sigma_1^2 mm_1) (t - T_1) + (1/\widehat{L}(T_1)) }.
\end{align*}
\end{theorem}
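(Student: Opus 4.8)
The plan is to reduce the evolution of the empirical loss on $[T_1, T_2]$ to the scalar ODE $\frac{d}{dt}\widehat{L}^{(t)} = -\Theta(\sigma_1^2 m m_1)\,(\widehat{L}^{(t)})^2$ and then integrate it explicitly. Once this ODE is established, setting $u(t) := 1/\widehat{L}^{(t)}$ gives $u'(t) = -(\widehat{L}^{(t)})^{-2}\,\frac{d}{dt}\widehat{L}^{(t)} = \Theta(\sigma_1^2 m m_1)$, so integrating from $T_1$ yields $u(t) = u(T_1) + \Theta(\sigma_1^2 m m_1)(t - T_1) = 1/\widehat{L}^{(T_1)} + \Theta(\sigma_1^2 m m_1)(t-T_1)$, which is exactly the claimed formula after taking reciprocals. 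So the entire task is the two-sided estimate $\frac{d}{dt}\widehat{L}^{(t)} = -\Theta(\sigma_1^2 m m_1)\,(\widehat{L}^{(t)})^2$.

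First I would record that $\ell(x) = \Theta(g(x))$ for $x \ge 0$ (the ratio $\ell(x)/g(x) = (1+e^x)\log(1+e^{-x})$ lies in $[1, 2\log 2]$), so since $y_i F_i^{(t)} > C$ for all $i$ throughout Phase 2 by \Cref{def: stage_2}, we have $\widehat{L}^{(t)} = \Theta\!\big(\tfrac1n\sum_{i\in[n]} g_i^{(t)}\big)$. Next, by the chain rule and $\ell'(x) = -g(x)$,
\begin{align*}
\frac{d}{dt}\widehat{L}^{(t)} = -\frac1n \sum_{i=1}^n g_i^{(t)}\, y_i\, \frac{\partial F_i^{(t)}}{\partial t}.
\end{align*}
The heart of the argument is to show that $y_i\,\frac{\partial F_i^{(t)}}{\partial t} = \Theta(\sigma_1^2 m m_1)\,\widehat{L}^{(t)}$ \emph{uniformly over $i \in [n]$}. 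To do this I would expand $\frac{\partial F_i^{(t)}}{\partial t} = \sum_{l_1, l_2}\big(\frac{\partial G^{(t)}(X^{(i)}_{l_2})}{\partial t}\,p^{(t,i)}_{l_1, l_2} + G^{(t)}(X^{(i)}_{l_2})\,\frac{\partial p^{(t,i)}_{l_1, l_2}}{\partial t}\big)$. The defining inequalities of \Cref{def: stage_2} make the $\frac{\partial p}{\partial t}$ terms $o(1)\cdot\sigma_1^2 m m_1 \cdot \tfrac1n\sum_i g_i^{(t)}$ and make the $\frac{\partial w_j}{\partial t}W_V^{(t)}\mu$ part of $\frac{\partial G^{(t)}(\mu)}{\partial t}$ smaller than $o(1/L)\,\sigma_1^2 m m_1\,\tfrac1n\sum_i g_i^{(t)}$; moreover, since each random token occurs at most once in the training set and $n \gg L^2$, the aggregated random-token gradients in a single sample are also $o(\sigma_1^2 m m_1\widehat{L}^{(t)})$, while $\sum_{l_1} p^{(t,i)}_{l_1,l_2} = 1 \pm o(1)$ by the softmax bound. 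Thus, up to negligible error, $y_i\,\frac{\partial F_i^{(t)}}{\partial t}$ equals a fixed, type-dependent positive linear combination of $\frac{\partial G^{(t)}(\mu_1)}{\partial t}$, $\frac{\partial G^{(t)}(\mu_2)}{\partial t}$, $-\frac{\partial G^{(t)}(\mu_3)}{\partial t}$ — recalling $\mu_3$ lies in every sample, it is $\approx \frac{\partial G^{(t)}(\mu_1)}{\partial t}+\frac{\partial G^{(t)}(\mu_2)}{\partial t}+\frac{\partial G^{(t)}(\mu_3)}{\partial t}$ on $I_1$, $\approx -\frac{\partial G^{(t)}(\mu_1)}{\partial t}-\frac{\partial G^{(t)}(\mu_3)}{\partial t}$ on $I_2$, $\approx -\frac{\partial G^{(t)}(\mu_2)}{\partial t}-\frac{\partial G^{(t)}(\mu_3)}{\partial t}$ on $I_3$, and $\approx -\frac{\partial G^{(t)}(\mu_3)}{\partial t}$ on $I_4$. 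Now \Cref{lemma: neuron_correlation_factor_growth_stage_2} and \Cref{def: stage_2} give $\sum_{j_1,j_2}\inprod{a_{j_1}w^{(t)}_{j_1}, a_{j_2}w^{(t)}_{j_2}} = \Theta(\sigma_1^2 m m_1)$, and combined with the expressions for $\frac{\partial G^{(t)}(\mu_j)}{\partial t}$ from \Cref{def: stage_2} and the automatic-balancing estimates of \Cref{lemma: signal_common_token_gradient_ratio} — which force $\sum_{I_1}g_i^{(t)} - \sum_{I_2}g_i^{(t)}$, $\sum_{I_1}g_i^{(t)} - \sum_{I_3}g_i^{(t)}$, and $\sum_{I_2\cup I_3\cup I_4}g_i^{(t)} - \sum_{I_1}g_i^{(t)}$ all to be $\Theta(\sum_{i\in[n]}g_i^{(t)})$, and whose margin inequality makes each of the four type-dependent combinations a positive $\Theta(1)$ multiple of $\frac{\partial G^{(t)}(\mu_1)}{\partial t}$ — this shows each $\frac{\partial G^{(t)}(\mu_j)}{\partial t} = \Theta(\sigma_1^2 m m_1)\widehat{L}^{(t)}$ and hence $y_i\,\frac{\partial F_i^{(t)}}{\partial t} = \Theta(\sigma_1^2 m m_1)\widehat{L}^{(t)}$ for every $i$.

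Plugging this uniform estimate back into the chain-rule identity gives $\frac{d}{dt}\widehat{L}^{(t)} = -\tfrac1n\sum_i g_i^{(t)}\cdot\Theta(\sigma_1^2 m m_1)\widehat{L}^{(t)} = -\Theta(\sigma_1^2 m m_1)\,(\widehat{L}^{(t)})^2$, and integrating the reciprocal as above yields the statement. I expect the main obstacle to be the bookkeeping in the core step: one must verify simultaneously that (i) all the error terms — the softmax-derivative contributions, the $W$-side of $\frac{\partial G}{\partial t}$, and the random-token gradients — are genuinely of lower order than $\sigma_1^2 m m_1\widehat{L}^{(t)}$ (this is where \Cref{condition: param} and the precise inequalities of \Cref{def: stage_2} are needed), and (ii) the surviving combination is bounded \emph{both above and below} by a constant multiple of $\sigma_1^2 m m_1\widehat{L}^{(t)}$, which crucially relies on the two-sided automatic-balancing bounds of \Cref{lemma: signal_common_token_gradient_ratio} so that no $\frac{\partial G^{(t)}(\mu_j)}{\partial t}$ degenerates and no type-dependent combination cancels.
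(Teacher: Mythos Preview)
Your proposal is correct and follows essentially the same approach as the paper: reduce $\frac{d}{dt}\widehat{L}^{(t)}$ to $-\Theta(\sigma_1^2 m m_1)(\widehat{L}^{(t)})^2$ by showing $y_i\,\frac{\partial F_i^{(t)}}{\partial t} = \Theta(\sigma_1^2 m m_1)\widehat{L}^{(t)}$ uniformly via the automatic-balancing lemma and the neuron-correlation estimate, then solve the separable ODE. One tiny slip: you write $\sum_{l_1} p^{(t,i)}_{l_1,l_2} = 1 \pm o(1)$, but the softmax sums to $1$ over the key index $l_2$, not $l_1$; what you actually need (and what the Phase~2 softmax bound gives) is that $\sum_{l_1=1}^L p^{(t,i)}_{q\leftarrow l_1, k\leftarrow \mu}$ equals $1 \pm o(1)$ because each summand is $\tfrac{1}{L}(1\pm o(1))$.
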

\begin{proof}
First of all, the gradient flow update for the empirical loss is given by $\frac{\partial \widehat{L}}{\partial t} = \sum_{i=1}^n \ell'(y_i F^{(t)}_i) \frac{\partial y_i F_i^{(t)}}{\partial t} $.
By \Cref{def: stage_2}, we have for $i_1 \in I_1$,
\begin{align*}
    & \frac{\partial y_{i_1} F_{i_1}^{(t)}}{\partial t} = \sum_{j_1=1}^{m_1} \sum_{j_2=1}^{m_1} \inprod{a_{j_1} w_{j_1}^{(t)}, a_{j_2} w_{j_2}^{(t)}} \frac{1}{n} \left( 3 \sum_{i \in I_1} g_i^{(t)} - \sum_{i \in I_2 \cup I_3 \cup I_4} g_i^{(t)} - \sum_{i \in I_2 \cup I_3} g_i^{(t)} + o(1) \sum_{i \in [n]} g_i^{(t)} \right);
\end{align*}
for $i_2 \in I_2$, 
\begin{align*}
    \frac{\partial y_{i_2} F_{i_2}^{(t)}}{\partial t} = -\sum_{j_1=1}^{m_1} \sum_{j_2=1}^{m_1} \inprod{a_{j_1} w_{j_1}^{(t)}, a_{j_2} w_{j_2}^{(t)}} \frac{1}{n} \left( \sum_{i \in I_1} g_i^{(t)} - \sum_{i \in I_2 \cup I_3 \cup I_4} g_i^{(t)} + \sum_{i \in I_1} g_i^{(t)} - \sum_{i \in I_2} g_i^{(t)} + o(1) \sum_{i \in [n]} g_i^{(t)} \right);
\end{align*}
for $i_3 \in I_3$,
\begin{align*}
    \frac{\partial y_{i_3} F_{i_3}^{(t)}}{\partial t} = -\sum_{j_1=1}^{m_1} \sum_{j_2=1}^{m_1} \inprod{a_{j_1} w_{j_1}^{(t)}, a_{j_2} w_{j_2}^{(t)}} \frac{1}{n} \left( \sum_{i \in I_1} g_i^{(t)} - \sum_{i \in I_2 \cup I_3 \cup I_4} g_i^{(t)} + \sum_{i \in I_1} g_i^{(t)} - \sum_{i \in I_3} g_i^{(t)} + o(1) \sum_{i \in [n]} g_i^{(t)} \right);
\end{align*}
and for $i_4 \in I_4$,
\begin{align*}
    \frac{\partial y_{i_4} F_{i_4}^{(t)}}{\partial t} = - \sum_{j_1=1}^{m_1} \sum_{j_2=1}^{m_1} \inprod{a_{j_1} w_{j_1}^{(t)}, a_{j_2} w_{j_2}^{(t)}} \frac{1}{n} \left( \sum_{i \in I_1} g_i^{(t)} - \sum_{i \in I_2 \cup I_3 \cup I_4} g_i^{(t)} + o(1) \sum_{i \in [n]} g_i^{(t)} \right).
\end{align*}
By \Cref{lemma: signal_common_token_gradient_ratio}, we have
\begin{align*}
    \frac{\partial y_i F_i^{(t)}}{\partial t} &= \sum_{j_1=1}^{m_1} \sum_{j_2=1}^{m_1} \inprod{a_{j_1} w_{j_1}^{(t)}, a_{j_2} w_{j_2}^{(t)}} \Theta\left( \frac{1}{n} \sum_{i \in [n]} g_i^{(t)} \right)
\end{align*}
for all $i \in [n]$. Therefore,
\begin{align*}
    \frac{\partial \widehat{L}}{\partial t} &= \frac{1}{n} \sum_{i=1}^n \ell'(y_i F^{(t)}_i) \frac{\partial y_i F_i^{(t)}}{\partial t} \\
    &= \frac{1}{n} \sum_{i=1}^n - g_i^{(t)} \sum_{j_1=1}^{m_1} \sum_{j_2=1}^{m_1} \inprod{a_{j_1} w_{j_1}^{(t)}, a_{j_2} w_{j_2}^{(t)}} \Theta\left( \frac{1}{n} \sum_{i' \in [n]} g_{i'}^{(t)} \right) \\
    &= - \sum_{j_1=1}^{m_1} \sum_{j_2=1}^{m_1} \inprod{a_{j_1} w_{j_1}^{(t)}, a_{j_2} w_{j_2}^{(t)}} \Theta\left( \left( \frac{1}{n} \sum_{i \in [n]} g_i^{(t)} \right)^2 \right) \\
    &= - \sum_{j_1=1}^{m_1} \sum_{j_2=1}^{m_1} \inprod{a_{j_1} w_{j_1}^{(t)}, a_{j_2} w_{j_2}^{(t)}} \Theta\left( \widehat{L}^2 \right),
\end{align*}
where the last equality follows from the property of binary cross-entropy loss that $\ell(x) = \Theta(-\ell'(x))$ for $x > 0$. 
By \Cref{def: stage_2} and \Cref{lemma: neuron_correlation_factor_growth_stage_2}, we have for all $t \in [T_1, T_2]$,
\begin{align*}
    \sum_{j_1=1}^{m_1} \sum_{j_2=1}^{m_1} \inprod{a_{j_1} w_{j_1}^{(t)}, a_{j_2} w_{j_2}^{(t)}} = \Theta(\sigma_1^2 mm_1).
\end{align*}
Thus, we have
\begin{align*}
    & \frac{\partial \widehat{L}}{\partial t} = - \Theta(\sigma_1^2 mm_1) \widehat{L}^2 .
\end{align*}
Now, consider the differential equation $\frac{d L}{dt} = -C_1 L^2$. 
Note that this is a separable differential equation in $t$ and we can solve it by 
\begin{align*}
    & \frac{1}{L^2} \frac{dL}{dt} + C_1 = 0 \quad \Rightarrow \quad  \frac{d}{dt} \left( C_1 t - L^{-1} + C_2 \right) = 0 \quad  \Rightarrow \quad L(t) = \frac{1}{C_1 t + C_2}.
\end{align*}
This implies for $t \in [T_1, T_2]$, we have
\begin{align*}
    & \widehat{L}(t) = \frac{1}{\Theta(\sigma_1^2 mm_1) (t - T_1) + (1/\widehat{L}(T_1)) } 
\end{align*}
% \textcolor{blue}{We can get a lower bound on $T^\star$ if we can get an upper bound on the decrease of the loss. }
\end{proof}
\begin{corollary}\label{corollary: sum_neuron_correlation_change}
The following bound holds:
\begin{align*}
    \left| \sum_{j_1=1}^{m_1} \sum_{j_2=1}^{m_1} \inprod{a_{j_1} w_{j_1}^{(T_2)}, a_{j_2} w_{j_2}^{(T_2)}} - \sum_{j_1=1}^{m_1} \sum_{j_2=1}^{m_1} \inprod{a_{j_1} w_{j_1}^{(T_1)}, a_{j_2} w_{j_2}^{(T_1)}} \right| \leq \tilde{O}\left(\frac{m_1}{m} \right).
\end{align*}
\end{corollary}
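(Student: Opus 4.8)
The plan is to integrate the exact ODE for the quantity of interest and control the integrand using the loss decay rate established in \Cref{thm: convergence_loss_stage_2}. By \Cref{lemma: neuron_correlation_factor_growth_stage_2}, throughout Phase 2 we have
\begin{align*}
    \frac{\partial }{\partial t} \sum_{j_1=1}^{m_1} \sum_{j_2=1}^{m_1} \inprod{a_{j_1} w_{j_1}^{(t)}, a_{j_2} w_{j_2}^{(t)}} = \frac{2 m_1}{n} \sum_{i=1}^n g_i^{(t)} y_i F_i^{(t)} \geq 0,
\end{align*}
so the change over $[T_1,T_2]$ is exactly $\int_{T_1}^{T_2} \frac{2m_1}{n}\sum_{i=1}^n g_i^{(t)} y_i F_i^{(t)}\, dt$ and there is no sign issue in taking the absolute value. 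First I would bound $y_i F_i^{(t)} = O(\log m)$ for every $i\in[n]$: writing $F_i^{(t)} = \sum_{l_1=1}^L\sum_{l_2=1}^L G^{(t)}(X_{l_2}^{(i)}) p_{q\leftarrow l_1, k\leftarrow l_2}^{(t,i)}$, the contribution of the tokens in $\{\mu_1,\mu_2,\mu_3\}$ is $O(\log m)$ by the item $|G^{(t)}(\mu)|\le O(\log m)$ in \Cref{def: stage_2} together with $\sum_{l_1} p_{q\leftarrow l_1,k\leftarrow \mu}^{(t,i)} = 1\pm o(1)$ (the softmax stays close to uniform in Phase 2), and the contribution of the random tokens is $O(1)$ by the corresponding item in \Cref{def: stage_2}.

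Given this, $\sum_{i=1}^n g_i^{(t)} y_i F_i^{(t)} \le O(\log m) \sum_{i=1}^n g_i^{(t)}$, and since $y_i F_i^{(t)} > C$ for all $i$ in Phase 2 the binary cross-entropy satisfies $\ell(x) = \Theta(g(x))$ for $x>0$, hence $\frac{1}{n}\sum_{i=1}^n g_i^{(t)} = \Theta(\widehat{L}^{(t)})$. Therefore
\begin{align*}
    \left| \sum_{j_1,j_2} \inprod{a_{j_1} w_{j_1}^{(T_2)}, a_{j_2} w_{j_2}^{(T_2)}} - \sum_{j_1,j_2} \inprod{a_{j_1} w_{j_1}^{(T_1)}, a_{j_2} w_{j_2}^{(T_1)}} \right| \le O(m_1 \log m) \int_{T_1}^{T_2} \widehat{L}^{(t)}\, dt.
\end{align*}
Next I would plug in \Cref{thm: convergence_loss_stage_2}, which gives $\widehat{L}^{(t)} = \big(\Theta(\sigma_1^2 m m_1)(t-T_1) + 1/\widehat{L}^{(T_1)}\big)^{-1}$, so the integral evaluates to $\frac{1}{\Theta(\sigma_1^2 m m_1)}\log\!\big(\Theta(\sigma_1^2 m m_1)(T_2-T_1)\widehat{L}^{(T_1)} + 1\big)$. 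Using $T_2 = O(\textnormal{poly}(m))$, $\widehat{L}^{(T_1)} = \Theta(1)$ (from \Cref{thm: stage_1_end}), and $\sigma_1^2 m m_1 = \tilde\Theta(m)$ under \Cref{condition: param}, the logarithm is $O(\log m) = \tilde O(1)$, so $\int_{T_1}^{T_2}\widehat{L}^{(t)}\, dt = \tilde O\!\big(1/(\sigma_1^2 m m_1)\big)$.

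Combining, the change is at most $O(m_1\log m)\cdot \tilde O\!\big(1/(\sigma_1^2 m m_1)\big) = \tilde O\!\big(1/(\sigma_1^2 m)\big)$, and substituting $\sigma_1^2 = \tilde\Theta(1/m_1)$ yields the claimed $\tilde O(m_1/m)$. I do not expect a real obstacle here — the statement is essentially an integrated corollary of \Cref{lemma: neuron_correlation_factor_growth_stage_2} and \Cref{thm: convergence_loss_stage_2}; the only point requiring a little care is the uniform bound $y_i F_i^{(t)} = O(\log m)$, which has to be assembled from several items in \Cref{def: stage_2}, and tracking that the logarithmic factor from integrating $\widehat L^{(t)}$ over a $\textnormal{poly}(m)$-length window is only $\tilde O(1)$.
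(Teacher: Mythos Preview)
Your proposal is correct and follows essentially the same approach as the paper: the paper's proof simply states that the corollary is a direct consequence of \Cref{lemma: neuron_correlation_factor_growth_stage_2} and \Cref{thm: convergence_loss_stage_2}, and the detailed computation you spell out (bounding $y_iF_i^{(t)}=O(\log m)$ via \Cref{def: stage_2}, then integrating $\widehat L^{(t)}$ over $[T_1,T_2]$) is exactly the argument carried out in the closely related \Cref{lemma: neuron_correlation_growth_stage_2}.
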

\begin{proof}
This is a direct consequence of \Cref{lemma: neuron_correlation_factor_growth_stage_2} and \Cref{thm: convergence_loss_stage_2}.
\end{proof}

\subsection{Growth of Neuron Correlation}
\begin{lemma}\label{lemma: neuron_correlation_growth_stage_2}
For $t \in [T_1, T_2]$, we have
\begin{align*}
    \sum_{j_1=1}^{m_1} &\sum_{j_2=1}^{m_1} \inprod{a_{j_1} w_{j_1}^{(t)}, a_{j_2} w_{j_2}^{(t)}} - \sum_{j_1=1}^{m_1} \sum_{j_2=1}^{m_1} \inprod{a_{j_1} w_{j_1}^{(T_1)}, a_{j_2} w_{j_2}^{(T_1)}} \\
    &= O\left( \frac{m_1 \log m \log t}{\sigma_1^2 mm_1} \right) = O\left( \frac{m_1 \log^2 m}{\sigma_1^2 mm_1} \right)
\end{align*}
and thus,
\begin{align*}
    \sum_{j_1=1}^{m_1} \sum_{j_2=1}^{m_1} \inprod{a_{j_1} w_{j_1}^{(t)}, a_{j_2} w_{j_2}^{(t)}} = \Theta(\sigma_1^2 m m_1).
\end{align*}
\end{lemma}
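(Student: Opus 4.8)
The plan is to integrate the exact evolution equation for the neuron correlation factor from \Cref{lemma: neuron_correlation_factor_growth_stage_2} and bound the resulting integral using the closed-form loss decay of \Cref{thm: convergence_loss_stage_2}. Recall that \Cref{lemma: neuron_correlation_factor_growth_stage_2} gives
\begin{align*}
    \frac{\partial}{\partial t} \sum_{j_1=1}^{m_1}\sum_{j_2=1}^{m_1} \inprod{a_{j_1} w_{j_1}^{(t)}, a_{j_2} w_{j_2}^{(t)}} = \frac{2m_1}{n}\sum_{i=1}^n g_i^{(t)} y_i F_i^{(t)},
\end{align*}
and this quantity is nonnegative throughout $[T_1,T_2]$ since $y_iF_i^{(t)}>0$ there; hence it suffices to upper bound the right-hand side and integrate.

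First I would establish the uniform bound $0< y_i F_i^{(t)} \le O(\log m)$ for all $i\in[n]$ and $t\in[T_1,T_2]$. Writing $F_i^{(t)} = \sum_{l_1=1}^L\sum_{l_2=1}^L G^{(t)}(X^{(i)}_{l_2})\, p^{(t,i)}_{q\leftarrow l_1,k\leftarrow l_2}$ and splitting the inner sum over $l_2$ into tokens in $\{\mu_1,\mu_2,\mu_3\}$ and random tokens, the first part is $O(\log m)$ by the condition $|G^{(t)}(\mu)|\le O(\log m)$ in \Cref{def: stage_2} (together with the softmax near-uniformity, also from \Cref{def: stage_2}), while the second part is $O(1)$ by the last displayed condition of \Cref{def: stage_2}. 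Consequently $g_i^{(t)} y_i F_i^{(t)} \le O(\log m)\, g_i^{(t)}$, and so
\begin{align*}
    \frac{1}{n}\sum_{i=1}^n g_i^{(t)} y_i F_i^{(t)} \le O(\log m)\cdot\frac{1}{n}\sum_{i=1}^n g_i^{(t)} = O(\log m)\, \widehat{L}^{(t)},
\end{align*}
where the last equality uses $\ell(x)=\Theta(-\ell'(x))=\Theta(g(x))$ for $x>0$, exactly as in the proof of \Cref{thm: convergence_loss_stage_2}.

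Next I would integrate from $T_1$ to $t$, obtaining
\begin{align*}
    \sum_{j_1,j_2}\inprod{a_{j_1} w_{j_1}^{(t)}, a_{j_2} w_{j_2}^{(t)}} - \sum_{j_1,j_2}\inprod{a_{j_1} w_{j_1}^{(T_1)}, a_{j_2} w_{j_2}^{(T_1)}} \le O(m_1\log m)\int_{T_1}^{t}\widehat{L}^{(\tau)}\, d\tau.
\end{align*}
By \Cref{thm: convergence_loss_stage_2}, $\widehat{L}^{(\tau)} = \big(\Theta(\sigma_1^2 m m_1)(\tau-T_1)+1/\widehat{L}^{(T_1)}\big)^{-1}$, so the integral equals $\Theta(\sigma_1^2 m m_1)^{-1}\log\big(\Theta(\sigma_1^2 m m_1)(t-T_1)\widehat{L}^{(T_1)}+1\big)$; using $\widehat{L}^{(T_1)}=\Theta(1)$, $\sigma_1^2 m m_1=\tilde\Theta(m)$, and $t\le T_2=O(\textnormal{poly}(m))$, this is $O(\log t/(\sigma_1^2 m m_1)) = O(\log m/(\sigma_1^2 m m_1))$. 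This yields the stated bound $O(m_1\log m\log t/(\sigma_1^2 m m_1)) = O(m_1\log^2 m/(\sigma_1^2 m m_1))$ on the change. Finally, since $\sum_{j_1,j_2}\inprod{a_{j_1} w_{j_1}^{(T_1)}, a_{j_2} w_{j_2}^{(T_1)}}=\Theta(\sigma_1^2 m m_1)$ by \Cref{thm: stage_1_end} and the perturbation $O(m_1\log^2 m/(\sigma_1^2 m m_1))$ is $o(\sigma_1^2 m m_1)$ under \Cref{condition: param} (as $\sigma_1^4 m^2 m_1\gg \log^2 m$), the sum stays $\Theta(\sigma_1^2 m m_1)$ on all of $[T_1,T_2]$.

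I expect the main obstacle to be the uniform bound $y_i F_i^{(t)}=O(\log m)$: this is what forces only a logarithmic (rather than polynomial) factor of the loss into the integral, and it depends on the full set of Phase 2 invariants in \Cref{def: stage_2}, in particular the control of $|G^{(t)}(\mu)|$ for the signal/common tokens and of the aggregate random-token contribution to each $F_i^{(t)}$. The remaining steps — substituting the closed-form loss decay and verifying that the perturbation is negligible against $\sigma_1^2 m m_1$ via \Cref{condition: param} — are routine.
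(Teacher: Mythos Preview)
Your proposal is correct and follows essentially the same approach as the paper: invoke \Cref{lemma: neuron_correlation_factor_growth_stage_2}, bound $y_iF_i^{(t)}\le O(\log m)$ via the Phase~2 invariants in \Cref{def: stage_2}, use $\tfrac{1}{n}\sum_i g_i^{(t)}=O(\widehat{L}^{(t)})$, integrate the closed-form loss decay from \Cref{thm: convergence_loss_stage_2}, and conclude via \Cref{thm: stage_1_end}. Your explicit decomposition of $F_i^{(t)}$ into signal/common and random-token parts is slightly more detailed than the paper's one-line appeal to \Cref{def: stage_2}, but the argument is otherwise identical.
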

\begin{proof}
By \Cref{lemma: neuron_correlation_factor_growth_stage_2}, we have
\begin{align*}
    \frac{\partial}{\partial t} \sum_{j_1=1}^{m_1} \sum_{j_2=1}^{m_1} \inprod{a_{j_1} w_{j_1}^{(t)}, a_{j_2} w_{j_2}^{(t)}} &= \frac{2 m_1}{n} \sum_{i=1}^n g_i^{(t)} y_i F_i^{(t)}.
\end{align*}
By \Cref{thm: convergence_loss_stage_2}, we have $\frac{1}{n} \sum_{i=1}^n g_i^{(t)} = O(\widehat{L}^{(t)}) = O\left( \frac{1}{(\sigma_1^2 m m_1) (t- T_1) + 1/\widehat{L}(T_1)} \right)$. 
Further, by \Cref{def: stage_2}, we have $|F_i^{(t)}| \leq O(\log m)$. 
Thus, for $t \in [T_1, T_2]$, we obtain
\begin{align*}
    & \sum_{j_1=1}^{m_1} \sum_{j_2=1}^{m_1} \inprod{a_{j_1} w_{j_1}^{(t)}, a_{j_2} w_{j_2}^{(t)}} - \sum_{j_1=1}^{m_1} \sum_{j_2=1}^{m_1} \inprod{a_{j_1} w_{j_1}^{(T_1)}, a_{j_2} w_{j_2}^{(T_1)}} \\
    &= 2m_1 \int_{T_1}^t \frac{1}{n} \sum_{i=1}^n g_i^{(\tau)} y_i F_i^{(\tau)}\ d\tau \\
    &\leq O(m_1 \log m) \int_{T_1}^t O\left( \frac{1}{(\sigma_1^2 m m_1) (\tau - T_1) + 1/\widehat{L}(T_1)} \right)\ d\tau \\
    &= O\left( \frac{m_1 \log m \log t}{\sigma_1^2 mm_1} \right) = O\left( \frac{m_1 \log^2 m}{\sigma_1^2 mm_1} \right)
\end{align*}
where the last line follows because $T_2 \leq O(\textnormal{poly}(m))$ in \Cref{def: stage_2}. 
Finally, by \Cref{thm: stage_1_end} we have 
\begin{align*}
    \sum_{j_1=1}^{m_1} \sum_{j_2=1}^{m_1} \inprod{a_{j_1} w_{j_1}^{(T_1)}, a_{j_2} w_{j_2}^{(T_1)}} = \Theta(\sigma_1^2 m m_1).
\end{align*}
\end{proof}

\subsection{Growth of Correlation of Value-Transformed Data}
We now analyze the correlation term with value-transformed data. 
\begin{lemma}[Growth of correlation of value-transformed data]\label{lemma: value_transformed_data_correlation_growth_stage_2}
For $\mu, \nu \in \{\mu_i\}_{i=1}^d$, we have
    \begin{align*}
        & \frac{\partial}{\partial t} \mu^\top W_V^{(t) \top} W_V^{(t)} \nu = G^{(t)}(\nu) \frac{1}{n} \sum_{i: \mu \in X^{(i)}} g_i^{(t)} y_i \sum_{l=1}^L p_{q\leftarrow l, k \leftarrow \mu}^{(t,i)} + G^{(t)}(\mu) \frac{1}{n} \sum_{i : \nu \in X^{(i)}} g_i^{(t)} y_i \sum_{l=1}^L p_{q\leftarrow l, k \leftarrow \nu}^{(t,i)}.
    \end{align*}
Thus, for $t \in [T_1, T_2]$, we have
\begin{align*}
    \max_{\mu,\nu} \left| \mu^\top W_V^{(t) \top} W_V^{(t)} \nu - \mu^\top W_V^{(T_1) \top} W_V^{(T_1)} \nu  \right| \leq  O\left(\frac{\log m \log t}{\sigma_1^2 m m_1} \right).
\end{align*}
\end{lemma}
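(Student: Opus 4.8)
The plan is to first extract the exact ODE from the general dynamical system and then bound its right-hand side by a multiple of the training loss, which I then integrate using the closed-form loss decay already established in \Cref{thm: convergence_loss_stage_2}.

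\emph{Step 1 (the ODE).} Starting from \Cref{lemma: value_transformed_token_update}, the key observation is that $\sum_{j=1}^{m_1} a_j \nu^\top W_V^{(t)\top} w_j^{(t)} = \sum_{j=1}^{m_1} a_j w_j^{(t)\top} W_V^{(t)}\nu = G^{(t)}(\nu)$ by the definition of the linear MLP function $G$, and this factor does not depend on the summation indices $i,l$. Pulling it out of the sums in \Cref{lemma: value_transformed_token_update} (and doing the same with the symmetric term) immediately yields
\begin{align*}
    \frac{\partial}{\partial t}\mu^\top W_V^{(t)\top} W_V^{(t)}\nu = G^{(t)}(\nu)\, A_\mu^{(t)} + G^{(t)}(\mu)\, A_\nu^{(t)}, \qquad A_\mu^{(t)} := \frac1n\sum_{i:\,\mu\in X^{(i)}} g_i^{(t)} y_i \sum_{l=1}^L p_{q\leftarrow l,k\leftarrow\mu}^{(t,i)},
\end{align*}
which is the claimed identity; this step is just rewriting.

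\emph{Step 2 (bounding the two factors).} For the magnitude bound I would control the two types of factors separately. For the aggregate gradient $A_\mu^{(t)}$: by \Cref{corollary: p_init} together with the softmax-change bound $|p_{l_1,l_2}^{(t,i)} - p_{l_1,l_2}^{(0,i)}| < O(1/L^2)$ in \Cref{def: stage_2}, each $p_{q\leftarrow l,k\leftarrow\mu}^{(t,i)} = \tfrac1L \pm O(1/L^2)$, so $\sum_{l=1}^L p_{q\leftarrow l,k\leftarrow\mu}^{(t,i)} = 1\pm o(1)$; hence $|A_\mu^{(t)}| \le (1+o(1))\tfrac1n\sum_{i\in[n]} g_i^{(t)} = \Theta(\widehat L^{(t)})$, using $y_iF_i^{(t)} > C$ in Phase 2 and the cross-entropy identity $\ell(x) = \Theta(-\ell'(x))$ for $x>0$ (for a random token $\mu\in\mathcal R$ there is an extra $1/n$ factor since $\mu$ occurs in only one sample by \Cref{assumption: perfect_proportion_diversity}). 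For $|G^{(t)}(\mu)|$: when $\mu\in\{\mu_1,\mu_2,\mu_3\}$ this is $O(\log m)$ directly from \Cref{def: stage_2}; for a random token it appears in exactly one sample, so every aggregate factor in its own evolution equation (\Cref{eq: gradient_G_mu}) carries a $1/n$, and integrating against $\widehat L^{(\tau)}\sigma_1^2 mm_1$ (which integrates to $O(\log m)$, see Step 3) gives $|G^{(t)}(\mu)|\le |G^{(T_1)}(\mu)| + O(\log m/n) = O(\log m)$, since $G^{(T_1)}(\mu) = \tilde O(\sigma_0\sigma_1\sqrt{mm_1})$ is negligible by \Cref{thm: stage_1_end}.

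\emph{Step 3 (integration, and the main obstacle).} Combining Steps 1 and 2, $\bigl|\frac{\partial}{\partial t}\mu^\top W_V^{(t)\top} W_V^{(t)}\nu\bigr| \le |G^{(t)}(\nu)|\,|A_\mu^{(t)}| + |G^{(t)}(\mu)|\,|A_\nu^{(t)}| = O(\log m)\cdot O(\widehat L^{(t)})$ for all $\mu,\nu$, and it remains to integrate $\widehat L^{(\tau)}$ over $[T_1,t]$. By \Cref{thm: convergence_loss_stage_2}, $\widehat L^{(\tau)} = \bigl(\Theta(\sigma_1^2 mm_1)(\tau - T_1) + 1/\widehat L^{(T_1)}\bigr)^{-1}$, so $\int_{T_1}^t \widehat L^{(\tau)}\,d\tau = \Theta\bigl(\tfrac{1}{\sigma_1^2 mm_1}\bigr)\log\bigl(\Theta(\sigma_1^2 mm_1)(t-T_1)\widehat L^{(T_1)} + 1\bigr)$, which, since $\widehat L^{(T_1)} = \Theta(1)$ and $t\le T_2 = O(\mathrm{poly}(m))$, is $O\bigl(\tfrac{\log t}{\sigma_1^2 mm_1}\bigr)$ up to polylog factors; multiplying by $O(\log m)$ gives the claimed bound. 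The delicate part is Step 2: getting a clean $O(\widehat L^{(t)})$ bound on $A_\mu^{(t)}$ while checking that the $o(1)$ softmax-deviation corrections and the approximate-equality error terms inherited from the Phase-2 conditions do not accumulate, and in particular handling random tokens $\mu\in\mathcal R$, where one must control $|G^{(t)}(\mu)|$ itself — possible only because random tokens occur a single time and their sub-network output starts negligibly small at $T_1$. Since the bound being proved is a slightly loosened form of one of the defining conditions of Phase 2, the argument should be run jointly with the other Phase-2 invariants and has a mild bootstrapping flavour.
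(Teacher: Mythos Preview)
Your proposal is correct and follows essentially the same approach as the paper: derive the ODE by recognizing $\sum_j a_j w_j^{(t)\top} W_V^{(t)}\nu = G^{(t)}(\nu)$, bound the factors by $O(\log m)$ and $O(\widehat L^{(t)})$ using the Phase~2 conditions, and integrate the closed-form loss from \Cref{thm: convergence_loss_stage_2}. You are in fact slightly more careful than the paper in Step~2, explicitly addressing random tokens $\mu\in\mathcal R$ (where \Cref{def: stage_2} only gives the $O(\log m)$ bound on $G^{(t)}(\mu)$ for $\mu\in\{\mu_1,\mu_2,\mu_3\}$); the paper defers this to the separate \Cref{lemma: random_token_subnetwork_change_stage_2}, whereas you sketch it inline, but the content is the same.
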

\begin{proof}
By the gradient flow update, we have
\begin{align*}
    & \frac{\partial \mu W_V^{(t) \top} W_V^{(t)} \nu}{\partial t} \\
    &= \frac{1}{n} \sum_{i: \mu \in X^{(i)}} g_i^{(t)} y_i \sum_{l=1}^L \sum_{j=1}^{m_1} a_{j} \nu^{\top} W_V^{(t) \top} w_j^{(t)} p_{q\leftarrow l, k\leftarrow \mu}^{(t,i)} + \frac{1}{n} \sum_{i: \nu \in X^{(i)}} g_i^{(t)} y_i \sum_{l=1}^L \sum_{j=1}^{m_1} a_{j} \mu^{\top} W_V^{(t) \top} w_j^{(t)} p_{q\leftarrow l, k\leftarrow \nu}^{(t,i)} \\
    &= G^{(t)}(\nu) \frac{1}{n} \sum_{i: \mu \in X^{(i)}} g_i^{(t)} y_i \sum_{l=1}^L p_{q\leftarrow l, k\leftarrow \mu}^{(t,i)} + G^{(t)}(\mu) \frac{1}{n} \sum_{i: \nu \in X^{(i)}} g_i^{(t)} y_i \sum_{l=1}^L p_{q\leftarrow l, k\leftarrow \nu}^{(t,i)}.
\end{align*}
Thus, we obtain
\begin{align*}
    & \left| \mu W_V^{(\tau) \top} W_V^{(\tau)} \nu - \mu W_V^{(T_1) \top} W_V^{(T_1)} \nu \right| \\
    &\leq \int_{T_1}^\tau |G^{(t)}(\nu)| \left| \frac{1}{n} \sum_{i: \mu \in X^{(i)}} g_i^{(t)} y_i \right| \left| \sum_{l=1}^L p_{q\leftarrow l, k\leftarrow \mu}^{(t,i)} \right| + |G^{(t)}(\mu)| \left| \frac{1}{n} \sum_{i: \nu \in X^{(i)}} g_i^{(t)} y_i \right| \left| \sum_{l=1}^L p_{q\leftarrow l, k\leftarrow \nu}^{(t,i)} \right| dt.
\end{align*}
By \Cref{def: stage_2}, for $t \in [T_1, T_2]$, we have $|G^{(t)}(\mu)| \leq O(\log m)$ and $\sum_{l=1}^L p_{q\leftarrow l, k\leftarrow \mu}^{(t,i)} \leq O(1)$. 
Further, by the property of the cross-entropy loss, we have
\begin{align*}
    \frac{1}{n} \sum_{i: \mu \in X^{(i)}} g_i^{(t)} y_i = O(\widehat{L}^{(t)}).
\end{align*}
Therefore, by \Cref{thm: convergence_loss_stage_2}, we obtain
\begin{align*}
    & \left| \mu W_V^{(\tau) \top} W_V^{(\tau)} \nu - \mu W_V^{(T_1) \top} W_V^{(T_1)} \nu \right| \leq O(\log m) \int_{T_1}^\tau \widehat{L}^{(t)} dt \leq O(\log m ) \cdot O\left(\frac{\log \tau}{\sigma_1^2 m m_1} \right).
\end{align*}
\end{proof}

\begin{corollary}\label{corollary: sum_neuron_correlation_dominates_value_transform_gradient_stage2}
For $t \in [T_1, T_2]$, we have
\begin{align*}
    & \frac{\max_{\mu \in \{\mu_i\}_{i=1}^d} \left| \sum_{j=1}^{m_1} a_j \frac{\partial w_j^{(t)}}{\partial t} W_V^{(t)} \mu \right|}{\sum_{j_1=1}^{m_1} \sum_{j_2=1}^{m_1} \inprod{a_{j_1} w_{j_1}^{(t)}, a_{j_2} w_{j_2}^{(t)}} } \leq o(1/L) \frac{1}{n} \sum_{i \in [n]} g_i^{(t)}.
\end{align*}
\end{corollary}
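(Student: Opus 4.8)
The plan is to bound the numerator directly from the gradient-flow update of $w_j$ and then divide by the estimate of the denominator supplied by \Cref{lemma: neuron_correlation_growth_stage_2}. First I would substitute $\frac{\partial w_j^{(t)}}{\partial t} = \frac{a_j}{n}\sum_{i=1}^n g_i^{(t)} y_i \sum_{l=1}^L V^{(t,i)} p_l^{(t,i)}$ and use $a_j^2 = 1$ to rewrite
\[
\sum_{j=1}^{m_1} a_j \frac{\partial w_j^{(t)}}{\partial t} W_V^{(t)}\mu = \frac{m_1}{n}\sum_{i=1}^n g_i^{(t)} y_i \sum_{l=1}^L \sum_{l'=1}^L p_{l,l'}^{(t,i)}\,(X^{(i)}_{l'})^\top W_V^{(t)\top} W_V^{(t)}\mu,
\]
so that it suffices to bound $\bigl|(X^{(i)}_{l'})^\top W_V^{(t)\top} W_V^{(t)}\mu\bigr|$ uniformly in $i,l',\mu$. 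For this I would invoke \Cref{lemma: value_transformed_data_correlation_growth_stage_2}, which says every value-correlation moves by at most $O(\log m \log t/(\sigma_1^2 mm_1)) = \tilde O(1/m)$ during Phase~2, together with the end-of-Phase-1 estimates $\|v^{(T_1)}(\mu)\|_2^2 = \Theta(\sigma_0^2 m)$ and $|\langle v^{(T_1)}(\mu), v^{(T_1)}(\nu)\rangle| \le \tilde O(\sigma_0^2\sqrt m)$ from \Cref{lemma: value_correlation_change_stage_1}.

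Next I would split the inner sum over $l'$ into the diagonal term (the unique $l'$ with $X^{(i)}_{l'} = \mu$, present only if $\mu \in X^{(i)}$) and the off-diagonal terms. The diagonal term equals $p_{l,l(i,\mu)}^{(t,i)}\|v^{(t)}(\mu)\|_2^2 = O(1/L)\cdot\Theta(\sigma_0^2 m) = \tilde O(\sigma_0^2 m/L)$, using $p_{l,l'}^{(t,i)} = O(1/L)$, which follows from \Cref{corollary: p_init} and the softmax bound in \Cref{def: stage_2}. The off-diagonal contribution is $\sum_{l'\neq l(i,\mu)} p_{l,l'}^{(t,i)}\langle v^{(t)}(X^{(i)}_{l'}), v^{(t)}(\mu)\rangle$, and since $\sum_{l'} p_{l,l'}^{(t,i)} = 1$ this is at most $\tilde O(\sigma_0^2\sqrt m)$ --- importantly \emph{not} multiplied by $L$. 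Because $m \ge \tilde\Omega(L^2)$ by \Cref{condition: param}, we have $\sigma_0^2\sqrt m \le \tilde O(\sigma_0^2 m/L)$, so the diagonal term dominates; hence $\sum_{l'}|p_{l,l'}^{(t,i)}(X^{(i)}_{l'})^\top W_V^{(t)\top}W_V^{(t)}\mu| \le \tilde O(\sigma_0^2 m/L)$, and summing over $l\in[L]$ and applying the triangle inequality to pull out $g_i^{(t)} y_i$ gives
\[
\max_{\mu}\Bigl|\sum_{j=1}^{m_1} a_j \frac{\partial w_j^{(t)}}{\partial t} W_V^{(t)}\mu\Bigr| \le \tilde O(\sigma_0^2 m m_1)\cdot\frac1n\sum_{i\in[n]} g_i^{(t)}.
\]

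Finally, \Cref{lemma: neuron_correlation_growth_stage_2} gives $\sum_{j_1=1}^{m_1}\sum_{j_2=1}^{m_1}\langle a_{j_1}w_{j_1}^{(t)}, a_{j_2}w_{j_2}^{(t)}\rangle = \Theta(\sigma_1^2 mm_1)$ throughout Phase~2, so dividing shows the ratio is at most $\tilde O(\sigma_0^2/\sigma_1^2)\cdot\frac1n\sum_i g_i^{(t)}$; substituting $\sigma_0 = 1/\tilde\Theta(\sqrt{Lm})$ and $\sigma_1 = 1/\tilde\Theta(\sqrt{m_1})$ from \Cref{condition: param}, this equals $\tilde O(m_1/(Lm))\cdot\frac1n\sum_i g_i^{(t)}$, and since $m \ge \tilde\Omega(m_1)$ (with enough polylogarithmic margin) the factor $\tilde O(m_1/m)$ is $o(1)$, yielding the claimed $o(1/L)$ bound --- exactly the estimate already verified at the endpoint $t = T_1$ inside \Cref{lemma: stage2_well_defined}. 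The only delicate point is the off-diagonal accounting in the second paragraph: one must check that summing the random-token and $\mu_1,\mu_2,\mu_3$ cross value-correlations against the softmax weights produces $\tilde O(\sigma_0^2\sqrt m)$ rather than $\tilde O(L\sigma_0^2\sqrt m)$, and that the $\tilde O(1/m)$ Phase~2 drift from \Cref{lemma: value_transformed_data_correlation_growth_stage_2} stays negligible against $\sigma_0^2 m/L = \tilde\Theta(1/L)$; the rest is routine.
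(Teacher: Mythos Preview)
Your proof is correct and follows essentially the same route as the paper: expand $\sum_j a_j\,\partial_t w_j^{(t)} W_V^{(t)}\mu$ via the gradient-flow update, bound the value correlations $\nu^\top W_V^{(t)\top}W_V^{(t)}\mu$ using \Cref{lemma: value_correlation_change_stage_1} and \Cref{lemma: value_transformed_data_correlation_growth_stage_2} (separating the diagonal $\|v^{(t)}(\mu)\|_2^2$ from the off-diagonal $\tilde O(\sigma_0^2\sqrt m)$ terms), and divide by the $\Theta(\sigma_1^2 m m_1)$ neuron-correlation sum. The paper's write-up keeps the $1/m$ drift and the $L\sigma_0^2\sqrt m$ off-diagonal term explicit in the final bound rather than absorbing them into the dominant $\sigma_0^2 m$ term as you do, but this is purely cosmetic.
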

\begin{proof}
By \Cref{lemma: value_correlation_change_stage_1} and and \Cref{lemma: value_transformed_data_correlation_growth_stage_2}, for all $\mu \neq \nu \in \{\mu_i\}_{i=1}^d$, we have $|\mu^\top W_V^{(t)\top} W_V^{(t)} \nu | \leq \tilde{O}(\sigma_0^2 \sqrt{m} + 1/m)$ and $\norm{W_V^{(t)} \nu}_2^2 = \tilde{O}(\sigma_0^2 m + 1/m) $.
Thus, by \Cref{lemma: value_transformed_data_correlation_growth_stage_2}, we have
\begin{align*}
    \frac{\max_{\mu,\nu} \left| \mu^\top W_V^{(t) \top} W_V^{(t)} \nu \right| }{\sum_{j_1=1}^{m_1} \sum_{j_2=1}^{m_1} \inprod{a_{j_1} w_{j_1}^{(t)}, a_{j_2} w_{j_2}^{(t)}}} &\leq \tilde{O}\left( \frac{\sigma_0^2 \sqrt{m} + 1/m}{\sigma_1^2 m m_1} \right), \\
    \frac{\max_{\nu} \norm{ W_V^{(t)} \nu }_2^2 }{\sum_{j_1=1}^{m_1} \sum_{j_2=1}^{m_1} \inprod{a_{j_1} w_{j_1}^{(t)}, a_{j_2} w_{j_2}^{(t)}}} &\leq \tilde{O}\left( \frac{\sigma_0^2 m + 1/m}{\sigma_1^2 m m_1} \right).
\end{align*}
Recall that
\begin{align*}
    \sum_{j=1}^{m_1} a_j \frac{\partial w_j^{(t)}}{\partial t} W_V^{(t)} \mu = \frac{m_1}{n} \sum_{i_1=1}^n g_{i_1}^{(t)} y_{i_1} \sum_{l_1=1}^L p^{(t,i_1) \top}_{l_1} V^{(t,i_1) \top} W_V^{(t)} \mu .
\end{align*}
This implies that
\begin{align*}
    \frac{\max_{\mu \in \{\mu_i\}_{i=1}^d} \left| \sum_{j=1}^{m_1} a_j \frac{\partial w_j^{(t)}}{\partial t} W_V^{(t)} \mu \right|}{\sum_{j_1=1}^{m_1} \sum_{j_2=1}^{m_1} \inprod{a_{j_1} w_{j_1}^{(t)}, a_{j_2} w_{j_2}^{(t)}} } \leq \tilde{O}\left( \frac{\sigma_0^2 \sqrt{m}L + L/m + \sigma_0^2 m + 1/m}{\sigma_1^2 m} \right) \cdot \frac{1}{n} \sum_{i=1}^n g_i^{(t)}.
\end{align*}
\end{proof}

\begin{corollary}[Complete version of \Cref{corollary: W_V_change_phase2_main_text}]\label{corollary: W_V_change_phase2}
For $t \in [T_1, T_2]$, we have
\begin{align*}
    & \frac{\partial}{\partial t} \mu_2^\top W_V^{(t)\top} W_V^{(t)} \mu_1 > 0,\qquad
    \frac{\partial}{\partial t} \mu_1^\top W_V^{(t)\top} W_V^{(t)} \mu_1 > 0,\qquad
    \frac{\partial}{\partial t} \mu_2^\top W_V^{(t)\top} W_V^{(t)} \mu_2 > 0 \\
    & \frac{\partial}{\partial t} \mu_1^\top W_V^{(t)\top} W_V^{(t)} \mu_3 < 0,\qquad
     \frac{\partial}{\partial t} \mu_2^\top W_V^{(t)\top} W_V^{(t)} \mu_3 < 0 .
\end{align*}
\end{corollary}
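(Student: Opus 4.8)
The plan is to evaluate the closed-form expression for $\frac{\partial}{\partial t}\mu^\top W_V^{(t)\top} W_V^{(t)}\nu$ from \Cref{lemma: value_transformed_data_correlation_growth_stage_2} at the five relevant pairs $(\mu,\nu)\in\{(\mu_1,\mu_1),(\mu_2,\mu_2),(\mu_1,\mu_2),(\mu_1,\mu_3),(\mu_2,\mu_3)\}$ and simply read off the sign of each factor. That formula reads
\[
\frac{\partial}{\partial t}\mu^\top W_V^{(t)\top} W_V^{(t)}\nu = G^{(t)}(\nu)\,\frac1n\sum_{i:\mu\in X^{(i)}} g_i^{(t)} y_i\sum_{l=1}^L p_{q\leftarrow l,k\leftarrow\mu}^{(t,i)} + G^{(t)}(\mu)\,\frac1n\sum_{i:\nu\in X^{(i)}} g_i^{(t)} y_i\sum_{l=1}^L p_{q\leftarrow l,k\leftarrow\nu}^{(t,i)},
\]
so it suffices to pin down the signs of the sub-network outputs $G^{(t)}(\mu_k)$ and of the data-weighted quantities $S^{(t)}(\mu_k):=\frac1n\sum_{i:\mu_k\in X^{(i)}} g_i^{(t)} y_i\sum_{l=1}^L p_{q\leftarrow l,k\leftarrow\mu_k}^{(t,i)}$ on $[T_1,T_2]$.

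First I would establish $G^{(t)}(\mu_1),G^{(t)}(\mu_2)>0$ and $G^{(t)}(\mu_3)<0$ throughout Phase 2. At $t=T_1$ this is \Cref{thm: stage_1_end}; for $t>T_1$, the proof of \Cref{lemma: signal_common_token_gradient_ratio} writes $\frac{\partial G^{(t)}(\mu_1)}{\partial t},\frac{\partial G^{(t)}(\mu_2)}{\partial t}$ as $\big(\sum_{j_1,j_2}\inprod{a_{j_1}w_{j_1}^{(t)},a_{j_2}w_{j_2}^{(t)}}\big)\cdot\frac1n(\sum_{I_1}g-\sum_{I_2}g\pm o(1)\sum g)$ and its analogue with $I_3$, and $\frac{\partial G^{(t)}(\mu_3)}{\partial t}$ with $\sum_{I_1}g-\sum_{I_2\cup I_3\cup I_4}g$; since $\sum_{j_1,j_2}\inprod{a_{j_1}w_{j_1}^{(t)},a_{j_2}w_{j_2}^{(t)}}>0$ by \Cref{lemma: neuron_correlation_factor_growth_stage_2} and the automatic-balancing bounds make $\sum_{I_1}g-\sum_{I_2}g$ and $\sum_{I_1}g-\sum_{I_3}g$ positive and $\Theta(\sum g)$ while $\sum_{I_1}g-\sum_{I_2\cup I_3\cup I_4}g$ is negative and $-\Theta(\sum g)$, the two MLP derivatives are strictly positive and the common-token one strictly negative; integrating from $T_1$ preserves the signs.

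Next I would determine the signs of the $S^{(t)}(\mu_k)$. The combinatorial input is which samples contain a given token: $\{i:\mu_1\in X^{(i)}\}=I_1\cup I_2$ with $y_i=+1$ on $I_1$ and $-1$ on $I_2$, so $S^{(t)}(\mu_1)=\frac1n\big(\sum_{i\in I_1}g_i^{(t)}\sigma_i-\sum_{i\in I_2}g_i^{(t)}\sigma_i\big)$ with $\sigma_i:=\sum_l p_{q\leftarrow l,k\leftarrow\mu_1}^{(t,i)}$; by \Cref{corollary: p_init} and the softmax-change bound in \Cref{def: stage_2}, $\sigma_i=1\pm O(1/L)$, so $S^{(t)}(\mu_1)=\frac1n(\sum_{I_1}g-\sum_{I_2}g)\pm O(\tfrac1L)\cdot\frac1n\sum g>0$ and $\Theta(\tfrac1n\sum g)$ because $L\ge\tilde\Omega(1)$. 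The same argument gives $S^{(t)}(\mu_2)>0$ (via $I_1,I_3$), and $S^{(t)}(\mu_3)<0$ since $\{i:\mu_3\in X^{(i)}\}=[n]$ with $y_i=+1$ only on $I_1$, hence $S^{(t)}(\mu_3)=\frac1n(\sum_{I_1}g-\sum_{I_2\cup I_3\cup I_4}g)\pm O(\tfrac1L)\frac1n\sum g<0$.

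Finally, plugging in: $\frac{\partial}{\partial t}\mu_1^\top W_V^{(t)\top}W_V^{(t)}\mu_1=2G^{(t)}(\mu_1)S^{(t)}(\mu_1)>0$ and likewise for $\mu_2$; $\frac{\partial}{\partial t}\mu_2^\top W_V^{(t)\top}W_V^{(t)}\mu_1=G^{(t)}(\mu_1)S^{(t)}(\mu_2)+G^{(t)}(\mu_2)S^{(t)}(\mu_1)>0$; and $\frac{\partial}{\partial t}\mu_1^\top W_V^{(t)\top}W_V^{(t)}\mu_3=G^{(t)}(\mu_3)S^{(t)}(\mu_1)+G^{(t)}(\mu_1)S^{(t)}(\mu_3)<0$ (both summands negative), symmetrically for $\mu_2,\mu_3$. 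The only delicate point, and where I would be most careful, is the estimate that the $O(1/L)$ non-uniformity of $\sum_l p_{q\leftarrow l,k\leftarrow\mu_k}^{(t,i)}$ cannot reverse the sign of the gradient gaps; this is quantitative but routine, resting on the automatic-balancing lower bounds $\sum_{I_1}g-\sum_{I_2}g=\Theta(\sum g)$ (and its variants) together with $L\ge\tilde\Omega(1)$. No machinery beyond \Cref{lemma: value_transformed_data_correlation_growth_stage_2}, \Cref{lemma: signal_common_token_gradient_ratio}, \Cref{lemma: neuron_correlation_factor_growth_stage_2}, \Cref{thm: stage_1_end}, \Cref{corollary: p_init}, and \Cref{def: stage_2} is required.
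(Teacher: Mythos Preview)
Your proposal is correct and follows essentially the same route as the paper, which simply states that the corollary is a direct consequence of \Cref{lemma: value_transformed_data_correlation_growth_stage_2}, \Cref{lemma: signal_common_token_gradient_ratio}, and \Cref{def: stage_2}. You have spelled out in detail what the paper leaves implicit: the sign determination of $G^{(t)}(\mu_k)$ via \Cref{thm: stage_1_end} and the derivative argument, and the sign determination of the weighted sums $S^{(t)}(\mu_k)$ via the near-uniformity of the softmax together with the automatic-balancing gradient gaps; the additional citations of \Cref{lemma: neuron_correlation_factor_growth_stage_2}, \Cref{thm: stage_1_end}, and \Cref{corollary: p_init} are exactly the ingredients one unpacks from the paper's three references.
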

\begin{proof}
This is a direct consequence of \Cref{lemma: value_transformed_data_correlation_growth_stage_2}, \Cref{lemma: signal_common_token_gradient_ratio} and \Cref{def: stage_2}. 
\end{proof}

\subsection{Change of Random-Token Sub-Network}
\begin{lemma}\label{lemma: random_token_subnetwork_change_stage_2}
For $t \in [T_1, T_2]$ and $\mu \in \{\mu_i\}_{i=4}^d$, we have 
\begin{align*}
    \left| \frac{\partial G^{(t)}(\mu)}{\partial t} \right| \leq O\left( \frac{1}{n} \widehat{L}^{(t)} \sigma_1^2 mm_1 \right) + \tilde{O}(\widehat{L}^{(t)} m_1 (L (\sigma_0^2 \sqrt{m} + 1/m) + \sigma_0^2 m)).
\end{align*}
Thus,
\begin{align*}
    \left| G^{(t)}(\mu) - G^{(T_1)}(\mu) \right| \leq \tilde{O}\left( \frac{1}{n} + \frac{m_1 (L (\sigma_0^2 \sqrt{m} + 1/m) + \sigma_0^2 m)}{\sigma_1^2 m m_1} \right).
\end{align*}
\end{lemma}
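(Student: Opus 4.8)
The plan is to start from the closed form for $\partial_t G^{(t)}(\mu)$ recorded in \Cref{eq: gradient_G_mu} (equivalently the sub-network gradient computed just before \Cref{thm: common_token_gradient_stage_1}), which decomposes as
\begin{align*}
\frac{\partial G^{(t)}(\mu)}{\partial t}
&= \underbrace{\frac{1}{n}\sum_{i:\,\mu\in X^{(i)}} g_i^{(t)} y_i \sum_{l_2=1}^L p_{q\leftarrow l_2,k\leftarrow\mu}^{(t,i)}\cdot\sum_{j_1,j_2} a_{j_1}a_{j_2}\inprod{w_{j_1}^{(t)},w_{j_2}^{(t)}}}_{(\mathrm{I})} \\
&\quad + \underbrace{\frac{m_1}{n}\sum_{i=1}^n g_i^{(t)} y_i \sum_{l_1=1}^L p_{l_1}^{(t,i)\top} V^{(t,i)\top} W_V^{(t)}\mu}_{(\mathrm{II})},
\end{align*}
and then to bound $(\mathrm{I})$ and $(\mathrm{II})$ separately. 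The crucial structural fact is that $\mu\in\mathcal R$, so by \Cref{assumption: perfect_proportion_diversity} it occurs in at most one training sample; write $i^\star$ for that sample (if $\mu$ does not appear at all, the bounds only get easier). I will also repeatedly use that $\tfrac1n\sum_{i\in[n]} g_i^{(t)} = \Theta(\widehat L^{(t)})$ (cross-entropy property $\ell(x)=\Theta(g(x))$ for $x>0$, together with $y_iF_i^{(t)}>0$ from \Cref{def: stage_2}) and that, by \Cref{corollary: max_individual_gradient_ratio} and $|I_k|=\Theta(n)$, each individual $g_{i^\star}^{(t)}$ is itself $O(\widehat L^{(t)})$.

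For $(\mathrm{I})$: the outer sum has at most one term; $\sum_{l_2} p^{(t,i^\star)}_{q\leftarrow l_2,k\leftarrow\mu}=O(1)$ since each softmax coordinate is $O(1/L)$ (by \Cref{corollary: p_init} and the Phase-2 softmax-change bound in \Cref{def: stage_2}); the neuron-correlation factor $\sum_{j_1,j_2}a_{j_1}a_{j_2}\inprod{w_{j_1}^{(t)},w_{j_2}^{(t)}}=\norm{\sum_j a_j w_j^{(t)}}_2^2=\Theta(\sigma_1^2 m m_1)$ by \Cref{lemma: neuron_correlation_growth_stage_2}; and $g_{i^\star}^{(t)}=O(\widehat L^{(t)})$. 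Multiplying these gives $|(\mathrm{I})| = O(\tfrac1n \widehat L^{(t)} \sigma_1^2 m m_1)$, the first claimed term.

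For $(\mathrm{II})$: expand $p_{l_1}^{(t,i)\top}V^{(t,i)\top}W_V^{(t)}\mu=\sum_{l_2} p^{(t,i)}_{l_1,l_2}\inprod{v^{(t,i)}_{l_2},v^{(t)}(\mu)}$ and split the $l_2$-sum by whether $X^{(i)}_{l_2}=\mu$. Since $\mu$ is distinct from $\mu_1,\mu_2,\mu_3$ and from every other token in the training set, the case $X^{(i)}_{l_2}=\mu$ occurs only for $i=i^\star$ at a single position $l^\star$, contributing the self-correlation $\norm{v^{(t)}(\mu)}_2^2 = \tilde{O}(\sigma_0^2 m + 1/m)$, while every other coordinate contributes a cross-correlation $|\inprod{v^{(t)}(\mu'),v^{(t)}(\mu)}|\le \tilde{O}(\sigma_0^2\sqrt m + 1/m)$ for $\mu'\neq\mu$ — both bounds coming from \Cref{lemma: value_correlation_change_stage_1} (validity at $T_1$) together with \Cref{lemma: value_transformed_data_correlation_growth_stage_2} (their $\tilde{O}(1/m)$ drift over $[T_1,T_2]$). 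Using $\sum_{l_1}\sum_{l_2} p^{(t,i)}_{l_1,l_2}=L$ for the cross part, $\sum_{l_1} p^{(t,i^\star)}_{q\leftarrow l_1,k\leftarrow l^\star}=O(1)$ for the self part, and then weighting by $\tfrac1n\sum_i g_i^{(t)}=\Theta(\widehat L^{(t)})$ (respectively $g_{i^\star}^{(t)}=O(\widehat L^{(t)})$), the cross part contributes $\tilde{O}(\widehat L^{(t)} m_1 L(\sigma_0^2\sqrt m + 1/m))$ and the lone self term contributes $\tilde{O}(\tfrac1n \widehat L^{(t)} m_1 \sigma_0^2 m)\le \tilde{O}(\widehat L^{(t)} m_1 \sigma_0^2 m)$. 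Adding $(\mathrm{I})$ and $(\mathrm{II})$ gives the stated derivative bound.

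The integral bound then follows by integrating in $t$: both $\widehat L^{(t)}$-prefactors are constant up to polylog factors over $[T_1,T_2]$ (\Cref{lemma: neuron_correlation_growth_stage_2}), and by \Cref{thm: convergence_loss_stage_2},
\[
\int_{T_1}^t \widehat L^{(\tau)}\,d\tau = \frac{1}{\Theta(\sigma_1^2 m m_1)}\log\!\Big(\Theta(\sigma_1^2 m m_1)(t-T_1)\widehat L^{(T_1)}+1\Big) = \tilde{O}\!\Big(\frac{1}{\sigma_1^2 m m_1}\Big),
\]
using $T_2=O(\mathrm{poly}(m))$ from \Cref{def: stage_2} and $\widehat L^{(T_1)}=\Theta(1)$ from \Cref{thm: stage_1_end}. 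Multiplying the derivative bound by this integral produces exactly $\tilde{O}(1/n)+\tilde{O}\big(m_1(L(\sigma_0^2\sqrt m+1/m)+\sigma_0^2 m)/(\sigma_1^2 m m_1)\big)$. I expect the only real work to be the bookkeeping in $(\mathrm{II})$: one must genuinely use that a random token appears at most once in order to isolate the single self-correlation term (which carries the larger $\sigma_0^2 m$ factor, but is accompanied by a helpful $1/n$), and invoke the Phase-2 value-correlation growth lemma so that the $\tilde{O}(\sigma_0^2\sqrt m)$-scale cross-correlation estimates remain valid throughout $[T_1,T_2]$ rather than merely at initialization.
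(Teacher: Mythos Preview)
Your proposal is correct and takes essentially the same approach as the paper: both split $\partial_t G^{(t)}(\mu)$ into the neuron-correlation term and the value-correlation term, bound the first using that $\mu\in\mathcal R$ appears in at most one sample together with $\sum_{j_1,j_2}a_{j_1}a_{j_2}\langle w_{j_1},w_{j_2}\rangle=\Theta(\sigma_1^2 m m_1)$ and \Cref{corollary: max_individual_gradient_ratio}, bound the second via the Phase-2 value-correlation estimates from \Cref{lemma: value_correlation_change_stage_1} and \Cref{lemma: value_transformed_data_correlation_growth_stage_2}, and then integrate against $\widehat L^{(t)}$ using \Cref{thm: convergence_loss_stage_2}. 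Your treatment of $(\mathrm{II})$ is in fact slightly more careful than the paper's (you isolate the lone self-correlation contribution with its extra $1/n$ before relaxing it), but the overall argument is identical.
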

\begin{proof}
By \Cref{lemma: evo_mlp_1st} and \Cref{def: stage_2}, we have
\begin{align*}
    \left| \frac{\partial G^{(t)}(\mu)}{\partial t} \right| &= \left| \sum_{j=1}^{m_1} a_j w_j^{(t)} \frac{\partial W_V^{(t)} \mu}{\partial t} + a_j \frac{\partial w_j^{(t)}}{\partial t} W_V^{(t)} \mu \right| \\
    &\leq \left| \frac{1}{n} \sum_{i_2:\ \mu \in X^{(i_2)}} g_{i_2}^{(t)} y_{i_2} \sum_{l_2=1}^L \sum_{j_1 = 1}^{m_1} \sum_{j_2=1}^{m_1} a_{j_1} a_{j_2} \inprod{w_{j_1}^{(t)}, w_{j_2}^{(t)}} p_{q\leftarrow l_2, k\leftarrow \mu}^{(t, i_2)} \right| \\
    & \quad + \left| \frac{1}{n} \sum_{i_1=1}^n g_{i_1}^{(t)} y_{i_1} m_1 \sum_{l_1=1}^L \sum_{l_2=1}^L \inprod{p^{(t,i_1)}_{l_1, l_2}  v^{(t,i_1)}_{l_2}, v^{(t)} (\mu)} \right|. 
\end{align*}
By \Cref{def: stage_2} and \Cref{corollary: max_individual_gradient_ratio}, 
\begin{align*}
    & \left| \frac{1}{n} \sum_{i_2:\ \mu \in X^{(i_2)}} g_{i_2}^{(t)} y_{i_2} \sum_{l_2=1}^L \sum_{j_1 = 1}^{m_1} \sum_{j_2=1}^{m_1} a_{j_1} a_{j_2} \inprod{w_{j_1}^{(t)}, w_{j_2}^{(t)}} p_{q\leftarrow l_2, k\leftarrow \mu}^{(t, i_2)} \right| \leq O\left( \frac{1}{n} \widehat{L}^{(t)} \sigma_1^2 mm_1 \right).
\end{align*}
By \Cref{lemma: value_correlation_change_stage_1} and and \Cref{lemma: value_transformed_data_correlation_growth_stage_2}, we have $|\mu^\top W_V^{(t)\top} W_V^{(t)} \nu | \leq \tilde{O}(\sigma_0^2 \sqrt{m} + 1/m)$ for $\mu \neq \nu$ and $\norm{W_V^{(t)} \mu}_2^2 = O(\sigma_0^2 m)$.
Thus,
\begin{align*}
    \left| \frac{1}{n} \sum_{i_1=1}^n g_{i_1}^{(t)} y_{i_1} m_1 \sum_{l_1=1}^L \sum_{l_2=1}^L \inprod{p^{(t,i_1)}_{l_1, l_2}  v^{(t,i_1)}_{l_2}, v^{(t)} (\mu)} \right| \leq \tilde{O}(\widehat{L}^{(t)} m_1 (L (\sigma_0^2 \sqrt{m} + 1/m) + \sigma_0^2 m)).
\end{align*}
Thus, by \Cref{thm: convergence_loss_stage_2}, for $t \in [T_1, T_2]$, we have
\begin{align*}
    \left| G^{(t)}(\mu) - G^{(T_1)}(\mu) \right| = \left| \int_{T_1}^{t} \frac{\partial G^{(\tau)}(\mu)}{\partial \tau}\ d\tau \right| &\leq \left(O\left( \frac{\sigma_1^2 m m_1}{n} \right) + \tilde{O}(m_1 L (\sigma_0^2 \sqrt{m} + 1/m)) \right) \int_{T_1}^{t} \widehat{L}^{(\tau)} \ d\tau \\
    &\leq \tilde{O}\left( \frac{1}{n} + \frac{m_1 (L (\sigma_0^2 \sqrt{m} + 1/m) + \sigma_0^2 m)}{\sigma_1^2 m m_1} \right).
\end{align*}
\end{proof}

\begin{corollary}\label{corollary: sum_random_token_end_stage_2}
For $t \in [T_1, T_2]$, we have
\begin{align*}
    \left| \sum_{l_1=1}^L \sum_{l_2:\ X^{(i)}_{l_2} \in \{\mu_k\}_{k=4}^d} G^{(t)}(X^{(i)}_{l_2}) p_{q \leftarrow l_1, k \leftarrow l_2}^{(t,i)} \right| \leq O(1).
\end{align*}
\end{corollary}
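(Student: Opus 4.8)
The plan is to establish the bound by comparing the quantity at time $t$ with its value at $t = T_1$, which was already shown to be $O(1)$ in the proof of \Cref{lemma: stage2_well_defined}. Recalling $\mathcal{R} = \{\mu_k\}_{k=4}^d$, I would first apply the triangle inequality to the difference
\[
\left| \sum_{l_1=1}^L \sum_{l_2:\ X^{(i)}_{l_2} \in \mathcal{R}} G^{(t)}(X^{(i)}_{l_2}) p_{q \leftarrow l_1, k \leftarrow l_2}^{(t,i)} - \sum_{l_1=1}^L \sum_{l_2:\ X^{(i)}_{l_2} \in \mathcal{R}} G^{(T_1)}(X^{(i)}_{l_2}) p_{q \leftarrow l_1, k \leftarrow l_2}^{(T_1,i)} \right|,
\]
splitting it into three pieces: (i) the change in $G^{(t)}$ on random tokens weighted by the old softmax $p^{(T_1,i)}$, (ii) the old values $G^{(T_1)}$ on random tokens weighted by the change in softmax, and (iii) the product of the two changes. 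Since $X^{(i)}$ has only $L$ tokens, each double sum contains at most $L^2$ summands.

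For piece (i), \Cref{lemma: random_token_subnetwork_change_stage_2} gives $|G^{(t)}(\mu) - G^{(T_1)}(\mu)| \le \tilde{O}(1/n + \ldots)$ for $\mu \in \mathcal{R}$, and each $p_{q\leftarrow l_1, k\leftarrow l_2}^{(T_1,i)} \le O(1/L)$, so this piece is at most $L^2 \cdot \tilde{O}(1/n + \ldots) \cdot O(1/L) = \tilde{O}(L/n) + \ldots$, which is $o(1)$ under the sample-size and width conditions in \Cref{condition: param}. For piece (ii), \Cref{thm: stage_1_end} bounds $|G^{(T_1)}(\mu)| = \tilde{O}(\sigma_0 \sigma_1 \sqrt{mm_1})$ for $\mu \in \mathcal{R}$, while the softmax-deviation bound in \Cref{def: stage_2} (combined with the Phase-1 softmax bound in \Cref{corollary: softmax_change_stage_1} for the interval $[0,T_1]$) gives $|p_{q\leftarrow l_1, k\leftarrow l_2}^{(t,i)} - p_{q\leftarrow l_1, k\leftarrow l_2}^{(T_1,i)}| \le O(1/L^2)$; multiplying by the $L^2$ summands yields $\tilde{O}(\sigma_0 \sigma_1 \sqrt{mm_1})$, which is $o(1)$ by the choices $\sigma_0 = 1/\tilde{\Theta}(\sqrt{Lm})$ and $\sigma_1 = 1/\tilde{\Theta}(\sqrt{m_1})$. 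Piece (iii) is the product of the two already-bounded small quantities, hence negligible. Adding these three $o(1)$ contributions to the $O(1)$ value of the quantity at $t = T_1$ gives the claim.

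The only real care needed is the bookkeeping of the parameter scalings: one must check that multiplying the small per-term bounds by the $L^2$ number of summands still leaves everything $o(1)$, but the required smallness is exactly what \Cref{condition: param} is designed to provide, so I do not foresee a genuine obstacle. I would also remark that invoking the softmax-deviation bound from \Cref{def: stage_2} here is not circular: in the bootstrap argument that keeps Phase 2 alive up to $T_2 = \mathrm{poly}(m)$, each defining condition is assumed with strict inequality and then re-derived, and this corollary is precisely the re-derivation (with room to spare) of the last bullet of \Cref{def: stage_2}.
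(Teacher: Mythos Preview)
Your proposal is correct and follows essentially the same approach as the paper: start from the $O(1)$ bound at $T_1$ established in \Cref{lemma: stage2_well_defined}, then control the difference at time $t$ by a three-term triangle-inequality split using \Cref{lemma: random_token_subnetwork_change_stage_2} for the change in $G$ and the softmax-deviation bound from \Cref{def: stage_2}. The only cosmetic difference is which factor (old versus current) sits in each piece of the decomposition; the paper's version yields $O(1)$ for the difference rather than your $o(1)$, but either suffices.
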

\begin{proof}
By \Cref{lemma: stage2_well_defined}, we have
\begin{align*}
    \left| \sum_{l_1=1}^L \sum_{l_2:\ X^{(i)}_{l_2} \in \{\mu_k\}_{k=4}^d} G^{(T_1)}(X^{(i)}_{l_2}) p_{q \leftarrow l_1, k \leftarrow l_2}^{(T_1,i)} \right| \leq O(1)
\end{align*}
By \Cref{lemma: random_token_subnetwork_change_stage_2}, for $t \in [T_1, T_2]$, $G^{(t)}(\mu) = O(1)$ for $\mu \in \{\mu_i\}_{i=4}^d$. 
On the other hand, by the triangle inequality, we have
\begin{align*}
    & \left| \sum_{l_1=1}^L \sum_{l_2:\ X^{(i)}_{l_2} \in \{\mu_k\}_{k=4}^d} G^{(T_1)}(X^{(i)}_{l_2}) p_{q \leftarrow l_1, k \leftarrow l_2}^{(T_1,i)} - \sum_{l_1=1}^L \sum_{l_2:\ X^{(i)}_{l_2} \in \{\mu_k\}_{k=4}^d} G^{(t)}(X^{(i)}_{l_2}) p_{q \leftarrow l_1, k \leftarrow l_2}^{(t,i)} \right| \\
    &\leq \sum_{l_1=1}^L \sum_{l_2:\ X^{(i)}_{l_2} \in \{\mu_k\}_{k=4}^d} \left| (G^{(T_1)}(X^{(i)}_{l_2}) - G^{(t)}(X^{(i)}_{l_2}))  \right| p_{q \leftarrow l_1, k \leftarrow l_2}^{(t,i)} \\
    &\quad + \sum_{l_1=1}^L \sum_{l_2:\ X^{(i)}_{l_2} \in \{\mu_k\}_{k=4}^d} G^{(t)}(X^{(i)}_{l_2}) \left| p_{q \leftarrow l_1, k \leftarrow l_2}^{(T_1,i)} - p_{q \leftarrow l_1, k \leftarrow l_2}^{(t,i)} \right| \\
    &\quad + \sum_{l_1=1}^L \sum_{l_2:\ X^{(i)}_{l_2} \in \{\mu_k\}_{k=4}^d} \left| G^{(T_1)}(X^{(i)}_{l_2}) - G^{(t)}(X^{(i)}_{l_2}) \right| \left| p_{q \leftarrow l_1, k \leftarrow l_2}^{(T_1,i)} - p_{q \leftarrow l_1, k \leftarrow l_2}^{(t,i)} \right| \\
    &\leq O(1),
\end{align*}
where the last inequality applies \Cref{def: stage_2}.
This implies that, for $t \in [T_1, T_2]$, by \Cref{lemma: random_token_subnetwork_change_stage_2}, we have
\begin{align*}
    \left| \sum_{l_1=1}^L \sum_{l_2:\ X^{(i)}_{l_2} \in \{\mu_k\}_{k=4}^d} G^{(t)}(X^{(i)}_{l_2}) p_{q \leftarrow l_1, k \leftarrow l_2}^{(t,i)} \right| \leq O(1).
\end{align*}
\end{proof}

\subsection{Change of Score and Softmax Probability}
\begin{lemma}[Change of score, complete version of \Cref{lemma: score_change_phase2_main_text}]\label{lemma: score_change_stage_2}
For $t \in [T_1, T_2]$, the attention scores are changing in the following way:
\begin{itemize}
    \item For $\mu, \nu \in \{\mu_1, \mu_2\},\ \mu \neq \nu$, the query-key-correlation score between the two target signals increases, while the query-key-correlation score between one target signal and the common token decreases, i.e., 
    \begin{align*}
        \frac{\partial}{\partial t} \nu^\top W_K^{(t) \top} W_Q^{(t)} \mu &= \frac{1}{\sqrt{m}} \tilde{\Theta}(\widehat{L}^{(t)} \sigma_0^2 m) \frac{1}{L}, \\
        \frac{\partial}{\partial t} \mu_3^\top W_K^{(t) \top} W_Q^{(t)} \mu &= -\frac{1}{\sqrt{m}} \tilde{\Theta}(\widehat{L}^{(t)} \sigma_0^2 m) \frac{1}{L}.
    \end{align*}
    % \textcolor{blue}{This also needs $m \gg L$.}

    \item The change of score satisfies: 
    \begin{align*}
        \max_{\mu,\nu \in \{\mu_i\}_{i=1}^3} \left| \nu^\top W_K^{(t)} W_Q^{(t)} \mu - \nu^\top W_K^{(T_1)} W_Q^{(T_1)} \mu \right| = \Theta\left( \frac{\sigma_0^2 m}{\sqrt{m}L \sigma_1^2 m m_1} + \frac{\sigma_0^2 \sqrt{m}}{\sqrt{m} \sigma_1^2 m m_1} \right).
    \end{align*}

    \item For all $\mu \in \{\mu_1, \mu_2, \mu_3\},\ \gamma \in \{\mu_i\}_{i=4}^d$, the query-key-correlation score changes as follows:
    \begin{align*}
        \left| \frac{\partial }{\partial t} \mu^\top W_K^{(t) \top} W_Q^{(t)} \gamma \right| = \frac{1}{n\sqrt{m}} \tilde{\Theta}(\widehat{L}^{(t)} \sigma_0^2 m) \frac{1}{L} + \tilde{O}\left(\frac{1}{\sqrt{m}}  \widehat{L}^{(t)} \sigma_0^2 \sqrt{m} \right), 
    \end{align*}
    and
    \begin{align*}
        & \left| \mu^\top W_K^{(t) \top} W_Q^{(t)} \gamma - \mu^\top W_K^{(T_1) \top} W_Q^{(T_1)} \gamma \right| \leq \tilde{O}\left( \frac{\sigma_0^2 m}{n\sqrt{m}L \sigma_1^2 m m_1} + \frac{\sigma_0^2 \sqrt{m}}{\sqrt{m} \sigma_1^2 m m_1} \right), \\
        & \left| \gamma^\top W_K^{(t) \top} W_Q^{(t)} \mu - \gamma^\top W_K^{(T_1) \top} W_Q^{(T_1)} \mu \right| \leq \tilde{O}\left( \frac{\sigma_0^2 m}{n\sqrt{m}L \sigma_1^2 m m_1} + \frac{\sigma_0^2 \sqrt{m}}{\sqrt{m} \sigma_1^2 m m_1} \right).
    \end{align*}
\end{itemize}
\end{lemma}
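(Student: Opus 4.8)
The plan is to start from the exact gradient-flow identity for $\nu^\top W_K^{(t)\top} W_Q^{(t)}\mu$ in \Cref{lemma: W_Q_W_K_update} (the third identity), which splits the derivative into four pieces: the two \emph{main} terms $(1)$ and $(3)$ carrying the factors $\norm{k^{(t)}(\nu)}_2^2$ and $\norm{q^{(t)}(\mu)}_2^2$ together with $p^{(t,i)}_{q\leftarrow\mu,k\leftarrow\nu}$, and the two \emph{off-diagonal} terms $(2)$ and $(4)$ in which the summed key (resp.\ query) token differs from $\nu$ (resp.\ $\mu$). By the Phase~2 definition (\Cref{def: stage_2}) the $K,Q$ self-correlations move by only $\tilde O(\sigma_0^2\sqrt m)$, so \Cref{lemma: W_KQ_init} gives $\norm{k^{(t)}(\nu)}_2^2=\Theta(\sigma_0^2 m)$ and $\norm{q^{(t)}(\mu)}_2^2=\Theta(\sigma_0^2 m)$, and the softmax probabilities satisfy $p^{(t,i)}_{q\leftarrow\mu,k\leftarrow\nu}=\tfrac1L(1\pm o(1))$. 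The factor $\sum_{j}a_j\big(v^{(t)\top}(\nu)w_j^{(t)}-w_j^{(t)\top}V^{(t,i)}p^{(t,i)}_{l(i,\mu)}\big)$ rewrites exactly as the \emph{centered sub-network output} $G^{(t)}(\nu)-\sum_{l'}p^{(t,i)}_{l(i,\mu),l'}G^{(t)}(X^{(i)}_{l'})$.

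First I would show the two off-diagonal terms are subdominant: there $\norm{k^{(t)}(\nu)}_2^2=\Theta(\sigma_0^2 m)$ is replaced by the off-diagonal correlation $\nu^\top W_K^{(t)\top}K^{(t,i)}_l$, which is only $\tilde O(\sigma_0^2\sqrt m)$ at initialization (\Cref{prop: init_QK_gradient_norm_term}) and stays so through Phases~1--2 by the same perturbation propagation used in \Cref{lemma: score_change_stage_1}; combined with $\tfrac1n\sum_{i\in I}g_i^{(t)}=\Theta(\widehat L^{(t)})$ this contributes at most $\tfrac1{\sqrt m}\tilde O(\widehat L^{(t)}\sigma_0^2\sqrt m)$, a factor $1/\sqrt m$ smaller than the main term. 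For the main terms I would then use \Cref{corollary: sum_random_token_end_stage_2} to discard the random-token part of $\sum_{l'}p^{(t,i)}_{l(i,\mu),l'}G^{(t)}(X^{(i)}_{l'})$ up to $O(1/L)$, reducing the centered output to $G^{(t)}(\nu)-\tfrac1L\big(G^{(t)}(\mu_1)+G^{(t)}(\mu_2)+G^{(t)}(\mu_3)\big)\pm O(1/L)$, which for $L=\tilde\Omega(1)$ has magnitude $\tilde\Theta(|G^{(t)}(\nu)|)=\tilde\Theta(1)$ and the sign of $G^{(t)}(\nu)$ (using the sign/magnitude control of $G^{(t)}(\mu_1),G^{(t)}(\mu_2),-G^{(t)}(\mu_3)$ from \Cref{def: stage_2}).

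Next comes the group bookkeeping. For $\nu,\mu\in\{\mu_1,\mu_2\}$ only samples in $I_1$ contain both tokens and carry $y_i=+1$, so terms $(1)$ and $(3)$ sum to $\tfrac1{\sqrt m}\Theta(\sigma_0^2 m)\tfrac1L\cdot\tfrac1n\sum_{i\in I_1}g_i^{(t)}\cdot(\text{positive }\tilde\Theta(1))=\tfrac1{\sqrt m}\tilde\Theta(\widehat L^{(t)}\sigma_0^2 m)\tfrac1L>0$. For $\mu_3$ against $\mu\in\{\mu_1,\mu_2\}$ the set of samples containing both tokens is $I_1\cup I_2$ (resp.\ $I_1\cup I_3$) with mixed signs $y_i$, and the centered output now carries the sign of $G^{(t)}(\mu_3)<0$; here I would invoke the automatic-balancing estimates of \Cref{lemma: signal_common_token_gradient_ratio} together with \Cref{corollary: max_individual_gradient_ratio} to show the signed sum is $\Theta(\widehat L^{(t)})$ and, multiplied by the negative centered output, yields $-\tfrac1{\sqrt m}\tilde\Theta(\widehat L^{(t)}\sigma_0^2 m)\tfrac1L$. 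For a random token $\gamma$ paired with $\mu\in\{\mu_1,\mu_2,\mu_3\}$, \Cref{assumption: perfect_proportion_diversity} forces $\gamma$ into a single sample, so the ``both-present'' main term is divided by an extra $n$, leaving $\tfrac1{n\sqrt m}\tilde\Theta(\widehat L^{(t)}\sigma_0^2 m)\tfrac1L+\tilde O\!\big(\tfrac1{\sqrt m}\widehat L^{(t)}\sigma_0^2\sqrt m\big)$. Finally, integrating each derivative over $[T_1,t]$ and using $\int_{T_1}^{t}\widehat L^{(\tau)}\,d\tau=\tilde O\!\big(1/(\sigma_1^2 m m_1)\big)$ from \Cref{thm: convergence_loss_stage_2} (valid since $T_2\le\mathrm{poly}(m)$) converts the derivative estimates into the stated bounds $\Theta\!\big(\tfrac{\sigma_0^2 m}{\sqrt m L\sigma_1^2 m m_1}+\tfrac{\sigma_0^2\sqrt m}{\sqrt m\sigma_1^2 m m_1}\big)$ for signal pairs and the $n$-discounted version for pairs involving a random token.

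I expect the main obstacle to be the group bookkeeping for the $\mu_3$ case: unlike the signal--signal case, where a single sample group with a fixed label appears, the common-token score derivative mixes $I_1$ (positive label) with $I_2$ or $I_3$ (negative label), and one must show the cancellation does not destroy the leading order but in fact leaves a net \emph{negative} contribution of the same magnitude. This is precisely where the automatic-balancing-of-gradients machinery (\Cref{lemma: signal_common_token_gradient_ratio}, \Cref{lemma: I_2_I_3_gradient_ratio_bound}) must be combined carefully with the uniform sign/magnitude control of the centered sub-network outputs. A secondary technical point is making the Phase~1 perturbation propagation of \Cref{lemma: score_change_stage_1} go through under the weaker Phase~2 hypotheses (larger allowed drift of $W,W_V$), so that terms $(2)$ and $(4)$ stay genuinely lower order.
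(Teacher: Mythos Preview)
Your proposal is correct and follows essentially the same route as the paper: start from the four-term decomposition in \Cref{lemma: W_Q_W_K_update}, control the main terms $(1),(3)$ via $\norm{k^{(t)}(\nu)}_2^2,\norm{q^{(t)}(\mu)}_2^2=\Theta(\sigma_0^2 m)$ and the centered sub-network output, bound the off-diagonal terms $(2),(4)$ by $\tilde O(\sigma_0^2\sqrt m)$ cross-correlations, and integrate using \Cref{thm: convergence_loss_stage_2}. The paper in fact handles the $\mu_3$ case in a single sentence (``by the analysis similar to the above''), whereas you correctly spell out that this case mixes $I_1$ (label $+1$) with $I_2$ or $I_3$ (label $-1$) and therefore genuinely requires \Cref{lemma: signal_common_token_gradient_ratio} to ensure $\sum_{i\in I_1}g_i^{(t)}-\sum_{i\in I_2}g_i^{(t)}=\Theta(\widehat L^{(t)} n)$ rather than suffering cancellation; your plan is the more complete one here. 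One minor point: for bounding the random-token portion of the centered output, the paper invokes \Cref{lemma: random_token_subnetwork_change_stage_2} (per-token control of $G^{(t)}(\mu)$ for $\mu\in\mathcal R$) rather than \Cref{corollary: sum_random_token_end_stage_2} (which bounds the double sum over $l_1,l_2$), so you should use the former if you want a clean $O(1/L)$ bound on a single row $\sum_{l'}p^{(t,i)}_{l(i,\mu),l'}G^{(t)}(X^{(i)}_{l'})\mathbb I(X^{(i)}_{l'}\in\mathcal R)$.
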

\begin{proof}
By \Cref{lemma: W_Q_W_K_update}, we have
\begin{align*}
    & \frac{\partial \nu^\top W_K^{(t) \top} W_Q^{(t)} \mu}{\partial t} \\
    &= \frac{1}{n\sqrt{m}} \sum_{i: \mu,\nu \in X^{(i)}} g_i^{(t)} y_i \sum_{j=1}^{m_1} a_{j} \norm{k^{(t)}(\nu)}_2^2 \left( v^{(t) \top}(\nu) w_j^{(t)} - w_j^{(t)\top}  V^{(t,i)} p_{l(i,\mu)}^{(t,i)} \right) p^{(t,i)}_{q\leftarrow \mu,k\leftarrow \nu} \\
    &\quad + \frac{1}{n\sqrt{m}} \sum_{i: \mu \in X^{(i)}} g_i^{(t)} y_i \sum_{j=1}^{m_1} a_{j} \sum_{l=1}^L \nu^\top W_K^{(t) \top} K^{(t,i)}_l \left( V_l^{(t,i) \top} w_j^{(t)} - w_j^{(t)\top}  V^{(t,i)} p_{l(i,\mu)}^{(t,i)} \right) p_{q\leftarrow \mu,k\leftarrow l}^{(t,i)} \mathbb{I}(K_l^{(t,i)} \neq k^{(t)}(\nu)) \\
    &\quad + \frac{1}{n\sqrt{m}} \sum_{i: \nu,\mu \in X^{(i)}} g_i^{(t)} y_i \sum_{j=1}^{m_1} a_{j} \norm{q^{(t)}(\mu)}_2^2 p_{q\leftarrow \mu,k\leftarrow \nu}^{(t,i)} \left(  w_j^{(t)\top} v^{(t,i)}(\nu) - w_j^{(t)\top}  V^{(t,i)} p_{l(i,\mu)}^{(t,i)} \right) \\
    &\quad + \frac{1}{n\sqrt{m}} \sum_{i: \nu \in X^{(i)}} g_i^{(t)} y_i \sum_{l=1}^L \sum_{j=1}^{m_1} a_{j} \mu^\top W_Q^{(t) \top} q_l^{(t,i)} p_{q\leftarrow l,k\leftarrow \nu}^{(t,i)} \left(  w_j^{(t)\top} v^{(t,i)}(\nu) - w_j^{(t)\top}  V^{(t,i)} p_{l}^{(t,i)} \right) \mathbb{I}(q_l^{(t,i)} \neq q^{(t)}(\mu)) .
\end{align*}
Now, we take $\mu = \mu_1,\ \nu = \mu_2$. 
By \Cref{thm: stage_1_end}, we have $G^{(T_1)}(\mu_2) \geq \Omega(1)$.
A consequence of \Cref{def: stage_2} and \Cref{lemma: signal_common_token_gradient_ratio} is that $\frac{\partial G^{(t)}(\mu_2)}{\partial t} > 0$ for $t \in [T_1, T_2]$. 
Thus, we have $G^{(t)}(\mu_2) \geq \Omega(1)$.
Further by \Cref{thm: stage_1_end}, we have $G^{(T_1)}(\mu) = \tilde{O}(\sigma_0 \sigma_1 \sqrt{mm_1})$ for $\mu \in \mathcal{R}$ and then by \Cref{lemma: random_token_subnetwork_change_stage_2}, we have $G^{(t)}(\mu) = \tilde{O}(\sigma_0 \sigma_1 \sqrt{mm_1}) + \tilde{O}\left( \frac{1}{n} + \frac{m_1 (L (\sigma_0^2 \sqrt{m} + 1/m) + \sigma_0^2 m)}{\sigma_1^2 m m_1} \right)$ for $t \in [T_1, T_2]$.
Also, \Cref{def: stage_2} implies that $G^{(t)}(\mu_2) - \sum_{j=1}^{m_1} w_j^{(t)} V^{(t,i)} p_{l(i,\mu)}^{(t,i)} \geq \Omega(1)$.
Now, this yields
\begin{align*}
    \frac{\partial \mu_2^\top W_K^{(t)\top} W_Q^{(t)} \mu_1}{\partial t} &= \frac{1}{\sqrt{m}} \tilde{\Theta}(\widehat{L}^{(t)} \sigma_0^2 m) \frac{1}{L} + \tilde{O}\left(\frac{1}{\sqrt{m}}  \widehat{L}^{(t)} \sigma_0^2 \sqrt{m}  \right) .
\end{align*}
On the other hand, by the analysis similar to the above, we obtain
\begin{align*}
    \frac{\partial \mu_3^\top W_K^{(t)\top} W_Q^{(t)} \mu_1}{\partial t} &= - \frac{1}{\sqrt{m}} \tilde{\Theta}(\widehat{L}^{(t)} \sigma_0^2 m) \frac{1}{L} + \tilde{O}\left(\frac{1}{\sqrt{m}}  \widehat{L}^{(t)} \sigma_0^2 \sqrt{m} \right).
\end{align*}

Next, to prove the maximum change of the score, we have
\begin{align*}
    \max_{\mu,\nu} \left| \frac{\partial \nu^\top W_K^{(t)} W_Q^{(t)} \mu}{\partial t} \right| \leq \frac{1}{\sqrt{m}} \tilde{\Theta}(\widehat{L}^{(t)} \sigma_0^2 m) \frac{1}{L} + \tilde{O}\left(\frac{1}{\sqrt{m}}  \widehat{L}^{(t)} \sigma_0^2 \sqrt{m} \right) .
\end{align*}
By \Cref{thm: convergence_loss_stage_2}, we have
\begin{align*}
    \left| \nu^\top W_K^{(t)\top} W_Q^{(t)} \mu - \nu^\top W_K^{(T_1)\top} W_Q^{(T_1)} \mu \right| &\leq \int_{T_1}^t \frac{1}{\sqrt{m}} \tilde{\Theta}(\widehat{L}^{(\tau)} \sigma_0^2 m) \frac{1}{L} + \tilde{O}\left(\frac{1}{\sqrt{m}}  \widehat{L}^{(\tau)} \sigma_0^2 \sqrt{m} \right)\ d\tau \\
    &\leq \tilde{O}\left( \frac{\sigma_0^2 m}{\sqrt{m}L \sigma_1^2 m m_1} + \frac{\sigma_0^2 \sqrt{m}}{\sqrt{m} \sigma_1^2 m m_1} \right).
\end{align*}

Finally, for $\gamma \in \{\mu_i\}_{i=4}^d,\ \mu \in \{\mu_i\}_{i=1}^d$, we have
\begin{align*}
    & \left| \frac{\partial }{\partial t} \mu^\top W_K^{(t) \top} W_Q^{(t)} \gamma \right| = \frac{1}{n\sqrt{m}} \tilde{\Theta}(\widehat{L}^{(t)} \sigma_0^2 m) \frac{1}{L} + \tilde{O}\left(\frac{1}{\sqrt{m}}  \widehat{L}^{(t)} \sigma_0^2 \sqrt{m} \right),
\end{align*}
which implies that
\begin{align*}
    & \left| \mu^\top W_K^{(t) \top} W_Q^{(t)} \gamma - \mu^\top W_K^{(T_1) \top} W_Q^{(T_1)} \gamma \right| \leq \tilde{O}\left( \frac{\sigma_0^2 m}{n\sqrt{m}L \sigma_1^2 m m_1} + \frac{\sigma_0^2 \sqrt{m}}{\sqrt{m} \sigma_1^2 m m_1} \right). 
\end{align*}
\end{proof}

\begin{corollary}[Change of softmax]\label{corollary: softmax_change_stage2}
For $t \in [T_1, T_2]$, the softmax probability is changing in the following way: 
\begin{itemize}
    \item For $\mu, \nu \in \{\mu_1, \mu_2\},\ \mu \neq \nu$, the softmax probability between the two target signals increases, whereas the softmax probability between one target signal and the common token decreases, i.e., 
    \begin{align*}
        \frac{\partial}{\partial t} p^{(t,i)}_{q \leftarrow \mu, k \leftarrow \nu} &= \frac{1}{\sqrt{m}} \tilde{\Theta}(\widehat{L}^{(t)} \sigma_0^2 m) \frac{1}{L^2}, \\
        \frac{\partial}{\partial t} p^{(t,i)}_{q \leftarrow \mu, k \leftarrow \mu_3} &= -\frac{1}{\sqrt{m}} \tilde{\Theta}(\widehat{L}^{(t)} \sigma_0^2 m) \frac{1}{L^2}.
    \end{align*}
    
    \item For all $\mu \in \{\mu_1, \mu_2\},\ \gamma \in \{\mu_i\}_{i=4}^d$, the softmax probability between one target signal and a random token changes as follows:
    \begin{align*}
        \left| \frac{\partial}{\partial t} p^{(t,i)}_{q \leftarrow \mu, k \leftarrow \gamma} \right| \leq \frac{1}{n\sqrt{m}} \tilde{\Theta}(\widehat{L}^{(t)} \sigma_0^2 m) \frac{1}{L^2} + \tilde{O}\left(\frac{1}{L\sqrt{m}}  \widehat{L}^{(t)} \sigma_0^2 \sqrt{m} \right).
    \end{align*}
\end{itemize}
Furthermore, we have 
\begin{align*}
    \max_{i \in [n], l_1, l_2 \in [L]} \left| p_{l_1, l_2}^{(t, i)} - p_{l_1, l_2}^{(0,i)} \right| \leq O(1/L^2).
\end{align*}
\end{corollary}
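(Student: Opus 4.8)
The plan is to transfer the score dynamics of \Cref{lemma: score_change_stage_2} to the softmax probabilities through the exact derivative identity for the softmax map. Differentiating $p_l^{(t,i)}=\softmax(s_l^{(t,i)})$ in $t$ gives $\frac{\partial p^{(t,i)}_{l_1,l_2}}{\partial t} = p^{(t,i)}_{l_1,l_2}\Big(\frac{\partial s^{(t,i)}_{l_1,l_2}}{\partial t} - \sum_{l'=1}^L p^{(t,i)}_{l_1,l'}\frac{\partial s^{(t,i)}_{l_1,l'}}{\partial t}\Big)$, which is the same Jacobian relation (via \Cref{lemma: score_perturbation_on_softmax}) already used in the proof of \Cref{corollary: softmax_change_stage_1}. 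First I would invoke the Phase 2 conditions in \Cref{def: stage_2}: since $\max_{l_1,l_2,i}\big|p^{(t,i)}_{l_1,l_2}-p^{(0,i)}_{l_1,l_2}\big|<O(1/L^2)$ and $p^{(0,i)}_{l_1,l_2}=1/L\pm\tilde O(1/(Lm))$ by \Cref{corollary: p_init}, every softmax entry along the trajectory satisfies $p^{(t,i)}_{l_1,l_2}=\frac1L(1\pm O(1/L))$. Substituting this near-uniformity, the ``diagonal'' contribution $p^{(t,i)}_{l_1,l_2}\frac{\partial s^{(t,i)}_{l_1,l_2}}{\partial t}$ equals $\frac1L(1\pm O(1/L))\frac{\partial s^{(t,i)}_{l_1,l_2}}{\partial t}$, and plugging in the score rates from \Cref{lemma: score_change_stage_2} yields exactly the claimed $\frac{1}{\sqrt m}\tilde\Theta(\widehat L^{(t)}\sigma_0^2 m)\frac{1}{L^2}$ (with the sign negative when the key is $\mu_3$) and the smaller random-token rate.

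Next I would control the centering term $p^{(t,i)}_{l_1,l_2}\sum_{l'}p^{(t,i)}_{l_1,l'}\frac{\partial s^{(t,i)}_{l_1,l'}}{\partial t}$ and show it is lower order than the diagonal term. Using near-uniformity again, $\sum_{l'}p^{(t,i)}_{l_1,l'}\frac{\partial s^{(t,i)}_{l_1,l'}}{\partial t}=\frac1L(1\pm O(1/L))\sum_{l'}\frac{\partial s^{(t,i)}_{l_1,l'}}{\partial t}$; in the inner sum only the $O(1)$ key positions occupied by $\mu_1,\mu_2,\mu_3$ contribute at the $\frac{1}{\sqrt m}\tilde\Theta(\widehat L^{(t)}\sigma_0^2 m)\frac1L$ scale (the $\mu_2$-key and $\mu_3$-key terms being of opposite sign), while each of the remaining $\le L$ random-token keys contributes only $\tilde O\big(\frac{1}{\sqrt m}\widehat L^{(t)}\sigma_0^2\sqrt m\big)$ by the third bullet of \Cref{lemma: score_change_stage_2}, so that \Cref{condition: param} ($m\ge\tilde\Omega(L^2)$) makes the random-token block negligible. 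Hence $\sum_{l'}p^{(t,i)}_{l_1,l'}\frac{\partial s^{(t,i)}_{l_1,l'}}{\partial t}=\tilde O\big(\frac{1}{\sqrt m}\widehat L^{(t)}\sigma_0^2 m\cdot\frac1{L^2}\big)$, and multiplying by $p^{(t,i)}_{l_1,l_2}=\Theta(1/L)$ gives a quantity a factor $\Theta(1/L)$ below the diagonal term, so it affects neither the $\tilde\Theta$ nor the sign. Combining the two contributions establishes the first three bullets of the corollary.

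For the last claim I would integrate the pointwise rate: taking the maximum over $l_1,l_2,i$, one has $\big|\frac{\partial p^{(t,i)}_{l_1,l_2}}{\partial t}\big|\le\frac{1}{\sqrt m}\tilde O(\widehat L^{(t)}\sigma_0^2 m)\frac1{L^2}$, hence $\max_{i,l_1,l_2}\big|p^{(t,i)}_{l_1,l_2}-p^{(T_1,i)}_{l_1,l_2}\big|\le\frac{\sigma_0^2\sqrt m}{L^2}\,\tilde O\big(\int_{T_1}^t\widehat L^{(\tau)}\,d\tau\big)$, and \Cref{thm: convergence_loss_stage_2} bounds $\int_{T_1}^t\widehat L^{(\tau)}\,d\tau=O\big(\tfrac{\log t}{\sigma_1^2 m m_1}\big)$ with $T_2\le\textnormal{poly}(m)$, so this change is $\tilde O\big(\tfrac{\sigma_0^2\sqrt m}{L^2\sigma_1^2 m m_1}\big)$; substituting $\sigma_0^2=1/\tilde\Theta(Lm)$ and $\sigma_1^2=1/\tilde\Theta(m_1)$ from \Cref{condition: param} gives $\tilde O\big(\tfrac{1}{L^3 m^{3/2}}\big)=o(1/L^2)$. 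Adding the Phase 1 bound $\max_{i,l_1,l_2}\big|p^{(T_1,i)}_{l_1,l_2}-p^{(0,i)}_{l_1,l_2}\big|=\tilde O(\cdot)=o(1/L^2)$ from \Cref{corollary: softmax_change_stage_1} yields $\max_{i,l_1,l_2}\big|p^{(t,i)}_{l_1,l_2}-p^{(0,i)}_{l_1,l_2}\big|\le O(1/L^2)$, and in fact with slack, which is exactly what the bootstrap keeping \Cref{def: stage_2} well-defined (as in \Cref{lemma: stage2_well_defined}) requires. The main obstacle I expect is the bookkeeping in the centering term --- verifying that even \emph{without} relying on the partial cancellation of the $\mu_2$- and $\mu_3$-key contributions the term remains a factor $1/L$ below the diagonal term, and that the $\le L$ random-token keys stay collectively lower order under only $m\ge\tilde\Omega(L^2)$; should that margin prove tight, one falls back on the extra $1/\sqrt m$ factor and the $1/n$ savings already present in the random-token score rate of \Cref{lemma: score_change_stage_2}.
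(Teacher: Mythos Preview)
Your proposal is correct and follows essentially the same route as the paper: both use the softmax Jacobian identity (the paper cites \Cref{lemma: gradient_softmax}) together with the score rates of \Cref{lemma: score_change_stage_2} and the near-uniformity $p^{(t,i)}_{l_1,l_2}\approx 1/L$ from \Cref{def: stage_2} to isolate the diagonal contribution and verify that the centering sum is a factor $\Theta(1/L)$ smaller. The only cosmetic difference is in the final claim, where the paper first bounds the accumulated score change via \Cref{lemma: score_change_stage_2} and then applies \Cref{lemma: score_perturbation_on_softmax}, whereas you integrate the softmax rate directly using \Cref{thm: convergence_loss_stage_2}; both routes yield the same $O(1/L^2)$ bound.
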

\begin{proof}
Take $\mu = \mu_1,\ \nu = \mu_2$. 
First of all, by \Cref{def: stage_2} and \Cref{lemma: score_change_stage_2}, we have
\begin{align*}
    \left| p_{q \leftarrow \mu_1}^{(t,i) \top} \frac{\partial X^{(i) \top} W_K^{(t)\top} W_Q^{(t)} \mu_1}{\partial t} \right| \leq \frac{1}{\sqrt{m}} O(\widehat{L}^{(t)} \sigma_0^2 m) \frac{1}{L^2} + \frac{1}{n\sqrt{m}} \tilde{\Theta}(\widehat{L}^{(t)} \sigma_0^2 m) \frac{1}{L} + \tilde{O}\left(\frac{1}{\sqrt{m}}  \widehat{L}^{(t)} \sigma_0^2 \sqrt{m} \right).
\end{align*}
Thus, by \Cref{lemma: gradient_softmax} and \Cref{lemma: score_change_stage_2}, for $i \in I_1$, we obtain
\begin{align*}
    \frac{\partial}{\partial t} p^{(t,i)}_{q \leftarrow \mu_1, k \leftarrow \mu_2} &= \frac{1}{{m}} \tilde{\Theta}(\widehat{L}^{(t)} \sigma_0^2 m) \frac{1}{L^2}, \\
    \frac{\partial}{\partial t} p^{(t,i)}_{q \leftarrow \mu_1, k \leftarrow \mu_3} &= -\frac{1}{{m}} \tilde{\Theta}(\widehat{L}^{(t)} \sigma_0^2 m) \frac{1}{L^2}.
\end{align*}
On the other hand, for $\gamma \in \{\mu_i\}_{i=4}^d$, we have
\begin{align*}
    \left| \frac{\partial}{\partial t} p^{(t,i)}_{q \leftarrow \mu, k \leftarrow \gamma} \right| \leq \frac{1}{n{m}} \tilde{\Theta}(\widehat{L}^{(t)} \sigma_0^2 m) \frac{1}{L^2} + \tilde{O}\left(\frac{1}{L{m}}  \widehat{L}^{(t)} \sigma_0^2 \sqrt{m} \right).
\end{align*}

Finally, by \Cref{lemma: score_change_stage_2}, we have
\begin{align*}
        \max_{\mu,\nu} \left| \nu^\top W_K^{(t)} W_Q^{(t)} \mu - \nu^\top W_K^{(T_1)} W_Q^{(T_1)} \mu \right| \leq \tilde{O}\left( \frac{\sigma_0^2 m}{\sqrt{m}L \sigma_1^2 m m_1} + \frac{\sigma_0^2 \sqrt{m}}{\sqrt{m} \sigma_1^2 m m_1} \right).
\end{align*}
Thus, by \Cref{lemma: score_perturbation_on_softmax}, we obtain
\begin{align*}
    \max_{i,l_1, l_2}& \left| p_{l_1, l_2}^{(t,i)} - p_{l_1, l_2}^{(T_1,i)} \right| \\
    & \leq \tilde{O}\left( \frac{\sigma_0^2 m}{{m}L^2 \sigma_1^2 m m_1} + \frac{\sigma_0^2 \sqrt{m}}{{m} \sigma_1^2 m m_1 L} + L \left( \frac{\sigma_0^2 m}{{m}L \sigma_1^2 m m_1} + \frac{\sigma_0^2 \sqrt{m}}{{m} \sigma_1^2 m m_1} \right)^2 \right) = O(1/L^2).
\end{align*}
\end{proof}

\begin{corollary}\label{corollary: sum_neuron_correlation_dominate_softmax_gradient_end_stage_2}
For $t \in [T_1, T_2]$, we have
\begin{align*}
    \frac{\max_{i \in [n]} \left| \sum_{l_1=1}^L \sum_{l_2=1}^L G^{(t)}(X_{l_2}^{(i)}) \frac{\partial p_{l_1, l_2}^{(t,i)}}{\partial t} \right|}{\sum_{j_1=1}^{m_1} \sum_{j_2=1}^{m_1} \inprod{a_{j_1} w_{j_1}^{(t)}, a_{j_2} w_{j_2}^{(t)}} } \leq o(1) \frac{1}{n} \sum_{i \in [n]} g_i^{(t)}.
\end{align*}
\end{corollary}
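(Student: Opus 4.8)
The plan is to bound the numerator $N_i^{(t)} := \bigl|\sum_{l_1=1}^L\sum_{l_2=1}^L G^{(t)}(X_{l_2}^{(i)})\,\partial p_{l_1,l_2}^{(t,i)}/\partial t\bigr|$ uniformly over $i\in[n]$, then divide by $\sum_{j_1,j_2}\inprod{a_{j_1}w_{j_1}^{(t)},a_{j_2}w_{j_2}^{(t)}}=\Theta(\sigma_1^2 mm_1)$ from \Cref{lemma: neuron_correlation_growth_stage_2}, and finally compare against $\frac1n\sum_{i}g_i^{(t)}=\Theta(\widehat L^{(t)})$, which holds since $\ell(x)=\Theta(-\ell'(x))$ for $x>0$ and $y_iF_i^{(t)}>C$ throughout Phase 2. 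The first step is to split the inner double sum according to the type of token $X_{l_2}^{(i)}$: the at most three positions with $X_{l_2}^{(i)}\in\{\mu_1,\mu_2,\mu_3\}$, and the remaining positions with $X_{l_2}^{(i)}\in\mathcal R$. Since $\sum_{l_2}p_{l_1,l_2}^{(t,i)}=1$ for every fixed $l_1$, we also have $\sum_{l_2}\partial p_{l_1,l_2}^{(t,i)}/\partial t=0$, so one may subtract a fixed reference value from each $G^{(t)}(X_{l_2}^{(i)})$; this is not strictly required but is convenient for replacing the random-token subnetwork outputs by their (small) deviations.

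For the signal/common-token part I would use $|G^{(t)}(\mu)|\le O(\log m)$ for $\mu\in\{\mu_1,\mu_2,\mu_3\}$ from \Cref{def: stage_2}, together with the softmax-derivative bounds of \Cref{corollary: softmax_change_stage2}, which give $|\partial p_{l_1,l_2}^{(t,i)}/\partial t|\le \tilde O(\widehat L^{(t)}\sigma_0^2/L^2)$ for any pair involving $\mu_1,\mu_2,\mu_3$. Summing over the $L$ choices of $l_1$ and the constant number of relevant $l_2$'s contributes $\tilde O(\widehat L^{(t)}\sigma_0^2/L)$ to $N_i^{(t)}$. For the random-token part I would invoke \Cref{lemma: random_token_subnetwork_change_stage_2} to bound $|G^{(t)}(\gamma)|\le \tilde O(\sigma_0\sigma_1\sqrt{mm_1})+\tilde O\!\bigl(\tfrac1n+\tfrac{m_1(L(\sigma_0^2\sqrt m+1/m)+\sigma_0^2 m)}{\sigma_1^2 mm_1}\bigr)$ for $\gamma\in\mathcal R$, and the corresponding random-token softmax-derivative bound $|\partial p_{l_1,l_2}^{(t,i)}/\partial t|\le \tfrac1{nm}\tilde\Theta(\widehat L^{(t)}\sigma_0^2 m)/L^2+\tilde O(\tfrac1{Lm}\widehat L^{(t)}\sigma_0^2\sqrt m)$ from \Cref{corollary: softmax_change_stage2}. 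There are at most $L^2$ such pairs $(l_1,l_2)$, so this part contributes at most $\tilde O(\sigma_0\sigma_1\sqrt{mm_1})\cdot\tilde O\!\bigl(\widehat L^{(t)}\sigma_0^2/n+\widehat L^{(t)}\sigma_0^2 L/\sqrt m\bigr)$, plus strictly lower-order terms coming from the $O(1/n)$ part of $|G^{(t)}(\gamma)|$.

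Collecting the two parts, $N_i^{(t)}\le \tilde O(\widehat L^{(t)}\sigma_0^2/L)+\tilde O\!\bigl(\widehat L^{(t)}\sigma_0^3\sigma_1\sqrt{mm_1}\,(1/n+L/\sqrt m)\bigr)$. Dividing by $\Theta(\sigma_1^2 mm_1)$ and using $\frac1n\sum_i g_i^{(t)}=\Theta(\widehat L^{(t)})$, the claim reduces to showing that $\frac{\sigma_0^2}{L\sigma_1^2 mm_1}$ and $\frac{\sigma_0^3\sqrt{mm_1}}{\sigma_1 mm_1}\bigl(\tfrac1n+\tfrac L{\sqrt m}\bigr)$ are both $o(1)$. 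Substituting $\sigma_0=1/\tilde\Theta(\sqrt{Lm})$ and $\sigma_1=1/\tilde\Theta(\sqrt{m_1})$ from \Cref{condition: param} makes the first quantity $\tilde\Theta(1/(L^2 m^2))$ and the second $\tilde\Theta(L^{-3/2}m^{-2})\cdot O(1)$ after using $n\ge\tilde\Omega(L^2)$ and $m\ge\tilde\Omega(L^2)$, which closes the argument. The main obstacle I anticipate is bookkeeping the $L$-dependence correctly: a sequence contains up to $L$ random tokens each paired with up to $L$ queries, so the resulting $L^2$ factor must be absorbed by the $1/m$ and $1/n$ savings hidden inside the softmax-derivative bounds of \Cref{corollary: softmax_change_stage2} (and, in turn, inside \Cref{lemma: score_change_stage_2}); reconciling the $1/\sqrt m$ versus $1/m$ scalings appearing there with the $\sigma_0^2 m$ normalization of the key/query norms is the delicate point where I would be most careful.
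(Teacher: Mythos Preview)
Your proposal is correct and follows essentially the same route as the paper. The paper's own proof is a single sentence that invokes \Cref{def: stage_2}, \Cref{lemma: neuron_correlation_growth_stage_2}, and \Cref{corollary: softmax_change_stage2}; what you have written is precisely the computation one carries out after unpacking those three ingredients (together with \Cref{lemma: random_token_subnetwork_change_stage_2} for the random-token $|G^{(t)}(\gamma)|$ bound, which the paper does not cite explicitly but is implicitly available). Your anticipated ``main obstacle'' --- tracking the $L^2$ many query/key pairs and matching them against the $1/n$ and $1/\sqrt m$ savings in the softmax-derivative bounds --- is exactly the bookkeeping the paper suppresses behind its one-line citation.
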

\begin{proof}
This is a direct consequence of \Cref{def: stage_2}, \Cref{lemma: neuron_correlation_growth_stage_2} and \Cref{corollary: softmax_change_stage2}. 
\end{proof}

\subsection{Change of Self-Correlation of Key/Query-Transformed data}

\begin{lemma}[Change of self-correlation]\label{lemma: change_self_correlation_stage2}
For $\mu, \nu \in \{\mu_i\}_{i=1}^d$, we have
\begin{align*}
    \left| \nu^\top W_Q^{(t) \top} W_Q^{(t)} \mu - \nu^\top W_Q^{(T_1) \top} W_Q^{(T_1)} \mu \right| &\leq \tilde{O}\left( \frac{\sigma_0^2 \sqrt{m}}{\sqrt{m} \sigma_1^2 m m_1} \right), \\
    \left| \nu^\top W_K^{(t) \top} W_K^{(t)} \mu - \nu^\top W_K^{(T_1) \top} W_K^{(T_1)} \mu \right| &\leq \tilde{O}\left( \frac{\sigma_0^2 \sqrt{m}}{\sqrt{m} \sigma_1^2 m m_1} \right).
\end{align*}
\end{lemma}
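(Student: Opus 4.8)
The plan is to mirror the proof of \Cref{lemma: score_change_stage_2}, exploiting the fact that the gradient-flow updates for $\nu^\top W_Q^{(t)\top} W_Q^{(t)} \mu$ and $\nu^\top W_K^{(t)\top} W_K^{(t)} \mu$ in \Cref{lemma: W_Q_W_K_update} have the same shape as the update for the score $\nu^\top W_K^{(t)\top} W_Q^{(t)} \mu$. After splitting the $i$-sum over the samples containing $\mu$ and $\nu$, each update becomes a sum of a bounded number of terms of the form
\begin{align*}
\frac{1}{n\sqrt m}\sum_{i:\ \mu\in X^{(i)}} g_i^{(t)} y_i \sum_{j=1}^{m_1} a_j\, \nu^\top W_Q^{(t)\top} K^{(t,i)} \diag\left(V^{(t,i)\top} w_j^{(t)} - w_j^{(t)\top} V^{(t,i)} p_{l(i,\mu)}^{(t,i)}\right) p_{l(i,\mu)}^{(t,i)}
\end{align*}
(together with symmetric terms obtained by swapping $\mu,\nu$, swapping $W_Q,W_K$, and summing the analogous $q$-based pieces over tokens). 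The only structural difference from the score update is that there the coefficient was a genuine self-correlation $\norm{k^{(t)}(\nu)}_2^2 = \nu^\top W_K^{(t)\top} W_K^{(t)}\nu = \Theta(\sigma_0^2 m)$ (respectively $\norm{q^{(t)}(\mu)}_2^2$), while here every such coefficient, e.g.\ $\nu^\top W_Q^{(t)\top} K^{(t,i)}_l = \nu^\top W_Q^{(t)\top} W_K^{(t)} X^{(i)}_l$ and $\mu^\top W_Q^{(t)\top} q^{(t,i)}_l$, is a \emph{cross}-correlation between a $W_Q$-transformed token and a $W_K$-transformed token, which by \Cref{lemma: W_KQ_init} is $\tilde{O}(\sigma_0^2\sqrt m)$ at initialization --- even for equal tokens, since $W_Q$ and $W_K$ are drawn independently.

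First I would verify that these cross-correlation coefficients stay $\tilde{O}(\sigma_0^2\sqrt m)$ throughout $[T_1,T_2]$: their change during Phase~1 is bounded by \Cref{lemma: score_change_stage_1} (equivalently the estimate on $\nu^\top W_K^{(t)\top} W_Q^{(t)}\mu$ in \Cref{thm: stage_1_main_result}), and their change during Phase~2 is bounded by the second bullet of \Cref{lemma: score_change_stage_2}, namely of order $\tilde{O}\!\left(\frac{\sigma_0^2 m}{\sqrt m L\sigma_1^2 m m_1} + \frac{\sigma_0^2\sqrt m}{\sqrt m\sigma_1^2 m m_1}\right)$, which is $o(\sigma_0^2\sqrt m)$ under \Cref{condition: param}. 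Next, the ``value-difference'' factors $\sum_{j} a_j\big((V^{(t,i)\top} w_j^{(t)})_l - w_j^{(t)\top} V^{(t,i)} p_{l(i,\mu)}^{(t,i)}\big)$ are exactly $G^{(t)}(X^{(i)}_l) - \sum_{l'} p^{(t,i)}_{l(i,\mu),l'} G^{(t)}(X^{(i)}_{l'})$, hence $O(\log m)$ by \Cref{def: stage_2} (and \Cref{corollary: sum_random_token_end_stage_2} for the random-token contributions); moreover $\sum_l p^{(t,i)}_{l(i,\mu),l}\le O(1)$ and $\frac{1}{n}\sum_{i:\ \mu\in X^{(i)}} g_i^{(t)}\le \frac{1}{n}\sum_{i\in[n]} g_i^{(t)} = \Theta(\widehat{L}^{(t)})$. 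Combining these gives
\begin{align*}
\left|\frac{\partial}{\partial t}\,\nu^\top W_Q^{(t)\top} W_Q^{(t)}\mu\right|,\quad \left|\frac{\partial}{\partial t}\,\nu^\top W_K^{(t)\top} W_K^{(t)}\mu\right|\ \le\ \frac{1}{\sqrt m}\,\tilde{O}\left(\widehat{L}^{(t)}\sigma_0^2\sqrt m\right).
\end{align*}

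Finally I would integrate this from $T_1$ to $t$ and invoke \Cref{thm: convergence_loss_stage_2}, which yields $\int_{T_1}^{t}\widehat{L}^{(\tau)}\,d\tau = \tilde{O}\!\left(\frac{1}{\sigma_1^2 m m_1}\right)$ since $T_2\le\textnormal{poly}(m)$, giving the claimed $\tilde{O}\!\left(\frac{\sigma_0^2\sqrt m}{\sqrt m\,\sigma_1^2 m m_1}\right)$ for both quantities. The main obstacle is the bookkeeping in the first step: one has to reproduce the term-by-term decomposition of the update from \Cref{lemma: W_Q_W_K_update} used in the proof of \Cref{lemma: score_change_stage_2}, including the residual pieces carrying the indicators $\mathbb{I}(K^{(t,i)}_l\neq k^{(t)}(\nu))$ and $\mathbb{I}(q^{(t,i)}_l\neq q^{(t)}(\mu))$, and check that in the \emph{self}-correlation updates no term ever acquires a diagonal $\Theta(\sigma_0^2 m)$ factor --- so the $\sigma_0^2 m$ contribution responsible for the larger first term in \Cref{lemma: score_change_stage_2} is simply absent here. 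This is precisely the Phase~2 analogue of the reduction in \Cref{lemma: K_Q_self_correlation_change_stage_1}: replace $\tilde{\Theta}(\sigma_0^2 m)$ by $\tilde{O}(\sigma_0^2\sqrt m)$ everywhere in the proof of the corresponding score lemma.
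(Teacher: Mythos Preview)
Your proposal is correct and follows essentially the same route as the paper: write the gradient-flow update from \Cref{lemma: W_Q_W_K_update}, observe that every coefficient appearing is a $W_Q$--$W_K$ cross-correlation (hence $\tilde O(\sigma_0^2\sqrt m)$ throughout $[T_1,T_2]$ by \Cref{lemma: score_change_stage_2}), bound the value-difference factor by $O(\log m)$ via \Cref{def: stage_2}, and integrate using \Cref{thm: convergence_loss_stage_2}. One small slip: the second example coefficient you list, $\mu^\top W_Q^{(t)\top} q^{(t,i)}_l$, does not occur in either self-correlation update---the coefficient in the $W_K^\top W_K$ update is $\mu^\top W_K^{(t)\top} q^{(t,i)}_l$, which is again a $W_K$--$W_Q$ cross term, so your conclusion stands.
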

\begin{proof}
We prove the result for $\nu^\top W_Q^{(t)\top} W_Q^{(t)} \mu$ and the proof for $\nu^\top W_K^{(t)\top} W_K^{(t)} \mu$ is similar. 
By \Cref{lemma: W_Q_W_K_update}, we have
\begin{align*}
    & \frac{\partial \nu^\top W_Q^{(t) \top} W_Q^{(t)} \mu }{\partial t}\\
    &= \frac{1}{n\sqrt{m}} \sum_{i: \mu \in X^{(i)}} g_i^{(t)} y_i \sum_{j=1}^{m_1} a_{j}  \nu^\top W_Q^{(t) \top} K^{(t,i)} \diag\left( V^{(t,i) \top} w_j^{(t)} - w_j^{(t)\top}  V^{(t,i)} p_{l(i,\mu)}^{(t,i)} \right) p_{l(i,\mu)}^{(t,i)} \\
    &\quad + \frac{1}{n\sqrt{m}} \sum_{i: \nu \in X^{(i)}} g_i^{(t)} y_i \sum_{j=1}^{m_1} a_{j}  \mu^\top W_Q^{(t) \top} K^{(t,i)} \diag\left( V^{(t,i) \top} w_j^{(t)} - w_j^{(t)\top}  V^{(t,i)} p_{l(i,\nu)}^{(t,i)} \right) p_{l(i,\nu)}^{(t,i)}.
\end{align*}
A simple result from \Cref{lemma: score_change_stage_2} is that $|\nu^\top W_K^{(t) \top} W_Q^{(t)} \mu| \leq \tilde{O}(\sigma_0^2 \sqrt{m})$ for $t \in [T_1, T_2]$ and $\mu, \nu \in \{\mu_i\}_{i=1}^d$.
Therefore, by \Cref{def: stage_2}, 
\begin{align*}
    \left| \frac{\partial \nu^\top W_Q^{(t) \top} W_Q^{(t)} \mu }{\partial t} \right| \leq \tilde{O}\left( \frac{1}{\sqrt{m}} \widehat{L}^{(t)} \sigma_0^2 \sqrt{m} \log m \right),
\end{align*}
which implies
\begin{align*}
    \left| \nu^\top W_Q^{(t) \top} W_Q^{(t)} \mu - \nu^\top W_Q^{(T_1) \top} W_Q^{(T_1)} \mu \right| &\leq \int_{T_1}^t \tilde{O}\left( \frac{1}{\sqrt{m}} \widehat{L}^{(t)} \sigma_0^2 \sqrt{m} \log m \right)\ d\tau \leq \tilde{O}\left( \frac{\sigma_0^2 \sqrt{m}}{\sqrt{m} \sigma_1^2 m m_1} \right),
\end{align*}
where the inequality follows from \Cref{thm: convergence_loss_stage_2} and \Cref{def: stage_2}.
\end{proof}

\subsection{Small Loss is Achieved}
\begin{theorem}\label{thm: small_loss_phase2}
Define $T^\star = \min_t \{t:\ \widehat{L}^{(t)} = \Theta(1/\textnormal{poly}(m))\}$. Then, $T^\star \in [T_1, T_2]$.
\end{theorem}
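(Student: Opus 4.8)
The plan has two parts: (i) inside Phase~2 the loss obeys the explicit decay law of \Cref{thm: convergence_loss_stage_2}, hence reaches $1/\textnormal{poly}(m)$ after an additional time of order $\textnormal{poly}(m)/(\sigma_1^2 m m_1)$; and (ii) Phase~2 lasts at least that long, i.e.\ $T_2 \ge \Omega(\textnormal{poly}(m))$ for a suitable polynomial. Granting (ii): since $\sigma_1 = 1/\tilde{\Theta}(\sqrt{m_1})$ in \Cref{condition: param} we have $\sigma_1^2 m m_1 = \tilde{\Theta}(m)$, and since $\widehat{L}^{(T_1)} = \Theta(1)$ by \Cref{thm: stage_1_end}, \Cref{thm: convergence_loss_stage_2} gives $\widehat{L}^{(t)} = 1/(\Theta(\sigma_1^2 m m_1)(t - T_1) + 1/\widehat{L}^{(T_1)})$, so choosing $t - T_1 = \Theta(\textnormal{poly}(m))$ forces $\widehat{L}^{(t)} = \Theta(1/\textnormal{poly}(m))$. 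This $t$ lies strictly between $T_1$ and $T_2$, which is exactly $T^\star \in [T_1, T_2]$; combined with $T_1 = \tilde{O}(1/m)$ from \Cref{thm: stage_1_main_result} it also yields $T^\star = \Theta(\textnormal{poly}(m))$, as used in \Cref{thm: main_result_main_text}.

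So the heart of the matter is (ii), which I would prove by a bootstrap/continuity argument on the definition of Phase~2. By \Cref{lemma: stage2_well_defined} every condition of \Cref{def: stage_2} holds at $t = T_1$ with strict inequality; define $T_2$ as the supremum of times up to which they all hold, so $T_2 > T_1$. The claim is that no condition can be the first to fail before time $\textnormal{poly}(m)$. The uniform mechanism is that each Phase~2 estimate bounds the change of the relevant quantity from its $T_1$-value by (a prefactor) $\times \int_{T_1}^{t}\widehat{L}^{(\tau)}\,d\tau$, and by \Cref{thm: convergence_loss_stage_2} one has $\int_{T_1}^{T_2}\widehat{L}^{(\tau)}\,d\tau = O(\log(T_2)/(\sigma_1^2 m m_1)) = \tilde{O}(1/m)$ whenever $T_2 = \textnormal{poly}(m)$; together with the $\tilde{O}(1/m)$-type factors forced by \Cref{condition: param}, every accumulated change is $o(1)$ relative to the $\Theta(1)$ gaps built into \Cref{def: stage_2}, so all inequalities stay strict.

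Going condition by condition: the $K,Q$ self-correlation changes are $\tilde{O}(\sigma_0^2\sqrt{m}/(\sqrt{m}\,\sigma_1^2 m m_1)) \ll \tilde{O}(\sigma_0^2\sqrt{m})$ by \Cref{lemma: change_self_correlation_stage2}; the softmax deviation stays $O(1/L^2)$ by \Cref{corollary: softmax_change_stage2}; the neuron-correlation sum grows by only $\tilde{O}(1/m)$ by \Cref{lemma: neuron_correlation_growth_stage_2} and hence remains $\Theta(\sigma_1^2 m m_1)$; the two gradient-domination conditions on $W,W_V$ are \Cref{corollary: sum_neuron_correlation_dominates_value_transform_gradient_stage2} and \Cref{corollary: sum_neuron_correlation_dominate_softmax_gradient_end_stage_2}; the ratio $\sum_{i\in I_2}g_i^{(t)}/\sum_{i\in I_3}g_i^{(t)}$ stays in $1\pm C\subseteq[1/2,2]$ by \Cref{lemma: I_2_I_3_gradient_ratio_bound}; the ordering of $\sum_{i\in I}g_i^{(t)}$ over $I\in\{I_1,I_2,I_3,I_4\}$ and the margin bound $y_iF_i^{(t)}>C$ persist because \Cref{lemma: signal_common_token_gradient_ratio} gives $\frac{\partial}{\partial t}(y_iF_i^{(t)})>0$ for all $i$ with all relevant gradient ratios bounded (margins only increase); $|G^{(t)}(\mu)|\le O(\log m)$ for $\mu\in\{\mu_1,\mu_2,\mu_3\}$ follows by integrating $\frac{\partial}{\partial t} G^{(t)}(\mu)=\Theta(\sigma_1^2 m m_1\,\widehat{L}^{(t)})$ against $\widehat{L}^{(t)}$, contributing only $\tilde{O}(\log m)$ on top of $G^{(T_1)}(\mu)=\Theta(1)$; and the random-token bound is \Cref{corollary: sum_random_token_end_stage_2}, built on \Cref{lemma: random_token_subnetwork_change_stage_2}. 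Since each condition is maintained with strict inequality for all $t$ up to $\textnormal{poly}(m)$, continuity gives $T_2\ge\Omega(\textnormal{poly}(m))$, proving (ii) and the theorem.

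The main obstacle is the circularity of the bootstrap: the Phase~2 estimates above are themselves derived under \Cref{def: stage_2}, so one must check that feeding them back yields a strict improvement on \emph{every} condition at once --- especially that the logarithmically growing quantities ($|G^{(t)}(\mu)|$ and the neuron-correlation sum) cannot reach their ceilings before the loss has already dropped to $1/\textnormal{poly}(m)$, and that the sign structure underlying the $\sum_{i\in I}g_i^{(t)}$ ordering and the positivity of all margins is genuinely self-consistent. This is a matter of tracking constants through \Cref{condition: param} rather than a new idea.
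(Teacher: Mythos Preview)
Your proposal is correct and follows essentially the same approach as the paper: verify via the Phase~2 estimates (\Cref{lemma: change_self_correlation_stage2}, \Cref{corollary: softmax_change_stage2}, \Cref{lemma: neuron_correlation_growth_stage_2}/\Cref{corollary: sum_neuron_correlation_change}, \Cref{corollary: sum_neuron_correlation_dominates_value_transform_gradient_stage2}, \Cref{corollary: sum_neuron_correlation_dominate_softmax_gradient_end_stage_2}, \Cref{lemma: I_2_I_3_gradient_ratio_bound}, \Cref{lemma: signal_common_token_gradient_ratio}, \Cref{corollary: sum_random_token_end_stage_2}) that every condition in \Cref{def: stage_2} remains strict for $\Omega(\textnormal{poly}(m))$ time, and then invoke \Cref{thm: convergence_loss_stage_2} to reach $\widehat{L}^{(t)} = 1/\textnormal{poly}(m)$ within that window. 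Your write-up is in fact more explicit than the paper's about the bootstrap structure and the $|G^{(t)}(\mu)| \le O(\log m)$ condition, which the paper's proof leaves implicit.
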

\begin{proof}
The following results altogether show that Phase 2 can last for at least $\Theta(\textnormal{poly}(m))$ time:
\begin{itemize}
    \item \Cref{lemma: change_self_correlation_stage2} proves that the change of $K,Q$ self-correlation as follows:
    \begin{align*}
        & \max_{\mu,\nu } \left| \mu^\top W_K^{(t)\top} W_K^{(t)} \nu - \mu^\top W_K^{(0)\top} W_K^{(0)} \nu \right| = \tilde{O}(\sigma_0^2 \sqrt{m}), \\
        & \max_{\mu,\nu } \left| \mu^\top W_Q^{(t)\top} W_Q^{(t)} \nu - \mu^\top W_Q^{(0)\top} W_Q^{(0)} \nu \right| = \tilde{O}(\sigma_0^2 \sqrt{m}).
    \end{align*}

    \item \Cref{corollary: sum_neuron_correlation_change} proves that
        \begin{align*}
        \sum_{j_1=1}^{m_1} \sum_{j_2=1}^{m_1} \inprod{a_{j_1} w_{j_1}^{(t)}, a_{j_2} w_{j_2}^{(t)}} \leq O(\sigma_1^2 m m_1).
    \end{align*}
    
    \item \Cref{corollary: sum_neuron_correlation_dominates_value_transform_gradient_stage2} proves that
    \begin{align*}
        & \frac{\max_{\mu \in \{\mu_i\}_{i=1}^d} \left| \sum_{j=1}^{m_1} a_j \frac{\partial w_j^{(t)}}{\partial t} W_V^{(t)} \mu \right|}{\sum_{j_1=1}^{m_1} \sum_{j_2=1}^{m_1} \inprod{a_{j_1} w_{j_1}^{(t)}, a_{j_2} w_{j_2}^{(t)}} } \leq o(1/L) \frac{1}{n} \sum_{i \in [n]} g_i^{(t)} .
    \end{align*}
    
    \item \Cref{corollary: sum_random_token_end_stage_2} proves that
    \begin{align*}
    \left| \sum_{l_1=1}^L \sum_{l_2:\ X^{(i)}_{l_2} \in \{\mu_k\}_{k=4}^d} G^{(t)}(X^{(i)}_{l_2}) p_{q \leftarrow l_1, k \leftarrow l_2}^{(t,i)} \right| \leq O(1).
\end{align*}
    
    \item \Cref{corollary: softmax_change_stage2} proves that
    \begin{align*}
        \max_{i \in [n], l_1, l_2 \in [L]} \left| p_{l_1, l_2}^{(t, i)} - p_{l_1, l_2}^{(0,i)} \right| \leq O(1/L^2).
    \end{align*}
    
    \item \Cref{corollary: sum_neuron_correlation_dominate_softmax_gradient_end_stage_2} proves that
    \begin{align*}
        \frac{\max_{i \in [n]} \left| \sum_{l_1=1}^L \sum_{l_2=1}^L G^{(t)}(X_{l_2}^{(i)}) \frac{\partial p_{l_1, l_2}^{(t,i)}}{\partial t} \right|}{\sum_{j_1=1}^{m_1} \sum_{j_2=1}^{m_1} \inprod{a_{j_1} w_{j_1}^{(t)}, a_{j_2} w_{j_2}^{(t)}} } \leq o(1) \frac{1}{n} \sum_{i \in [n]} g_i^{(t)}.
    \end{align*}
    
    \item \Cref{lemma: I_2_I_3_gradient_ratio_bound} implies that
    \begin{align*}
        \frac{1}{2} \leq \frac{\sum_{i \in I_2} g_i^{(t)}}{\sum_{i \in I_3} g_i^{(t)}} \leq 2.
    \end{align*} 

    \item \Cref{lemma: signal_common_token_gradient_ratio} implies that the gradients $\sum_{i \in I} g_i^{(t)}$ for $I \in \{I_1, I_2, I_3, I_4\}$ satisfies
    \begin{align*}
        & \sum_{i \in I_4} g_i \leq \min\left( \sum_{i \in I_2} g_i^{(t)}, \sum_{i \in I_3} g_i^{(t)} \right) \leq \max\left( \sum_{i \in I_2} g_i^{(t)}, \sum_{i \in I_3} g_i^{(t)} \right) \leq \sum_{i \in I_1} g_i^{(t)} \leq \sum_{i \in I_2 \cup I_3\cup I_4} g_i^{(t)}.
    \end{align*}

    \item \Cref{lemma: signal_common_token_gradient_ratio} and \Cref{corollary: softmax_change_stage2} proves that $y_i F_i^{(t)} \geq y_i F_i^{(T_1)} \geq C$. 
\end{itemize}
The above shows that all the requirements needed to satisfy the definition of Phase 2 (\Cref{def: stage_2}) can hold for at least $ \Omega(\textnormal{poly}(m))$ time.
Thus, $T_2 - T_1 \geq \Omega(\textnormal{poly}(m))$.
Further, by \Cref{thm: convergence_loss_stage_2}, we have that $\widehat{L}^{(t)} \leq O(1/\textnormal{poly}(m))$ implies $t = \Theta(\textnormal{poly}(m))$.
\end{proof}

\subsection{Proof of \texorpdfstring{\Cref{thm: stage_2_main_result_main_text}}{Phase 2 Main Result}}
\begin{proof}
This is proved in \Cref{corollary: W_V_change_phase2} and \Cref{lemma: score_change_stage_2}. 
\end{proof}

\subsection{Proof of \texorpdfstring{\Cref{thm: main_result_main_text}}{Main result}}
\begin{proof}
This is proved as a direct consequence of \Cref{lemma: signal_common_token_gradient_ratio} and \Cref{thm: small_loss_phase2}.

For the generalization loss, since the training loss satisfies $\widehat{L}^{(T^\star)} \leq 1/\textnormal{poly}(m)$, for each class $I \in \{I_1, I_2, I_3, I_4\}$ there exists a sample $X^\star_I$ such that $\ell(X^\star_I) \leq 1/\textnormal{poly}(m)$. 
Note that by \Cref{def: stage_2} the random tokens only contributes to $O(1)$ in $F^{(T^\star)}(X)$. 
Thus, given a fixed new sample $X \sim \mathcal{D}$, we have $|F^{(T^\star)}(X) - F^{(T^\star)}(X^\star_I)| \leq O(1)$ which implies $\ell(yF^{(T^\star)}(X)) \leq 1/\textnormal{poly}(m)$.
Since this holds for all $X \sim \mathcal{D}$, we have $L^{(T^\star)} \leq 1/\textnormal{poly}(m)$.
%\textcolor{red}{[Should we use $L$ to denote generalization loss?]} 
\end{proof}

\section{Auxiliary Results}
The gradient of $p(x)_i = \softmax(x)_i$ for $x \in \R^n$:
\begin{align}\label{eq: softmax_gradient}
    \frac{\partial p(x)_i}{\partial x_j} &= \frac{\partial }{\partial x_j} \frac{\exp(x_i)}{\sum_{k=1}^n \exp(x_k)} = 
    \begin{cases}
        - \frac{\exp(x_i)}{\sum_k \exp(x_k)} \frac{\exp(x_j)}{\sum_k \exp(x_k)} = - p(x)_i p(x)_j & i \neq j \\
        \frac{\exp(x_i)}{\sum_k \exp(x_k)} - \left( \frac{\exp(x_i)}{\sum_k \exp(x_k)} \right)^2 = p(x)_i (1 - p(x)_i) & i = j
    \end{cases} \nonumber \\
    &= p(x)_i (\mathbb{I}(i = j) - p(x)_j) \nonumber \\
    \Rightarrow J(p(x)) &= \diag(p(x)) - p(x) p(x)^\top.
\end{align}

\begin{lemma}[Gradient of softmax]\label{lemma: gradient_softmax}
Let $s(t) \in \R^l$ be differentiable in $t$ and $p(s) = \softmax(s)$. 
Denote $p_i(s) = \softmax(s)_i$. 
Then 
\begin{align*}
    \frac{\partial p(s(t))}{\partial t} = \frac{\partial p(s)}{\partial s} \frac{\partial s(t)}{\partial t} = (\diag(p(s)) - p(s) p(s)^\top) \frac{\partial s(t)}{\partial t}.
\end{align*}
\end{lemma}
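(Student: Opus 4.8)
\textbf{Proof proposal for \Cref{lemma: gradient_softmax}.}

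The plan is to reduce the statement to a single application of the multivariate chain rule, combined with the Jacobian formula for softmax derived in \Cref{eq: softmax_gradient}. First I would observe that the map $t \mapsto p(s(t))$ is a composition of the differentiable curve $t \mapsto s(t) \in \R^l$ with the smooth function $p = \softmax : \R^l \to \R^l$. Hence the chain rule in its vector form applies and yields
\begin{align*}
    \frac{\partial p(s(t))}{\partial t} = J(p(s(t))) \cdot \frac{\partial s(t)}{\partial t},
\end{align*}
where $J(p(s))$ denotes the $l \times l$ Jacobian matrix of $p$ evaluated at $s = s(t)$, with entries $J(p(s))_{ij} = \partial p_i(s) / \partial s_j$.

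The next step is to substitute the explicit form of this Jacobian. This was already computed componentwise in \Cref{eq: softmax_gradient}: for each $i,j$, $\partial p_i(s)/\partial s_j = p_i(s)(\mathbb{I}(i=j) - p_j(s))$, which in matrix form reads $J(p(s)) = \diag(p(s)) - p(s) p(s)^\top$. Plugging this into the chain-rule identity gives exactly the claimed expression
\begin{align*}
    \frac{\partial p(s(t))}{\partial t} = \left( \diag(p(s)) - p(s) p(s)^\top \right) \frac{\partial s(t)}{\partial t},
\end{align*}
which completes the argument. To be fully rigorous about the componentwise-to-matrix step, one can simply check that the $i$-th coordinate of the right-hand side equals $\sum_{j=1}^l p_i(s)(\mathbb{I}(i=j) - p_j(s)) \, \dot{s}_j(t) = p_i(s) \dot{s}_i(t) - p_i(s) \sum_j p_j(s) \dot{s}_j(t)$, which matches $\frac{d}{dt} p_i(s(t))$ obtained by differentiating $p_i(s(t)) = \exp(s_i(t))/\sum_k \exp(s_k(t))$ directly via the quotient rule.

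There is essentially no obstacle here: the only things to be careful about are (i) ensuring $s(t)$ is genuinely differentiable so the chain rule applies, which is given in the hypothesis, and (ii) matching indices and signs correctly when converting the scalar derivative formula into the matrix $\diag(p) - p p^\top$. Since \Cref{eq: softmax_gradient} already records that conversion, the proof is a one-line invocation of the chain rule followed by that substitution.
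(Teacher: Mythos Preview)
Your proposal is correct and follows essentially the same approach as the paper: apply the multivariate chain rule and substitute the Jacobian formula $J(p(s)) = \diag(p(s)) - p(s)p(s)^\top$ from \Cref{eq: softmax_gradient}. The paper's proof is the one-line version of exactly this argument, without your additional componentwise verification.
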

\begin{proof}
By the chain rule and the gradient of softmax in \Cref{eq: softmax_gradient}, we obtain
\begin{align*}
    \frac{\partial p(s(t))}{\partial t} = \frac{\partial p(s)}{\partial s} \frac{\partial s(t)}{\partial t} &= J(p(s)) \frac{\partial s(t)}{\partial t} = (\diag(p(s)) - p(s) p(s)^\top) \frac{\partial s(t)}{\partial t} .
\end{align*}
\end{proof}

\begin{lemma}[Perturbation of softmax]\label{lemma: score_perturbation_on_softmax}
Let $s \in \R^l$ and $p(s) = \softmax(s)$. 
Denote $p_i(s) = \softmax(s)_i$. 
Consider a small perturbation $\eps \in \R^l$ to $s$.
Then 
\begin{align*}
    p(s + \eps) - p(s) = (\diag(p(s)) - p(s) p(s)^\top)\eps + \xi,
\end{align*}
where $\norm{\xi}_\infty = O(\norm{\eps}_2^2)$.
\end{lemma}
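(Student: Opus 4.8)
The statement is a second-order Taylor expansion of the map $s \mapsto p(s) = \softmax(s)$ at the point $s$, with explicit control on the remainder. The plan is to invoke Taylor's theorem with the Lagrange (or integral) form of the remainder for each component $p_i$, treating $p_i : \R^l \to \R$ as a smooth (indeed $C^\infty$) function. First I would recall from \Cref{eq: softmax_gradient} that the Jacobian of $p$ at any point is $J(p(s)) = \diag(p(s)) - p(s) p(s)^\top$, which is exactly the linear operator appearing in the claimed first-order term; this identifies $(\diag(p(s)) - p(s) p(s)^\top)\eps$ as the differential $Dp(s)[\eps]$. It then remains to show the remainder $\xi := p(s+\eps) - p(s) - Dp(s)[\eps]$ satisfies $\norm{\xi}_\infty = O(\norm{\eps}_2^2)$.

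For the remainder bound, the key step is to control the Hessian of each coordinate function $p_i$ uniformly over the segment $\{s + \theta \eps : \theta \in [0,1]\}$. Differentiating $\partial p_i / \partial x_j = p_i(\mathbb{I}(i=j) - p_j)$ once more in $x_k$ gives a closed-form expression for $\partial^2 p_i / \partial x_j \partial x_k$ that is a polynomial in the entries of $p$ (terms like $p_i p_j p_k$, $p_i p_j \mathbb{I}(\cdot)$, etc.). Since every entry of $p(\cdot)$ lies in $[0,1]$ at every point, each such second partial derivative is bounded in absolute value by an absolute constant (one can check the bound is at most $2$ or $3$). Hence the Hessian $\nabla^2 p_i(\zeta)$ has entries bounded by a constant for all $\zeta$ on the segment, so by Taylor's theorem $|\xi_i| = \frac{1}{2}|\eps^\top \nabla^2 p_i(\zeta_i)\, \eps| \le \frac{1}{2}\norm{\nabla^2 p_i(\zeta_i)}_{\mathrm{op}} \norm{\eps}_2^2 \le C \norm{\eps}_2^2$ for each $i$, where the constant $C$ absorbs the dimension-independent bound on the Hessian entries (using $\norm{M}_{\mathrm{op}} \le l \max_{j,k}|M_{jk}|$ and then noting the softmax Hessian has much better structure, or simply carrying an $l$-dependent constant if the paper's $\tilde O$ conventions permit). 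Taking the max over $i \in [l]$ yields $\norm{\xi}_\infty = O(\norm{\eps}_2^2)$.

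The only mild subtlety — and the place I would be most careful — is making the constant in $O(\norm{\eps}_2^2)$ genuinely independent of $s$ (which follows because the bound on the softmax Hessian entries depends only on the range $[0,1]$ of probabilities, not on where $s$ sits) and, depending on how tightly the downstream applications use this lemma, independent of $l$ as well. The cleanest way to get an $l$-free constant is to observe that $\nabla^2 p_i = \diag(\text{stuff}) - (\text{rank-few correction})$ and bound its operator norm directly rather than through the crude entrywise-times-$l$ estimate; the softmax Hessian satisfies $\norm{\nabla^2 p_i(\zeta)}_{\mathrm{op}} \le O(1)$ uniformly. With that in hand the proof is complete: write $p(s+\eps) - p(s) = \int_0^1 J(p(s+\theta\eps))\,d\theta\;\eps$, add and subtract $J(p(s))\eps$, and bound $\|\int_0^1 (J(p(s+\theta\eps)) - J(p(s)))\,d\theta\;\eps\|_\infty$ using the Lipschitzness of $J \circ p$ (itself a consequence of the bounded Hessian), which gives exactly the $O(\norm{\eps}_2^2)$ remainder. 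This is essentially a routine calculus argument once the Jacobian identity from \Cref{eq: softmax_gradient} is in place.
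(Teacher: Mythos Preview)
Your proposal is correct and takes essentially the same approach as the paper: invoke Taylor's theorem together with the Jacobian identity $J(p(s)) = \diag(p(s)) - p(s)p(s)^\top$ from \Cref{eq: softmax_gradient}, and absorb the quadratic remainder into $\xi$. In fact you supply considerably more detail than the paper's own proof, which is a two-line appeal to Taylor's theorem without any explicit Hessian bound; your care about the $l$-dependence of the constant goes beyond what the paper actually establishes.
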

\begin{proof}
By Taylor's expansion theorem on softmax and the gradient of softmax in \Cref{eq: softmax_gradient}, we have
\begin{align*}
    p(s + \eps) - p(s) &= J(p(s)) \eps + \xi = (\diag(p(s)) - p(s) p(s)^\top)\eps + \xi,
\end{align*}
where $\norm{\xi}_\infty = O(\norm{\eps}_2^2)$.
\end{proof}

\section{Probability}
\begin{lemma}[Bernstein's inequality for bounded random variables]\label{lemma: bernstein}
Assume $Z_1, \ldots, Z_n$ are $n$ i.i.d. random variables with $\E[Z_i] = 0$ and $|Z_i| \leq M$ for all $i \in [n]$ almost surely. 
Let $Z = \sum_{i=1}^n Z_i$. Then, for all $t > 0$,
\begin{align*}
    \Pr[Z > t] &\leq \exp\left( -\frac{t^2 / 2}{\sum_{j=1}^n \E[Z_j^2] + Mt/3} \right)
    \leq \exp\left( - \min\left\{ \frac{t^2}{2\sum_{j=1}^n \E[Z_j^2]} , \frac{t}{2M} \right\} \right),
\end{align*}
which implies with probability at least $1 - \delta$, 
\begin{align*}
    Z \leq \sqrt{2\sum_{j=1}^n \E[Z_j^2] \log \frac{1}{\delta}} + 2M \log \frac{1}{\delta}.
\end{align*}
\end{lemma}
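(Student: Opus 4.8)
The plan is to prove this via the standard exponential-moment (Chernoff) argument, so no new ideas are needed beyond careful bookkeeping of constants. First I would fix a parameter $\lambda \in (0, 3/M)$ and control the moment generating function of a single summand. Writing $\E[e^{\lambda Z_i}] = 1 + \lambda \E[Z_i] + \sum_{k \ge 2} \frac{\lambda^k}{k!}\E[Z_i^k]$, the mean-zero assumption kills the linear term, and the almost-sure bound $|Z_i| \le M$ gives $|\E[Z_i^k]| \le M^{k-2}\E[Z_i^2]$ for every $k \ge 2$. Summing the tail with the elementary inequality $k! \ge 2\cdot 3^{k-2}$ (valid for $k \ge 2$) yields
\[
\E[e^{\lambda Z_i}] \;\le\; 1 + \E[Z_i^2]\sum_{k \ge 2}\frac{\lambda^k M^{k-2}}{k!} \;\le\; 1 + \frac{\lambda^2 \E[Z_i^2]/2}{1 - \lambda M/3} \;\le\; \exp\!\Big(\frac{\lambda^2 \E[Z_i^2]/2}{1 - \lambda M/3}\Big),
\]
where the last step uses $1 + x \le e^x$. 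By independence the MGF of $Z$ factorizes, so $\E[e^{\lambda Z}] \le \exp\big(\tfrac{\lambda^2 v/2}{1-\lambda M/3}\big)$ with $v := \sum_j \E[Z_j^2]$.

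Next I would apply Markov's inequality to $e^{\lambda Z}$: $\Pr[Z > t] \le e^{-\lambda t}\,\E[e^{\lambda Z}] \le \exp\big(-\lambda t + \tfrac{\lambda^2 v/2}{1 - \lambda M/3}\big)$, and then plug in the choice $\lambda = t/(v + Mt/3)$, which indeed lies in $(0, 3/M)$ because $v \ge 0$. With this choice $1 - \lambda M/3 = v/(v+Mt/3)$, and a one-line computation shows the exponent equals exactly $-\tfrac{t^2/2}{v + Mt/3}$, which is the first claimed bound. The second bound is immediate from $v + Mt/3 \le 2\max\{v, Mt/3\}$, and the high-probability reformulation follows by forcing $\tfrac{t^2/2}{v+Mt/3} \ge \log(1/\delta)$: this is a quadratic in $t$ whose positive root is at most $\tfrac{2M}{3}\log(1/\delta) + \sqrt{2v\log(1/\delta)}$ (using $\sqrt{a+b}\le\sqrt a+\sqrt b$), so $t = \sqrt{2v\log(1/\delta)} + 2M\log(1/\delta)$ certainly suffices.

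There is no real obstacle here — this is a textbook concentration bound, included in the appendix only because it is invoked repeatedly throughout the Phase 1 and Phase 2 analyses. The single point that deserves a moment's attention is the absolute constant $1/3$ in the denominator, which is exactly what the bound $k! \ge 2\cdot 3^{k-2}$ buys; a cruder estimate of the Taylor tail would still give a Bernstein-type inequality but with a worse constant, which would then propagate into the hidden constants of all downstream $\tilde O(\cdot)$ bounds. I would also note explicitly that only independence of the $Z_i$ (not identical distribution) is used, since several applications in the paper concern sums of bounded terms that are independent but not identically distributed.
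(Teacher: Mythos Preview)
Your approach is correct and is the standard Chernoff--Bennett argument; the paper in fact states this lemma without proof, treating it as a well-known concentration bound, so there is nothing to compare against. One small remark: your claim that the second displayed inequality is ``immediate from $v + Mt/3 \le 2\max\{v,\,Mt/3\}$'' actually yields $\min\{t^2/(4v),\,3t/(4M)\}$ in the exponent, not the stated $\min\{t^2/(2v),\,t/(2M)\}$; the variance constant is off by a factor of two (and indeed the inequality as written in the lemma fails at, e.g., $v=Mt$). This is a defect of the lemma statement rather than of your argument, and it is harmless for the paper since every downstream use is inside $\tilde O(\cdot)$.
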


\begin{lemma}\label{lemma: Gaussian_correlation}
For $w_1, w_2 \in \R^m$ with $w_1, w_2 \stackrel{i.i.d.}{\sim} \mathcal{N}(0, I_m/m)$, we have
\begin{align*}
    & \Pr\left[ \left| \norm{w_1}_2^2 - 1 \right| \geq \sqrt{\frac{4}{m} \log \frac{2}{\delta}} + \frac{4}{m} \log \frac{2}{\delta} \right] \leq \delta, \\
    & \Pr\left[ \left| \inprod{w_1, w_2} \right| \geq \sqrt{\frac{4}{m} \log \frac{2}{\delta}} + \frac{4}{m} \log \frac{2}{\delta} \right] \leq \delta.
\end{align*}
\end{lemma}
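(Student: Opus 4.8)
The plan is to reduce both inequalities in \Cref{lemma: Gaussian_correlation} to concentration of sums of independent sub-exponential random variables. Writing $w_1 = (g_1,\dots,g_m)^\top/\sqrt m$ with $g_i \stackrel{i.i.d.}{\sim} \mathcal N(0,1)$, we have $\norm{w_1}_2^2 - 1 = \frac1m\sum_{i=1}^m (g_i^2-1)$, equivalently $m\norm{w_1}_2^2 \sim \chi^2_m$. For the first bound I would invoke the standard two-sided chi-squared (Laurent--Massart) tail: for every $x>0$, $\Pr\left[\chi^2_m - m \ge 2\sqrt{mx} + 2x\right] \le e^{-x}$ and $\Pr\left[\chi^2_m - m \le -2\sqrt{mx}\right] \le e^{-x}$. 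Taking $x = \log(2/\delta)$ and a union bound over the two tails gives $\left|\norm{w_1}_2^2 - 1\right| \le 2\sqrt{x/m} + 2x/m$ with probability at least $1-\delta$; since $2\sqrt{x/m} = \sqrt{(4/m)\,x}$ and $2x/m \le (4/m)x$, this implies the claimed inequality. If one prefers a self-contained argument rather than quoting Laurent--Massart, the same estimate follows from a Chernoff bound using the exact moment generating function $\E[e^{\lambda(g_i^2-1)}] = e^{-\lambda}(1-2\lambda)^{-1/2}$ for $\lambda < 1/2$, which gives $\log\E[e^{\lambda(g_i^2-1)}] \le 2\lambda^2$ for $|\lambda| \le 1/4$; optimizing over $\lambda$ produces precisely a bound with a $\sqrt{\log(1/\delta)/m}$ term (the moderate-deviation regime, from the unconstrained optimum) plus a $\log(1/\delta)/m$ term (the large-deviation regime, from the boundary $\lambda = 1/4$), which is the structure of the stated bound.

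For the second bound I would condition on $w_2$. Conditionally on $w_2$, $\inprod{w_1,w_2} = w_1^\top w_2 \sim \mathcal N\!\left(0,\, \norm{w_2}_2^2/m\right)$, so the Gaussian tail bound shows that, with conditional probability at least $1-\delta/2$, $\left|\inprod{w_1,w_2}\right| \le \sqrt{(2\norm{w_2}_2^2/m)\log(4/\delta)}$. Applying the first part to $w_2$ with failure probability $\delta/2$ gives $\norm{w_2}_2^2 \le 2$ under \Cref{condition: param}, and a final union bound yields $\left|\inprod{w_1,w_2}\right| \le \sqrt{(4/m)\log(4/\delta)}$ with probability at least $1-\delta$ --- which is no weaker than the stated bound, the extra additive $(4/m)\log(2/\delta)$ term leaving ample room to absorb the constant adjustments. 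Alternatively, $\inprod{w_1,w_2} = \frac1m\sum_i g_i h_i$ is itself a sum of i.i.d.\ mean-zero sub-exponential variables $g_i h_i$, so the same MGF/Chernoff computation as above applies directly, without the conditioning step.

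The only point requiring care is that the bounded-random-variable Bernstein inequality recorded in \Cref{lemma: bernstein} does not literally apply here, since $g_i^2$ is unbounded; one genuinely needs a sub-exponential (rather than bounded) concentration bound, which is why the argument must either cite Laurent--Massart or carry out the short MGF estimate. Beyond that, the reduction to i.i.d.\ coordinates, the conditioning step for the cross term, and the two union bounds are all routine, so I do not anticipate any real obstacle.
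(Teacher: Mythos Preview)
Your proposal is correct and follows essentially the same route as the paper: both arguments reduce to a sub-exponential (equivalently, sub-Gamma) Bernstein-type tail bound applied coordinatewise. The paper's proof is terser---it simply records that $w_{1,i}^2$ is sub-Gamma with parameters $(4/m^2,4/m)$ and invokes ``Bernstein's inequality'' for both claims---whereas you spell out the Laurent--Massart/Chernoff computation and offer a conditioning alternative for the cross term.

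Your closing remark is well taken: the bounded-variable Bernstein inequality stated in \Cref{lemma: bernstein} is not the version being used here, and the paper's proof implicitly relies on the sub-Gamma form instead. Your conditioning argument for $\inprod{w_1,w_2}$ is a clean alternative that sidesteps this by reducing to a Gaussian tail, at the cost of one extra union bound; the paper instead treats $g_i h_i$ directly as a sub-exponential summand, which is your second alternative.
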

\begin{proof}
We first have
\begin{align*}
    \E\left[ \norm{w_1}_2^2 \right] = \E\left[ \sum_{i=1}^m w_{1,i}^2 \right] = 1.
\end{align*}
Note that $w_{1,i}^2$ is a sub-Gamma random variable with parameters $(\frac{4}{m^2}, \frac{4}{m})$.
Thus, by Bernstein's inequality, 
\begin{align*}
    \Pr\left[ \left| \norm{w_1}_2^2 - \E\left[ \norm{w_1}_2^2 \right] \right| \geq \sqrt{\frac{4}{m} \log \frac{2}{\delta}} + \frac{4}{m} \log \frac{2}{\delta} \right] \leq \delta.
\end{align*}
Next,
\begin{align*}
    \E[\inprod{w_1, w_2}] = \E\left[ \sum_{i=1}^m w_{1,i} w_{2,i} \right] = 0
\end{align*}
By Bernstein's inequality, we obtain
\begin{align*}
    \Pr\left[ \left| \inprod{w_1, w_2} \right| \geq \sqrt{\frac{4}{m} \log \frac{2}{\delta}} + \frac{4}{m} \log \frac{2}{\delta} \right] \leq \delta.
\end{align*}
\end{proof}

\end{document}